\def\eqref#1{(\ref{#1})}
\def\1{\bm{1}}
\def\eps{{\epsilon}}
\DeclareMathAlphabet{\mathsfit}{\encodingdefault}{\sfdefault}{m}{sl}
\SetMathAlphabet{\mathsfit}{bold}{\encodingdefault}{\sfdefault}{bx}{n}
\newcommand{\R}{\mathbb{R}}
\DeclareMathOperator*{\argmax}{arg\,max}
\DeclareMathOperator{\sign}{sign}
\newcommand{\Ex}{\mathop{{\bf E}\/}}
\renewcommand{\Pr}{\operatorname{{\bf Pr}}}
\newcommand{\Prx}{\mathop{{\bf Pr}\/}}
\newcommand{\Ber}{\operatorname{Ber}}
\newcommand{\N}{\mathbbm N}
\newcommand{\ind}{{\bf 1}}
\newcommand{\bA}{\mathbf{A}}
\newcommand{\bD}{\mathbf{D}}
\newcommand{\bI}{\mathbf{I}}
\newcommand{\bS}{\mathbf{S}}
\newcommand{\bW}{\mathbf{W}}
\newcommand{\bX}{\mathbf{X}}
\newcommand{\ba}{\boldsymbol{a}}
\newcommand{\bg}{\boldsymbol{g}}
\newcommand{\bh}{\boldsymbol{h}}
\newcommand{\boldr}{\boldsymbol{r}} 
\newcommand{\bw}{\boldsymbol{w}}
\newcommand{\bx}{\boldsymbol{x}}
\newcommand{\bz}{\boldsymbol{z}}
\newcommand{\bnu}{\boldsymbol{\nu}}
\newcommand{\bmu}{\boldsymbol{\mu}}
\newcommand{\calA}{\mathcal{A}}
\newcommand{\calE}{\mathcal{E}}
\newcommand{\zo}{\{0,1\}}
\newcommand{\calbE}{\boldsymbol{\calE}}
\newcommand{\CSBM}{\textsf{CSBM}}
\newcommand{\LeakyRelu}{\mathrm{LeakyRelu}}
\newcommand{\indi}{\boldsymbol{1}}
\newcommand{\eqdef}{\stackrel{\rm def}{=}}
\newcommand{\BlackBox}{\rule{1.5ex}{1.5ex}}  
    \renewenvironment{proof}{\par\noindent{\bf Proof\ }}{\hfill\BlackBox\\[2mm]}
    \newenvironment{proof}{\par\noindent{\bf Proof\ }}{\hfill\BlackBox\\[2mm]}
\newtheorem{theorem}{Theorem}
\newtheorem{lemma}[theorem]{Lemma} 
\newtheorem{proposition}[theorem]{Proposition} 
\newtheorem{corollary}[theorem]{Corollary}
\newtheorem{definition}[theorem]{Definition}
\newtheorem{conjecture}[theorem]{Conjecture}
\newtheorem{assumption}{Assumption}
\newtheorem{claim}[theorem]{Claim}
\newtheorem{observation}[theorem]{Observation}
\newtheorem*{theorem*}{Theorem}
\newtheorem*{lemma*}{Lemma}
\newtheorem*{proposition*}{Proposition}
\newtheorem*{corollary*}{Corollary}
\newcommand\blfootnote[1]{%
  \begingroup
  \renewcommand\thefootnote{}\footnote{#1}%
  \addtocounter{footnote}{-1}%
  \endgroup
}
\def\FullBox{\hbox{\vrule width 8pt height 8pt depth 0pt}}
\newcommand{\QED}{\;\;\;\FullBox}
\renewenvironment{proof}{\noindent{\bf Proof:~}}{\hfill\QED}
\def\blfootnote{\gdef\@thefnmark{}\@footnotetext}
\title{Graph Attention Retrospective}
\author{
    Kimon Fountoulakis$^{*\dagger}$\qquad\qquad
    Amit Levi$^{*\ddagger}$\qquad\qquad
    Shenghao Yang$^{*\dagger}$\\ \\
    Aseem Baranwal$^{\dagger}$\qquad\qquad
    Aukosh Jagannath$^{\mathsection}$
}
\date{}
\begin{document}
\maketitle
\def\thefootnote{*}\footnotetext{Equal contribution.}
\def\thefootnote{$\dagger$}\footnotetext{David R. Cheriton School of Computer Science, University of Waterloo, Waterloo, Canada.}
\def\thefootnote{$\ddagger$}\footnotetext{Huawei Noah’s Ark Lab, Montreal, Canada.}
\def\thefootnote{$\mathsection$}\footnotetext{Department of Statistics and Actuarial Science, Department of Applied Mathematics, University of Waterloo, Waterloo, Canada.}
\def\thefootnote{\arabic{footnote}}
\blfootnote{Emails: \href{mailto:kimon.fountoulakis@uwaterloo.ca}{kimon.fountoulakis@uwaterloo.ca}, \href{mailto:amit.levi@huawei.com}{amit.levi@huawei.com}, \href{mailto:shenghao.yang@uwaterloo.ca}{shenghao.yang@uwaterloo.ca},\\
\href{mailto:aseem.baranwal@uwaterloo.ca}{aseem.baranwal@uwaterloo.ca}, \href{mailto:a.jagannath@uwaterloo.ca}{a.jagannath@uwaterloo.ca}}

\frenchspacing

\begin{abstract}
Graph-based learning is a rapidly growing sub-field of machine learning with applications in social networks, citation networks, and bioinformatics. One of the most popular models is graph attention networks. They were introduced to allow a node to aggregate information from features of neighbor nodes in a non-uniform way, in contrast to simple graph convolution which does not distinguish the neighbors of a node. In this paper, we theoretically study the behaviour of graph attention networks. We prove multiple results on the performance of the graph attention mechanism for the problem of node classification for a contextual stochastic block model. Here, the node features are obtained from a mixture of Gaussians and the edges from a stochastic block model. We show that in an ``easy'' regime, where the distance between the means of the Gaussians is large enough, graph attention is able to distinguish inter-class from intra-class edges. Thus it maintains the weights of important edges and significantly reduces the weights of unimportant edges. Consequently, we show that this implies perfect node classification. In the ``hard'' regime, we show that every attention mechanism fails to distinguish intra-class from inter-class edges. In addition, we show that graph attention convolution cannot (almost) perfectly classify the nodes even if intra-class edges could be separated from inter-class edges. Beyond perfect node classification, we provide a positive result on graph attention's robustness against structural noise in the graph. In particular, our robustness result implies that graph attention can be strictly better than both the simple graph convolution and the best linear classifier of node features. We evaluate our theoretical results on synthetic and real-world data.
\end{abstract}

\section{Introduction}
\label{sec:intro}
Graph learning has received a lot of attention recently due to breakthrough learning models~\cite{GMS05,scarselli:gnn,BZSL14,DMAGHAA15,HBL15,AT16,DBV16,HYL17,kipf:gcn} that are able to exploit multi-modal data that consist of nodes and their edges as well as the features of the nodes. One of the most important problems in graph learning is the problem of \emph{classification}, where the goal is to classify the nodes or edges of a graph given the graph and the features of the nodes. Two of the most fundamental mechanisms for classification, and graph learning in general, are graph convolution and graph attention. Graph convolution, usually defined using its spatial version, corresponds to averaging the features of a node with the features of its neighbors~\cite{kipf:gcn}.\footnote{Although the model in~\cite{kipf:gcn} is related to spectral convolutions, it is mainly a spatial convolution since messages are propagated along graph edges. More broadly, graph convolution can refer to different variants arising from different (approximations of) graph spectral filters. We provide details in Section~\ref{sec:prelim}.} Graph attention~\cite{Velickovic2018GraphAN} mechanisms augment this convolution by appropriately weighting the edges of a graph before spatially convolving the data. Graph attention is able to do this by using information from the given features for each node. Despite its wide adoption by practitioners~\cite{FL2019,wang2019dgl,HFZDRLCL20} and its large academic impact as well, the number of works that rigorously study its effectiveness is quite limited.

One of the motivations for using a graph attention mechanism as opposed to a simple convolution is the expectation that the attention mechanism is able to distinguish inter-class edges from intra-class edges and consequently weights inter-class edges and intra-class edges differently before performing the convolution step. This ability essentially maintains the weights of important edges and significantly reduces the weights of unimportant edges, and thus it allows graph convolution to aggregate features from a subset of neighbor nodes that would help node classification tasks. In this work, we explore the regimes in which this heuristic picture holds in simple node classification tasks, namely classifying the nodes in a contextual stochastic block model (CSBM)~\cite{BVR17,DSM18}. The CSBM is a coupling of the stochastic block model (SBM) with a Gaussian mixture model, where the features of the nodes within a class are drawn from the same component of the mixture model. For a more precise definition, see Section~\ref{sec:prelim}. We focus on the case of two classes where the answer to the above question is sufficiently precise to understand the performance of graph attention and build useful intuition about it. We briefly and informally summarize our contributions as follows:
\begin{enumerate}
    \item In the ``easy regime", i.e., when the distance between the means of the Gaussian mixture model is much larger than the standard deviation, we show that there exists a choice of attention architecture that distinguishes inter-class edges from intra-class edges with high probability (Theorem~\ref{thm:edge_separation_easy}). In particular, we show that the attention coefficients for one class of edges are much higher than the other class of edges (Corollary~\ref{cor:gamma_easy}). Furthermore, we show that these attention coefficients lead to a perfect node classification result (Corollary~\ref{cor:node_separation_easy}). However, in the same regime, we also show that the graph is not needed to perfectly classify the data (Proposition~\ref{prop:linear_easy}).
    \item In the ``hard regime", i.e.,  when the distance between the means is small compared to the standard deviation, we show that \emph{every} attention architecture is unable to distinguish inter-class from intra-class edges with high probability (Theorem~\ref{thm:edge_separation_hard}). Moreover, we show that using the original Graph Attention Network (GAT) architecture~\cite{Velickovic2018GraphAN}, with high probability, most of the attention coefficients are going to have uniform weights, similar to those of simple graph convolution~\cite{kipf:gcn} (Theorem~\ref{thm:gamma_hard}). We also consider a setting where graph attention is provided with independent and perfect edge information, and we show that even in this case, if the distance between the means is sufficiently small, then graph attention would fail to (almost) perfectly classify the nodes with high probability (Theorem~\ref{thm:good_psi_negative}).
    \item Beyond perfect node classification, we show that graph attention is able to assign a significantly higher weight to self-loops, irrespective of the parameters of the Gaussian mixture model that generates node features or the stochastic block model that generates the graph. Consequently, we show that with a high probability, graph attention convolution is at least as good as the best linear classifier for node features (Theorem~\ref{thm:gat_linear}). In a high structural noise regime when the graph is not helpful for node classification, i.e., when intra-class edge probability $p$ is close to inter-class edge probability $q$, this means that graph attention is strictly better than simple graph convolution because it is able to essentially ignore the graph structural noise (Corollary~\ref{cor:model_rank}). In addition, we obtain lower bounds on graph attention's performance for almost perfect node classification and partial node classification (Corollary~\ref{cor:gat_linear_recovery}).
    \item We provide an extensive set of experiments both on synthetic data and on three popular real-world datasets that validates our theoretical results.
\end{enumerate}

\subsection{Limitations of our theoretical setting}

In this work, to study the benefits and the limitations of the graph attention mechanism, we focus on node classification tasks using the CSBM generative model with two classes and Gaussian node features. This theoretical setting has a few limitations. First, we note that many real-world networks are much more complex and may exhibit different structures than the ones obtainable from a stochastic block model. Furthermore, real-world node attributes such as categorical features may not even follow a continuous probability distribution. Apparently, there are clear gaps between CSBM and the actual generative processes of real-world data. Nonetheless, we note that a good understanding of graph attention's behavior over CSBM would already provide us with useful insights. For example, it has been shown empirically that synthetic graph datasets based on CSBM constitute good benchmarks for evaluating various GNN architectures~\cite{graphworld2022}. To that end, in Section~\ref{sec:implications}, we summarize some practical implications of our theoretical results which could be useful for practitioners working with GNNs.

Second, there are different levels of granularity for machine learning problems on graphs. Besides node-level tasks such as node classification, other important problems include edge-level tasks such as link prediction and graph-level tasks such as graph classification. While, empirically, variants of graph attention networks have been successfully applied to solve problems at all levels of granularity, our theoretical results mostly pertain to  the effects of graph attention for node classification. This is a limitation on the scope of our study. On the other hand, we note that edge-level and graph-level tasks are typically solved by adding a pooling layer which combines node representations from previous graph (attention) convolution layers. Consequently, the quality of graph attention convolution's output for a node not only affects node classification but also has a major influence on link prediction and graph classification. Therefore, our results may be extended to study the effects of graph attention on link prediction and graph classification under the CSBM generative model. For example, link prediction is closely related to edge classification which we extensively study in Section~\ref{sec:results}. For graph classification, one may consider the problem of classifying graphs generated from different parameter settings of the CSBM. In this case, our results on graph attention's impact on node representations may be used to establish results on graph attention's impact on graph representations after certain pooling layers. In particular, if graph attention convolution generates good node representations that are indicative of node class memberships, then one can show that the graph representation obtained from appropriately aggregating node representations would be indicative of the clustering structure of the graph, and hence the graph representation would be useful for graph classification tasks.

\subsection{Relevant work}\label{subsec:related_work}
Recently the concept of attention for neural networks~\cite{BCB15,VSPUJGKP17} was transferred to graph neural networks~\cite{LZBT16,BL18,Velickovic2018GraphAN,LRKAK19,PBL20}. 
A few papers have attempted to understand the attention mechanism in~\cite{Velickovic2018GraphAN}. One work relevant to ours is~\cite{BAY21}. In this paper, the authors show that a node may fail to assign large edge weight to its most important neighbors due to a global ranking of nodes generated by the attention mechanism in~\cite{Velickovic2018GraphAN}. Another related work is~\cite{KTA19}, which presents an empirical study of the ability of graph attention to generalize on larger, complex, and noisy graphs. In addition, in~\cite{hou2019measuring} the authors propose a different metric to generate the attention coefficients and show empirically that it has an advantage over the original GAT architecture. 

Other related work to ours, which does not focus on graph attention, comes from the field of statistical learning on random data models. Random graphs and the stochastic block model have been traditionally used in clustering and community detection~\cite{Abbe2018,AFTPPVLLQS,moore2017csphysics}. Moreover, the works by~\cite{BVR17,DSM18}, which also rely on CSBM are focused on the fundamental limits of unsupervised learning. Of particular relevance is the work by~\cite{BFJ2021}, which studies the performance of graph convolution on CSBM as a semi-supervised learning problem. Within the context of random graphs, \cite{KBV21} studies the approximation power of graph neural networks on random graphs. In~\cite{MLLK22} the authors derive generalization error of graph neural networks for graph classification and regression tasks. In our paper we are interested in understanding graph attention's capability for edge/node classification with respect to different parameter regimes of CSBM.

Finally, there are a few related theoretical works on understanding the generalization and representation power of graph neural networks~\cite{CLB19,Chien:2020:joint,Zhu:2020:generalizing,XHLJ19,GJJ20,A2020,ALoukas2020}. For a recent survey in this direction see \cite{J22}. Our work takes a statistical perspective which allows us to characterize the precise performance of graph attention compared to graph convolution and no convolution for CSBM, with the goal of answering the particular questions that we imposed above.

\section{Preliminaries}\label{sec:prelim}

In this section, we describe the \emph{Contextual Stochastic Block Model (CSBM)}~\cite{DSM18} which serves as our data model, and the \emph{Graph Attention} mechanism~\cite{Velickovic2018GraphAN}. We also define notations and terms that are frequently used throughout the paper.

For a vector $x \in \R^n$ and $n \in \N$, the norm $\|x\|$ denotes the Euclidean norm of $x$, i.e. $\|x\| \eqdef \sum_{i \in [n]}x_i^2$. We write $[n] \eqdef \{1,2,\ldots,n\}$. We use $\Ber(p)$ to denote the Bernoulli distribution, so $x \sim \Ber(p)$ means the random variable $x$ takes value 1 with probability $p$ and 0 with probability $1-p$. Let $d, n \in \N$, and $\eps_1,\ldots,\eps_n\sim \Ber(1/2)$. Define two classes as $C_k=\{j\in [n] \mid \eps_j=k\}$ for $k\in \zo$. For each index $i\in [n]$, we set the feature vector $\bX_i\in \R^d$ as $\bX_i\sim N((2\eps_i-1)\bmu, \sigma^2\bI)$, where $\bmu\in \R^d$, $\sigma\in \R$ and $\bI\in \zo^{d\times d}$ is the identity matrix.\footnote{The means of the mixture of Gaussians are $\pm \bmu$. Our results can be easily generalized to general means. The current setting makes our analysis simpler without loss of generality.} For a given pair $p,q\in [0,1]$ we consider the stochastic adjacency matrix $\bA\in\zo^{n\times n}$ defined as follows. For $i,j\in [n]$ in the same class (i.e., \emph{intra-class edge}), we set $a_{ij}\sim \Ber(p)$, and if $i,j$ are in different classes (i.e., \emph{inter-class edge}), we set $a_{ij}\sim\Ber(q)$. We denote by $(\bX,\bA)\sim \CSBM(n,p,q,\bmu,\sigma^2)$ a sample obtained according to the above random process. 

An advantage of CSBM is that it allows us to control the noise by controlling the parameters of the distributions of the model. In particular, CSBM allows us to control the distance of the means and the variance of the Gaussians, which are important for controlling the separability of the Gaussians. For example, fixing the variance, then the closer the means are the more difficult the separation of the node Gaussian features becomes. Another notable advantage of CSBM is that it allows us to control the structural noise and homophily level of the graph. The level of noise in the graph is controlled by increasing or reducing the gap between the intra-class edge probability $p$ and the inter-class edge probability $q$, and the level of homophily in the graph is controlled by the relative magnitude between $p$ and $q$. For example, when $p$ is much larger than $q$, then a node is more likely to be connected with a node from the same class, and hence we obtain a homophilous graph; when $q$ is much larger than $p$, then a node is more likely to be connected with a node from a different class, and hence we obtain a heterophilous graph. There are several recent works exploring the behavior of GNNs in heterophilous graphs~\cite{lim2021large,yan2022two,bodnar2022neural,luan2022revisiting}. Interestingly, we note that our results for graph attention's behavior over the CSBM data model do not depend on whether the graph is homophilous or heterophilous.

Given node representations $\bh_i \in \R^{F'}$ for $i \in [n]$, a (spatial) graph convolution for node $i$ has output
\[
    \bh_i' = \sum_{j \in [n]}\bA_{ij}c_{ij}\bW\bh_j, \quad c_{ij} = \bigg(\sum_{\ell \in [n]} \bA_{i\ell}\bigg)^{-1},
\]
where $\bW \in \R^{F \times F'}$ is a learnable matrix. Throughout this paper, we will refer to this operation as {\em simple graph convolution} or {\em standard graph convolution}. Our definition of graph convolution is essentially the mean aggregation step in a general GNN layer~\cite{HYL17}. The normalization constant $c_{ij}$ in our definition is closely related to the symmetric normalization $c_{ij} = (\sum_{\ell \in [n]}\bA_{i\ell})^{-1/2}(\sum_{\ell \in [n]}\bA_{j\ell})^{-1/2}$ used in the original Graph Convolutional Network (GCN) introduced by \cite{kipf:gcn}. Our definition does not affect the discussions or implications we have for GCN with symmetric normalization. More broadly, there are other forms of graph convolution in the literature~\cite{bronstein2021geometric,DBV16,levie2018cayleynets}, which we do not compare within this work.

A \emph{single-head} graph attention applies some weight function on the edges based on their node features (or a mapping thereof). Given two representations $\bh_i,\bh_j\in \R^{F'}$ for two nodes $i, j\in[n]$, the \emph{attention model/mechanism} is defined as the mapping
\[
	\Psi(\bh_i,\bh_j)\eqdef\alpha(\bW\bh_i,\bW\bh_j)
\]
where $\alpha:\R^F\times \R^F\to \R$ and $\bW\in \R^{F\times F'}$ is a learnable matrix. The \emph{attention coefficient} for a node $i$ and its neighbor $j$ is defined as
\begin{align}\label{def:attention_coeff}
    \gamma_{ij}\eqdef \frac{\exp(\Psi(\bh_i,\bh_j))}{\sum_{\ell\in N_i}\exp(\Psi(\bh_i,\bh_\ell))},
\end{align}
where $N_i$ is the set of neighbors of node $i$ that also includes node $i$ itself. Let $f$ be some element-wise activation function (which is usually nonlinear), the graph attention convolution output for a node $i \in [n]$ is given by
\begin{equation}\label{eq:gat_output}
\begin{split}
{\bh}'_i &=\sum_{j\in [n]}\bA_{ij}\gamma_{ij}\bW \bh_j,\\
\tilde\bh_i&= f(\bh'_i).
\end{split}
\end{equation}
A \emph{multi-head} graph attention~\cite{Velickovic2018GraphAN} uses $K\in \N$ weight matrices $\bW^1$, $\ldots$, $\bW^K\in \R^{F\times F'}$ and averages their individual (single-head) outputs.
We consider the most simplified case of a single graph attention layer (i.e., $F'=d$ and $F=1$) where $\alpha$ is realized by an MLP using the LeakyRelu activation function. The LeakyRelu activation function is defined as $\LeakyRelu(x) = x$ if $x \ge 0$ and $\LeakyRelu(x) = \beta x$ for some constant $\beta\in[0,1)$ if $x < 0$.

The CSBM model induces node features $\bX$ which are correlated through the graph $G=([n],E)$, represented by an adjacency matrix $\bA$. A natural requirement of attention architectures is to maintain important edges in the graph and ignore unimportant edges. For example, important edges could be the set of intra-class edges and unimportant edges could be the set of inter-class edges. In this case, if graph attention mains all intra-class edges and ignores all inter-class edges, then a node from a class will be connected only to nodes from its own class. More specifically, a node $v$ will be connected to neighbor nodes whose associated node features come from the \emph{same distribution} as node features of $v$. Given two sets $A$ and $B$, we denote $A \times B \eqdef \{(i,j) : i \in A, j \in B\}$ and $A^2 \eqdef A \times A$. 

\section{Separation of edges and its implications for graph attention's ability for perfect node classification}\label{sec:results}

In this section, we study the ability of graph attention to separate important edges from unimportant edges. In addition, we study the implications of graph attention's behavior for node classification. In particular, we are interested in whether or not the graph attention convolution is able to perfectly classify the nodes in the graph. Depending on the parameter regimes of the CSBM, we separate the results into two parts. In Section~\ref{subsec:easy}, we study graph attention's behavior when the distance between the means of the node features is large. We construct a specific attention architecture and demonstrate its ability to separate intra-class edges from inter-class edges. Consequently, we show that the corresponding graph attention convolution is able to perfectly classify the nodes. In Section~\ref{subsec:hard}, we study graph attention's behavior when the distance between the means of the node features is small. We show that with high probability no attention architecture is able to separate intra-class edges from inter-class edges. Consequently, we conjecture that no graph attention is able to perfectly classify the nodes. We provide evidence of this conjecture in Section~\ref{subsec:indep_edge}, where we assume the existence of a strong attention mechanism that is able to perfectly classify the edges independently from node features. We show that using this ``idealistic'' attention mechanism still fails to (almost) perfectly classify the nodes, provided that the distance between the means of the node features is sufficiently small.

The two-parameter regimes of the CSBM that we consider in Section~\ref{subsec:easy} and Section~\ref{subsec:hard} are as follows. The first (``easy regime") is where $\|\bmu\|=\omega(\sigma\sqrt{\log n})$, and the second (``hard regime") is where $\|\bmu\|= \kappa\sigma$ for some $0<\kappa\le O(\sqrt{\log n})$. We start by defining edge separability and node separability.

\begin{definition}[Edge separability]\label{def:sep_edge}
Given an attention model $\Psi$, we say that the model separates the edges, if the outputs satisfy
\[
	\sign(\Psi(\bX_i,\bX_j))= -\sign(\Psi(\bX_k,\bX_\ell))
\]
whenever $(i,j)$ is an intra-class edge, i.e. $(i,j) \in (C_{1}^2\cup C_0^2)\cap E$, and $(k,\ell)$ is an inter-class edge, i.e. $(k,\ell) \in  E\setminus (C_{1}^2\cup C_0^2)$.
\end{definition}

\begin{definition}[Node separability]\label{def:sep_node}
Given a classification model which outputs a scalar representation $h'_i$ for node $i$, we say that the model separates the nodes if $h'_i>0$ whenever $i\in C_1$ and $h'_i<0$ whenever $i\in C_0$.
\end{definition}

Some of our results in this section rely on a mild assumption that lower bounds the sparsity of the graph generated by the CSBM model. This assumption says that the expected degree of a node in the graph is larger than $\log^2 n$. It is required to obtain a concentration of node degrees property used in the proofs. While this assumption may limit a direct application of our results to very sparse graphs, for example, graphs whose node degrees are bounded by a constant, it is mainly an artifact of the proof techniques that we use. We expect that similar results still hold for sparser graphs, but to rigorously remove the assumption on graph density one may have to adopt a different proof technique.

\begin{assumption}\label{ass:p_q}
$p,q=\Omega(\log^2 n /n)$.
\end{assumption}

\subsection{``Easy Regime"}\label{subsec:easy}

In this regime $\left(\|\bmu\|=\omega(\sigma\sqrt{\log n})\right)$ we show that a two-layer MLP attention is able to separate the edges with high probability. Consequently, we show that the corresponding graph attention convolution is able to separate the nodes with high probability. At a high level, we transform the problem of classifying an edge $(i,j) \in E$ into the problem of classifying a point $[\tilde \bw^T\bX_i, \tilde \bw^T\bX_j]$ in $\R^2$, where $\tilde \bw = \bmu/\|\bmu\|$ is a unit vector that maximizes the total pairwise distances among the four means given below. When we consider the set of points $[\tilde\bw^T\bX_i,\tilde\bw^T\bX_j]$ for all $(i,j) \in E$, we can think of each point as a two-dimensional Gaussian vector whose mean is one of the following: $[\tilde\bw^T\bmu,\tilde\bw^T\bmu]$, $[-\tilde\bw^T\bmu,\tilde\bw^T\bmu]$, $[\tilde\bw^T\bmu,-\tilde\bw^T\bmu]$, $[-\tilde\bw^T\bmu,-\tilde\bw^T\bmu]$. The set of intra-class edges corresponds to the set of bivariate Gaussian vectors whose mean is either $[\tilde\bw^T\bmu,\tilde\bw^T\bmu]$ or $[-\tilde\bw^T\bmu,-\tilde\bw^T\bmu]$, while the set of inter-class edges corresponds to the set of bivariate Gaussian vectors whose mean is either $[-\tilde\bw^T\bmu,\tilde\bw^T\bmu]$ or $[\tilde\bw^T\bmu,-\tilde\bw^T\bmu]$. Therefore, in order to correctly classify the edges, we need to correctly classify the data corresponding to means $[\tilde\bw^T\bmu,\tilde\bw^T\bmu]$ and $[-\tilde\bw^T\bmu,-\tilde\bw^T\bmu]$ as 1, and classify the data corresponding to the other means as $-1$. This problem is known in the literature as the \emph{``XOR problem"}~\cite{minsky1969perceptron}. To achieve this we consider a two-layer MLP architecture $\Psi$ which separates the first and the third quadrants of the two-dimensional space from the second and the fourth quadrants. In particular, we consider the following specification of $\Psi(\bX_i,\bX_j)$,
\begin{equation}\label{eq:psi_ansatz}
\Psi(\bX_i, \bX_j) \eqdef \boldr^T \LeakyRelu\left(\bS \begin{bmatrix} \tilde \bw^T \bX_i\\ \tilde \bw^T \bX_j\end{bmatrix} \right),
\end{equation}
where
\begin{equation}\label{eq:psi_ansatz_parameters}
\tilde \bw \eqdef \frac{\bmu}{\|\bmu\|},
\quad
\bS\eqdef  
\begin{bmatrix}  
\hspace{3mm}1 & \hspace{3mm}1  \\
-1 & -1 \\
\hspace{3mm}1 & -1\\
-1 & \hspace{3mm}1
\end{bmatrix},
\quad 
\boldr\eqdef R \cdot \begin{bmatrix} \hspace{3mm}1 \\ \hspace{3mm}1 \\ -1 \\ -1 \end{bmatrix},
\end{equation}
where $R > 0$ is an arbitrary scaling parameter. The particular function $\Psi$ has been chosen such that it is able to classify the means of the XOR problem correctly, that is,
\begin{align*}
\sign(\Psi(\Ex[\bX_i], \Ex[\bX_j])) = \left\{\begin{array}{ll} \hspace{3mm}1, & \mbox{if $(i,j)$ is an intra-class edge}, \\ -1, & \mbox{if $(i,j)$ is an inter-class edge}. \end{array}\right.
\end{align*}
To see this, we can take look at the following simplified expression of $\Psi$, which is obtained by substituting the specifications of $\bS$ and $\boldr$ from \eqref{eq:psi_ansatz_parameters} to \eqref{eq:psi_ansatz},
\begin{equation}\label{eq:psi_simplified}
    \Psi(\bX_i,\bX_j) =
    \left\{
    \begin{array}{ll}
    -2R(1-\beta)\tilde\bw^T\bX_i, & \mbox{if}~ \tilde\bw^T\bX_j \le -\left|\tilde\bw^T\bX_i\right|,\\
    \hspace{3mm}2R(1-\beta)\sign(\tilde\bw^T\bX_i)\tilde\bw^T\bX_j, & \mbox{if}~ -\left|\tilde\bw^T\bX_i\right| < \tilde\bw^T\bX_j < \left|\tilde\bw^T\bX_i\right|,\\
    \hspace{3mm}2R(1-\beta)\tilde\bw^T\bX_i, & \mbox{if}~ \tilde\bw^T\bX_j \ge \left|\tilde\bw^T\bX_i\right|.
    \end{array}
    \right.
\end{equation}
Then, by substituting $\tilde\bw = \bmu/\|\bmu\|$ into \eqref{eq:psi_simplified}, one easily verifies that
\begin{equation}\label{eq:psi_expect}
    \Psi(\Ex[\bX_i], \Ex[\bX_j]) =  \left\{\begin{array}{ll} \hspace{3mm}2R(1-\beta)\|\bmu\|, & \mbox{if $(i,j)$ is an intra-class edge}, \\ -2R(1-\beta)\|\bmu\|, & \mbox{if $(i,j)$ is an inter-class edge}. \end{array}\right.
\end{equation}

On the other hand, our assumption that $\|\bmu\|=\omega(\sigma \sqrt{\log n})$ guarantees that the distance between the means of the XOR problem is much larger than the standard deviation of the Gaussians, and thus with high probability there is no overlap between the distributions. More specifically, one can show that with a high probability,
\begin{equation}\label{eq:psi_concentration}
     \Psi(\bX_i, \bX_j) =  \left\{\begin{array}{ll} \hspace{3mm}2R(1-\beta)\|\bmu\|(1\pm o(1)), & \mbox{if $(i,j)$ is an intra-class edge}, \\ -2R(1-\beta)\|\bmu\|(1\pm o(1)), & \mbox{if $(i,j)$ is an inter-class edge}. \end{array}\right.
\end{equation}
This property guarantees that with high probability,
\[
    \sign(\Psi(\bX_i,\bX_j)) = \sign(\Psi(\Ex[\bX_i], \Ex[\bX_j])), \ \mbox{for all} \ (i,j) \in E,
\]
which implies perfect separability of the edges. We state this result below in \cref{thm:edge_separation_easy} and provide detailed arguments in the appendix.

\begin{theorem}\label{thm:edge_separation_easy} 
Suppose that $\|\bmu\|=\omega(\sigma \sqrt{\log n})$. Then with probability at least $1-o(1)$ over the data $(\bX,\bA) \sim \CSBM(n,p,q,\bmu,\sigma^2)$, the two-layer MLP attention architecture $\Psi$ given in \eqref{eq:psi_ansatz} and \eqref{eq:psi_ansatz_parameters} separates intra-class edges from inter-class edges.
\end{theorem}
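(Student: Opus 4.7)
The plan is to reduce the multi-dimensional edge separation problem to controlling $n$ univariate Gaussians and then invoking the explicit piecewise formula \eqref{eq:psi_simplified} for $\Psi$. Concretely, let $Z_i \eqdef \tilde\bw^T \bX_i$ where $\tilde\bw = \bmu/\|\bmu\|$. Since $\tilde\bw$ is a deterministic unit vector, $Z_i \sim N((2\eps_i-1)\|\bmu\|,\sigma^2)$, so $Z_i$ is a scalar Gaussian of mean $\pm\|\bmu\|$ and standard deviation $\sigma$. Thus the entire edge-classification task has been reduced to understanding the joint signs and magnitudes of $Z_i$ and $Z_j$ for $(i,j)\in E$.

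The first step is a uniform Gaussian concentration bound. By a standard tail estimate plus a union bound over $i\in[n]$, for any $\omega(1)$ growing function $t_n$ with $t_n = o(\|\bmu\|/\sigma)$ (which is possible since $\|\bmu\|=\omega(\sigma\sqrt{\log n})$), we have with probability $1-o(1)$ that
\[
|Z_i - (2\eps_i-1)\|\bmu\|| \le \sigma\sqrt{2\log n}\cdot t_n^{1/2} \quad\text{for all } i\in [n].
\]
On this event, $\sigma\sqrt{\log n} = o(\|\bmu\|)$ implies $\sign(Z_i)=2\eps_i-1$ and $|Z_i| = \|\bmu\|(1\pm o(1))$ simultaneously for every $i$.

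The second step is to plug this into the piecewise description \eqref{eq:psi_simplified} of $\Psi$ and check signs by cases. For an intra-class edge $(i,j)$ we have $\sign(Z_i)=\sign(Z_j)$, so the pair $(Z_i,Z_j)$ lies in the open first or third quadrant, i.e.\ $Z_j$ has the same sign as $Z_i$; a direct case check on the three branches of \eqref{eq:psi_simplified} (each with $Z_j\ge |Z_i|$, $|Z_j|<|Z_i|$, $Z_j\le -|Z_i|$) shows $\Psi(\bX_i,\bX_j) = 2R(1-\beta)\|\bmu\|(1\pm o(1)) > 0$. For an inter-class edge, $Z_i$ and $Z_j$ have opposite signs and an analogous case analysis gives $\Psi(\bX_i,\bX_j) = -2R(1-\beta)\|\bmu\|(1\pm o(1)) < 0$. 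This is exactly the concentration statement \eqref{eq:psi_concentration}, and it holds simultaneously for all pairs on the high-probability event, in particular for all $(i,j)\in E$. Edge separability as in Definition~\ref{def:sep_edge} follows immediately.

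I do not expect any serious obstacle; the only subtlety is bookkeeping. The XOR structure of the two-layer MLP is designed to align with the four clusters of $(Z_i,Z_j)$, so once each $Z_i$ is pinned to the correct sign with magnitude $\Theta(\|\bmu\|)$, the sign of $\Psi$ is determined deterministically. The one place where care is needed is ensuring the concentration is strong enough to beat the slack in the piecewise cutoffs $\pm|\tilde\bw^T\bX_i|$: since both $|Z_i|$ and $|Z_j|$ are $\|\bmu\|(1-o(1))$ on the good event, every relevant branch of \eqref{eq:psi_simplified} produces a value of order $\|\bmu\|$ with the sign dictated by whether the edge is intra- or inter-class. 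Note that the assumption on graph density (Assumption~\ref{ass:p_q}) is not needed for this theorem, since the sign control holds for all node pairs simultaneously, not only for edges; $\bA$ only enters through the choice of which pairs get checked.
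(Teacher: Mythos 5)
Your proposal is correct and follows essentially the same route as the paper's proof: project onto $\tilde\bw=\bmu/\|\bmu\|$, apply a Gaussian tail bound plus a union bound to pin every $\tilde\bw^T\bX_i$ to $(2\eps_i-1)\|\bmu\|(1\pm o(1))$, and then read off the sign of $\Psi$ from the piecewise formula \eqref{eq:psi_simplified}; the paper simply uses the fixed deviation $10\sigma\sqrt{\log n}$ (failure probability $O(n^{-99})$ after the union bound) in place of your $t_n$-parametrized bound. The only bookkeeping slip is your condition $t_n=o(\|\bmu\|/\sigma)$, which does not by itself force $\sigma\sqrt{2\log n}\,t_n^{1/2}=o(\|\bmu\|)$ — the correct requirement is $t_n=o\bigl(\|\bmu\|^2/(\sigma^2\log n)\bigr)$, which is compatible with $t_n=\omega(1)$ under the hypothesis $\|\bmu\|=\omega(\sigma\sqrt{\log n})$, so the argument goes through; your observation that Assumption~\ref{ass:p_q} is not needed here also matches the paper.
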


Our analysis of edge separability by the attention architecture $\Psi$ has two important implications. First, edge separability by $\Psi$ implies a nice concentration result for the attention coefficients $\gamma_{ij}$ defined in \eqref{def:attention_coeff}. In particular, one can show that the attention coefficients have the desirable property to maintain important edges and drop unimportant edges. Second, the nice distinguishing power of the attention coefficients in turn implies the exact recoverability of the node memberships using the graph attention convolution. We state these two results below in Corollary~\ref{cor:gamma_easy} and Corollary~\ref{cor:node_separation_easy}, respectively. Denote
\[
    \Psi' \eqdef (\mathbf{1}_{p\ge q} - \mathbf{1}_{p < q}) \cdot \Psi,
\]
that is, $\Psi' = \Psi$ if $p \ge q$ and $\Psi' = -\Psi$ if $p < q$, where $\Psi$ is the two-layer MLP attention architecture given in \eqref{eq:psi_ansatz} and \eqref{eq:psi_ansatz_parameters}. As it will become clear later, the attention architecture $\Psi'$ allows attention coefficients to assign more weights to either intra-class or inter-class edges, depending on if $p \ge q$ or $p < q$, and this will help us obtain a perfect separation of the nodes.

\begin{corollary}\label{cor:gamma_easy} 
Suppose that $\|\bmu\|=\omega(\sigma \sqrt{\log n})$ and that Assumption~\ref{ass:p_q} holds. Then with probability at least $1-o(1)$ over the data $(\bX,\bA) \sim \CSBM(n,p,q,\bmu,\sigma^2)$, the attention architecture $\Psi'$ yields attention coefficients $\gamma_{ij}$ such that
\begin{enumerate}
\item If $p \ge q$, then $\gamma_{ij}=\frac{2}{np}(1\pm o(1))$ if $(i,j)$ is an intra-class edge and $\gamma_{ij}=o(\frac{1}{n(p+q)})$ otherwise;
\item If $p < q$, then $\gamma_{ij}=\frac{2}{nq}(1\pm o(1))$ if $(i,j)$ is an inter-class edge and $\gamma_{ij}=o(\frac{1}{n(p+q)})$ otherwise.
\end{enumerate}
\end{corollary}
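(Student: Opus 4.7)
The plan is to deduce the attention coefficients from the concentration of $\Psi$ given in~\eqref{eq:psi_concentration}, combined with concentration of the per-node class degrees. First, I would rewrite the softmax as
\[
\gamma_{ij} = \frac{1}{\sum_{\ell \in N_i} \exp\bigl(\Psi'(\bX_i,\bX_\ell) - \Psi'(\bX_i,\bX_j)\bigr)},
\]
so that the problem reduces to estimating the sum in the denominator. A Chernoff plus union bound argument over the $n$ nodes then shows that, with high probability, for every $i \in C_k$ simultaneously, $|N_i \cap C_k| = \tfrac{n}{2} p (1 \pm o(1))$ and $|N_i \cap C_{1-k}| = \tfrac{n}{2} q (1 \pm o(1))$; Assumption~\ref{ass:p_q} (i.e.\ $np, nq = \Omega(\log^2 n)$) is precisely what makes this union bound go through. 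In parallel, Theorem~\ref{thm:edge_separation_easy} (via~\eqref{eq:psi_concentration}) supplies uniform concentration of $\Psi(\bX_i,\bX_\ell)$ around $\pm M$, where $M \eqdef 2R(1-\beta)\|\bmu\|$, with relative error of order $\sigma\sqrt{\log n}/\|\bmu\|$.

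I would then treat the case $p \ge q$ (so $\Psi' = \Psi$); the case $p < q$ is entirely symmetric because $\Psi' = -\Psi$ merely swaps the roles of intra- and inter-class edges. Fix an intra-class edge $(i,j)$ and split the sum above into intra- and inter-class neighbors. Plugging in the two concentrations gives
\[
\sum_{\ell \in N_i} \exp\bigl(\Psi(\bX_i,\bX_\ell) - \Psi(\bX_i,\bX_j)\bigr) = |N_i \cap C_k|\cdot e^{o(1)} + |N_i \cap C_{1-k}|\cdot e^{-2M(1 - o(1))},
\]
provided the scaling parameter $R$ is chosen so that $R\|\bmu\| \to \infty$ (making $e^{-2M}$ super-polynomially small) while simultaneously $R\sigma\sqrt{\log n} \to 0$ (making the multiplicative error factor $1 + o(1)$). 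The gap $\|\bmu\| = \omega(\sigma\sqrt{\log n})$ ensures such a choice is possible. The denominator then collapses to $\tfrac{np}{2}(1+o(1))$, yielding $\gamma_{ij} = \tfrac{2}{np}(1 \pm o(1))$. For an inter-class edge $(i,j)$ in the same regime, the numerator $\exp(\Psi(\bX_i,\bX_j))$ is of order $e^{-M}$, while the denominator is still dominated by the intra-class contribution of order $\tfrac{np}{2} e^{M}(1+o(1))$, so $\gamma_{ij} = \tfrac{2 e^{-2M}}{np}(1+o(1)) = o\bigl(1/(n(p+q))\bigr)$, because $e^{-2M}$ decays faster than any polynomial and $(p+q)/p \le 2$.

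The main obstacle I expect is navigating the two conflicting requirements on $R$: making the softmax sharp enough to drive inter-class coefficients below $1/(n(p+q))$ pushes $R$ upward, but the multiplicative fluctuation in $\Psi$ is amplified by $R$ inside the exponential, which pushes $R$ downward. Fortunately, the interval $1/\|\bmu\| \ll R \ll 1/(\sigma\sqrt{\log n})$ is nonempty precisely under the hypothesis $\|\bmu\| = \omega(\sigma\sqrt{\log n})$, so a valid choice of $R$ always exists. A secondary technicality is to ensure that the Chernoff-type concentration of degrees and the concentration~\eqref{eq:psi_concentration} hold simultaneously and uniformly over all nodes and edges; both are handled by standard union bounds under Assumption~\ref{ass:p_q}.
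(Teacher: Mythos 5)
Your proposal is correct and follows essentially the same route as the paper: rewrite the softmax, invoke the concentration of $\Psi$ from the proof of Theorem~\ref{thm:edge_separation_easy} together with degree concentration under Assumption~\ref{ass:p_q}, and pick $R$ in the window $1/\|\bmu\| \ll R \ll 1/(\sigma\sqrt{\log n})$, which is exactly the paper's choice ($1/R = \omega(\sigma\sqrt{\log n})$ and $1/R = o(\|\bmu\|)$). The only nitpick is that $e^{-2M}$ need not be super-polynomially small (only $M=\omega(1)$ is guaranteed), but as you note the conclusion only requires $e^{-2M}(p+q)/p = o(1)$, which holds.
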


\begin{corollary}\label{cor:node_separation_easy}
Suppose that $\|\bmu\|=\omega(\sigma \sqrt{\log n})$ and that Assumption~\ref{ass:p_q} holds. Then with probability at least $1-o(1)$ over the data $(\bX,\bA) \sim \CSBM(n,p,q,\bmu,\sigma^2)$, using the attention architecture $\Psi'$ with the graph attention convolution given in \eqref{eq:gat_output}, where $f$ is set to be the identify function, the model separates the nodes.
\end{corollary}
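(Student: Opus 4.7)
The plan is to exploit the freedom in choosing the learnable matrix $\bW$ (which here is a $1\times d$ row vector since $F=1$ and $F'=d$) together with the sharp concentration of the attention coefficients from Corollary~\ref{cor:gamma_easy}. I would pick
\[
  \bW \;\eqdef\; (\indi_{p\ge q}-\indi_{p<q})\,\tilde\bw^\top, \qquad \tilde\bw = \bmu/\|\bmu\|,
\]
so that $\bW\bX_j$ is a univariate Gaussian with mean $(2\eps_j-1)\|\bmu\|$ when $p\ge q$ (and the opposite sign when $p<q$) and variance $\sigma^2$. With this choice, for a fixed node $i\in[n]$, the attention convolution output is the scalar
\[
  h'_i \;=\; \sum_{j\in N_i} \gamma_{ij}\, \bW\bX_j,
\]
and it suffices to show $\sign(h'_i)=2\eps_i-1$ with probability $1-o(1)$ uniformly in $i$.

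First I would split the sum by edge type. In the case $p\ge q$, write $h'_i=S^{\mathrm{intra}}_i+S^{\mathrm{inter}}_i$ where the two terms correspond to $j\in N_i\cap C_{\eps_i}$ and $j\in N_i\setminus C_{\eps_i}$ respectively. Apply Corollary~\ref{cor:gamma_easy} to replace the intra-class coefficients by $\tfrac{2}{np}(1\pm o(1))$ and the inter-class coefficients by $o(1/(n(p+q)))$, on the high-probability event of edge separation. For the intra-class sum, use the standard Chernoff bound for the degree (which is $\Bin(|C_{\eps_i}|,p)$ on the intra-class side, and $|C_{\eps_i}|$ itself concentrates around $n/2$ by another Chernoff argument) to conclude that $|N_i\cap C_{\eps_i}|=\tfrac{np}{2}(1\pm o(1))$; here Assumption~\ref{ass:p_q} guarantees $np=\Omega(\log^2 n)$, which is the regime where Chernoff gives $o(1)$ relative error with room to union-bound over $i\in[n]$.

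Conditional on the graph, $\sum_{j\in N_i\cap C_{\eps_i}} \tilde\bw^\top \bX_j$ is a sum of i.i.d.\ $N((2\eps_i-1)\|\bmu\|,\sigma^2)$ variables, so it equals $|N_i\cap C_{\eps_i}|\cdot(2\eps_i-1)\|\bmu\| + O_{\mathrm{w.h.p.}}(\sigma\sqrt{|N_i\cap C_{\eps_i}|\log n})$ by a Gaussian tail bound. Combining with the degree estimate yields
\[
  S^{\mathrm{intra}}_i = (2\eps_i-1)\|\bmu\|(1\pm o(1)) + O\!\left(\frac{\sigma\sqrt{\log n}}{\sqrt{np}}\right),
\]
and the error term is $o(\|\bmu\|)$ since $\|\bmu\|=\omega(\sigma\sqrt{\log n})$ and $np=\Omega(\log^2 n)$. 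For $S^{\mathrm{inter}}_i$, the same two-step argument (degree concentration giving $|N_i\setminus C_{\eps_i}|=\tfrac{nq}{2}(1\pm o(1))$ plus Gaussian concentration of the sum of inter-class features) combined with the $o(1/(n(p+q)))$ bound on the coefficients gives $S^{\mathrm{inter}}_i = o(\|\bmu\|)+o(\sigma)=o(\|\bmu\|)$. Therefore $h'_i=(2\eps_i-1)\|\bmu\|(1\pm o(1))$, which has the correct sign.

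The case $p<q$ is completely symmetric once the sign of $\bW$ is flipped: the roles of intra- and inter-class edges swap in Corollary~\ref{cor:gamma_easy}, and now for $i\in C_1$ the dominant neighbours are in $C_0$ whose features $\tilde\bw^\top\bX_j$ concentrate around $-\|\bmu\|$, so the extra minus sign in $\bW$ restores the desired positive output. A final union bound over the $n$ nodes closes the argument, since each of the ingredient events (edge-separation from Theorem~\ref{thm:edge_separation_easy}/Corollary~\ref{cor:gamma_easy}, degree concentration, Gaussian tail bound on the feature sum) holds with failure probability $o(1/n)$ after absorbing logarithmic factors. The main obstacle I anticipate is bookkeeping the $o(1)$ and $1\pm o(1)$ slack uniformly over $i\in[n]$; in particular, the edge-separation event from Theorem~\ref{thm:edge_separation_easy} already holds with probability $1-o(1)$ globally, so the Gaussian and degree concentrations must be strengthened via a union bound to the same quality, which is feasible thanks to $\|\bmu\|=\omega(\sigma\sqrt{\log n})$ and $p,q=\Omega(\log^2 n/n)$.
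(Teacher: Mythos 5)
Your proposal is correct and follows essentially the same route as the paper: split $h_i'$ by neighbor class, plug in the attention-coefficient concentration from Corollary~\ref{cor:gamma_easy} together with degree concentration, and check that the inter-class contribution is $o(\|\bmu\|)$. The only cosmetic differences are that you absorb the $p$-versus-$q$ sign flip into $\bW$ rather than into the sign of the output, and you concentrate the sum $\sum_{j}\tilde\bw^T\bX_j$ directly instead of bounding each $\tilde\bw^T\bX_j$ by $\pm\|\bmu\|\pm 10\sigma\sqrt{\log n}$ termwise as the paper does; both yield the same conclusion.
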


Corollary~\ref{cor:gamma_easy} shows the desired behavior of the attention mechanism, namely it is able to assign significantly large weights to important edges while it drops unimportant edges. When $p\ge q$, the attention mechanism maintains intra-class edges and essentially ignores all inter-class edges; when $p < q$, it maintains inter-class edges and essentially ignores all intra-class edges. We now explain the high-level idea of the proof and leave detailed arguments to Appendix~\ref{subsec:cor4proof}. Corollary~\ref{cor:gamma_easy} follows from the concentration of $\Psi(\bX_i,\bX_j)$ around $\Psi(\Ex[\bX_i],\Ex[\bX_j])$ given by \eqref{eq:psi_concentration}. Assume for a moment that $p \ge q$ so that $\mathbf{1}_{p\ge q} - \mathbf{1}_{p < q} = 1$. Then $\Psi' = \Psi$, and \eqref{eq:psi_concentration} implies that the value of $\exp(\Psi'(\bX_i,\bX_j))$ when $(i,j)$ is an intra-class edge is exponentially larger than the value of $\exp(\Psi'(\bX_i,\bX_j))$ when $(i,j)$ is an inter-class edge. Therefore, by the definition of the attention coefficients in \eqref{def:attention_coeff}, the denominator of $\gamma_{ij}$ is dominated by terms $(i,k)$ where $k$ is in the same class as $i$. Moreover, using concentration of node degrees guaranteed by Assumption~\ref{ass:p_q}, each node $i$ is connected to $\frac{2}{np}(1\pm o(1))$ many intra-class nodes. By appropriately setting the scaling parameter $R$ in \eqref{eq:psi_ansatz_parameters}, the values of $\Psi'(\bX_i,\bX_k)$ for all intra-class edges $(i,k)$ can be made within $o(1)$ multiplicative factor from each other. Therefore we get $\gamma_{ij} = \frac{2}{np}(1\pm o(1))$ when $(i,j)$ is an intra-class edge. A similar reasoning applies to inter-class edges and yields the vanishing value of $\gamma_{ij}$ when $(i,j)$ is an inter-class edge. Finally, the argument for $p < q$ follows analogously.

The concentration result of attention coefficients in Corollary~\ref{cor:gamma_easy} implies the node classification result in Corollary~\ref{cor:node_separation_easy}, which holds for any $p,q$ satisfying Assumption~\ref{ass:p_q}. That is, even when the graph structure is noisy, e.g., when $p \approx q$, it is still possible to exactly recover the node class memberships. Essentially, by applying Corollary~\ref{cor:gamma_easy} and carrying out some straightforward algebraic simplifications, one can show that with high probability for all $i \in [n]$,
\[
    h'_i = \sum_{j \in N_i}\gamma_{ij}\tilde\bw^T\bX_j = \left\{\begin{array}{ll}-(\mathbf{1}_{p\ge q} - \mathbf{1}_{p < q})\|\bmu\|(1\pm o(1)), & \mbox{if $i \in C_0$}, \\ \hspace{3mm}(\mathbf{1}_{p\ge q} - \mathbf{1}_{p < q})\|\bmu\|(1 \pm o(1)), & \mbox{if $i \in C_1$},\end{array}\right.
\]
and hence the model separates the nodes with high probability. We provide details in the appendix. 

While Corollary~\ref{cor:node_separation_easy} provides a positive result for graph attention, it can be shown that a simple linear classifier which does not use the graph at all achieves perfect node separability with high probability. In particular, the Bayes optimal classifier for the node features without the graph is able to separate the nodes with high probability. This means that in this regime, using the additional graph information is unnecessary, as it does not provide additional power compared to a simple linear classifier for the node classification task.

\begin{lemma}[Section~6.4 in \cite{anderson1962introduction}]\label{lem:bayes} 
Let $(\bX,\bA)\sim\CSBM(n,p,q,\bmu,\sigma^2)$. Then the optimal Bayes classifier for $\bX$ is realized by the linear classifier
\begin{equation}\label{eq:bayes_classifier}
h(\bX_i)= \begin{cases}
0 & \text{if } \bmu^T\bX_i\le 0\\
1 & \text{if } \bmu^T\bX_i>0
\end{cases}.
\end{equation}
\end{lemma}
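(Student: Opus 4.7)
The plan is to invoke the standard Bayes classification rule for a two-component Gaussian mixture with equal class priors and shared isotropic covariance. First I would observe that because $\eps_i \sim \Ber(1/2)$, the two priors satisfy $\Pr(\eps_i = 0) = \Pr(\eps_i = 1) = 1/2$. Also, even though the adjacency matrix $\bA$ is correlated with the class labels via $p$ and $q$, the lemma concerns the Bayes optimal classifier \emph{for $\bX$ alone}, so $\bA$ plays no role in the derivation. The Bayes optimal classifier then coincides with the maximum a posteriori (MAP) rule, which under equal priors reduces to the maximum-likelihood rule: predict $1$ when the density of $N(\bmu, \sigma^2 \bI)$ at $\bX_i$ exceeds the density of $N(-\bmu, \sigma^2 \bI)$ at $\bX_i$, and $0$ otherwise.

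Next I would compute the log-likelihood ratio. Since the two Gaussians share the covariance $\sigma^2 \bI$, the determinant terms cancel and we are left with the difference of quadratic forms,
\[
\log \frac{\Pr(\bX_i \mid \eps_i = 1)}{\Pr(\bX_i \mid \eps_i = 0)} \;=\; \frac{\|\bX_i + \bmu\|^2 - \|\bX_i - \bmu\|^2}{2\sigma^2} \;=\; \frac{2\, \bmu^T \bX_i}{\sigma^2}.
\]
This quantity is positive iff $\bmu^T \bX_i > 0$, so the MAP rule outputs $1$ precisely when $\bmu^T \bX_i > 0$ and $0$ when $\bmu^T \bX_i \le 0$, recovering the stated classifier. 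The boundary event $\bmu^T \bX_i = 0$ has probability zero under the continuous Gaussian, so the choice of tie-breaking is immaterial to optimality.

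There is no substantive obstacle in this argument: the proof reduces to a one-line expansion of the squared-norm difference, combined with the standard definition of the MAP rule under equal priors. I would include the short derivation only for completeness, since the result is classical (Section~6.4 of Anderson, 1962) and is invoked here solely to conclude that in the ``easy regime'' $\|\bmu\| = \omega(\sigma \sqrt{\log n})$, the Bayes optimal classifier is a simple linear classifier of $\bX$ that does not use the graph.
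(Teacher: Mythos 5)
Your proof is correct: the paper does not include its own proof of this lemma (it is imported directly from Section~6.4 of Anderson, 1962), and your derivation — equal priors from $\eps_i \sim \Ber(1/2)$, reduction of MAP to maximum likelihood, and the cancellation $\|\bX_i+\bmu\|^2 - \|\bX_i-\bmu\|^2 = 4\bmu^T\bX_i$ — is exactly the standard argument the citation stands for. Your observation that $\bA$ is irrelevant because the lemma concerns the Bayes classifier for $\bX$ alone is also the right reading of the statement.
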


\begin{proposition}\label{prop:linear_easy} 
Suppose $\|\bmu\|=\omega(\sigma \sqrt{\log n})$. Then with probability at least $1-o(1)$ over the data $(\bX,\bA) \sim \CSBM(n,p,q,\bmu,\sigma^2)$, the linear classifier given in \eqref{eq:bayes_classifier} separates the nodes.
\end{proposition}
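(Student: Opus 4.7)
The plan is to reduce the statement to a simple Gaussian tail bound followed by a union bound over the $n$ nodes. The key observation is that under the CSBM generative model, if we define $Z_i \eqdef (2\eps_i - 1)\,\bmu^T\bX_i$, then $\bmu^T \bX_i \sim N((2\eps_i-1)\|\bmu\|^2, \sigma^2\|\bmu\|^2)$, so $Z_i \sim N(\|\bmu\|^2, \sigma^2\|\bmu\|^2)$ regardless of the class of node $i$. Moreover, the classifier in \eqref{eq:bayes_classifier} correctly labels node $i$ (namely $h(\bX_i) = \eps_i$) if and only if $Z_i > 0$; so node separability in the sense of Definition~\ref{def:sep_node} is equivalent to $\min_{i \in [n]} Z_i > 0$.

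Next, I would estimate $\Pr[Z_i \le 0]$ for a single $i$ using the standard Gaussian tail bound $\Pr[N(0,1) \le -t] \le \exp(-t^2/2)$ for $t>0$. Writing $Z_i = \|\bmu\|^2 + \sigma\|\bmu\|\,g_i$ for $g_i \sim N(0,1)$, we get
\[
    \Pr[Z_i \le 0] \;=\; \Pr\!\left[g_i \le -\|\bmu\|/\sigma\right] \;\le\; \exp\!\left(-\frac{\|\bmu\|^2}{2\sigma^2}\right).
\]
The hypothesis $\|\bmu\| = \omega(\sigma\sqrt{\log n})$ gives $\|\bmu\|^2/\sigma^2 = \omega(\log n)$, so this tail probability is $n^{-\omega(1)}$.

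Finally, a union bound over the $n$ nodes yields
\[
    \Pr\!\left[\exists\, i \in [n] : Z_i \le 0\right] \;\le\; n \cdot n^{-\omega(1)} \;=\; o(1),
\]
so with probability $1-o(1)$ every $Z_i$ is positive and hence the linear classifier \eqref{eq:bayes_classifier} separates the nodes. The graph structure $\bA$ plays no role, so no assumption on $p,q$ is needed. There is no real obstacle here: the proposition is essentially a one-line Gaussian concentration argument, and the only step worth being careful about is noting that the $Z_i$ have the same distribution across both classes so that one bound applies uniformly before the union bound.
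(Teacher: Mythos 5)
Your proof is correct and follows essentially the same route as the paper's: a per-node Gaussian tail bound on $\bmu^T\bX_i$ (the paper conditions on the class $\eps_i$ and treats $C_0$, $C_1$ symmetrically, while you fold both cases into the single variable $Z_i=(2\eps_i-1)\bmu^T\bX_i$) followed by a union bound over the $n$ nodes, with the hypothesis $\|\bmu\|=\omega(\sigma\sqrt{\log n})$ making the tail $n^{-\omega(1)}$. The only differences are cosmetic — you use the Chernoff-type bound $\exp(-t^2/2)$ where the paper uses the Mills-ratio bound — and both suffice.
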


The proof of Proposition~\ref{prop:linear_easy} is elementary. To see the claim one may show that the probability that the classifier in \eqref{eq:bayes_classifier} misclassifies a node $i \in [n]$ is $o(1)$. To do this, let us fix $i\in [n]$ and write $\bX_i=(2\eps_i-1)\bmu+\sigma \bg_i$ where $\bg_i\sim N(0,\bI)$. Assume for a moment $\eps_i=0$. Then the probability of misclassification is 
\[
	\Prx\left[\bmu^T\bX_i> 0\right]=\Prx\left[\frac{\bmu^T\bg_i}{\|\bmu\|}> \frac{\|\bmu\|}{\sigma}\right]=1-\Phi\left(\frac{\|\bmu\|}{\sigma}\right),
\]
where $\Phi(\cdot)$ is the cumulative distribution function of $N(0,1)$ and the last equality follows from the fact that $\frac{\bmu^T\bg_i}{\|\bmu\|}\sim N(0,1)$. The assumption that $\|\bmu\|=\omega(\sigma \sqrt{\log n})$ implies $\|\bmu\| \ge \sigma\sqrt{2\log n}$ for large enough $n$. Therefore, using standard tail bounds for normal distribution~\cite{vershynin2018high} we have that
\[
1-\Phi\left(\frac{\|\bmu\|}{\sigma}\right)\le \frac{\sigma}{\sqrt{2\pi}\|\bmu\|}\exp\left(-\frac{\|\bmu\|^2}{2\sigma^2}\right)\le \frac{n^{-1}}{\sqrt{4\pi\log n}}.
\]
This means that the probability that there exists $i\in C_0$ which is misclassified is at most $\frac{1}{2\sqrt{4\pi\log n}}=o(1)$. A similar argument can be applied to the case where $\eps_i=1$, and an application of a union bound on the events that there is $i\in [n]$ which is misclassified finishes the proof of Proposition~\ref{prop:linear_easy}.

\subsection{``Hard Regime"}\label{subsec:hard}

In this regime ($\|\bmu\|=\kappa\sigma$ for $\kappa \le O(\sqrt{\log n})$), we show that {\em every} attention architecture $\Psi$ fails to separate the edges if $\kappa < \sqrt{2\log n}$, and we conjecture that no graph attention convolution is able to separate the nodes. The conjecture is based on our node separability result in Section~\ref{subsec:indep_edge} which says that, even if we assume that there is an attention architecture which is able to separate the edges independently from node features, the corresponding graph attention convolution still fails to (almost) perfectly classify the nodes with high probability.

The goal of the attention mechanism is to decide whether an edge $(i,j)$ is an inter-class edge or an intra-class edge based on the node feature vectors $\bX_i$ and $\bX_j$. Let $\bX'_{ij}$ denote the vector obtained from concatenating $\bX_i$ and $\bX_j$, that is, 
\begin{equation}\label{eq:edge_feats}
	\bX'_{ij} \eqdef \begin{pmatrix} \bX_i \\ \bX_j \end{pmatrix}.
\end{equation}
We would like to analyze every classifier $h'$ which takes as input $\bX'_{ij}$ and tries to separate inter-class edges and intra-class edges. An ideal classifier would have the property
\begin{equation}\label{eq:ideal_edge_classifier}
	y = h'(\bX'_{ij}) = \left\{ \begin{array}{ll} 0, & \mbox{if $(i,j)$ is an inter-class edge}, \\ 1, & \mbox{if $(i,j)$ is an intra-class edge}. \end{array}\right.
\end{equation}
To understand the limitations of all such classifiers in this regime, it suffices to consider the Bayes optimal classifier for this data model, whose probability of misclassifying of an arbitrary edge lower bounds that of every attention architecture which takes as input $(\bX_i,\bX_j)$. Consequently, by deriving a misclassification rate for the Bayes classifier, we obtain a lower bound on the misclassification rate for every attention mechanism $\Psi$ for classifying intra-class and inter-class edges. The following Lemma~\ref{lem:bayes-pairs} describes the Bayes classifier for this classification task.

\begin{lemma}\label{lem:bayes-pairs}
Let $(\bX, \bA) \sim \CSBM(n, p, q, \bmu, \sigma^2)$ and let $\bX'_{ij}$ be defined as in \eqref{eq:edge_feats}. The Bayes optimal classifier for $\bX'_{ij}$ is realized by the following function,
\begin{equation}\label{eq:bayes-pairs}
h^*(\bx)= \left\{
\begin{array}{ll}
0, & \text{if} \ p\cosh\left({\frac{\bx^T\bmu'}{\sigma^2}}\right) \le q\cosh\left({\frac{\bx^T\bnu'}{\sigma^2}}\right), \\
1, & \text{otherwise},
\end{array}
\right.
\end{equation}
where $\bmu' \eqdef \begin{pmatrix} \bmu \\ \bmu \end{pmatrix}$ and $\bnu' \eqdef \begin{pmatrix} \bmu \\ -\bmu \end{pmatrix}$.
\end{lemma}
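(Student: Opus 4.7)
The plan is to derive $h^*$ directly from the MAP characterization of the Bayes optimal classifier. Write $Y \eqdef \indi[\eps_i = \eps_j] \in \zo$ for the ``intra-class'' label of the pair $(i,j)$, conditioned on the event $(i,j)\in E$. The Bayes rule outputs $1$ iff $\Pr[Y=1 \mid \bX'_{ij}=\bx, (i,j)\in E] > \Pr[Y=0 \mid \bX'_{ij}=\bx, (i,j)\in E]$, which by Bayes' theorem is equivalent to
\[
\Pr[(i,j)\in E \mid Y=1]\cdot \Pr[Y=1]\cdot p_1(\bx) \;>\; \Pr[(i,j)\in E \mid Y=0]\cdot \Pr[Y=0]\cdot p_0(\bx),
\]
where $p_y(\bx)$ is the density of $\bX'_{ij}$ conditional on $Y=y$. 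Since $\eps_i, \eps_j \sim \Ber(1/2)$ independently, $\Pr[Y=1]=\Pr[Y=0]=1/2$, while $\Pr[(i,j)\in E \mid Y=1]=p$ and $\Pr[(i,j)\in E\mid Y=0]=q$. Hence the decision boundary reduces to comparing $p\cdot p_1(\bx)$ with $q\cdot p_0(\bx)$.

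Next I would compute $p_1$ and $p_0$ explicitly. Conditioning further on which of the two equally-likely class assignments produced the observed pair, one has
\[
p_1(\bx) = \tfrac{1}{2}\phi_{\bmu}(\bx_i)\phi_{\bmu}(\bx_j) + \tfrac{1}{2}\phi_{-\bmu}(\bx_i)\phi_{-\bmu}(\bx_j),
\]
\[
p_0(\bx) = \tfrac{1}{2}\phi_{\bmu}(\bx_i)\phi_{-\bmu}(\bx_j) + \tfrac{1}{2}\phi_{-\bmu}(\bx_i)\phi_{\bmu}(\bx_j),
\]
where $\phi_{\bm{m}}(\cdot)$ denotes the $N(\bm{m},\sigma^2\bI)$ density. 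Using the concatenated vectors $\bmu'=(\bmu,\bmu)$ and $\bnu'=(\bmu,-\bmu)$, the product densities collapse to $(2\pi\sigma^2)^{-d}\exp(-\|\bx\mp\bmu'\|^2/2\sigma^2)$ and $(2\pi\sigma^2)^{-d}\exp(-\|\bx\mp\bnu'\|^2/2\sigma^2)$, respectively.

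The key simplification is the symmetry $\|\bmu'\|^2 = \|\bnu'\|^2 = 2\|\bmu\|^2$. Expanding the squared norms and factoring out the common term $\exp\bigl(-(\|\bx\|^2+2\|\bmu\|^2)/2\sigma^2\bigr)$ yields
\[
p_1(\bx) \;\propto\; \tfrac{1}{2}\bigl(e^{\bx^T\bmu'/\sigma^2}+e^{-\bx^T\bmu'/\sigma^2}\bigr) \;=\; \cosh\!\left(\tfrac{\bx^T\bmu'}{\sigma^2}\right),
\]
and analogously $p_0(\bx) \propto \cosh(\bx^T\bnu'/\sigma^2)$ with the same constant of proportionality. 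Substituting into $p\cdot p_1(\bx)$ vs.\ $q\cdot p_0(\bx)$ cancels the common factor and gives the claimed rule: output $0$ whenever $p\cosh(\bx^T\bmu'/\sigma^2) \le q\cosh(\bx^T\bnu'/\sigma^2)$ and $1$ otherwise (ties may be broken either way, and are resolved as in the statement).

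There is no real obstacle here; the argument is a direct MAP calculation. The only point worth being careful about is the ``edge-conditioning'' in the priors, since it is the factors $p$ and $q$ (rather than simply $1/2$ and $1/2$) that appear in the final threshold; the nontrivial piece of algebra, namely the cancellation that produces the two $\cosh$ terms, is made transparent by the equality $\|\bmu'\|=\|\bnu'\|$, which in turn is the reason the decision boundary is a simple inequality between two hyperbolic cosines rather than a more complicated expression involving the squared norms.
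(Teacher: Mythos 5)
Your proposal is correct and follows essentially the same route as the paper: both reduce the problem to a MAP comparison with effective priors $p/(p+q)$ and $q/(p+q)$ arising from conditioning on the edge event, and both identify the class-conditional densities of $\bX'_{ij}$ as equal two-component Gaussian mixtures whose likelihood ratio collapses to $\cosh(\bx^T\bmu'/\sigma^2)/\cosh(\bx^T\bnu'/\sigma^2)$ thanks to $\|\bmu'\|=\|\bnu'\|$. Your write-up merely makes explicit the density expansion that the paper states without detail.
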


Using Lemma~\ref{lem:bayes-pairs}, we can lower bound the rate of misclassification of edges that every attention mechanism $\Psi$ exhibits. Below we define $\Phi_{\mathrm{c}}\eqdef 1-\Phi$, where $\Phi$ is the cumulative distribution function of $N(0,1)$.

\begin{theorem}\label{thm:edge_separation_hard}
Suppose $\|\bmu\|= \kappa\sigma$ for some $\kappa>0$ and let $\Psi$ be any attention mechanism. Then, 
\begin{enumerate}
    \item With probability at least $1-o(1)$, $\Psi$ fails to correctly classify at least  $2\cdot\Phi_{\mathrm{c}}(\kappa)^2$ fraction of inter-class edges;
    \item For any $K>0$ if $q>\frac{K\log^2n}{n\Phi_{\rm c}(\kappa)^2}$, then with probability at least $1-O(n^{-8K \Phi_{\rm c}(\kappa)^2\log n})$,  $\Psi$ misclassify at least one inter-class edge.
\end{enumerate}
\end{theorem}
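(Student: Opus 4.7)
The strategy is to reduce to a one-dimensional Bayes lower bound and concentrate over the edges of the graph. Since the node labels $\eps_i$ depend on $\bX_i$ only through the scalar projection $z_i \eqdef \bmu^\top\bX_i/(\sigma\|\bmu\|)$, which is $N(\kappa,1)$ when $i\in C_1$ and $N(-\kappa,1)$ when $i\in C_0$, the Bayes optimal edge classifier $h^*$ from Lemma~\ref{lem:bayes-pairs} depends on $(\bX_i,\bX_j)$ only through $(z_i,z_j)$. By Bayes optimality, the probability that $\sign(\Psi(\bX_i,\bX_j))$ misclassifies an inter-class edge is at least that of $h^*$, so it suffices to lower-bound the per-edge error of $h^*$ by $2\Phi_{\mathrm{c}}(\kappa)^2$ and then to concentrate over the edge set.

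For the per-edge bound I would condition on $i\in C_1,\, j\in C_0$ (the opposite orientation is handled symmetrically) and consider the ``same-sign'' confusion event $B \eqdef \{z_iz_j>0\}$. On $B$ one has $|z_i+z_j|>|z_i-z_j|$ and hence $\cosh(\kappa(z_i+z_j))>\cosh(\kappa(z_i-z_j))$; when $p\ge q$ the decision rule \eqref{eq:bayes-pairs} then forces $h^*$ to output the intra-class label, and the complementary regime $p<q$ is handled by the mirror event $\{z_iz_j<0\}$ under the flipped orientation $i\in C_0,\, j\in C_1$, which carries the same probability. An elementary Gaussian calculation with independent marginals gives
\[
\Pr[B\mid i\in C_1,\, j\in C_0] = \Phi(\kappa)\Phi_{\mathrm{c}}(\kappa)+\Phi_{\mathrm{c}}(\kappa)\Phi(\kappa) = 2\Phi(\kappa)\Phi_{\mathrm{c}}(\kappa) \ge 2\Phi_{\mathrm{c}}(\kappa)^2,
\]
using $\Phi(\kappa)\ge \tfrac12\ge \Phi_{\mathrm{c}}(\kappa)$ for $\kappa\ge 0$. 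So every inter-class edge is misclassified by $h^*$, and therefore by $\Psi$, with probability at least $2\Phi_{\mathrm{c}}(\kappa)^2$.

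For part 1, conditional on $(\eps_i)_{i\in[n]}$, the misclassification indicators across distinct cross-pairs are independent products of a Bernoulli$(q)$ edge indicator with an independent feature event of probability at least $2\Phi_{\mathrm{c}}(\kappa)^2$. A standard binomial Chernoff bound then shows that the empirical fraction of misclassified inter-class edges is at least $2\Phi_{\mathrm{c}}(\kappa)^2(1-o(1))$ with probability $1-o(1)$. For part 2, under $q > K\log^2 n/(n\Phi_{\mathrm{c}}(\kappa)^2)$ the expected count of misclassified inter-class edges is $\Omega(Kn\log^2 n)$, and a multiplicative Chernoff lower tail bounds the probability that this count equals zero by $n^{-\Omega(K\Phi_{\mathrm{c}}(\kappa)^2\log n)}$, which matches the stated rate.

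The main obstacle is the per-edge step: checking that the Bayes rule really flips on the confusion event uniformly across the admissible regime of $(p,q)$. A clean case split on $\sign(p-q)$ combined with the two complementary confusion regions $\{z_iz_j>0\}$ and $\{z_iz_j<0\}$---each with probability at least $2\Phi_{\mathrm{c}}(\kappa)^2$ under the appropriate inter-class orientation---is what closes this gap; the remaining binomial concentration arguments are routine once the per-edge lower bound is in hand.
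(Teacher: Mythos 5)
Your reduction to the Bayes rule of Lemma~\ref{lem:bayes-pairs} and your per-edge argument for the case $p\ge q$ are correct, and in fact slightly sharper than the paper's: the same-sign event gives misclassification probability $2\Phi(\kappa)\Phi_{\rm c}(\kappa)$ directly, whereas the paper loses a triangle inequality and lands exactly on $2\Phi_{\rm c}(\kappa)^2$. However, your disposal of the regime $p<q$ fails. For an inter-class edge with $i\in C_0$, $j\in C_1$, the event $\{z_iz_j<0\}$ gives $|z_i+z_j|<|z_i-z_j|$, hence $p\cosh(\kappa(z_i+z_j))<q\cosh(\kappa(z_i-z_j))$ since $p<q$, so the rule \eqref{eq:bayes-pairs} outputs the \emph{inter}-class label $0$: this is precisely the event on which $h^*$ \emph{succeeds}, not fails. (Its probability is also $\Phi(\kappa)^2+\Phi_{\rm c}(\kappa)^2$, not $2\Phi(\kappa)\Phi_{\rm c}(\kappa)$.) No mirror event on inter-class edges can close this, because when $q\gg p$ the prior in \eqref{eq:bayes-pairs} pushes $h^*$ toward label $0$ and inter-class edges are mostly classified correctly. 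The paper handles $p<q$ by switching to \emph{intra}-class edges: correct classification of an intra-class pair then forces $p\cosh(\kappa(z_i+z_j))>q\cosh(\kappa(z_i-z_j))$, which with $p<q$ requires $z_iz_j>0$, and the complementary event for two i.i.d.\ $N(\kappa,1)$ (or two $N(-\kappa,1)$) coordinates again has probability $2\Phi(\kappa)\Phi_{\rm c}(\kappa)\ge 2\Phi_{\rm c}(\kappa)^2$. You need this case switch; the symmetric inter-class claim you assert for $p<q$ is not what the argument yields.

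The second gap is in the concentration step. The misclassification indicators of two cross-pairs $(i,j)$ and $(i,k)$ both depend on $z_i$, so they are \emph{not} independent, and your claim that the indicators are "independent products of a Bernoulli$(q)$ edge indicator with an independent feature event" is false for pairs sharing a vertex; a single binomial Chernoff bound over all $\Theta(n^2q)$ inter-class pairs is therefore not justified. This is exactly the issue the paper's sets $S_l$ are built to solve: the $C_0\times C_1$ pairs are decomposed into $n/2$ vertex-disjoint matchings, within each of which the indicators are genuinely independent, additive Chernoff is applied per matching, and a union bound is taken over the $n/2$ matchings. The same device is reused for part 2, where the deviation bound is computed inside a single matching of expected size $\Theta(nq)$ rather than over the full edge set. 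To repair your argument you should either adopt that matching decomposition or substitute a concentration inequality that tolerates this dependence (e.g.\ bounded differences in the $z_i$'s); as written, the step you call routine does not go through.
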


Part 1 of Theorem~\ref{thm:edge_separation_hard} implies that if $\|\bmu\|$ is \emph{linear} in the standard deviation $\sigma$, that is if $\kappa = O(1)$, then with overwhelming probability the attention mechanism fails to distinguish a constant fraction of inter-class edges from intra-class edges. Furthermore, part 2 of Theorem~\ref{thm:edge_separation_hard} characterizes a regime for the inter-class edge probability $q$ where the attention mechanism fails to distinguish at least one inter-class edge. It provides a lower bound on $q$ in terms of the scale at which the distance between the means grows compared to the standard deviation $\sigma$. This aligns with the intuition that as we increase the distance between the means, it gets easier for the attention mechanism to correctly distinguish inter-class and intra-class edges. However, if $q$ is also increased with the right proportion, in other words, if the noise in the graph is increased, then the attention mechanism would still fail to correctly distinguish at least one inter-class edge. For instance, for $\kappa=\sqrt{2\log \log n}$ and $K=\log^2 n$, we get that if $q>\Omega(\frac{\log^{6+o(1)}n}{n})$, then with probability at least $1-o(1)$, $\Psi$ misclassifies at least an inter-class edge.

The proof of Theorem~\ref{thm:edge_separation_hard} relies on analyzing the behavior of the Bayes optimal classifier in \eqref{eq:bayes-pairs}. We compute an upper bound on the probability with which the optimal classifier correctly classifies an arbitrary inter-class edge. Then the proof of part 1 of Theorem~\ref{thm:edge_separation_hard} follows from a concentration argument for the fraction of inter-class edges that are misclassified by the optimal classifier. For part 2, we use a similar concentration argument to choose a suitable threshold for $q$ that forces the optimal classifier to fail on at least one inter-class edge. We provide formal arguments in the appendix.

As a motivating example of how the attention mechanism would fail and what exactly the attention coefficients would behave in this regime, we focus on one of the most popular attention architecture~\cite{Velickovic2018GraphAN}, where $\alpha$ is a single-layer neural network parameterized by $(\bw,\ba,b)\in \R^d\times \R^{2}\times \R$ with $\LeakyRelu$ activation function. Namely, the attention coefficients are defined by
\begin{equation}
\label{eq:softmaxattention}
\gamma_{ij}\eqdef\frac{\exp\left(\LeakyRelu\left({\ba}^T \begin{bmatrix}
\bw^T\bX_i\\
\bw^T\bX_j
\end{bmatrix}+b\right)\right)}{\sum_{\ell\in N_i}\exp\left(\LeakyRelu\left({\ba}^T \begin{bmatrix}
\bw^T\bX_i\\
\bw^T\bX_\ell
\end{bmatrix}+b\right)\right)}.
\end{equation}
We show that, as a consequence of the inability of the attention mechanism to distinguish intra-class and inter-class edges, with overwhelming probability most of the attention coefficients $\gamma_{ij}$ given by \eqref{eq:softmaxattention} are going to be $\Theta(1/|N_i|)$. In particular, Theorem~\ref{thm:gamma_hard} says that for the vast majority of nodes in the graph, the attention coefficients on most edges are uniform irrespective of whether the edge is inter-class or intra-class. As a result, this means that the attention mechanism is unable to assign higher weights to important edges and lower weights to unimportant edges.

\begin{theorem}\label{thm:gamma_hard}
Assume that $\|\bmu\| \le K\sigma$ and $\sigma \le K'$ for some absolute constants $K$ and $K'$. Moreover, assume that the parameters $(\bw, \ba, b) \in \R^d \times \R^2 \times \R$ are bounded. Then, with probability at least $1-o(1)$ over the data $(\bX,\bA) \sim \CSBM(n,p,q,\bmu,\sigma^2)$, there exists a subset $\calA \subseteq [n]$ with cardinality at least $n(1-o(1))$ such that for all $i \in \calA$ the following hold:
\begin{enumerate}
\item There is a subset $J_{i,0} \subseteq N_i \cap C_0$ with cardinality at least $\frac{9}{10}|N_i \cap C_0|$, such that $\gamma_{ij} = \Theta(1/|N_i|)$ for all $j \in J_{i,0}$.
\item There is a subset $J_{i,1} \subseteq N_i \cap C_1$ with cardinality at least $\frac{9}{10}|N_i \cap C_1|$, such that $\gamma_{ij} = \Theta(1/|N_i|)$ for all $j \in J_{i,1}$.
\end{enumerate}
\end{theorem}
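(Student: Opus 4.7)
The plan is to show that in the hard regime, after the linear map $\bw^T(\cdot)$ is applied, the features become bounded-variance Gaussians with bounded means, so the LeakyRelu pre-activation $a_1\bw^T\bX_i+a_2\bw^T\bX_j+b$ that feeds into $\Psi$ is bounded with high probability for most pairs $(i,j)$. Consequently $\exp(\Psi(\bX_i,\bX_j))$ is $\Theta(1)$ for most $j\in N_i$, and the denominator in \eqref{eq:softmaxattention} concentrates around $\Theta(|N_i|)$, which gives the claimed $\Theta(1/|N_i|)$ behavior.

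I would proceed in four steps. First, under the assumptions $\|\bmu\|\le K\sigma$, $\sigma\le K'$, and bounded $\bw$, the random variables $\bw^T\bX_j$ are Gaussian with bounded means $(2\eps_j-1)\bw^T\bmu$ and bounded variance $\sigma^2\|\bw\|^2$. Pick a constant $C$ large enough so that $\Pr[|\bw^T\bX_j|>C]\le 1/20$ on either class (this is possible because the distribution is independent of $n$). Let $\calA\eqdef\{i\in[n]:|\bw^T\bX_i|\le C\}$. By a Chernoff bound over the $n$ independent feature draws, $|\calA|\ge n(1-o(1))$ with probability $1-o(1)$. Second, for $i\in\calA$ and any $j$ with $|\bw^T\bX_j|\le C$, the linear form $a_1\bw^T\bX_i+a_2\bw^T\bX_j+b$ lies in an interval of length $O(1)$, so after $\LeakyRelu$ one has $|\Psi(\bX_i,\bX_j)|\le C_1$ for an absolute constant $C_1$, and hence $\exp(\Psi(\bX_i,\bX_j))\in[e^{-C_1},e^{C_1}]=\Theta(1)$. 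This is where the numerator of $\gamma_{ij}$ for good pairs is controlled.

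Third, I would bound the denominator. Condition on $\bX_i$ and on $i\in\calA$. Under the SBM, given the class labels, the neighbors $N_i\cap C_k$ are an i.i.d.\ Bernoulli sample of $C_k$, and $|N_i\cap C_k|$ concentrates around its mean (which by Assumption~\ref{ass:p_q} is $\omega(\log^2 n)$) via Chernoff. For each neighbor $\ell\in N_i\cap C_k$, the random variable $\exp(\Psi(\bX_i,\bX_\ell))$ is of the form $\exp(\LeakyRelu(u+a_2\bw^T\bX_\ell))$ where $u=a_1\bw^T\bX_i+b$ is constant given $\bX_i$ and $a_2\bw^T\bX_\ell$ is a bounded-variance Gaussian; the resulting random variable is sub-exponential with bounded expectation $\mu_k=\Theta(1)$. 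A Bernstein inequality conditional on $\bX_i$ and $N_i$ then gives
\[
\sum_{\ell\in N_i\cap C_k}\exp(\Psi(\bX_i,\bX_\ell))=|N_i\cap C_k|\,\mu_k(1\pm o(1))
\]
with probability $1-n^{-\omega(1)}$, and a union bound over $i\in[n]$ and $k\in\{0,1\}$ gives $\sum_{\ell\in N_i}\exp(\Psi(\bX_i,\bX_\ell))=\Theta(|N_i|)$ for every $i\in\calA$ simultaneously.

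Fourth, I define $J_{i,k}\eqdef N_i\cap C_k\cap\calA$. For fixed $i\in\calA$, given $|\calA|\ge n(1-o(1))$, a Chernoff bound on the Bernoulli sample forming $N_i\cap C_k$ gives $|N_i\cap C_k\setminus\calA|\le\tfrac{1}{10}|N_i\cap C_k|$ whp, so $|J_{i,k}|\ge\tfrac{9}{10}|N_i\cap C_k|$. For $j\in J_{i,k}$, the numerator is $\Theta(1)$ by the second step and the denominator is $\Theta(|N_i|)$ by the third step, yielding $\gamma_{ij}=\Theta(1/|N_i|)$. A union bound over the at most $n$ indices $i$ delivers the statement simultaneously.

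The main obstacle is the concentration of the denominator: one has to verify that no small number of neighbors with atypically large $|\bw^T\bX_\ell|$ (up to $O(\sqrt{\log n})$) can inflate the sum past $\Theta(|N_i|)$, which requires the sub-exponential Bernstein bound above with an exponential-in-$|N_i|$ failure probability, made feasible precisely by Assumption~\ref{ass:p_q} forcing $|N_i|=\omega(\log^2 n)$. Everything else---Gaussian tail bounds on $\bw^T\bX_i$, Chernoff for $|\calA|$ and $|N_i\cap C_k|$, and boundedness of the LeakyRelu pre-activation---is routine once the regime $\|\bmu\|\le K\sigma$, $\sigma\le K'$ is exploited.
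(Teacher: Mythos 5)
There is a genuine gap, in fact two. First, your set $\calA=\{i:|\bw^T\bX_i|\le C\}$ with a \emph{constant} threshold $C$ chosen so that $\Pr[|\bw^T\bX_j|>C]\le 1/20$ has expected complement of size about $n/20$, so Chernoff gives $|\calA|\approx \tfrac{19}{20}n$, not $n(1-o(1))$ as the theorem requires. To make the excluded fraction $o(1)$ you must let the threshold grow with $n$ (the paper uses $10\sqrt{\log(n(p+q))}$), but then your Step 2 collapses: for $i\in\calA$ the pre-activation $a_1\bw^T\bX_i+a_2\bw^T\bX_j+b$ is only $O(\sqrt{\log n})$, so $\exp(\Psi(\bX_i,\bX_j))$ is no longer $\Theta(1)$ termwise. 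The correct fix is to show that the $a_1\bw^T\bX_i$ contribution (approximately) cancels between the numerator and every term of the denominator despite the LeakyRelu nonlinearity; this is exactly the case analysis of $\Delta_{ik}-\Delta_{ij}$ in the paper's proof, which establishes $\Delta_{ik}-\Delta_{ij}=\Theta(\hat\ba_2\hat\bw^T\bg_k)\pm O(1)$ for $k$ in the good sets and an upper bound of the same form otherwise.

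Second, and more seriously, the tool you invoke for the denominator does not exist: $\exp(\LeakyRelu(u+a_2\bw^T\bX_\ell))$ has a log-normal upper tail, $\Pr[e^{Z}>t]\asymp e^{-(\log t)^2/(2s^2)}$, which is \emph{heavier} than any exponential tail; its moment generating function is infinite for every positive parameter, so it is not sub-exponential and Bernstein's inequality (with the claimed $1-n^{-\omega(1)}$ failure probability) is unavailable. You correctly identify this as ``the main obstacle,'' but the proposed resolution does not overcome it. The paper's proof handles precisely this point by (i) restricting to $i$ whose neighbors all satisfy $|\hat\ba_2\hat\bw^T\bg_j|\le 10\sqrt{\log(n(p+q))}$ (this is why the definition of $\calA$ constrains the neighbors, not just $i$), and (ii) performing a dyadic decomposition of $N_i\cap C_0$ into buckets $B_{i,0}^t=\{j: 2^{t-1}\le\hat\ba_2\hat\bw^T\bg_j\le 2^t\}$ for $t\le T=O(\log\log n)$, bounding each $|B_{i,0}^t|$ by $|N_i\cap C_0|e^{-2^{2t-3}}$ plus a Chernoff fluctuation, and observing that $\sum_t e^{-2^{2t-3}}e^{c2^t}$ converges, so the heavy neighbors contribute only $O(|N_i|)$ in total. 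Your remaining steps (Gaussian tails for the $J_{i,k}$ sets, degree concentration under Assumption~\ref{ass:p_q}, the union bound) match the paper, but without the cancellation analysis and the truncation-plus-bucketing argument the proof does not go through.
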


Theorem~\ref{thm:gamma_hard} is proved by carefully computing the numerator and the denominator in \eqref{eq:softmaxattention}. In this regime, $\|\bmu\|$ is not much larger than $\sigma$, that is, the signal does not dominate noise, so the numerator in \eqref{eq:softmaxattention} is not indicative of the class memberships of nodes $i,j$ but rather acts like Gaussian noise. On the other hand, denote the denominator in \eqref{eq:softmaxattention} by $\delta_i$ and observe that it is the same for all $\gamma_{i\ell}$ where $\ell \in N_i$. Using concentration arguments about $\{\bw^T\bX_\ell\}_{\ell\in[n]}$ yields $\gamma_{ij} = \Theta(1/\delta_i)$ and $\delta_i = \Theta(|N_i|)$ finishes up the proof. We provide details in the appendix.

Compared to the easy regime, it is difficult to obtain a separation result for the nodes without additional assumptions. In the easy regime, the distance between the means was much larger than the standard deviation, which made the ``signal" (the expectation of the convolved data) dominate the ``noise" (i.e., the variance of the convolved data). In the hard regime the ``noise" dominates the ``signal". Thus, we conjecture the following.

\begin{conjecture}\label{conj:node_c_hard} 
There is an absolute constant $M > 0$ such that, whenever $\|\bmu\| \le M \cdot \sigma \sqrt{\frac{\log n}{n(p+q)}(1-\max(p,q))} \cdot \frac{p+q}{|p-q|}$, every graph attention model fails to perfectly classify the nodes with high probability.
\end{conjecture}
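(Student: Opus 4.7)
The plan is to lower bound the per-node misclassification probability by $\omega(1/n)$ for every graph attention architecture, and then use a union-type argument to conclude that with probability $1-o(1)$ some node is misclassified. In the single-head, single-layer setting of the paper ($F=1$), the output is $h'_i = \sum_{j \in N_i}\gamma_{ij}\,\bw^T \bX_j$ with $\gamma_{ij}\ge 0$ and $\sum_{j\in N_i}\gamma_{ij}=1$. I would first project $\bw$ onto $\bmu$: any component orthogonal to $\bmu$ only adds Gaussian noise and cannot help classification, so without loss $\bw = \bmu/\|\bmu\|$ and we may work with scalars $Y_j\sim N((2\eps_j-1)\|\bmu\|,\sigma^2)$. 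Decomposing $h'_i$ gives a signal part $\|\bmu\|\sum_j \gamma_{ij}(2\eps_j-1)$ and a Gaussian noise part whose conditional variance is at least $\sigma^2 \sum_j \gamma_{ij}^2$.

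The noise lower bound is the clean step. Cauchy--Schwarz combined with $\sum_j\gamma_{ij}=1$ yields $\sum_j\gamma_{ij}^2 \ge 1/|N_i|$, and under Assumption~\ref{ass:p_q} the degree $|N_i|$ concentrates at $n(p+q)/2$, so the noise standard deviation is at least $\sigma\sqrt{2/(n(p+q))}(1-o(1))$. The factor $(1-\max(p,q))$ in the statement would come from a more refined second-moment calculation that accounts for the joint randomness in $\bA$ and $\bX$, in the same spirit as the graph-convolution lower bound of~\cite{BFJ2021}. The subtlety that $\gamma_{ij}$ itself depends on $\bX_j$ would be handled by conditioning on a sub-$\sigma$-algebra that fixes $\gamma$ but preserves enough independent Gaussian randomness in each $Y_j$, a device already used implicitly in the proof of Theorem~\ref{thm:gamma_hard}.

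For the signal, I would invoke Theorem~\ref{thm:edge_separation_hard}: in this regime every attention mechanism fails on an $\Omega(1)$ fraction of inter-class edges, so $\gamma_{ij}$ cannot meaningfully distinguish same-class from cross-class neighbours. The target is to upgrade this into the quantitative bound $|\sum_j \gamma_{ij}(2\eps_j - 1)| \le \frac{|p-q|}{p+q}(1+o(1))$ with high probability, so that the effective signal is no better than that of uniform graph convolution. Combining with the noise bound yields a signal-to-noise ratio at most $C\|\bmu\|\cdot\frac{|p-q|}{p+q}\cdot\sqrt{\frac{n(p+q)}{1-\max(p,q)}}/\sigma$, and the hypothesis on $\|\bmu\|$ forces this to be at most $c\sqrt{2\log n}$ for some constant $c < 1$. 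A standard Gaussian anti-concentration bound then gives per-node misclassification probability at least $n^{-c^2}$, which is $\omega(1/n)$. Restricting to a linear-sized collection of nodes with near-disjoint neighbourhoods, so that their misclassification events are only weakly dependent, and applying a Paley--Zygmund second-moment argument, then implies that with probability $1-o(1)$ at least one node is misclassified.

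The main obstacle is the signal bound in the third step. Theorem~\ref{thm:edge_separation_hard} controls the \emph{fraction} of misclassified edges, but does not immediately rule out that attention, while unable to classify any individual edge, could nonetheless accumulate a tilt $\sum_j \gamma_{ij}(2\eps_j-1)$ exceeding the prior imbalance $(p-q)/(p+q)$. One needs to show that, conditionally on $\eps_i$, the distribution of $\gamma_{ij}$ given $\eps_j = \eps_i$ is close (with fluctuations smaller than $|p-q|/(p+q)$) to its distribution given $\eps_j \ne \eps_i$, uniformly over all attention architectures --- a strictly stronger statement than Theorem~\ref{thm:gamma_hard}, which addresses only the specific softmax MLP. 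Establishing such a uniform bound, plausibly via a data-processing argument on the joint law of $(\bX_i,\bX_j)$ in the hard regime, is precisely why the statement is posed as a conjecture rather than a theorem.
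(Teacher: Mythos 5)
The statement you are trying to prove is posed in the paper as a conjecture, not a theorem: the paper offers no proof of it, and explicitly says that proving it ``would require delicate treatment of the correlations between the attention coefficients $\gamma_{ij}$ and the node features $\bX_i$.'' What the paper does prove is Theorem~\ref{thm:good_psi_negative}, which establishes exactly the claimed threshold (part 1, $t=O(1)$) but only for attention functions $\tilde\Psi$ of the form \eqref{eq:good_psi} that are \emph{independent of the node features}, so that the $\gamma_{ij}$ are deterministic given the graph and the signal term $\sum_j\gamma_{ij}(2\eps_j-1)$ trivially equals $\Theta\bigl(\frac{q-p}{p+q}\bigr)$. Your skeleton --- lower-bound the noise, upper-bound the signal tilt by $|p-q|/(p+q)$, compare the signal-to-noise ratio to $\sqrt{2\log n}$, and convert a per-node misclassification probability of $\omega(1/n)$ into a whole-graph failure --- parallels that proof, with two differences: the paper gets the $\sqrt{\log n}$ and $(1-\max(p,q))$ factors from Sudakov minoration over blocks $C_0^{(h)}$ of size $n^{\epsilon}$ plus Borell's inequality, rather than from Cauchy--Schwarz, Gaussian anti-concentration, and Paley--Zygmund over nodes with near-disjoint neighbourhoods. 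Your route is plausible but you would still need to justify the weak-dependence step, since distinct nodes' outputs share randomness both through overlapping neighbourhoods and through the attention coefficients.

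The genuine gap is the one you name yourself: the signal bound $\bigl|\sum_j\gamma_{ij}(2\eps_j-1)\bigr|\le\frac{|p-q|}{p+q}(1+o(1))$ uniformly over all feature-dependent attention architectures. Theorem~\ref{thm:edge_separation_hard} only bounds the \emph{fraction} of misclassified inter-class edges and Theorem~\ref{thm:gamma_hard} only covers the specific single-layer softmax architecture, so neither rules out an adversarial $\Psi$ that accumulates a tilt exceeding the prior imbalance while never confidently classifying any single edge; moreover, once $\gamma_{ij}$ depends on $\bX_j$, the noise term $\sum_j\gamma_{ij}\bw^T\bg_j$ is no longer conditionally Gaussian and the ``conditioning device'' you invoke is not actually present in the proof of Theorem~\ref{thm:gamma_hard}, which never needs it. A further unproven step is the reduction ``without loss $\bw=\bmu/\|\bmu\|$'': for a lower bound over all architectures you cannot discard the component of $\bw$ orthogonal to $\bmu$, because it changes the attention coefficients, not just the additive noise. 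So your proposal is a reasonable research plan whose hard steps coincide with the reasons the paper leaves this as a conjecture; it is not a proof, and it should not be presented as one.
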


The above conjecture means that in the hard regime, the performance of the graph attention model depends on $q$ as opposed to the easy regime, where in Corollary~\ref{cor:node_separation_easy} we show that it doesn't. This property is verified by our synthetic experiments in Section~\ref{sec:experimets}. The quantity $\sigma\sqrt{\frac{\log n}{n(p+q)}(1-\max(p,q))}$ in the threshold comes from our conjecture that the expected maximum ``noise'' of the graph attention convolved data over the nodes is at least $c \sigma \sqrt{\frac{\log n}{n(p+q)}(1-\max(p,q))}$ for some constant $c>0$. The quantity $\frac{p+q}{|p-q|}$ in the threshold comes from our conjecture that the distance between the means (i.e. ``signal'') of the graph attention convolved data is reduced to at most $|p-q|/(p+q)$ of the original distance. Proving Conjecture~\ref{conj:node_c_hard} would require delicate treatment of the correlations between the attention coefficients $\gamma_{ij}$ and the node features $\bX_i$ for $i \in [n]$.

\subsubsection{Are good attention coefficients helpful in the ``hard regime''?}\label{subsec:indep_edge}

In this subsection we are interested in understanding the implications of edge separability on node separability in the hard regime and when $\Psi$ is restricted to a specific class of functions. In particular, we show that Conjecture~\ref{conj:node_c_hard} is true under an additional assumption that $\Psi$ does not depend on the node features. In addition, we show that even if we were allowed to use an ``extremely good'' attention function $\tilde{\Psi}$ which separates the edges with an arbitrarily large margin, with high probability the graph attention convolution \eqref{eq:gat_output} will still misclassify at least one node as long as $\|\bmu\|/\sigma$ is sufficiently small.

We consider the class of functions $\tilde{\Psi}$ which can be expressed in the following form:
\begin{equation}\label{eq:good_psi}
	\tilde{\Psi}(i, j) = \left\{\begin{array}{ll} \hspace{3.5mm}\sign(p-q) t , & \mbox{if $(i,j)$ is an intra-class edge}, \\ -\sign(p-q)t, & \mbox{if $(i,j)$ is an inter-class edge}, \end{array}\right.
\end{equation}
for some $t \ge 0$. The particular class of functions in \eqref{eq:good_psi} is motivated by the property of the ideal edge classifier in \eqref{eq:ideal_edge_classifier} and the behavior of $\Psi$ in \eqref{eq:psi_expect} when it is applied to the means of the Gaussians. There are a few possible ways to obtain a function $\tilde{\Psi}$ which satisfies \eqref{eq:good_psi}. For example, in the presence of good edge features which reflect the class memberships of the edges, we can make $\tilde{\Psi}$ take as input the edge features. Moreover, if $|\sqrt{p}-\sqrt{q}| > \sqrt{2\log n/n}$, one such $\tilde{\Psi}$ may be easily realized from the eigenvectors of the graph adjacency matrix. By the exact spectral recovery result in Lemma~\ref{lem:spectral_recovery}, we know that there exists a classifier $\hat\tau$ which separates the nodes. Therefore, we can set $\tilde{\Psi}(i,j) = \sign(p-q)t$ if $\hat\tau(i) = \hat\tau(j)$ and $\tilde{\Psi}(i,j) = -\sign(p-q)t$ otherwise. 

\begin{lemma}[Exact recovery in~\cite{Abbe2018}]\label{lem:spectral_recovery} 
Suppose that $p,q=\Omega(\log^2n/n)$ and $|\sqrt{p}-\sqrt{q}|>\sqrt{2\log n/n}$. Then there exists a classifier $\hat \tau$ taking as input the graph $\bA$ and perfectly classifies the nodes with probability at least $1-o(1)$.
\end{lemma}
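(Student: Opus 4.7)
The plan is to follow the two-phase recipe for exact recovery in the stochastic block model originally developed in Abbe--Bandeira--Hall and surveyed in \cite{Abbe2018}. First I would construct a coarse labeling $\hat\tau_0$ that misclassifies at most $o(n)$ nodes via a spectral method, and then refine it to exact recovery by performing a node-by-node log-likelihood-ratio test against the current labeling.

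For the first phase, write the off-diagonal part of $\mathbb{E}\bA$ as $\tfrac{p+q}{2}\vone\vone^T + \tfrac{p-q}{2}\bxi\bxi^T$, where $\bxi_i=2\epsilon_i-1\in\{\pm 1\}$ encodes the community of node $i$. The second eigenvector of $\mathbb{E}\bA$ is $\bxi/\sqrt n$, with spectral gap $\Theta(n|p-q|)$. Under Assumption~\ref{ass:p_q} a Bandeira--van Handel type bound yields $\|\bA-\mathbb{E}\bA\|_{\mathrm{op}}=O(\sqrt{n\max(p,q)})$ with probability $1-o(1)$, and the hypothesis $|\sqrt p-\sqrt q|>\sqrt{2\log n/n}$ combined with $p,q=\Omega(\log^2 n/n)$ gives $n|p-q|=\omega(\sqrt{n\max(p,q)})$. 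Davis--Kahan then produces a second eigenvector $\hat\bv_2$ of $\bA$ with $\|\hat\bv_2-\bxi/\sqrt n\|_2^2=o(1)$ (for an appropriate global sign), and rounding $\mathrm{sign}(\hat\bv_2)$ coordinate-wise yields a classifier $\hat\tau_0$ that is wrong on at most $o(n)$ nodes with high probability.

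For the refinement phase, classify node $v$ by the score
\[
    \hat\tau(v)=\mathbf{1}\!\left\{\mathrm{sign}(p-q)\sum_{j\ne v}\bA_{vj}\bigl(2\hat\tau_0(j)-1\bigr)>0\right\}.
\]
Conditioned on $\hat\tau_0$ being correct on $[n]\setminus\{v\}$, this sum is (up to sign) a difference of two independent binomials whose tail is controlled by the Chernoff--Hellinger divergence between $\mathrm{Ber}(p)$ and $\mathrm{Ber}(q)$. The standard computation shows the per-vertex misclassification probability is at most $n^{-(1+\Omega(1))}$ precisely when $|\sqrt p-\sqrt q|>\sqrt{2\log n/n}$, so a union bound over $v\in[n]$ gives exact recovery with probability $1-o(1)$.

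The main obstacle is that $\hat\tau_0$ and the edges $\{\bA_{vj}\}_j$ are not independent, so the Chernoff step above is not directly valid. The standard fix is graph splitting: decompose $\bA$ into two independent subgraphs $\bA^{(1)}$ and $\bA^{(2)}$ by tossing a fair coin for each edge, compute $\hat\tau_0$ from the spectrum of $\bA^{(1)}$, and perform the refinement vote using $\bA^{(2)}$. This decouples the two phases at the cost of halving each edge probability, which is absorbed into the constants (both the operator-norm bound and the Chernoff--Hellinger threshold are unaffected in the regime $p,q=\Omega(\log^2 n/n)$). An alternative leave-one-out argument would show that $\hat\tau_0$ is stable under deleting all edges incident to any single $v$, so that $\hat\tau_0\restriction_{[n]\setminus\{v\}}$ is effectively independent of those edges up to an $o(n)$ perturbation that can be absorbed into the $1-o(1)$ guarantees.
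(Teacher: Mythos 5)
The paper does not prove Lemma~\ref{lem:spectral_recovery} at all: it is imported verbatim from the exact-recovery literature (the citation to \cite{Abbe2018}), so there is no in-paper argument to compare against. Your outline is precisely the canonical proof from that source --- spectral initialization giving $o(n)$ errors, graph splitting to decouple, then a per-vertex likelihood/majority vote whose error exponent is governed by the Chernoff--Hellinger divergence, with the stated threshold $|\sqrt{p}-\sqrt{q}|>\sqrt{2\log n/n}$ emerging from the union bound --- and it is correct at the level of detail one would expect for reproducing a cited theorem. The only points I would tighten: when $q\ll p$ the gap between $\lambda_1$ and $\lambda_2$ of $\E\bA$ is only $nq$, which can be dominated by $\|\bA-\E\bA\|_{\mathrm{op}}=O(\sqrt{n\max(p,q)})$, so Davis--Kahan should be applied to the two-dimensional top eigenspace (or to a centered adjacency matrix) rather than to $\hat\bv_2$ alone; and the claim of a per-vertex error of $n^{-(1+\Omega(1))}$ from a strict inequality is loose when the margin over the threshold vanishes --- handling that edge case requires the sharper second-order analysis in \cite{Abbe2018}, which is exactly what the lemma's citation is carrying.
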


\begin{theorem}\label{thm:good_psi_negative}
Suppose that $p,q$ satisfy Assumption~\ref{ass:p_q} and that $p,q$ are bounded away from 1. For every $\epsilon > 0$, there are absolute constants $M, M' = O(\sqrt{\epsilon})$ such that, with probability at least $1-o(1)$ over the data $(\bX,\bA) \sim \CSBM(n,p,q,\bmu,\sigma^2)$, using the graph attention convolution in \eqref{eq:gat_output} and the attention architecture $\tilde{\Psi}$ in \eqref{eq:good_psi}, the model misclassifies at least $\Omega(n^{1-\epsilon})$ nodes for any $\bw$ such that $\|\bw\| = 1$, if
\begin{enumerate}
    \item $t = O(1)$ and $\|\bmu\| \le M \sigma \sqrt{\frac{\log n}{n(p+q)}(1-\max(p,q))}  \frac{p+q}{|p-q|}$;
    \item $t = \omega(1)$ and $\|\bmu\| \le M' \sigma \sqrt{\frac{\log n}{n(p+q)}(1-\max(p,q))}$.
\end{enumerate}
\end{theorem}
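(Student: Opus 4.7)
\textbf{Proof proposal for Theorem~\ref{thm:good_psi_negative}.}
The plan is to exploit the fact that $\tilde{\Psi}$ depends only on the class memberships of the endpoints, so the attention coefficients $\gamma_{ij}$ are measurable with respect to $\bA$ alone and the graph attention output becomes a conditionally Gaussian linear statistic. Assume WLOG that $p\ge q$, so $\mathrm{sign}(p-q)=1$; the case $p<q$ is symmetric. Writing $d_i^I=|N_i\cap C_{\epsilon_i}|$, $d_i^E=|N_i\cap C_{1-\epsilon_i}|$, and $Z_i=d_i^I e^{t}+d_i^E e^{-t}$, one obtains $\gamma_{ij}=e^{t}/Z_i$ on intra-class edges and $\gamma_{ij}=e^{-t}/Z_i$ on inter-class edges. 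Since $\bw^T\bX_j=(2\epsilon_j-1)\bw^T\bmu+\sigma g_j$ with $g_j\sim N(0,1)$ independent, the output $h_i'=\sum_{j\in N_i}\gamma_{ij}\bw^T\bX_j$ is, conditional on $\bA$, a Gaussian with mean $m_i=(2\epsilon_i-1)(2\alpha_i-1)\bw^T\bmu$ and variance $v_i^2=\sigma^2\sum_j\gamma_{ij}^2$, where
\[
 2\alpha_i-1=\frac{d_i^I e^{2t}-d_i^E}{d_i^I e^{2t}+d_i^E}, \qquad \sum_j\gamma_{ij}^2=\frac{d_i^I e^{4t}+d_i^E}{(d_i^I e^{2t}+d_i^E)^2}.
\]
Node $i$ is misclassified with conditional probability $\Phi_{\mathrm{c}}(|m_i|/v_i)$, so everything reduces to showing the signal-to-noise ratio $|m_i|/v_i$ is at most $\sqrt{2\epsilon\log n}$ uniformly over a large set of nodes.

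Next I would run the SNR analysis in the two regimes. Assumption~\ref{ass:p_q} together with Bernstein/Chernoff bounds for $\mathrm{Bin}(n/2,p)$ and $\mathrm{Bin}(n/2,q)$ yields $d_i^I=(1\pm o(1))np/2$ and $d_i^E=(1\pm o(1))nq/2$ uniformly in $i$, with error terms whose scale is governed by $p(1-p)$ and $q(1-q)$, explaining the appearance of the $(1-\max(p,q))$ factor. Substituting the concentrated values: in Case 1 with $t=O(1)$, one gets $|2\alpha_i-1|=\Theta(|p-q|/(p+q))$ and $\sum\gamma_{ij}^2=\Theta(1/(n(p+q)))$, so
\[
 \frac{m_i^2}{v_i^2}=\Theta\!\left(\frac{\|\bmu\|^2}{\sigma^2}\cdot\frac{n(p-q)^2}{(p+q)^3}\right),
\]
while in Case 2 with $t=\omega(1)$ the ratio tightens to $\Theta(\|\bmu\|^2 d_i^I/\sigma^2)$. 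Plugging the hypothesized bounds on $\|\bmu\|$ and using the most favourable choice $|\bw^T\bmu|\le\|\bmu\|$ shows $|m_i|/v_i\le c\sqrt{2\epsilon\log n}$ for a suitable small constant $c$; this is where the absolute constants $M,M'=O(\sqrt\epsilon)$ come in. The standard Gaussian lower tail $\Phi_{\mathrm{c}}(x)\ge c'e^{-x^2/2}/\max(1,x)$ then gives a per-node misclassification probability of at least $C n^{-\epsilon}/\sqrt{\log n}$ on the high-probability event.

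Finally, I would convert this per-node lower bound into a global count of $\Omega(n^{1-\epsilon})$ misclassified nodes. Conditional on $\bA$ and $\bm\epsilon$, the vector $(h_i')_{i\in[n]}$ is jointly Gaussian with $\mathrm{Cov}(h_i',h_j')=\sigma^2\sum_{k\in N_i\cap N_j}\gamma_{ik}\gamma_{jk}$, and a typical pair shares only $\Theta(n(p+q)^2)$ common neighbours against $|N_i|\asymp n(p+q)$, so the correlations between the misclassification indicators $\mathbf 1[\mathrm{sign}(h_i')\neq 2\epsilon_i-1]$ are weak. A second-moment/Chebyshev calculation on $S=\sum_i\mathbf 1[\text{node }i\text{ misclassified}]$ then yields $S=\Omega(n^{1-\epsilon})$ with probability $1-o(1)$. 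The main obstacle will be step three: controlling the covariances carefully enough for the second-moment method to survive the joint Gaussian dependence through shared neighbours, particularly in the dense regime where $p$ or $q$ is not small; this is also where the $(1-\max(p,q))$ variance factor needs to be tracked through the Bernstein tails to keep the constants $M,M'$ of order $\sqrt\epsilon$.
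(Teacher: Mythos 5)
Your setup (attention coefficients determined by $\bA$ alone, conditional Gaussianity of $h_i'$, and an SNR computation giving a per-node misclassification probability of order $n^{-\epsilon}$) is consistent in spirit with the paper's, but the step you yourself flag as the main obstacle --- converting the per-node bound into a whp count of $\Omega(n^{1-\epsilon})$ misclassified nodes via a second-moment/Chebyshev argument --- genuinely fails in part of the theorem's parameter range. Conditional on $\bA$, the correlation between $h_i'$ and $h_j'$ is $\rho_{ij}\asymp \sum_{k\in N_i\cap N_j}\gamma_{ik}\gamma_{jk}\big/\sum_{k}\gamma_{ik}^2 \asymp |N_i\cap N_j|/|N_i| \asymp p+q$, which is a nonvanishing constant when $p,q=\Theta(1)$ (a case permitted by Assumption~\ref{ass:p_q} together with ``bounded away from $1$''). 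For two Gaussians with constant correlation $\rho$ and a tail event at level $x=\Theta(\sqrt{\epsilon\log n})$, the joint tail exceeds the product of the marginals by a factor $\exp\left(x^2\rho/(1+\rho)\right)=n^{\Theta(\epsilon)}$, so $\sum_{i\neq j}\Cov(\ind_i,\ind_j)$ exceeds $(\Ex[S])^2$ by a polynomial factor and Chebyshev yields nothing. (Your argument does go through in the sparse regime $p+q=o(1/\log n)$, but not over the full range of the theorem.)

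The paper's proof avoids this entirely: it partitions $C_0$ into $\ell=|C_0|^{1-\epsilon}$ blocks of size $|C_0|^{\epsilon}$ and shows each block contains at least one misclassified node with probability $1-o(1)$, by (i) lower-bounding $\Ex\left[\max_{i\in C_0^{(h)}}\sum_{j\in N_i}\gamma_{ij}\bw^T\bg_j\right]$ via Sudakov's minoration, where the canonical distance $d_\circ(i,j)^2=\Ex[(\bz_i-\bz_j)^2]$ is controlled from below through the symmetric difference $|J_{ij}|=|(N_i\cup N_j)\setminus(N_i\cap N_j)|\gtrsim n(p(1-p)+q(1-q))$ --- this, and not the Bernstein error terms in the degree concentration as you suggest, is where the factor $1-\max(p,q)$ and the $\sqrt{\epsilon}$ scaling of $M,M'$ originate --- and (ii) Borell's inequality to concentrate that maximum, followed by a reverse Markov inequality over the $\ell$ blocks. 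The point is that Sudakov's bound requires only pairwise canonical distances, and $d_\circ(i,j)^2=v_i^2+v_j^2-2\Cov(\bz_i,\bz_j)\gtrsim (1-\max(p,q))/(n(p+q))$ remains of the same order as the individual variances even when $\rho_{ij}$ is a constant; this is precisely the regime in which your second-moment calculation breaks. To repair your route you would need a Gaussian comparison device or, more simply, to adopt the paper's block-maximum structure in place of the global count.
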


Theorem~\ref{thm:good_psi_negative} warrants some discussions. We start with the role of $t$ in the attention function \eqref{eq:good_psi}. One may think of $t$ as the multiplicative margin of separation for intra-class and intra-class edges. When $t = O(1)$, the margin of separation is at most a constant. This includes the special case when $\tilde{\Psi}(i,j) = 0$ for all $(i,j) \in E$, i.e, the margin of separation is 0. In this case, the graph attention convolution in \eqref{eq:gat_output} reduces to the standard graph convolution with uniform averaging among the neighbors. Therefore, part 1 of Theorem~\ref{thm:good_psi_negative} also applies to the standard graph convolution. On the other hand, when $t = \omega(1)$, the margin of separation is not only bounded away from $0$ but also it grows with $n$.

Next, we discuss the additional assumption that $p,q$ are bounded away from 1. This assumption is used to obtain a concentration result required for the proof of Theorem~\ref{thm:good_psi_negative}. It is also intuitive in the following sense. If both $p$ and $q$ are arbitrarily close to 1, then after the convolution the convolved node feature vectors collapse to approximately a single point, and thus this becomes a trivial case where no classifier is able to separate the nodes; on the other hand, if $p$ is arbitrarily close to 1 and $q$ is very small, then after the convolution the convolved node feature vectors collapse to approximately one of two points according to which class the node comes from, and in this case the nodes can be easily separated by a linear classifier. 

We now focus on the threshold for $\|\bmu\|$ under which the model is going to misclassify at least one node with high probability. In part 1 of Theorem~\ref{thm:good_psi_negative}, $t=O(1)$, i.e., the attention mechanism $\tilde{\Psi}$ is either unable to separate the edges or unable to separate the edges with a large enough margin. In this case, one can show that all attention coefficients are $\Theta(\frac{1}{n(p+q)})$. Consequently, the quantity $|p-q|$ appears in denominator of the threshold for $\|\bmu\|$ in part 1 of Theorem~\ref{thm:good_psi_negative}. Because of that, if $p$ and $q$ are arbitrarily close, then the model is not able to separate the nodes irrespective of how large $\|\bmu\|$ is. For example, treating $1-\max(p,q)$ as a constant since $p$ and $q$ are bounded away from 1 by assumption, we have that
\[
	|p-q| = o\left(\sqrt{\frac{p+q}{n}}\right) \ \mbox{implies} \ M \sigma \sqrt{\frac{\log n}{n(p+q)}(1-\max(p,q))} \frac{p+q}{|p-q|} = \omega(\sigma \sqrt{\log n}).
\]
This means that if $p$ and $q$ are close enough, every attention function $\tilde{\Psi}$ in the form of \eqref{eq:good_psi} and $t=O(1)$ cannot help classify all nodes correctly even if $\|\bmu\| = \omega(\sigma \sqrt{\log n})$. On the contrary, recall that in the easy regime where $\|\bmu\| = \omega(\sigma \sqrt{\log n})$, the attention mechanism given in \eqref{eq:psi_ansatz} and \eqref{eq:psi_ansatz_parameters} helps separate the nodes with high probability. This illustrates the limitation of every attention mechanism in the form of \eqref{eq:good_psi} which have an insignificant margin of separation. According to Theorem~\ref{thm:gamma_hard}, the vast majority of attention coefficients are uniform, and thus in Conjecture~\ref{conj:node_c_hard} we expect that graph attention in general shares similar limitations in the hard regime.

In part 2 of Proposition~\ref{thm:good_psi_negative}, $t=\omega(1)$, i.e., the attention mechanism $\tilde{\Psi}$ separates the edges with a large margin. In this case, one can show that the attention coefficients on important edges (e.g. intra-class edges) are exponentially larger than those on unimportant edges (e.g. inter-class edges). Consequently, the factor $(p+q)/|p-q|$ no longer appears in the threshold for $\|\bmu\|$ in part 2 of Theorem~\ref{thm:good_psi_negative}. However, at the same time, the threshold also implies that, even when we have a perfect attention mechanism that is able to separate the edges with a large margin, as long as $\|\bmu\|/\sigma$ is small enough, then the model is going to misclassify at least one node with high probability.

Finally, notice that in Theorem~\ref{thm:good_psi_negative} the parameter $\epsilon$ captures a natural tradeoff between the threshold for $\|\bmu\|$ and the lower bound on the number of misclassified nodes. Namely, the smaller the $\epsilon$ is, the smaller the threshold for $\|\bmu\|$ becomes, and hence the less signal there is in the node features, the more mistakes the model is going to make. This is precisely demonstrated by the scaling of $M,M' = O(\sqrt{\epsilon})$ and misclassification bound $\Omega(n^{1-\epsilon})$ with respect to $\epsilon$. We leave the proof of Theorem~\ref{thm:good_psi_negative} to the appendix.

\section{Robustness to structural noise and its implications beyond perfect node classification}\label{sec:beyond}

In this section, we provide a positive result on the capacity of graph attention convolution for node classification beyond the perfect classification regime. In particular, we show that independent of the parameters of CSBM, i.e., independent of $p$, $q$, $\bmu$ and $\sigma$, the two-layer MLP attention architecture $\Psi$ from Section~\ref{subsec:easy} is able to achieve the classification performance obtainable by the Bayes optimal classifier for node features. This is proved by showing that there is a parameter setting for $\Psi$ where the attention coefficient on self-loops can be made exponentially large. Consequently, the corresponding graph attention convolution behaves like a linear function of node features. We provide two corollaries of this result. The first corollary provides a ranking of graph attention convolution, simple graph convolution, and linear function in terms of their ability to classify nodes in CSBM. More specifically, by noticing that the simple graph convolution is also realized by a specific parameter setting of the attention architecture $\Psi$, our result implies that the performance of graph attention convolution for node classification in CSBM is lower bounded by the best possible performance between a linear classifier and simple graph convolution. In particular, graph attention is strictly more powerful than simple graph convolution when the graph is noisy (e.g. when $p \approx q$), and it is strictly more powerful than a linear classifier when the graph is less noisy (e.g. when $p$ and $q$ are significantly different). The second corollary provides perfect classification, almost perfect classification, and partial classification results for graph attention convolution. It follows immediately from the reduction of graph attention convolution to a linear function under the specification of $\Psi$ that we will discuss. In what follows we start with high-level ideas, then we present formal statements of the results, and we discuss the implications in detail.

Recall the two-layer MLP attention architecture $\Psi$ from \eqref{eq:psi_ansatz} and \eqref{eq:psi_ansatz_parameters} is equivalently written in \eqref{eq:psi_simplified} as
\[
    \Psi(\bX_i,\bX_j) =
    \left\{
    \begin{array}{ll}
    -2R(1-\beta)\tilde\bw^T\bX_i, & \mbox{if}~ \tilde\bw^T\bX_j \le -\left|\tilde\bw^T\bX_i\right|,\\
    \hspace{3mm}2R(1-\beta)\sign(\tilde\bw^T\bX_i)\tilde\bw^T\bX_j, & \mbox{if}~ -\left|\tilde\bw^T\bX_i\right| < \tilde\bw^T\bX_j < \left|\tilde\bw^T\bX_i\right|,\\
    \hspace{3mm}2R(1-\beta)\tilde\bw^T\bX_i, & \mbox{if}~ \tilde\bw^T\bX_j \ge \left|\tilde\bw^T\bX_i\right|.
    \end{array}
    \right.
\]
We make the following observations. Assuming that the scaling parameter $R > 0$,
\begin{itemize}
  \item If $\tilde\bw^T\bX_i > 0$, then the function $\Psi$ assigns more weight to an edge $(i,j)$ such that $\tilde\bw^T\bX_j > 0$ than an edge $(i,j')$ such that $\tilde\bw^T\bX_{j'} < 0$; 
  \item If $\tilde\bw^T\bX_i < 0$, then the function $\Psi$ assigns more weight to an edge $(i,j)$ such that $\tilde\bw^T\bX_j < 0$ than an edge $(i,j')$ such that $\tilde\bw^T\bX_{j'} > 0$;
  \item If $\tilde\bw^T\bX_i = 0$, then the function $\Psi$ assigns uniform weight to every edge $(i,j)$.
\end{itemize}
This means that the behavior of $\Psi$ depends on which side of the hyperplane $\{\bx : \tilde\bw^T\bx = 0\}$ that $\bX_i$ comes from. In other words, for fixed $\bX_j$, whether the attention function $\Psi$ will up-weight or down-weight an edge $(i,j)$ depends entirely on the classification of $\bX_i$ based on the linear classifier $\tilde\bw$. Moreover, note that the attention function value satisfies 
\[
    2R(1-\beta) \cdot \max\{-|\tilde\bw^T\bX_i|, -|\tilde\bw^T\bX_j|\} \le \Psi(\bX_i,\bX_j) \le 2R(1-\beta) \cdot \min\{|\tilde\bw^T\bX_i|,|\tilde\bw^T\bX_j|\}.
\]
Therefore, out of all unit norm vectors $\bw$, our choice $\tilde\bw = \bmu/\|\bmu\|$ maximizes the range of values that $\Psi$ can output. Recall from Lemma~\ref{lem:bayes} that $\tilde\bw$ also happens to characterize the Bayes optimal classifier for the node features. Finally, the attention function $\Psi$ achieves minimum/maximum at self-attention, i.e. 
\begin{align*}
    \Psi(\bX_i,\bX_i) &= 2R(1-\beta)|\tilde\bw^T\bX_i| = \max_{j \in [n]} \Psi(\bX_i,\bX_j),\\
    \Psi(\bX_i,-\bX_i) &= -2R(1-\beta)|\tilde\bw^T\bX_i| = \min_{j \in [n]} \Psi(\bX_i,\bX_j).
\end{align*}
A consequence of these observations is that, by setting the scaling parameter $R$ sufficiently large, one can make $\exp(\Psi(\bX_i,\bX_j))$ exponentially larger than $\exp(\Psi(\bX_i,\bX_{j'}))$ for any $j,j'$ such that $\sign(\tilde\bw^T\bX_j) = \sign(\tilde\bw^T\bX_i)$ and $\sign(\tilde\bw^T\bX_{j'}) \neq \sign(\tilde\bw^T\bX_i)$. According to the definition of attention coefficients in \eqref{def:attention_coeff}, this means that the attention coefficients $\gamma_{ij}$ where $\sign(\tilde\bw^T\bX_j) = \sign(\tilde\bw^T\bX_i)$ are going to be exponentially larger than the attention coefficients $\gamma_{ij'}$ where $\sign(\tilde\bw^T\bX_{j'}) \neq \sign(\tilde\bw^T\bX_i)$. Therefore, one could expect that in this case, if the linear classifier $\tilde\bw$ correctly classifies $\bX_i$ for some $i \in [n]$, e.g., for $i \in C_1$ this means that $\tilde\bw^T\bX_i > 0$, then graph attention convolution output $h_i' = \sum_{j \in N_i}\gamma_{ij}\tilde\bw^T\bX_j$ also satisfies $h_i' > 0$, due to sufficiently larger attention coefficients $\gamma_{ij}$ for which $\tilde\bw^T\bX_j > 0$. We state the result below in \cref{thm:gat_linear} and leave detailed arguments in the appendix.

\begin{theorem}\label{thm:gat_linear}
With probability at least $1-o(1)$ over the data $(\bX,\bA) \sim \CSBM(n,p,q,\bmu,\sigma^2)$, using the two-layer MLP attention architecture $\Psi$ given in \eqref{eq:psi_ansatz} and \eqref{eq:psi_ansatz_parameters} with $R = \Omega(n \log^2n/\sigma)$, the graph attention convolution output satisfies
\begin{align*}
&h_i' = \sum_{j\in N_i}\gamma_{ij}\tilde\bw^T\bX_j > 0 \; \mbox{if and only if} \; \tilde\bw^T\bX_i > 0, \; \forall i \in[n],\\
&h_i' = \sum_{j\in N_i}\gamma_{ij}\tilde\bw^T\bX_j < 0 \; \mbox{if and only if} \; \tilde\bw^T\bX_i < 0, \; \forall i \in[n].
\end{align*}
\end{theorem}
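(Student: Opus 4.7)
\textbf{Proof plan for Theorem~\ref{thm:gat_linear}.}

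The plan is to show that $\sign(h_i')=\sign(\tilde\bw^T\bX_i)$ for every $i\in[n]$ on a high-probability event. Write $X_\ell \eqdef \tilde\bw^T\bX_\ell$ for brevity. Because the softmax denominator is positive, it suffices to show that the unnormalized numerator $\sum_{\ell\in N_i}\exp(\Psi(\bX_i,\bX_\ell))\,X_\ell$ has the same sign as $X_i$; by a symmetric argument I fix $i$ with $X_i>0$. The starting point is the closed form \eqref{eq:psi_simplified}: $\Psi(\bX_i,\bX_i)=2R(1-\beta)X_i$ is the maximum and $\Psi(\bX_i,\bX_\ell)\in[-2R(1-\beta)X_i,\,2R(1-\beta)X_i]$; moreover $\Psi(\bX_i,\bX_\ell)\ge 0$ whenever $\sign(X_\ell)=\sign(X_i)$ (``aligned'') and $\Psi(\bX_i,\bX_\ell)\le 0$ whenever $\sign(X_\ell)\ne\sign(X_i)$ (``anti-aligned'').

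First I would set up two Gaussian concentration events. Since $X_\ell=(2\epsilon_\ell-1)\|\bmu\|+\sigma\,\tilde\bw^T\bg_\ell$ with $\tilde\bw^T\bg_\ell\sim N(0,1)$, a standard Gaussian tail bound and a union bound give $\max_{\ell\in[n]}|X_\ell|\le M_n\eqdef\|\bmu\|+\sigma\sqrt{2\log(n+1)}$ with probability $1-o(1)$. Simultaneously, the density of $X_\ell$ is uniformly bounded above by $1/(\sigma\sqrt{2\pi})$, so $\Prx[|X_\ell|<\epsilon]\le 2\epsilon/(\sigma\sqrt{2\pi})$ for every $\epsilon>0$; choosing $\epsilon=\sigma/(n\log n)$ and another union bound yields $\min_{\ell\in[n]}|X_\ell|\ge m_n\eqdef\sigma/(n\log n)$ with probability $1-o(1)$. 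On the intersection of these events, every $i$ with $X_i>0$ satisfies $m_n\le X_i\le M_n$.

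On this event I would split the numerator into aligned and anti-aligned contributions using \eqref{eq:psi_simplified}. The self-loop alone contributes $\exp(2R(1-\beta)X_i)\,X_i>0$, and all other aligned terms are non-negative. For an anti-aligned $\ell$ with $|X_\ell|\ge X_i$, the formula yields $\exp(\Psi(\bX_i,\bX_\ell))=\exp(-2R(1-\beta)X_i)$, contributing at most $n\,M_n\exp(-2R(1-\beta)X_i)$ in total magnitude. For an anti-aligned $\ell$ with $|X_\ell|<X_i$, one has $|\exp(\Psi(\bX_i,\bX_\ell))\,X_\ell|=|X_\ell|\exp(-2R(1-\beta)|X_\ell|)$, and this function is maximized at $|X_\ell|=1/(2R(1-\beta))$ with value $1/(2eR(1-\beta))$, bounding the total from this second sub-case by $n/(2eR(1-\beta))$. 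Taking $R=c\,n\log^2n/\sigma$ with $c$ a sufficiently large absolute constant, one gets $2R(1-\beta)X_i\ge 2R(1-\beta)m_n=2c(1-\beta)\log n$, hence $\exp(2R(1-\beta)X_i)\,X_i\ge n^{2c(1-\beta)}\,m_n$, which is polynomially larger than both $nM_n\exp(-2R(1-\beta)X_i)\le nM_n/n^{2c(1-\beta)}$ and $n/(2eR(1-\beta))=O(\sigma/\log^2 n)$. Thus the numerator is strictly positive. The case $X_i<0$ is symmetric under the sign flip, and the converse direction is vacuous since $\{X_i=0\}$ has measure zero.

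The main obstacle is handling the potentially small value of $|X_i|$ for the worst-case node $i$: the magnitude of $\Psi$ scales linearly in $|X_i|$, so when $|X_i|$ is close to zero the exponential separation between the self-loop and anti-aligned neighbors degrades. Ordinary concentration of Gaussian maxima does not help here; instead one must invoke the anti-concentration lower bound $m_n=\sigma/(n\log n)$ on $\min_i|X_i|$. The prescribed scaling $R=\Omega(n\log^2 n/\sigma)$ is precisely calibrated to convert this $1/(n\log n)$ anti-concentration floor into an $n^{\Omega(1)}$ factor in $\exp(2R(1-\beta)|X_i|)$, which is the minimum dominance needed to overwhelm the $O(n)$ anti-aligned neighbors uniformly over $i$.
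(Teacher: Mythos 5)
Your proposal follows essentially the same route as the paper's proof: condition on an upper bound for $\max_\ell|\tilde\bw^T\bX_\ell|$ and an anti-concentration lower bound for $\min_\ell|\tilde\bw^T\bX_\ell|$, then exploit the fact that the self-loop attains the maximal attention value $2R(1-\beta)|\tilde\bw^T\bX_i|$ so that, for $R$ this large, the self-loop term in the unnormalized sum dominates all anti-aligned contributions. Your explicit split of the anti-aligned neighbors into $|X_\ell|\ge |X_i|$ (bounded via $e^{-2R(1-\beta)|X_i|}$) and $|X_\ell|<|X_i|$ (bounded via $\sup_u u e^{-2R(1-\beta)u}=1/(2eR(1-\beta))$) is in fact a more careful accounting of the second sub-case than the paper's one-line bound, and it is correct.

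The one genuine weak point is the anti-concentration floor $m_n=\sigma/(n\log n)$, which does not scale with $\kappa=\|\bmu\|/\sigma$. Your final comparison requires $n^{4c(1-\beta)}\gg n^2\log n\cdot M_n/m_n$ with $M_n/\sigma=\kappa+\sqrt{2\log(n+1)}$, so if $\kappa$ grows faster than any polynomial in $n$ no \emph{absolute} constant $c$ suffices, even though the theorem places no upper bound on $\|\bmu\|$. The paper avoids this by using the floor $\epsilon\sigma\max\{1,\kappa\}$ with $\epsilon=1/4$ when $\kappa>2\sqrt{\log n}$ (in that regime $|\tilde\bw^T\bX_i|$ concentrates near $\kappa\sigma$, so the density-based bound is far from tight); your argument is repaired by the same case split, replacing $m_n$ with $\kappa\sigma/2$ when $\kappa$ is large.
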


\cref{thm:gat_linear} means that there is a parameter setting for the attention architecture $\Psi$ such that the performance of graph attention convolution matches with the performance of the Bayes optimal classifier for node features. This shows the ability of graph attention to ``ignore'' the graph structure, which can be beneficial when the graph is noisy. For example, if $p=q$, then it is easy to see that simple graph convolution completely mixes up the node features, making it not possible to achieve any meaningful node classification result. On the other hand, as long as there is some signal from the original node features, i.e. $\|\bmu\| > 0$, then graph attention will be able to pick that up and classify the nodes at least as good as the best classifier for the node features alone. It is also worth noting that by setting $R = 0$, the attention function $\Psi$ has a constant value, and hence graph attention convolution reduces to the standard graph convolution, which has been shown to be useful in the regime where the original node features are not very strong but the graph has a nice structure~\cite{BFJ2021}. For example, when there is a significant gap between $p$ and $q$, $|p-q|/(p+q) = \Omega(1)$, then setting $R=0$ could significantly improve the node separability threshold over the best linear classifier~\cite{BFJ2021}. This shows the robustness of graph attention against noise in one of the two sources of information, namely node features and edge connectivities. Unlike the Bayes optimal classifier for node features which is sensitive to feature noise or the simple graph convolution which is sensitive to structural noise, one can expect graph attention to work as long as at least one of the two sources of information has a good signal. Therefore, we obtain a ranking of node classification models among graph attention convolution, simple graph convolution, and a linear classifier. We state this below in Corollary~\ref{cor:model_rank}. 

\begin{corollary}\label{cor:model_rank}
The node classification performance obtainable by graph attention convolution is lower bounded by the best possible node classification performance between a simple graph convolution and a linear classifier.
\end{corollary}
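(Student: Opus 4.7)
The plan is to prove Corollary~\ref{cor:model_rank} by exhibiting two specific parameter choices for the two-layer MLP attention architecture $\Psi$ given in \eqref{eq:psi_ansatz}--\eqref{eq:psi_ansatz_parameters}, one matching the linear classifier and the other matching simple graph convolution. Since $R > 0$ is an arbitrary scaling parameter of the architecture, the best achievable node classification performance of graph attention convolution is at least the maximum of the two, which gives the claim.

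For the first direction, I would simply invoke Theorem~\ref{thm:gat_linear} with $R = \Omega(n\log^2 n / \sigma)$. That theorem shows $\sign(h'_i) = \sign(\tilde\bw^T\bX_i)$ for all $i\in[n]$ with probability $1-o(1)$, where $\tilde\bw = \bmu/\|\bmu\|$. By Lemma~\ref{lem:bayes}, $\tilde\bw$ realizes the Bayes optimal classifier for the node features, so its probability of misclassifying any given node is a lower bound on the error of every linear classifier. Consequently, the fraction of nodes correctly classified by graph attention with this choice of $R$ is, with high probability, at least the fraction correctly classified by the best linear classifier on node features.

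For the second direction, I would set $R = 0$ in \eqref{eq:psi_ansatz_parameters}. Then $\boldr = \bzero$, so $\Psi(\bX_i, \bX_j) = 0$ for every pair $(i,j)$, and therefore $\exp(\Psi(\bX_i,\bX_j))$ is identically $1$. Plugging into the definition \eqref{def:attention_coeff} yields $\gamma_{ij} = 1/|N_i|$ for every $j \in N_i$. Substituting these uniform weights into the graph attention convolution \eqref{eq:gat_output} gives exactly the simple graph convolution with the mean-aggregation normalization $c_{ij} = (\sum_\ell \bA_{i\ell})^{-1}$ introduced in Section~\ref{sec:prelim}. Hence graph attention reproduces the simple graph convolution exactly with this choice of $R$, matching its node classification performance.

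Combining the two directions, the node classification performance achievable by graph attention convolution (optimized over $R$) is lower bounded by the maximum of the performance of the Bayes optimal linear classifier on $\bX$ and the performance of the simple graph convolution on $(\bX,\bA)$, which is precisely the statement of the corollary. I do not expect any serious obstacle here: the only subtlety is making the reduction $R=0 \Rightarrow$ uniform attention coefficients rigorous, but this follows immediately from the definitions, since \eqref{eq:psi_ansatz} becomes $\Psi \equiv 0$ once $\boldr$ vanishes, independently of the LeakyRelu activation and of $\bS$.
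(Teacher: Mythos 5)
Your proposal is correct and matches the paper's own reasoning: the corollary is justified exactly by combining Theorem~\ref{thm:gat_linear} (large $R$ recovers the Bayes optimal linear classifier of Lemma~\ref{lem:bayes}) with the observation that $R=0$ makes $\Psi$ constant, yielding uniform attention coefficients and hence simple graph convolution. No substantive difference from the paper's argument.
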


By a straightforward characterization of the performance of the linear classifier $\tilde\bw = \bmu/\|\bmu\|$, we immediately obtain the following classification results in Corollary~\ref{cor:gat_linear_recovery}. Recall that we denoted $\Phi$ as the cumulative distribution function of the standard normal distribution. Write 
\[
    \|\bmu\| = \kappa \sigma  \;\ \mbox{for some} \;\ \kappa>0.
\]

\begin{corollary}\label{cor:gat_linear_recovery}
With probability at least $1-o(1)$ over the data $(\bX,\bA) \sim \CSBM(n,p,q,\bmu,\sigma^2)$, using the two-layer MLP attention architecture $\Psi$ given in \eqref{eq:psi_ansatz} and \eqref{eq:psi_ansatz_parameters} with $R = \Omega(n \log^2n/\sigma)$, one has that
\begin{itemize}
    \item (Perfect classification) If $\kappa \ge \sqrt{2\log n}$ then all nodes are correctly classified;
    \item (Almost perfect classification) If $\kappa = \omega(1)$ then at least $1-o(1)$ fraction of all nodes are correctly classified;
    \item (Partial classification) If $\kappa = O(1)$ then at least $\Phi(\kappa)-o(1)$ fraction of all nodes are correctly classified.
\end{itemize}
\end{corollary}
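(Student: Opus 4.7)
The plan is to reduce Corollary~\ref{cor:gat_linear_recovery} to a classical analysis of the Bayes-optimal linear classifier $\tilde{\bw} = \bmu/\|\bmu\|$. By Theorem~\ref{thm:gat_linear}, with probability at least $1-o(1)$ the graph attention convolution output $h_i'$ has the same sign as $\tilde{\bw}^T\bX_i$ for every $i\in[n]$. Combining this with the sign-based classification rule in Definition~\ref{def:sep_node}, a node is correctly classified by graph attention whenever the linear classifier $\tilde{\bw}^T\bX_i$ assigns it to the correct class. So it suffices to count the nodes correctly classified by $\tilde{\bw}$, and the three bullets of the corollary follow from three different tail/concentration estimates for the standard normal distribution.

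Concretely, for $i\in C_k$ write $\bX_i = (2\eps_i-1)\bmu + \sigma\bg_i$ with $\bg_i\sim N(\vzero,\bI)$. Then $\tilde{\bw}^T\bX_i = (2\eps_i-1)\|\bmu\| + \sigma Z_i$ where $Z_i = \tilde{\bw}^T\bg_i\sim N(0,1)$. The event of misclassification for $i\in C_1$ is $\{Z_i < -\kappa\}$ and, symmetrically, $\{Z_i > \kappa\}$ for $i\in C_0$; in both cases this has probability $1-\Phi(\kappa)=\Phi_{\mathrm{c}}(\kappa)$.

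For \emph{perfect classification} ($\kappa\ge\sqrt{2\log n}$), I use the Mills-ratio bound $\Phi_{\mathrm{c}}(\kappa) \le \frac{1}{\sqrt{2\pi}\,\kappa}\exp(-\kappa^2/2)\le \frac{1}{\sqrt{4\pi\log n}}\cdot n^{-1}$, exactly as in the proof of Proposition~\ref{prop:linear_easy}, and apply a union bound over the $n$ nodes to conclude that no node is misclassified by $\tilde{\bw}$ with probability $1-o(1)$. Intersecting with the high-probability event of Theorem~\ref{thm:gat_linear} preserves the $1-o(1)$ bound. For \emph{almost perfect classification} ($\kappa=\omega(1)$), $\Phi_{\mathrm{c}}(\kappa)=o(1)$, so the expected number of misclassified nodes is $n\Phi_{\mathrm{c}}(\kappa)=o(n)$; since the $\{Z_i\}_{i\in[n]}$ are independent, a standard Chernoff/Hoeffding bound shows the actual number of misclassifications concentrates around its mean, giving at most $o(n)$ errors with probability $1-o(1)$. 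For \emph{partial classification} ($\kappa=O(1)$), the expected number of correctly classified nodes is $n\Phi(\kappa)$, and applying Hoeffding with deviation $t = n^{2/3}$ (any $o(n)$ deviation works) yields at least $n(\Phi(\kappa)-o(1))$ correctly classified nodes with probability $1-o(1)$.

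There is essentially no obstacle beyond bookkeeping: the heavy lifting sits in Theorem~\ref{thm:gat_linear}, which turns the graph attention classifier into the linear classifier $\tilde\bw$. The only care needed is to intersect the high-probability events from Theorem~\ref{thm:gat_linear} and the Gaussian concentration step; since each holds with probability $1-o(1)$, a single union bound preserves the guarantee. The mildest subtlety is the partial regime, where one must be precise that the randomness in $\eps_i$ is already absorbed into the symmetry $\Pr[\tilde\bw^T\bX_i$ is misclassified$]=\Phi_{\mathrm{c}}(\kappa)$ regardless of class, so the binomial concentration applies uniformly.
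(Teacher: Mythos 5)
Your proposal is correct and follows essentially the same route as the paper: reduce to the linear classifier $\tilde\bw=\bmu/\|\bmu\|$ via Theorem~\ref{thm:gat_linear}, handle the perfect-classification case by the Gaussian tail bound and union bound exactly as in Proposition~\ref{prop:linear_easy}, and handle the other two cases by a Chernoff/Hoeffding concentration of the i.i.d.\ per-node success indicators (the paper uses deviation $\sqrt{n\log n}$ where you use $n^{2/3}$; both are $o(n)$ and equivalent for the conclusion).
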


Interestingly, we note that the perfect classification result in Corollary~\ref{cor:gat_linear_recovery} is nearly equivalent (up to a small order in the threshold $\kappa$) to the perfect classification result in Corollary~\ref{cor:node_separation_easy} from Section~\ref{subsec:easy}. They are obtained from different behaviors of the attention coefficients. This shows that there could be more than one type of ``good'' attention coefficients that are able to deliver good node classification performance.

\section{Experiments}\label{sec:experimets}

In this section, we demonstrate empirically our results on synthetic and real data. The parameters of the models that we experiment with are set by using an ansatz based on our theorems. The particular details are given in Section~\ref{subsec:ansatz-in-experiments}. For real datasets, we use the default splits which come from PyTorch Geometric~\cite{FL2019} and OGB~\cite{HFZDRLCL20}. In all our experiments we use MLP-GAT, where the attention mechanism $\Psi$ is set to be the two-layer network in \eqref{eq:psi_ansatz} and \eqref{eq:psi_ansatz_parameters} with $R = 1$. For synthetic experiments using CSBM with known $p$ and $q$, we use the variant that takes $p,q$ into account, $\Psi' = (\mathbf{1}_{p\ge q} - \mathbf{1}_{p < q}) \Psi$. In Figure~\ref{fig:gammas_MLPGAT_var_q_hard} and Figure~\ref{fig:Gammas_GAT_var_dist} we additionally consider the original GAT architecture of \cite{Velickovic2018GraphAN} to demonstrate Theorem~\ref{thm:gamma_hard}.

\subsection{Ansatz for GAT, MLP-GAT and GCN}\label{subsec:ansatz-in-experiments}
For the original GAT architecture we fix $\bw=\bmu/\|\bmu\|$ and define the first head as $\ba_1=\frac{1}{\sqrt{2}}(1,1)$ and $b_1=-\frac{1}{\sqrt{2}}\bw^T\bmu$; The second head is defined as $\ba_2=-\ba_1$ and $b_2=-b_1$. We now discuss the choice of such an ansatz. The parameter $\bw$ is picked based on the optimal Bayes classifier without a graph, and the attention is set such that the first head maintains intra-class edges in $C_1$ and the second head maintains intra-class edges in $C_0$. Note that for the original GAT~\cite{Velickovic2018GraphAN}, due to the fact that the attention mechanism consists of just one layer (i.e. a nonlinear activation applied on a linear transformation, see \eqref{eq:softmaxattention}), it is not possible for the original GAT to keep only $\gamma_{ij}$ which correspond to intra-class edges. More specifically, one may use the same techniques in the proof of Theorem~\ref{thm:edge_separation_easy} and Corollaries~\ref{cor:gamma_easy} and~\ref{cor:node_separation_easy} to prove the node separability results for the original GAT. In this particular case, the result will \emph{depend on $q$} in contrast to the result we get for MLP-GAT, where no dependence of $q$ was needed. For MLP-GAT we use the ansatz $\Psi' = (\mathbf{1}_{p\ge q} - \mathbf{1}_{p < q}) \Psi$ where $\Psi$ is given in \eqref{eq:psi_ansatz} and \eqref{eq:psi_ansatz_parameters} with $R = 1$. This choice of two-layer network allows us to bypass the ``XOR problem"~\cite{minsky1969perceptron} and separate inter-class from intra-class edges as shown in Theorem~\ref{thm:edge_separation_easy}. Note that no single-layer architecture will be able to separate the edges due to the ``XOR problem''. For GCN we used the ansatz from~\cite{BFJ2021} which is also $\bw=\bmu/\|\bmu\|$.

\subsection{Synthetic data}\label{subsec:synthetic_data}
We use the CSBM to generate the data. In a recent work~\cite{graphworld2022}, a simple variant of the CSBM was also chosen as the default generative model for benchmarking GNN performance for node classification tasks. We present two sets of experiments. In the first set, we fix the distance between the means and vary $q$, and in the second set, we fix $q$ and vary the distance. We set $n=1000$, $d=n/\log^2(n)$, $p=0.5$ and $\sigma=0.1$. Results are averaged over $10$ trials.

\subsubsection{Fixing the distance between the means and varying \texorpdfstring{$q$}{q}}\label{sec:Exp_fix_dist_var_q}

We consider the two regimes separately, where for the ``easy regime" we fix the mean $\bmu$ to be a vector where each coordinate is equal to $10\sigma\sqrt{\log{n^2}}/2\sqrt{d}$. This guarantees that the distance between the means is $10\sigma\sqrt{\log{n^2}}$. In the ``hard regime" we fix the mean $\bmu$ to a vector where each coordinate is equal to $\sigma/\sqrt{d}$, and this guarantees that the distance is $\sigma$. We fix $p=0.5$ and vary $q$ from $\log^2(n) / n$ to $1-\log^2(n) / n$.

In Figure~\ref{fig:edge_c_var_q_part1} we illustrate Theorem~\ref{thm:edge_separation_easy} and Corollaries~\ref{cor:gamma_easy},~\ref{cor:node_separation_easy} for the easy regime, and in Figure~\ref{fig:edge_c_var_q_part2} we illustrate Theorem~\ref{thm:edge_separation_hard} and Theorem~\ref{thm:gamma_hard} for the hard regime. In particular, in Figure~\ref{fig:edge_c_var_q_easy} we show Theorem~\ref{thm:edge_separation_easy}, MLP-GAT is able to classify intra-class and inter-class edges perfectly. In Figure~\ref{fig:gammas_MLPGAT_var_q_easy} we show that in the easy regime, as claimed in Corollary~\ref{cor:gamma_easy} for MLP-GAT, when $p \ge q$, the $\gamma$ that correspond to intra-class edges concentrate around $2/np$, while the $\gamma$ for the inter-class edges concentrate to tiny values; when $p < q$, we see the opposite, that is the $\gamma$ that correspond to intra-class edges concentrate to tiny values, while the $\gamma$ for the inter-class edges concentrate around $2/nq$. In Figure~\ref{fig:node_c_MLPGAT_var_q_easy} we observe that the performance of MLP-GAT for node classification is independent of $q$ in the easy regime as claimed in Corollary~\ref{cor:node_separation_easy}. However, in this plot, we observe that not using the graph also achieves perfect node classification, a result which is proved in Proposition~\ref{prop:linear_easy}. In the same plot, we also show the performance of simple graph convolution, where its performance depends on $q$ (see~\cite{BFJ2021}). In Figure~\ref{fig:edge_c_var_q_hard} we show Theorem~\ref{thm:edge_separation_hard}. MLP-GAT misclassifies a constant fraction of the intra and inter edges as proved in Theorem~\ref{thm:edge_separation_hard}. In Figure~\ref{fig:gammas_MLPGAT_var_q_hard} we show Theorem~\ref{thm:gamma_hard}, where $\gamma$'s in the hard regime concentrate around uniform (GCN) coefficients for both MLP-GAT and GAT. In Figure~\ref{fig:node_c_MLPGAT_var_q_hard} we illustrate that node classification accuracy is a function of $q$ for MLP-GAT. This is conjectured in Conjecture~\ref{conj:node_c_hard}. On the other hand, note that the performance of MLP-GAT is lower bounded by the performance of not using the graph, as proved in Theorem~\ref{thm:gat_linear}. 

\begin{figure}[ht!]
     \centering
     \begin{subfigure}[b]{0.49\textwidth}
         \centering
         \includegraphics[width=.95\textwidth]{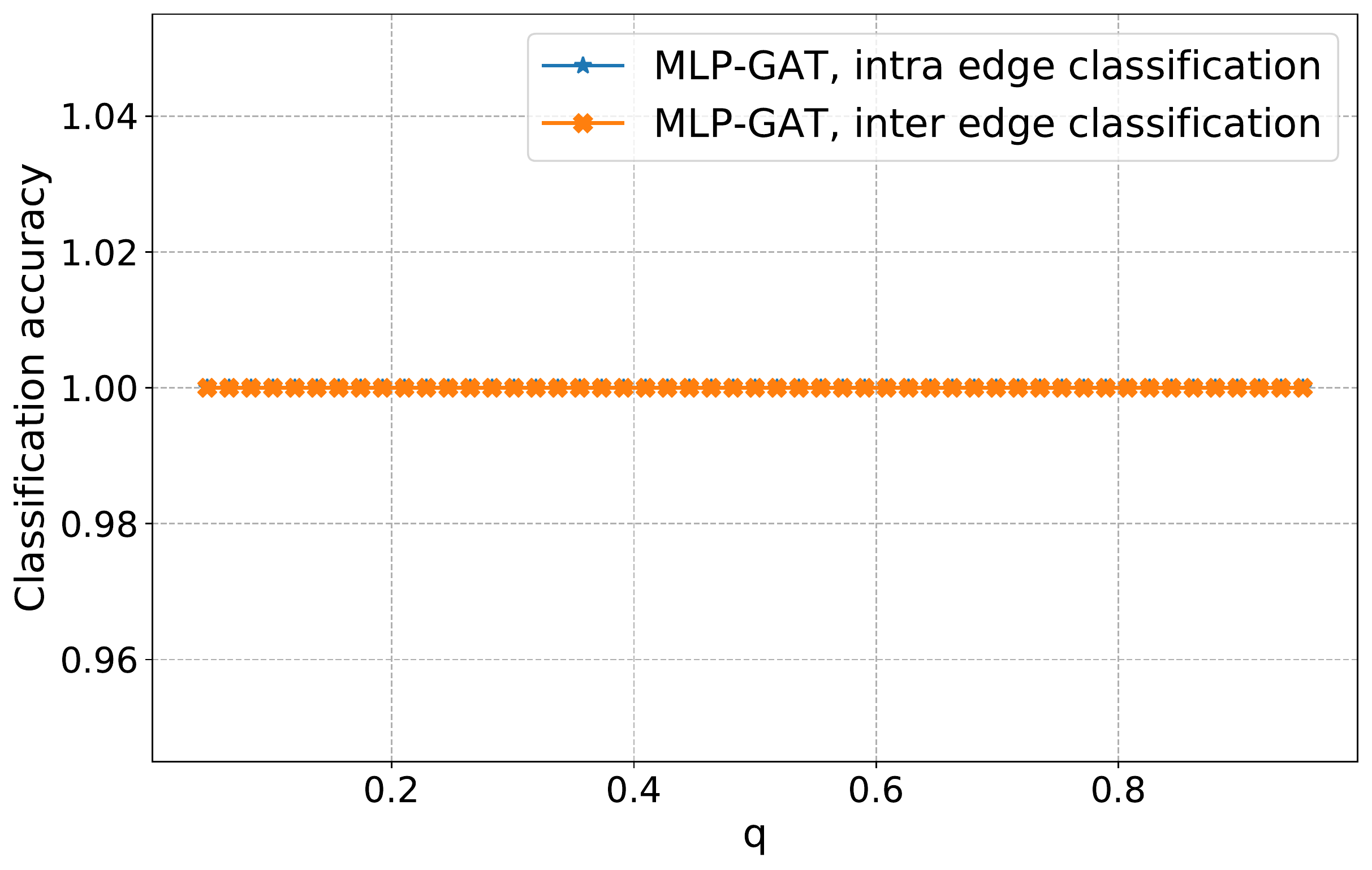}
         \caption{Edge classification}
         \label{fig:edge_c_var_q_easy}
     \end{subfigure}%
     \begin{subfigure}[b]{0.49\textwidth}
         \centering
         \includegraphics[width=.95\textwidth]{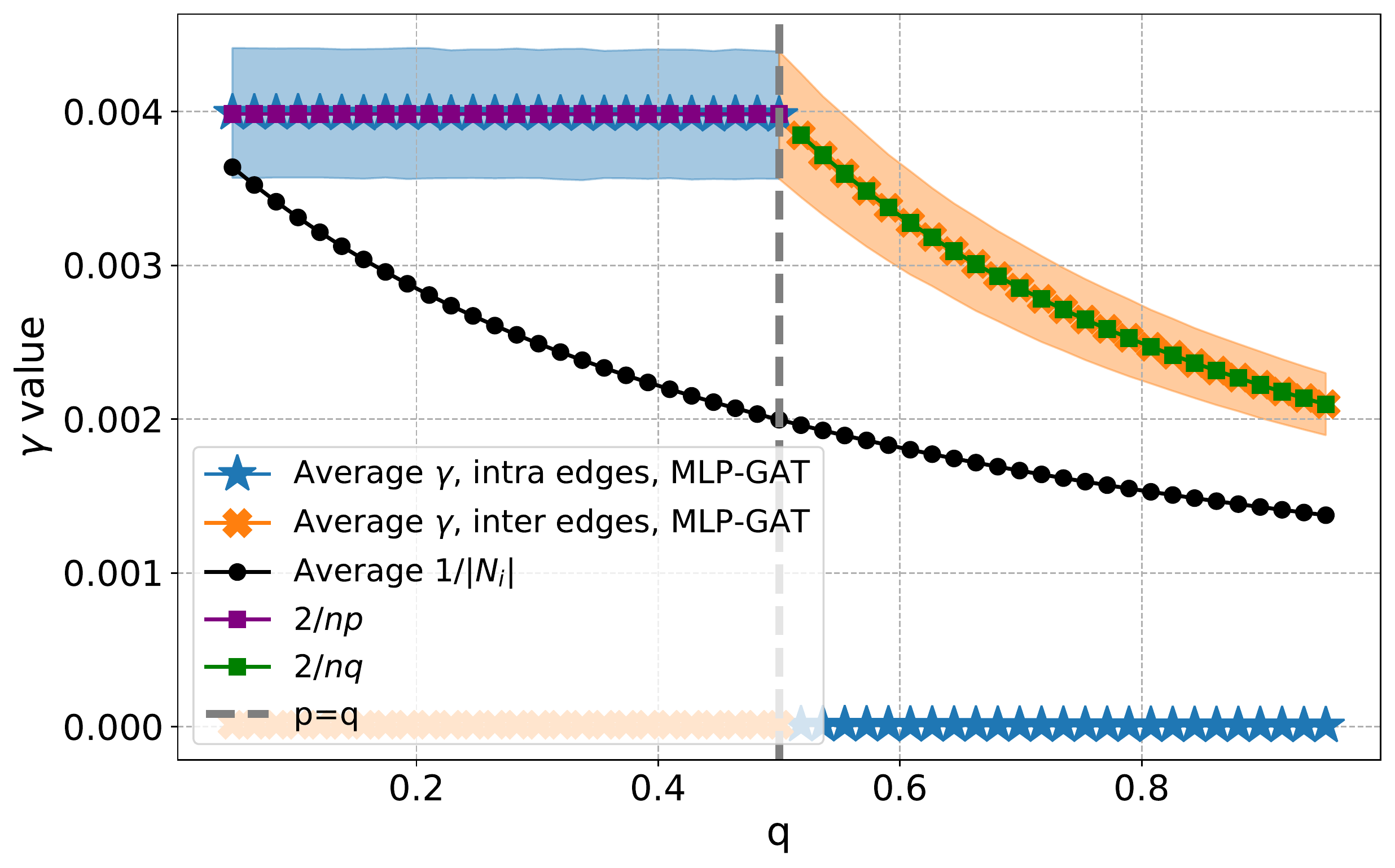}
         \caption{Attention coefficients}
         \label{fig:gammas_MLPGAT_var_q_easy}
     \end{subfigure}
     \begin{subfigure}[b]{0.49\textwidth}
      \vspace{2mm}
         \centering
         \includegraphics[width=.95\textwidth]{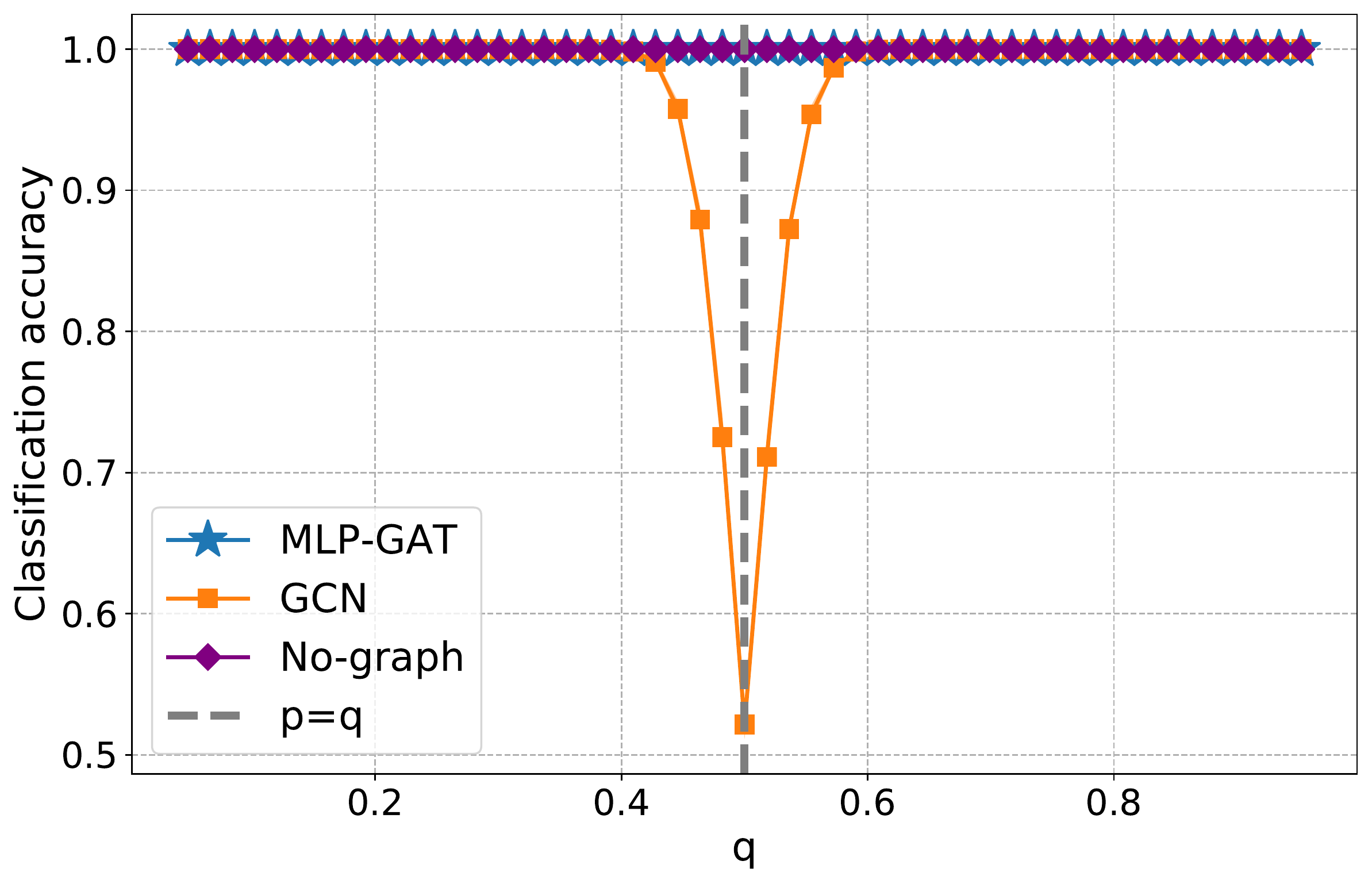}
         \caption{Node classification}
         \label{fig:node_c_MLPGAT_var_q_easy}
     \end{subfigure}
    \caption{Demonstration of Theorem~\ref{thm:edge_separation_easy} and Corollaries~\ref{cor:gamma_easy},~\ref{cor:node_separation_easy} for the easy regime. The shaded areas in the plots show standard deviation. Unlike GCN, the performance of MLP-GAT does not degrade as we vary $q$.}
    \label{fig:edge_c_var_q_part1}
\end{figure}

\begin{figure}[ht!]
     \centering
     \begin{subfigure}[b]{0.49\textwidth}
         \centering
         \includegraphics[width=.95\textwidth]{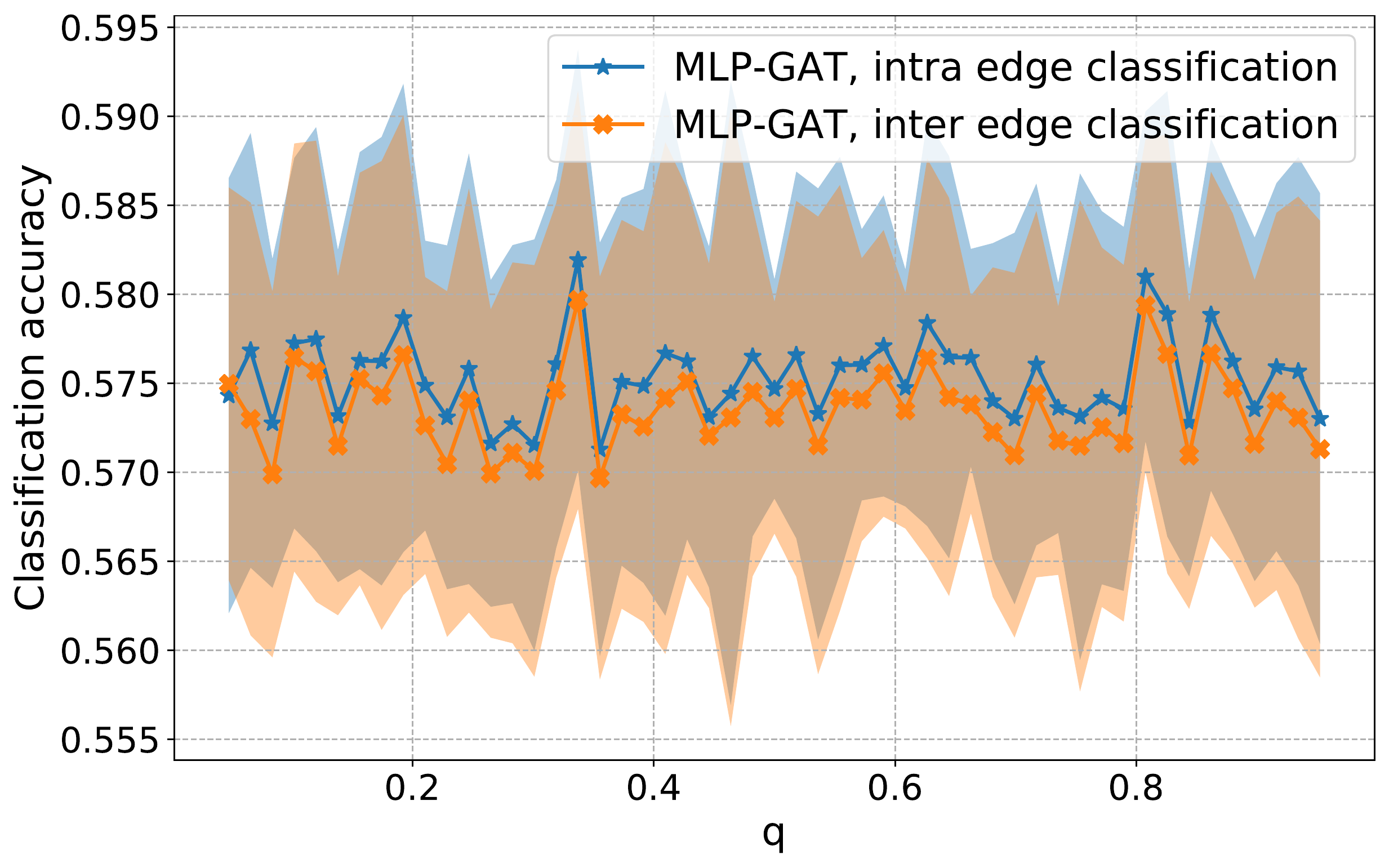}
         \caption{Edge classification}
         \label{fig:edge_c_var_q_hard}
     \end{subfigure}%
     \begin{subfigure}[b]{0.49\textwidth}
         \centering
         \includegraphics[width=.95\textwidth]{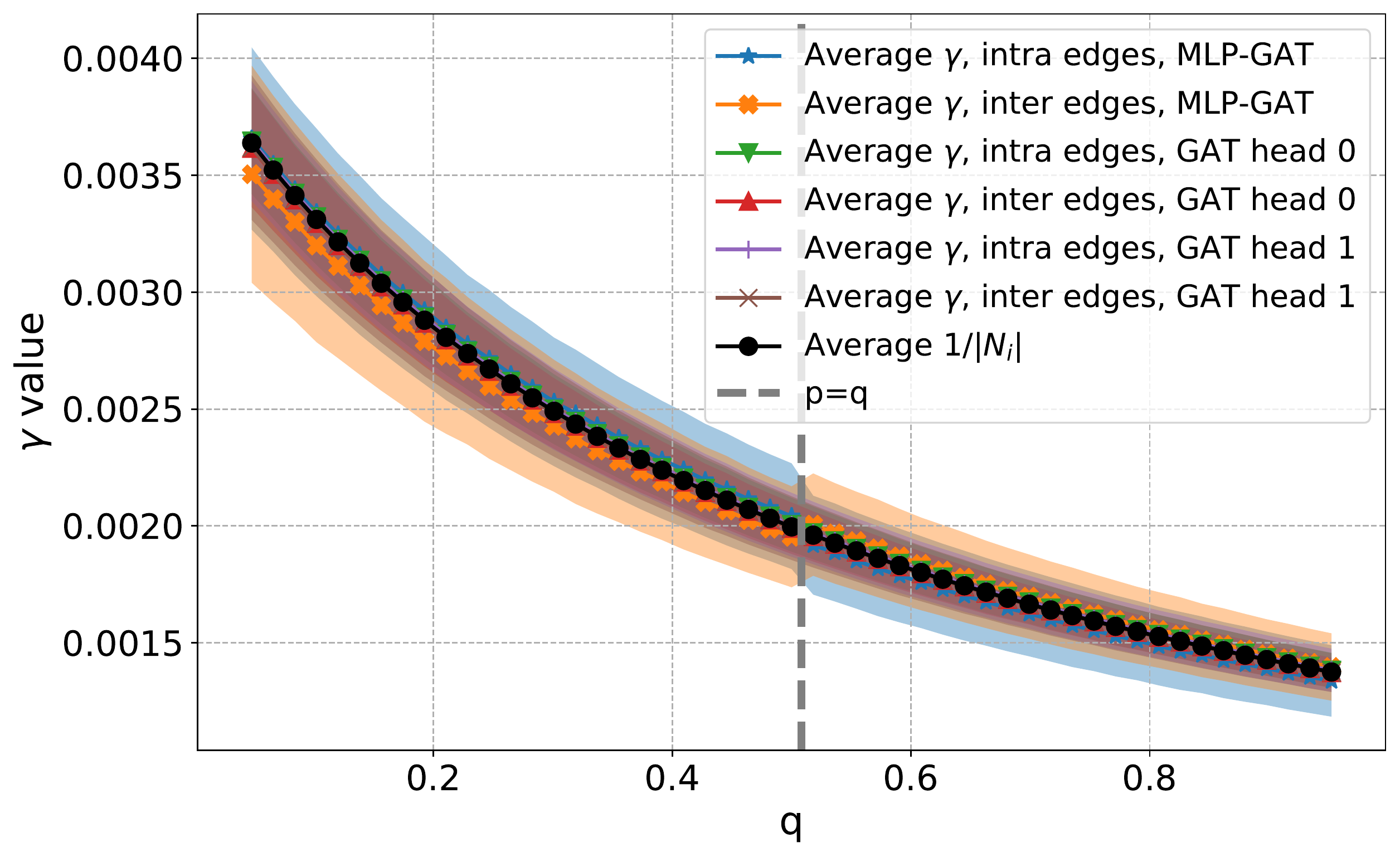}
         \caption{Attention coefficients}
         \label{fig:gammas_MLPGAT_var_q_hard}
     \end{subfigure}
     \begin{subfigure}[b]{0.49\textwidth}
      \vspace{2mm}
         \centering
         \includegraphics[width=.95\textwidth]{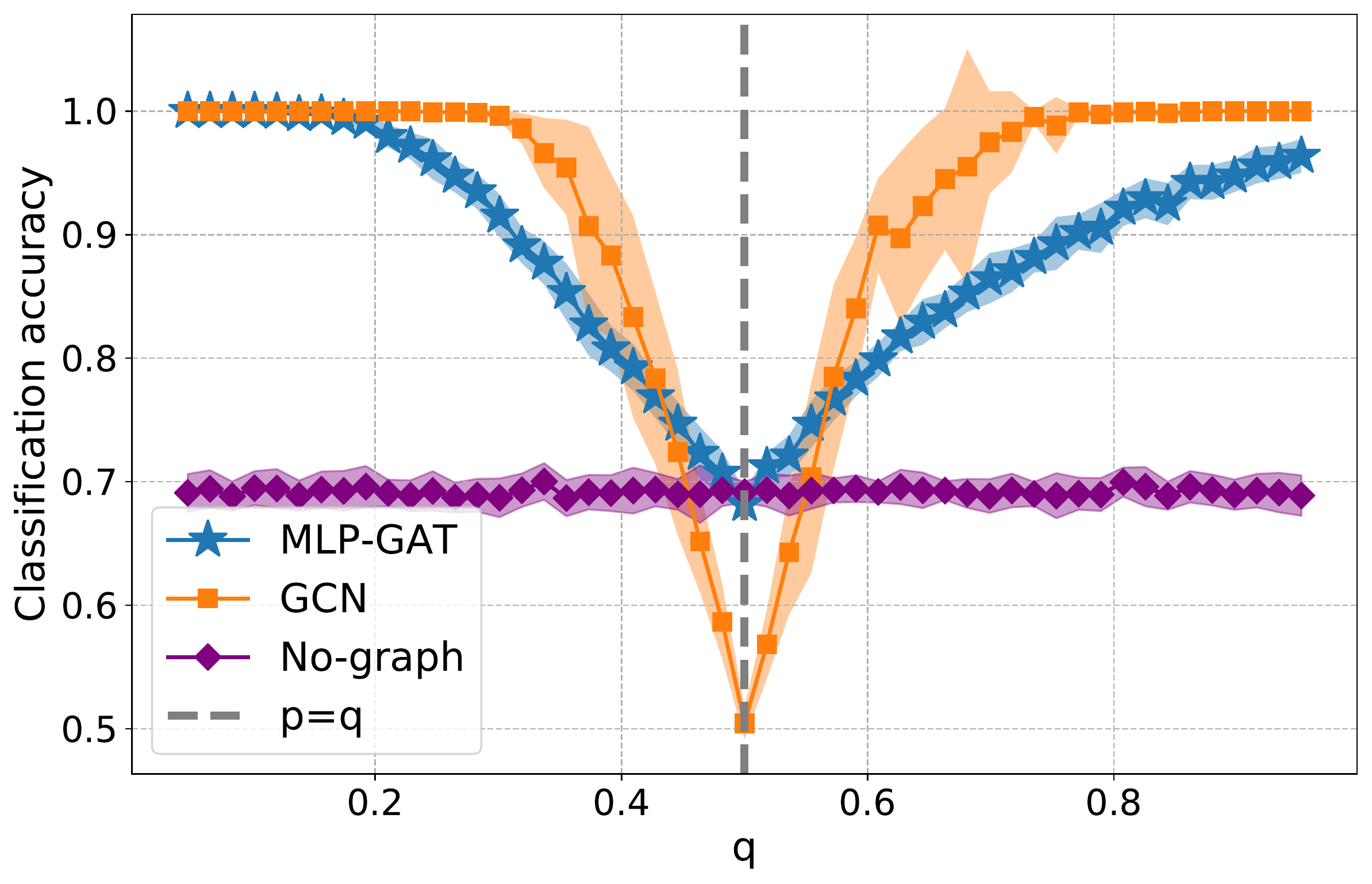}
         \caption{Node classification}
         \label{fig:node_c_MLPGAT_var_q_hard}
     \end{subfigure}
    \caption{Demonstration of Theorem~\ref{thm:edge_separation_hard} and Theorem~\ref{thm:gamma_hard} for the hard regime. The shaded areas in the plots show standard deviation. Unlike GCN, the performance of MLP-GAT is lower bounded by the performance of not using the graph.}
    \label{fig:edge_c_var_q_part2}
\end{figure}

\subsubsection{Fixing \texorpdfstring{$q$}{q} and varying the distance between the means}
We consider the case where $q=0.1$. In Figure~\ref{fig:dist_means} we show how the attention coefficients of MLP-GAT and GAT, the node and edge classification depend on the distance between the means. We also add a vertical line at $\sigma$ to approximately separate the easy (left of $\sigma$) and hard (right of $\sigma$) regimes. Figure~\ref{fig:Edge_c_MLPGAT_var_dist} illustrates Theorems~\ref{thm:edge_separation_easy} and~\ref{thm:edge_separation_hard} in the hard and easy regimes, respectively. In particular, we observe that in the hard regime, MLP-GAT fails to distinguish intra from inter edges, while in the easy regime, it is able to do that perfectly for a large enough distance between the means.

In Figure~\ref{fig:Gammas_MLPGAT_var_dist} we observe that in the hard regime, $\gamma$ concentrate around the uniform (GCN) coefficients, while in the easy regime, MLP-GAT is able to maintain the $\gamma$ for intra-class edges, while it sets the $\gamma$ to tiny values for inter-class edges. In Figure~\ref{fig:Gammas_GAT_var_dist}. we observe that in the hard regime, the $\gamma$ of GAT concentrate around the uniform coefficients (proved in Theorem~\ref{thm:gamma_hard}), while in the easy regime although the $\gamma$ concentrate, GAT is not able to distinguish intra from inter edges. This makes sense since the separation of edges can't be done by simple linear classifiers used by GAT, see the discussion below Theorem~\ref{thm:gamma_hard}. Finally, in Figure~\ref{fig:node_c_MLPGAT_var_dist} we show node classification results for MLP-GAT. In the easy regime, we observe perfect classification as proved in Corollary~\ref{cor:node_separation_easy}. However, as the distance between the means decreases, we observe that MLP-GAT starts to misclassify nodes.

\begin{figure}[ht!]
     \centering
      \begin{subfigure}[b]{0.49\textwidth}
         \centering
         \includegraphics[width=.95\textwidth]{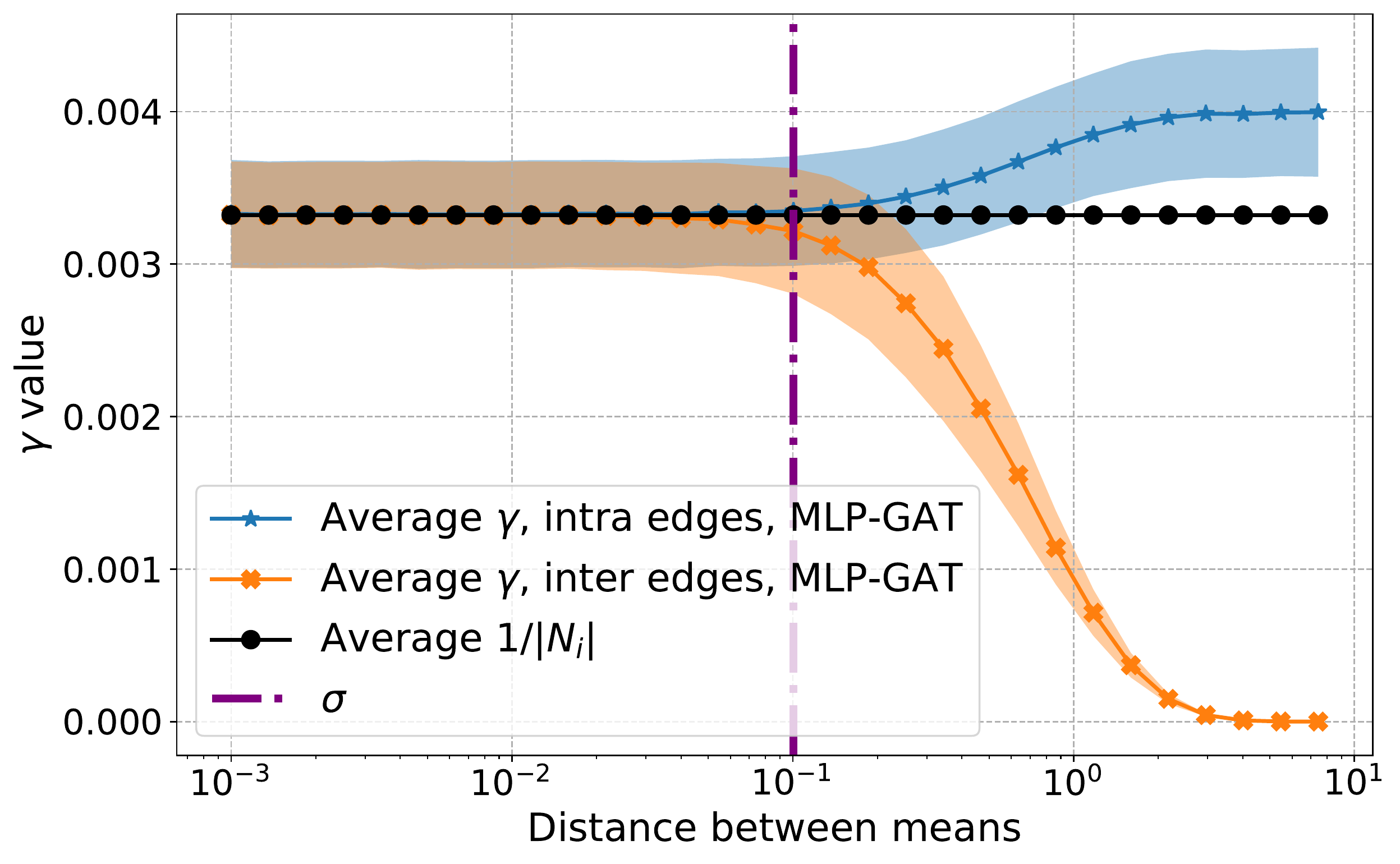}
         \caption{Attention coefficients of MLP-GAT}
         \label{fig:Gammas_MLPGAT_var_dist}
     \end{subfigure}%
     \begin{subfigure}[b]{0.49\textwidth}
         \centering
         \includegraphics[width=.95\textwidth]{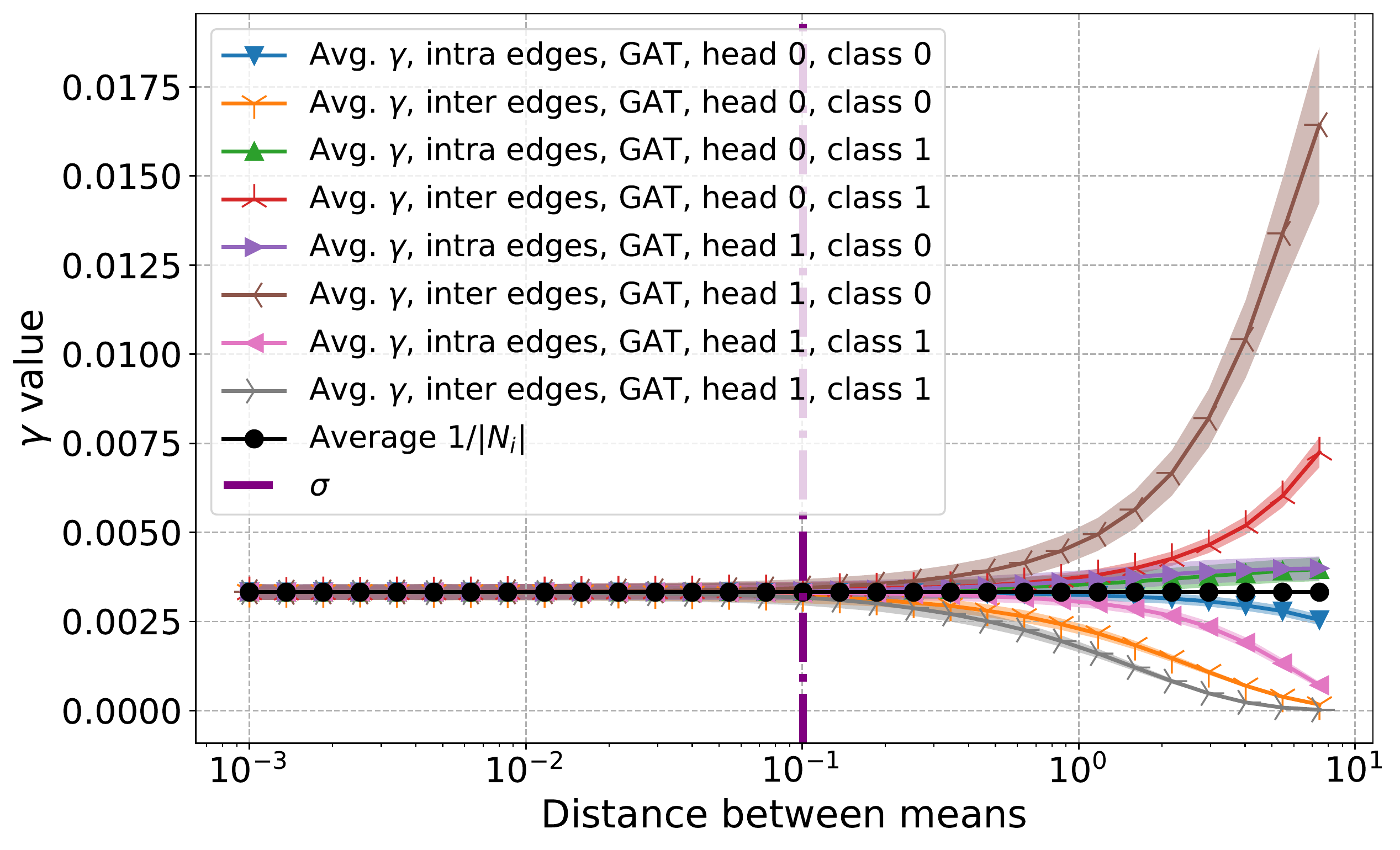}
         \caption{Attention coefficients of GAT}
         \label{fig:Gammas_GAT_var_dist}
     \end{subfigure}
     \begin{subfigure}[b]{0.49\textwidth}
         \vspace{2mm}
         \centering
         \includegraphics[width=.95\textwidth]{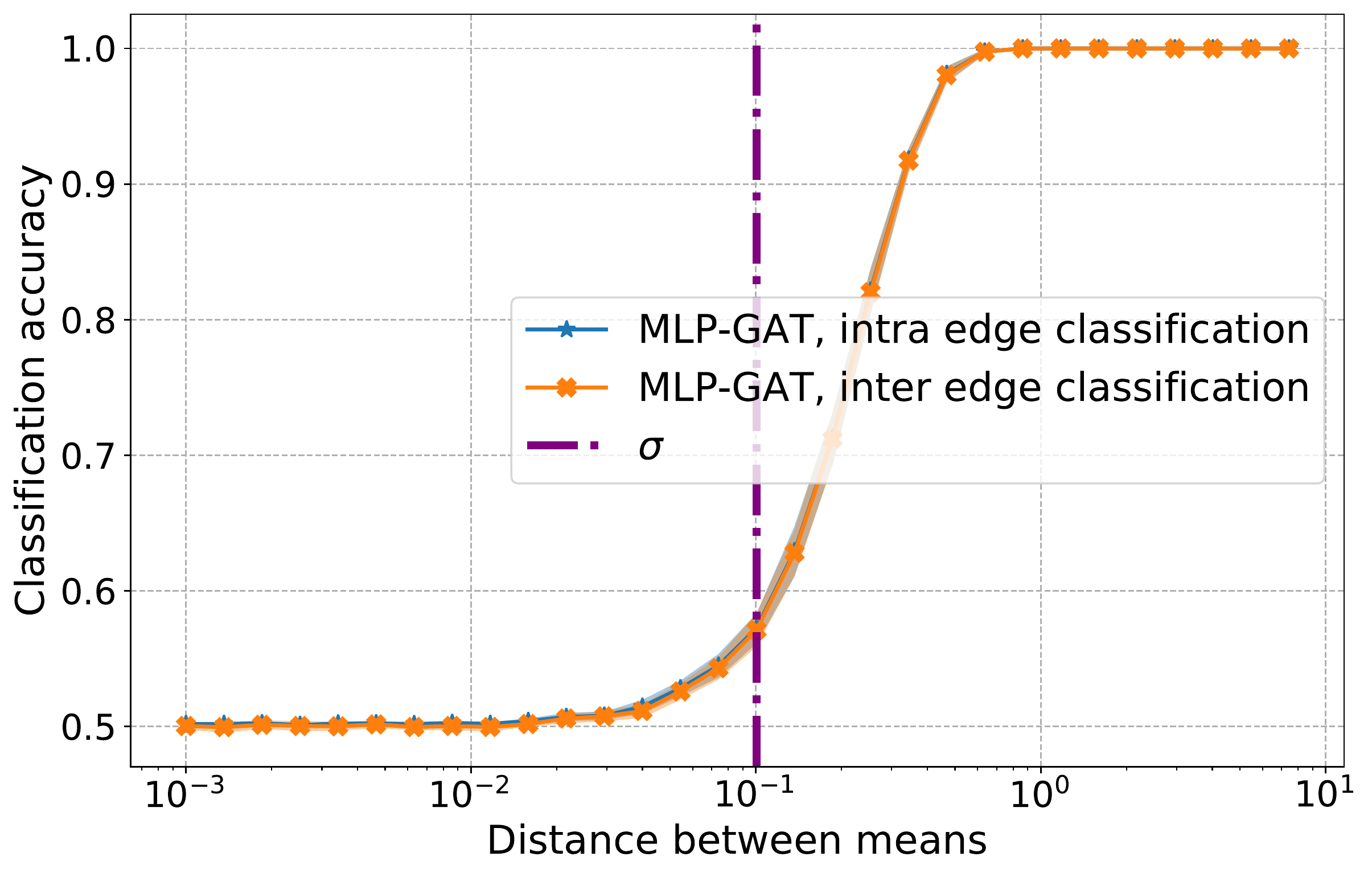}
         \caption{Edge classification accuracy}
         \label{fig:Edge_c_MLPGAT_var_dist}
     \end{subfigure}%
     \begin{subfigure}[b]{0.49\textwidth}
         \vspace{2mm}
         \centering
         \includegraphics[width=.95\textwidth]{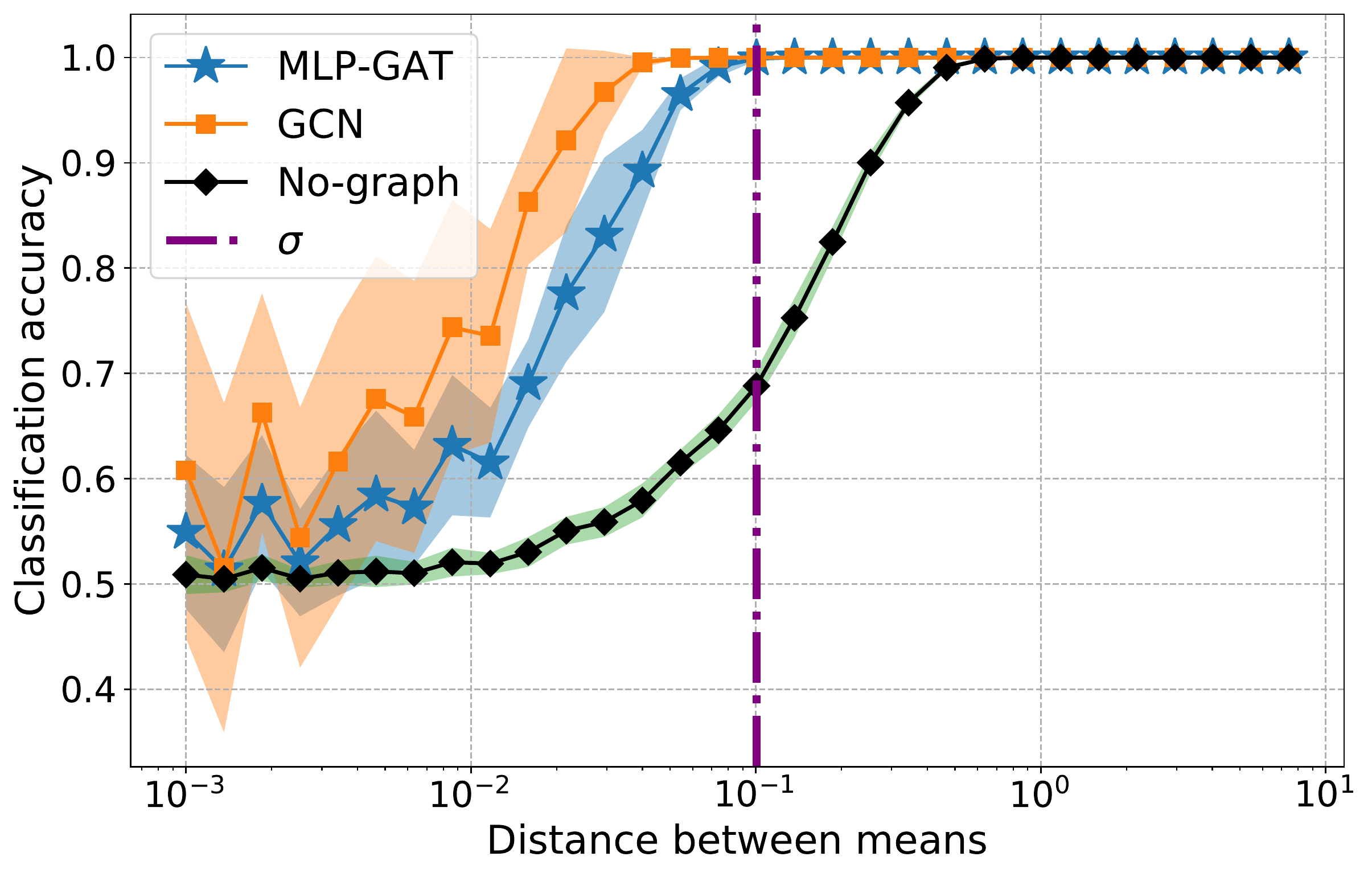}
         \caption{Node classification accuracy}
         \label{fig:node_c_MLPGAT_var_dist}
     \end{subfigure}
     \caption{Attention coefficients of MLP-GAT and GAT, node and edge classification as a function of the distance between the means. Shaded areas show standard deviation. When there is a sufficient distance between the means, attention coefficients of MLP-GAT demonstrate a nice separation while GAT does not.}\label{fig:dist_means}
\end{figure}

\subsection{Real data}\label{subsec:real_data}
For the experiment on real data, we illustrate the attention coefficients, node and edge classification for MLP-GAT as a function of the distance between the means. We use popular real-world graph datasets Cora, PubMed, and CiteSeer collected by PyTorch Geometric~\cite{FL2019} and ogbn-arxiv from Open Graph Benchmark~\cite{HFZDRLCL20}. The datasets come with multiple classes, however, for each of our experiments, we do a one-v.s.-all classification for a single class. This is a semi-supervised problem, only a fraction of the training nodes have labels. The rest of the nodes are used for measuring prediction accuracy. To control the distance between the means of the problem we use the true labels to determine the class of each node and then we compute the empirical mean for each class. We subtract the empirical means from their corresponding classes and we also add means $\bmu$ and $-\bmu$ to each class, respectively. This modification can be thought of as translating the mean of the distribution of the data for each class.

The results of this experiment are shown in Figure~\ref{fig:real_data}. For Cora, PubMed, and CiteSeer we show results for class 0 since these are small datasets, each dataset contains at most 7 classes, and the classes have similar sizes. In our experiments on other classes, we observed that the results are similar. For ogbn-arxiv we show results for the largest class (i.e. class 16) since this is a larger dataset which has 40 imbalanced classes. Picking a large class makes the one-v.s.-all classification task more balanced. In our experiments on other classes having similar sizes, we obtained similar results. We note that in the real data, we also observe similar behavior of MLP-GAT in the easy and hard regimes as for the synthetic data. In particular, for all datasets as the distance of means increases, MLP-GAT is able to accurately classify intra-class and inter-class edges, see Figures~\ref{fig:edge_c_cora},~\ref{fig:edge_c_pubmed} and~\ref{fig:edge_c_citeseer}. Moreover, as the distance between the means increases, the average intra-class $\gamma$ becomes much larger than the average inter-class $\gamma$, see Figures~\ref{fig:gammas_MLPGAT_cora},~\ref{fig:gammas_MLPGAT_pubmed},~\ref{fig:gammas_MLPGAT_citeseer}, and~\ref{fig:gammas_MLPGAT_ogbn_arxiv}, and the model is able to classify the nodes accurately, see Figures~\ref{fig:node_c_MLPGAT_cora},~\ref{fig:node_c_MLPGAT_pubmed},~\ref{fig:node_c_MLPGAT_citeseer}, and~\ref{fig:node_c_MLPGAT_ogbn_arxiv}. On the contrary, in the same figures, we observe that as the distance between the means decreases then MLP-GAT is not able to separate intra-class from inter-class edges, the averaged $\gamma$ are very close to uniform coefficients and the model can't classify the nodes accurately.

\begin{figure}[ht!]
     \centering
     \begin{subfigure}[b]{0.329\textwidth}
         \centering
         \includegraphics[width=\textwidth]{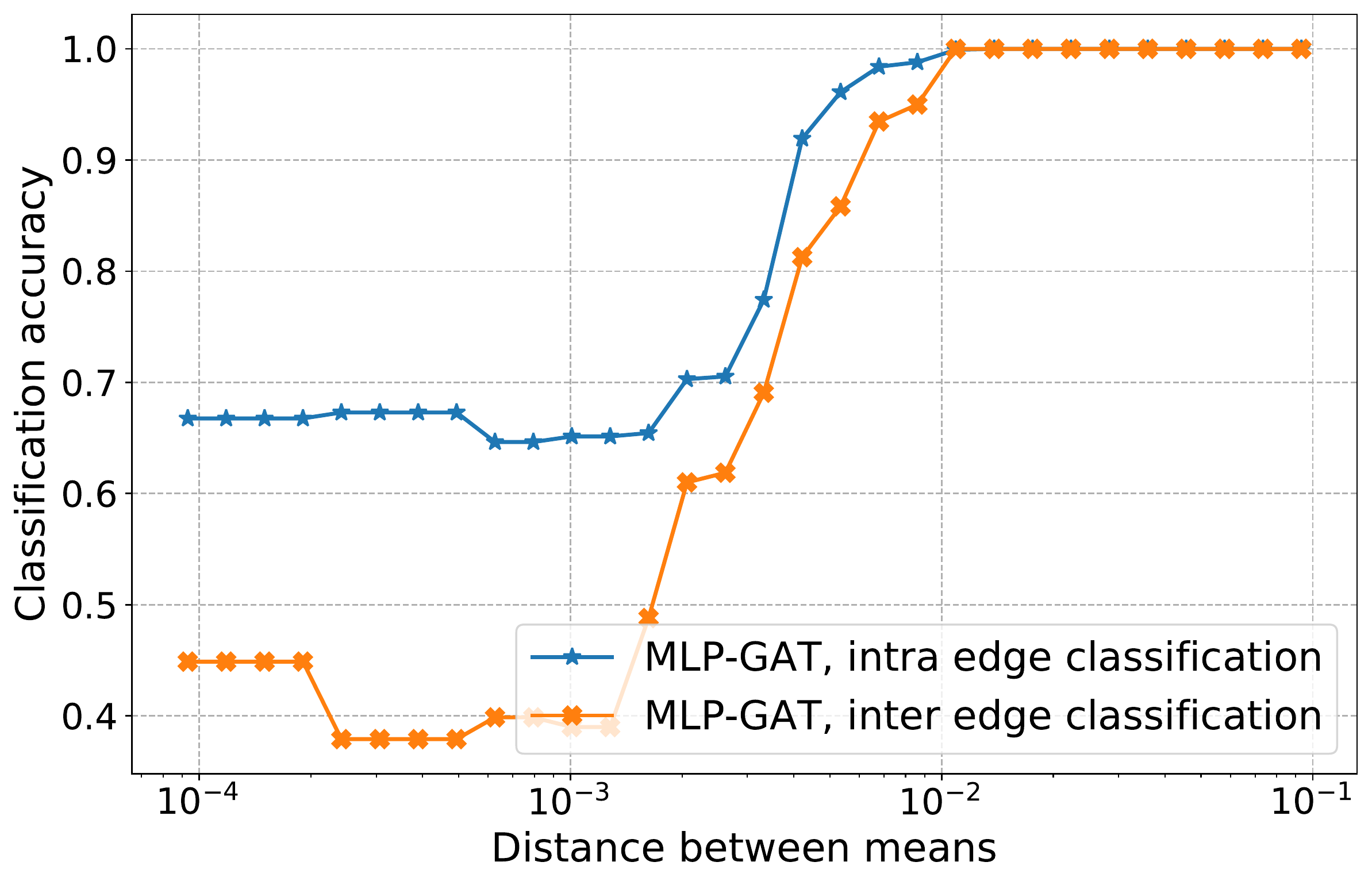}
         \caption{Edge class., Cora}
         \label{fig:edge_c_cora}
     \end{subfigure}%
     \begin{subfigure}[b]{0.329\textwidth}
         \centering
         \includegraphics[width=\textwidth]{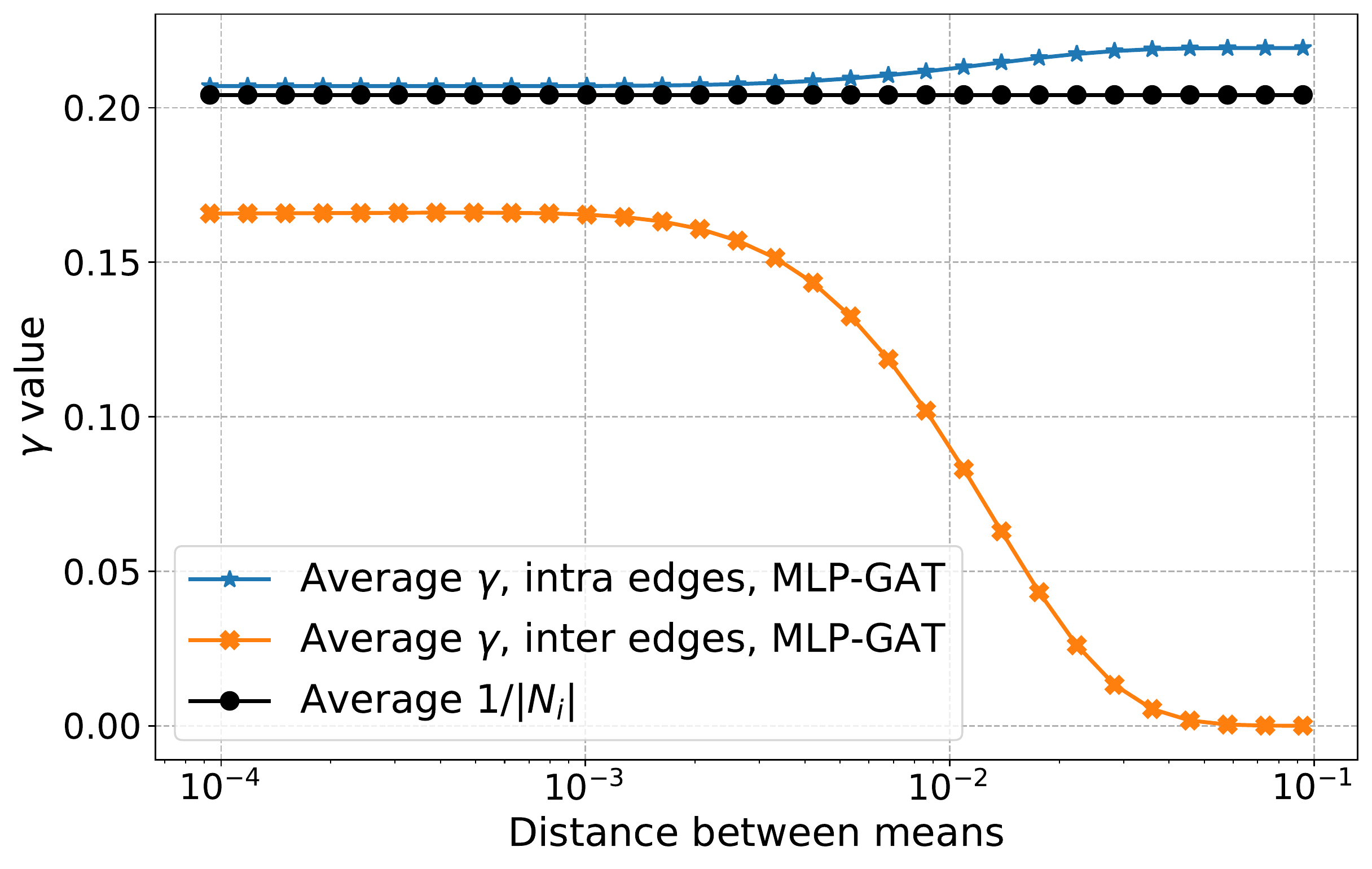}
         \caption{Attention coef., Cora}
         \label{fig:gammas_MLPGAT_cora}
     \end{subfigure}%
     \begin{subfigure}[b]{0.329\textwidth}
         \centering
         \includegraphics[width=\textwidth]{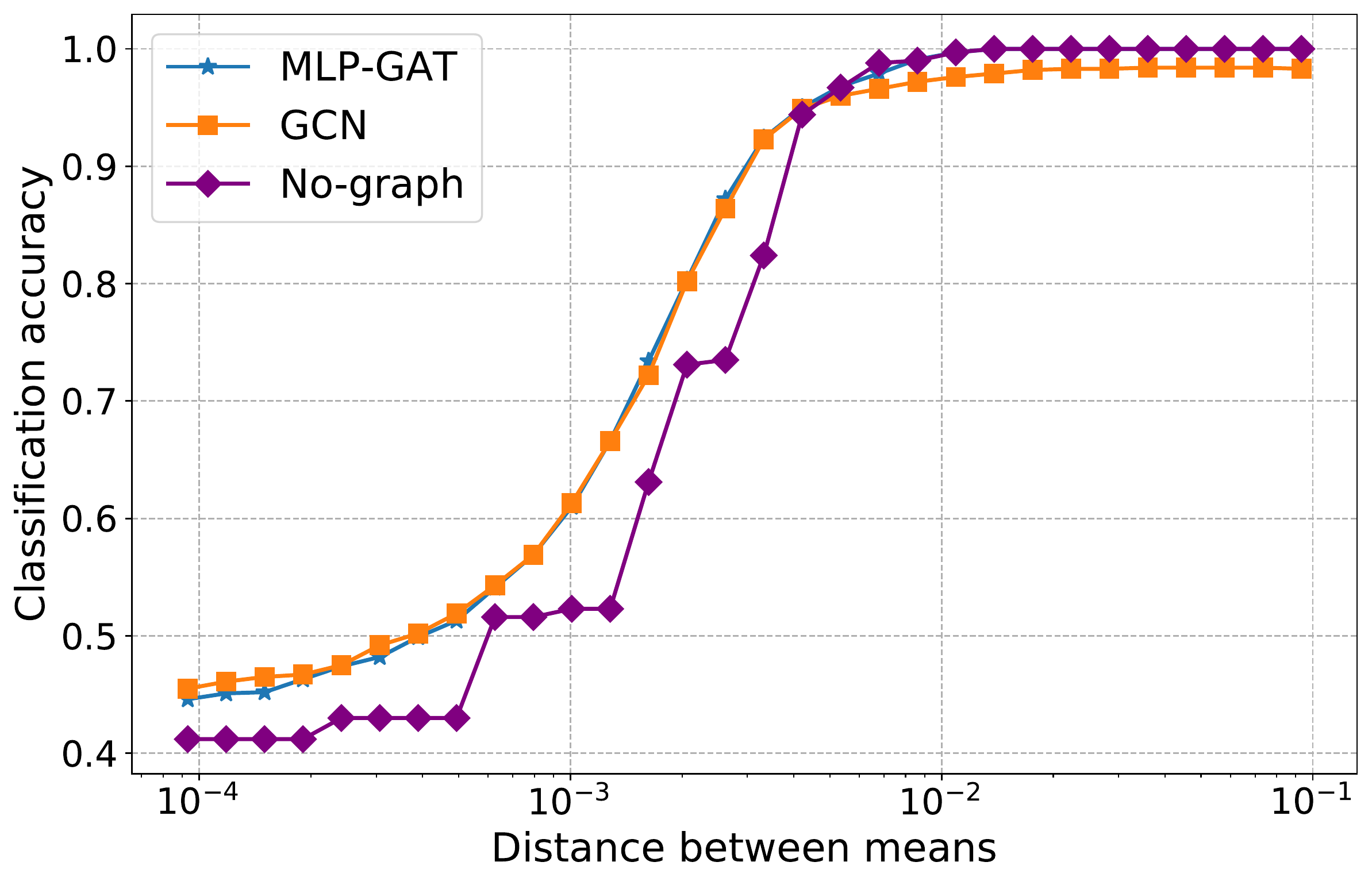}
         \caption{Node class., Cora}
         \label{fig:node_c_MLPGAT_cora}
     \end{subfigure}
     \begin{subfigure}[b]{0.329\textwidth}
         \centering
         \includegraphics[width=\textwidth]{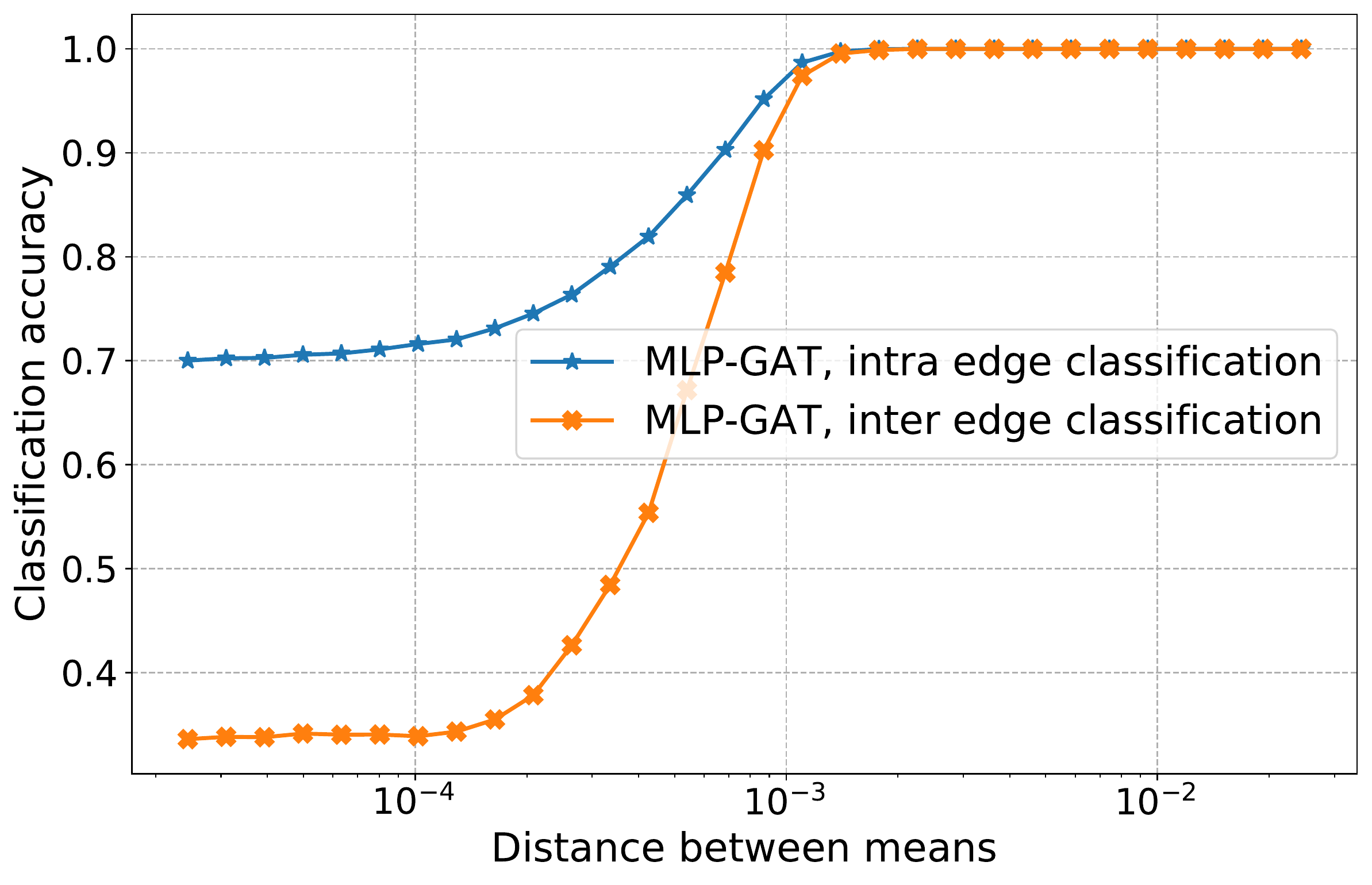}
         \caption{Edge class., PubMed}
         \label{fig:edge_c_pubmed}
     \end{subfigure}%
     \begin{subfigure}[b]{0.329\textwidth}
         \centering
         \includegraphics[width=\textwidth]{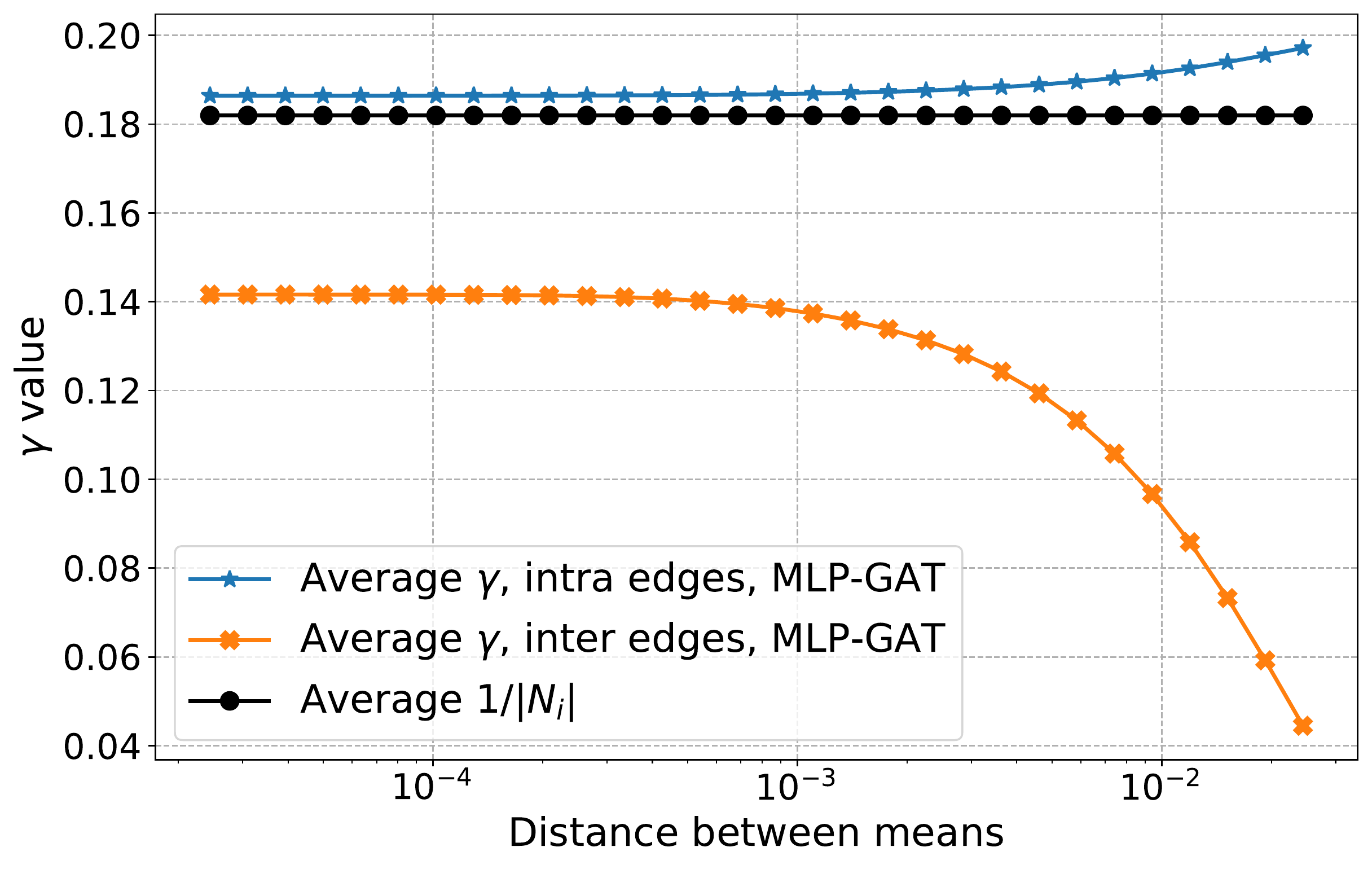}
         \caption{Attention coef., PubMed}
         \label{fig:gammas_MLPGAT_pubmed}
     \end{subfigure}%
     \begin{subfigure}[b]{0.329\textwidth}
         \centering
         \includegraphics[width=\textwidth]{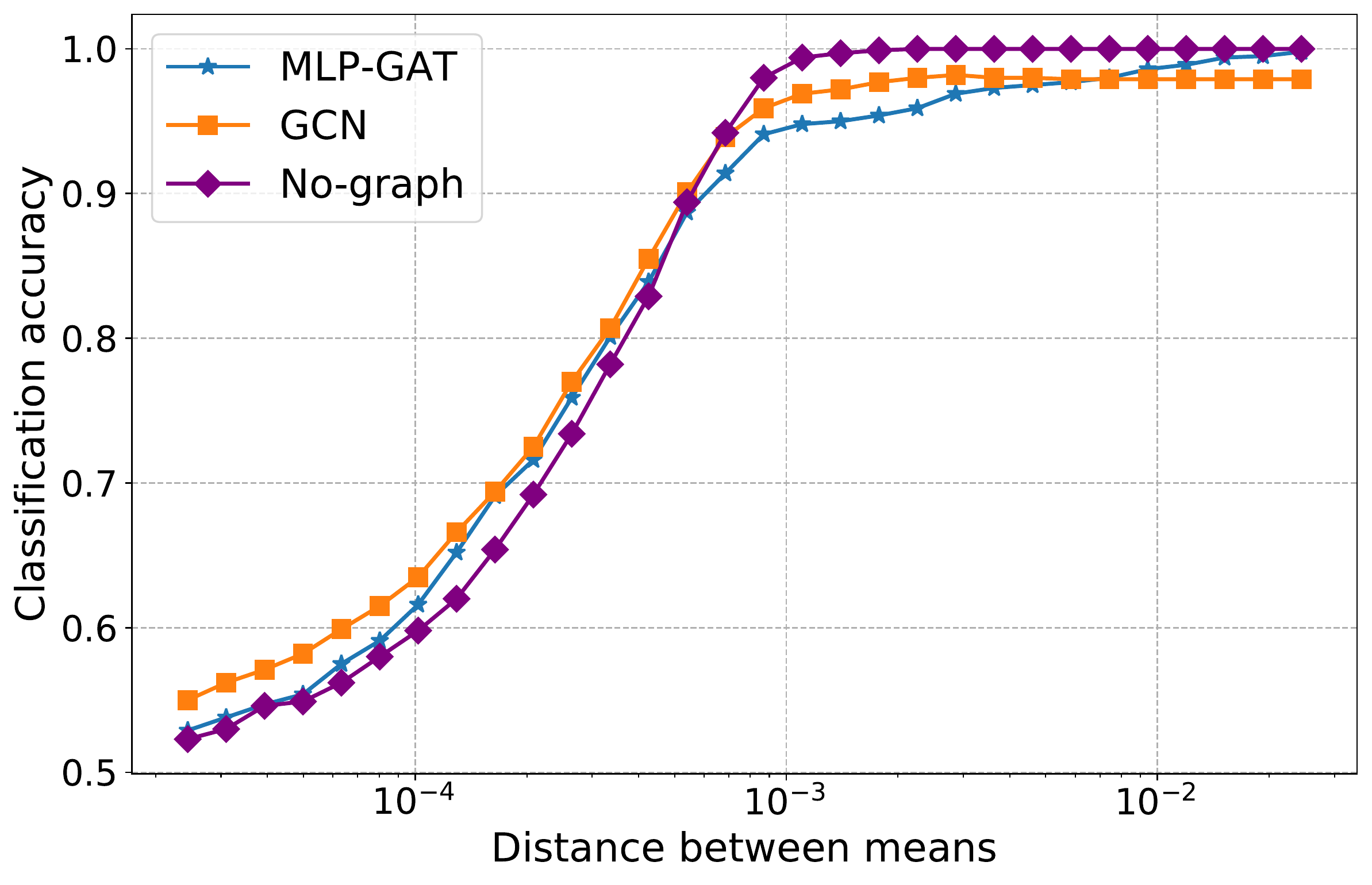}
         \caption{Node class., PubMed}
         \label{fig:node_c_MLPGAT_pubmed}
     \end{subfigure}
     \begin{subfigure}[b]{0.329\textwidth}
         \centering
         \includegraphics[width=\textwidth]{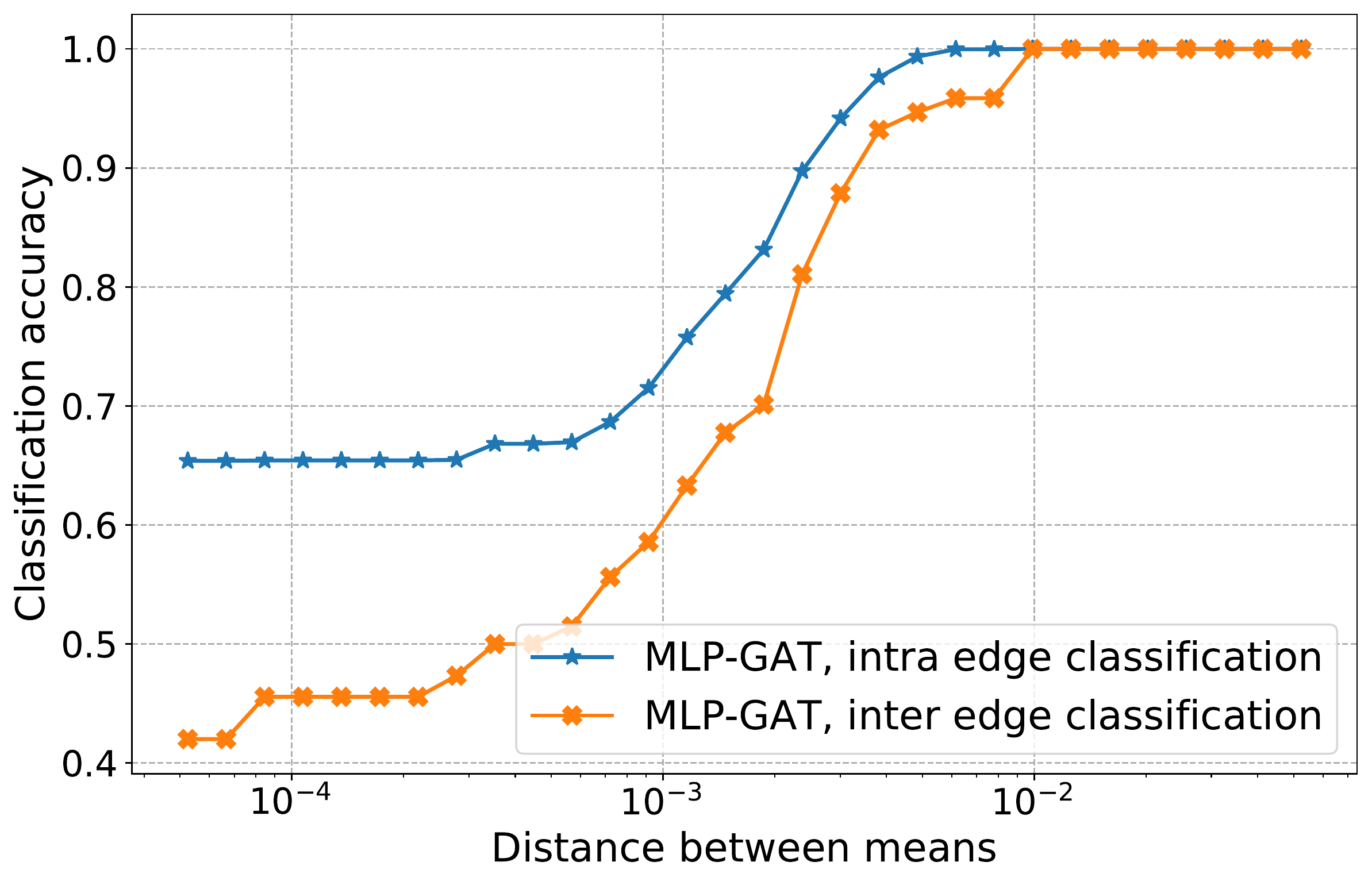}
         \caption{Edge class., CiteSeer}
         \label{fig:edge_c_citeseer}
     \end{subfigure}%
     \begin{subfigure}[b]{0.329\textwidth}
         \centering
         \includegraphics[width=\textwidth]{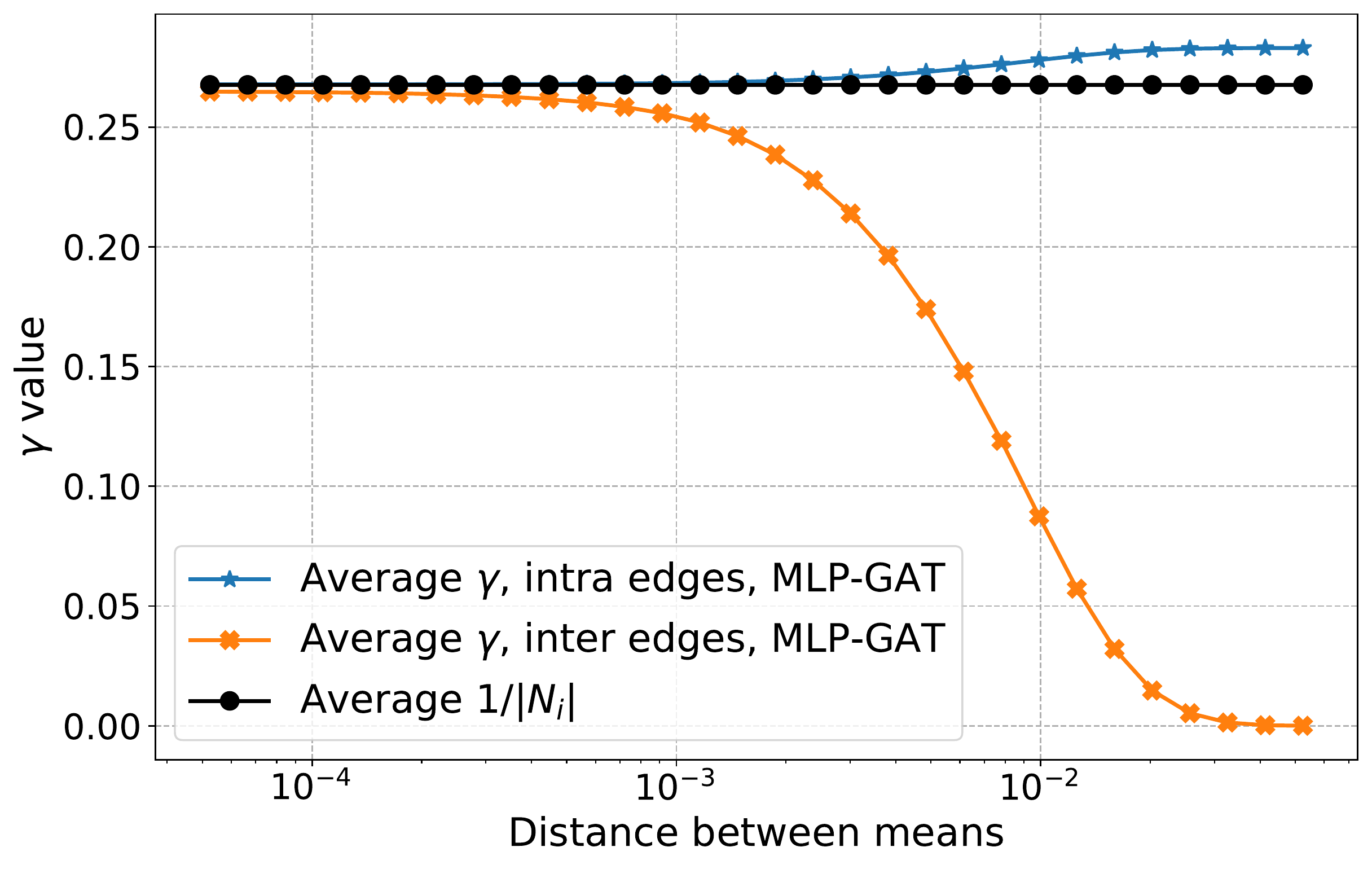}
         \caption{Attention coef., CiteSeer}
         \label{fig:gammas_MLPGAT_citeseer}
     \end{subfigure}%
     \begin{subfigure}[b]{0.329\textwidth}
         \centering
         \includegraphics[width=\textwidth]{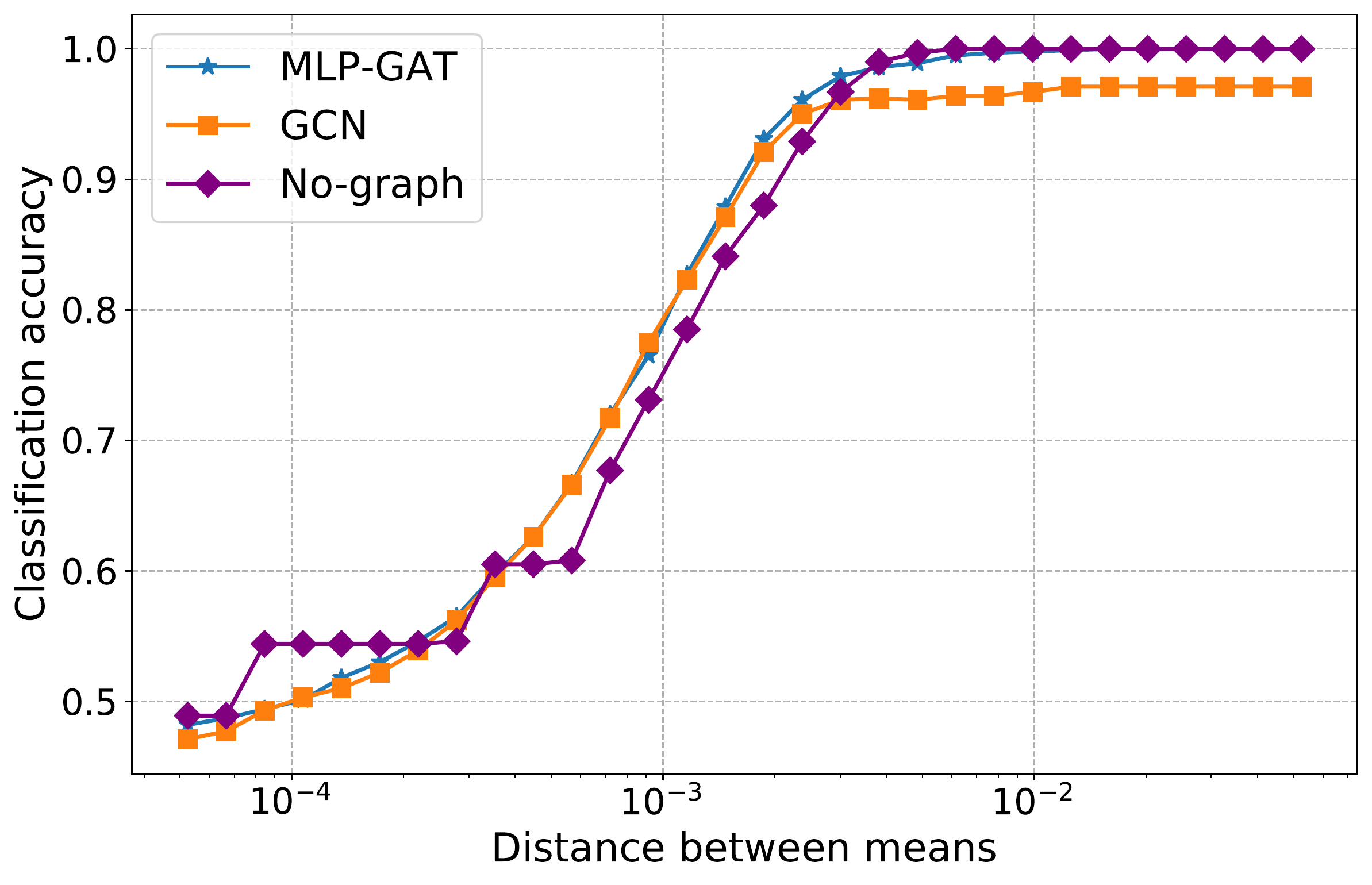}
         \caption{Node class., CiteSeer}
         \label{fig:node_c_MLPGAT_citeseer}
     \end{subfigure}
     \begin{subfigure}[b]{0.329\textwidth}
         \centering
         \includegraphics[width=\textwidth]{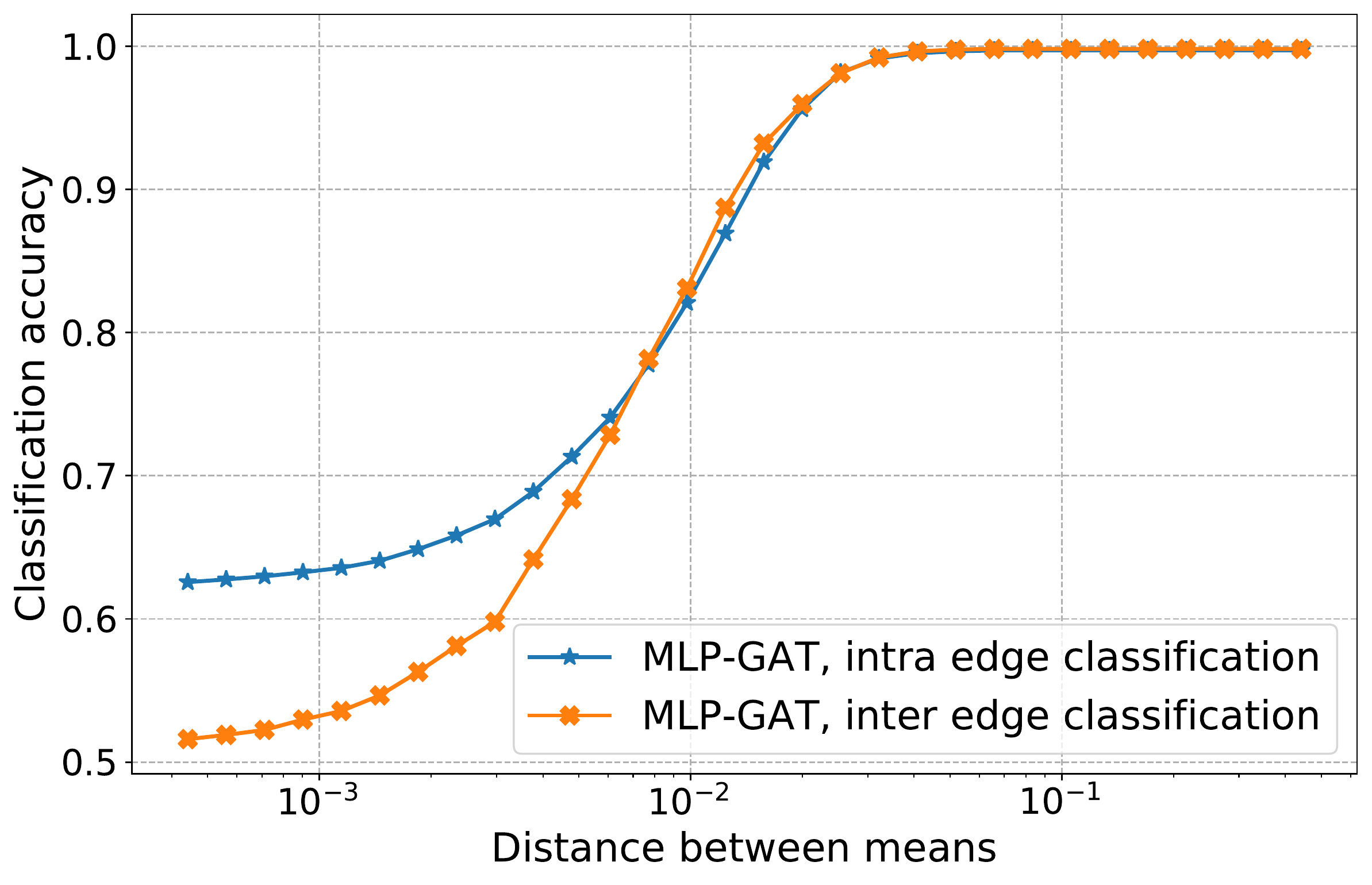}
         \caption{Edge class., ogbn-arxiv}
         \label{fig:edge_c_ogbn_arxiv}
     \end{subfigure}%
     \begin{subfigure}[b]{0.329\textwidth}
         \centering
         \includegraphics[width=\textwidth]{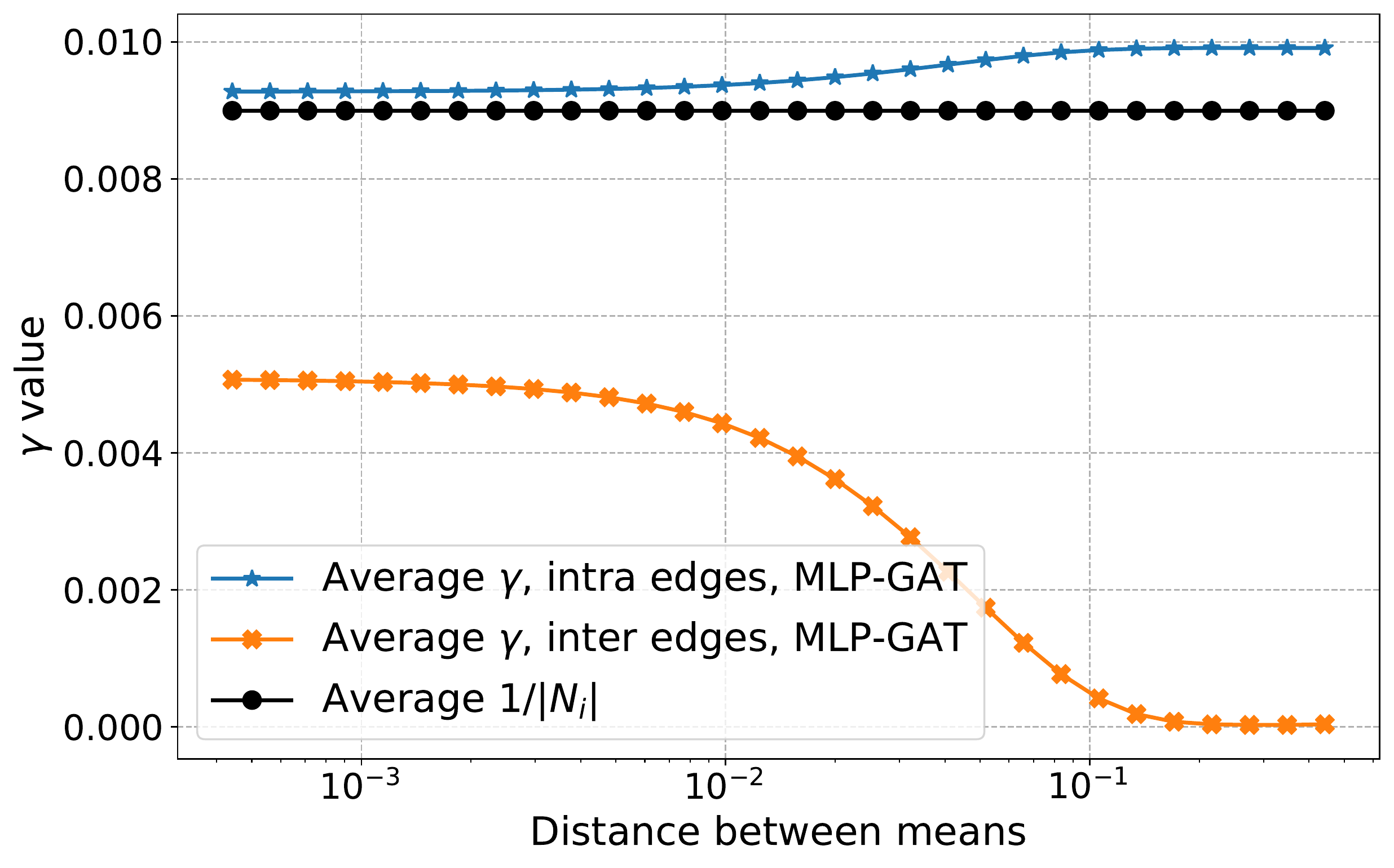}
         \caption{Attention coef., ogbn-arxiv}
         \label{fig:gammas_MLPGAT_ogbn_arxiv}
     \end{subfigure}%
     \begin{subfigure}[b]{0.329\textwidth}
         \centering
         \includegraphics[width=\textwidth]{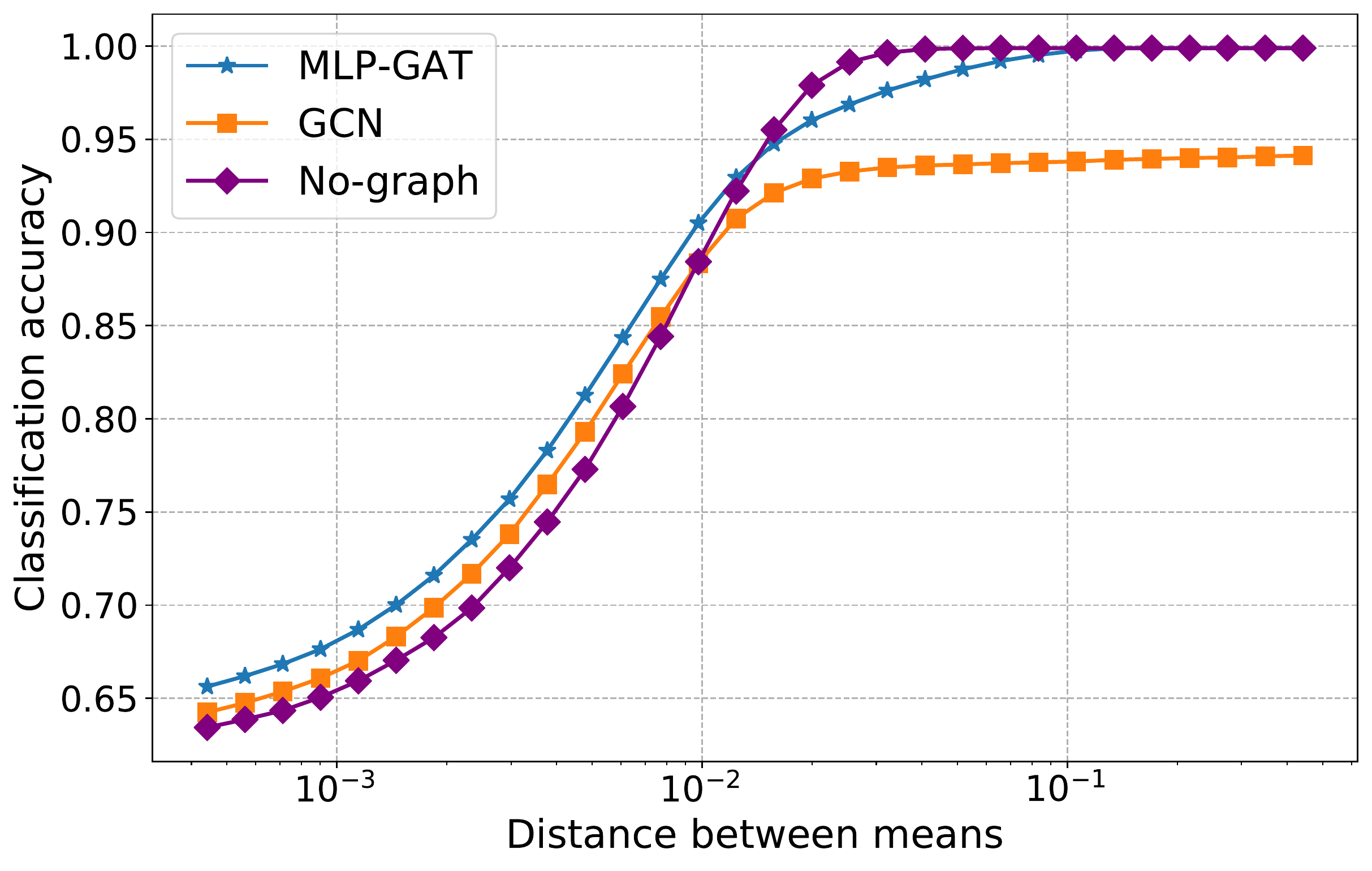}
         \caption{Node class., ogbn-arxiv}
         \label{fig:node_c_MLPGAT_ogbn_arxiv}
     \end{subfigure}
    \caption{Attention coefficients, node and edge classification for MLP-GAT as a function of the distance between the means for real data.}
    \label{fig:real_data}
\end{figure}
Note that Figure~\ref{fig:real_data} does not show the standard deviation for the attention coefficients $\gamma$. We show the standard deviation of $\gamma$ in Figure~\ref{fig:gamma_std}. We observe that the standard deviation is higher than what we observed in the synthetic data. In particular, it can be more than half of the averaged $\gamma$. This is to be expected since for the real data the degrees of the nodes do not concentrate as well. In Figure~\ref{fig:gamma_std} we show that the standard deviation of the uniform coefficients $1/|N_i|$ is also high. For Cora, PubMed, and CiteSeer, the standard deviation for intra-class $\gamma$ is similar to that of $1/|N_i|$, while the deviation for inter-class $\gamma$ is large for a small distance between the means, but it gets much smaller as the distance increases. For ognb-arxiv, the standard deviation of $1/|N_i|$ is particularly high. This implies that the degree distribution of nodes of ogbn-arxiv has a heavy tail, which could potentially result from the graph structure being noisier than other datasets and also explain GCN's relatively much worse performance when the distance between the means is large.
\begin{figure}[ht!]
     \centering
     \begin{subfigure}[b]{0.45\textwidth}
         \centering
         \includegraphics[width=.9\textwidth]{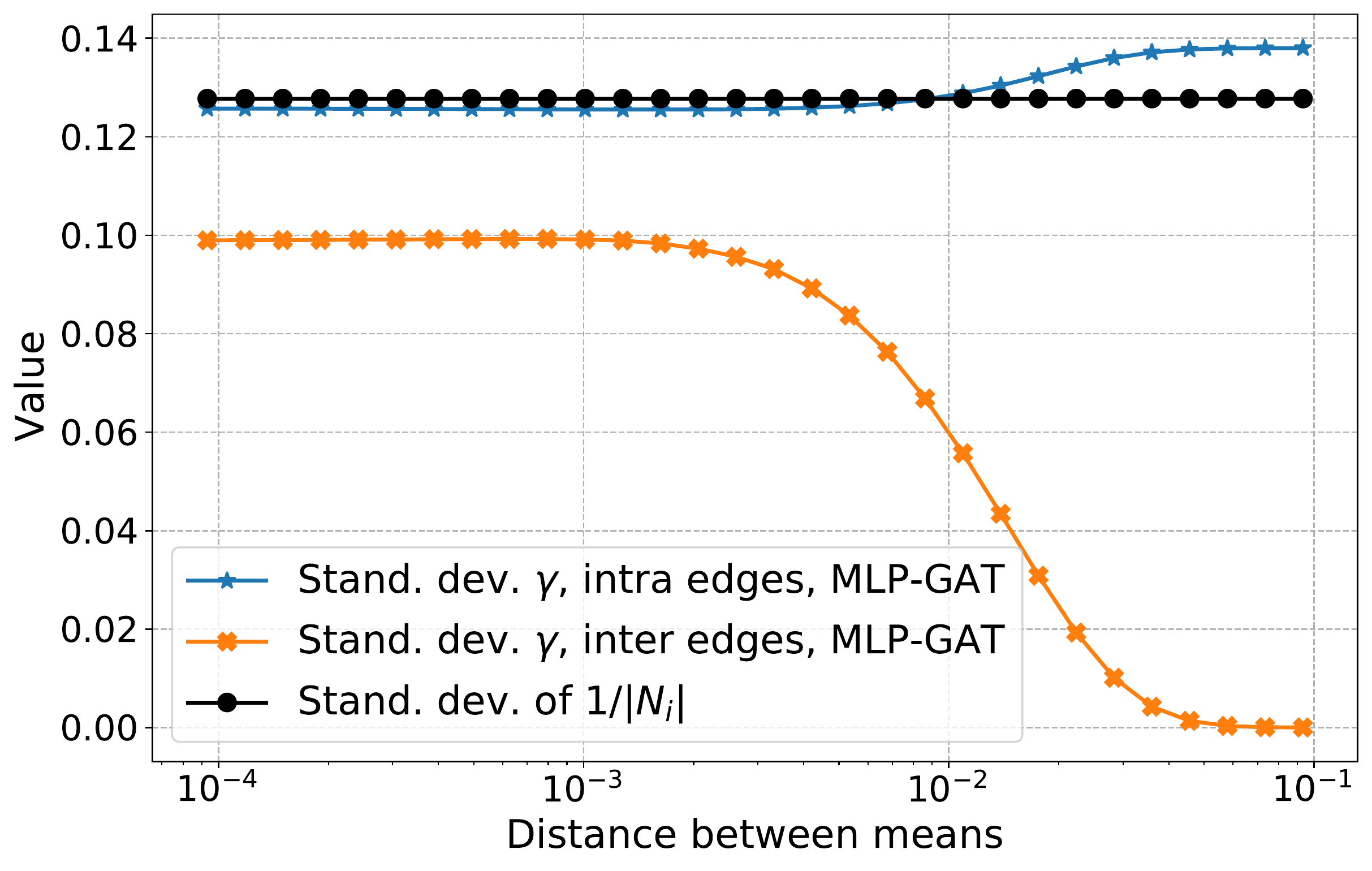}
         \caption{Cora}
         \label{fig:gamma_std_cora}
     \end{subfigure}%
     \begin{subfigure}[b]{0.45\textwidth}
         \centering
         \includegraphics[width=.9\textwidth]{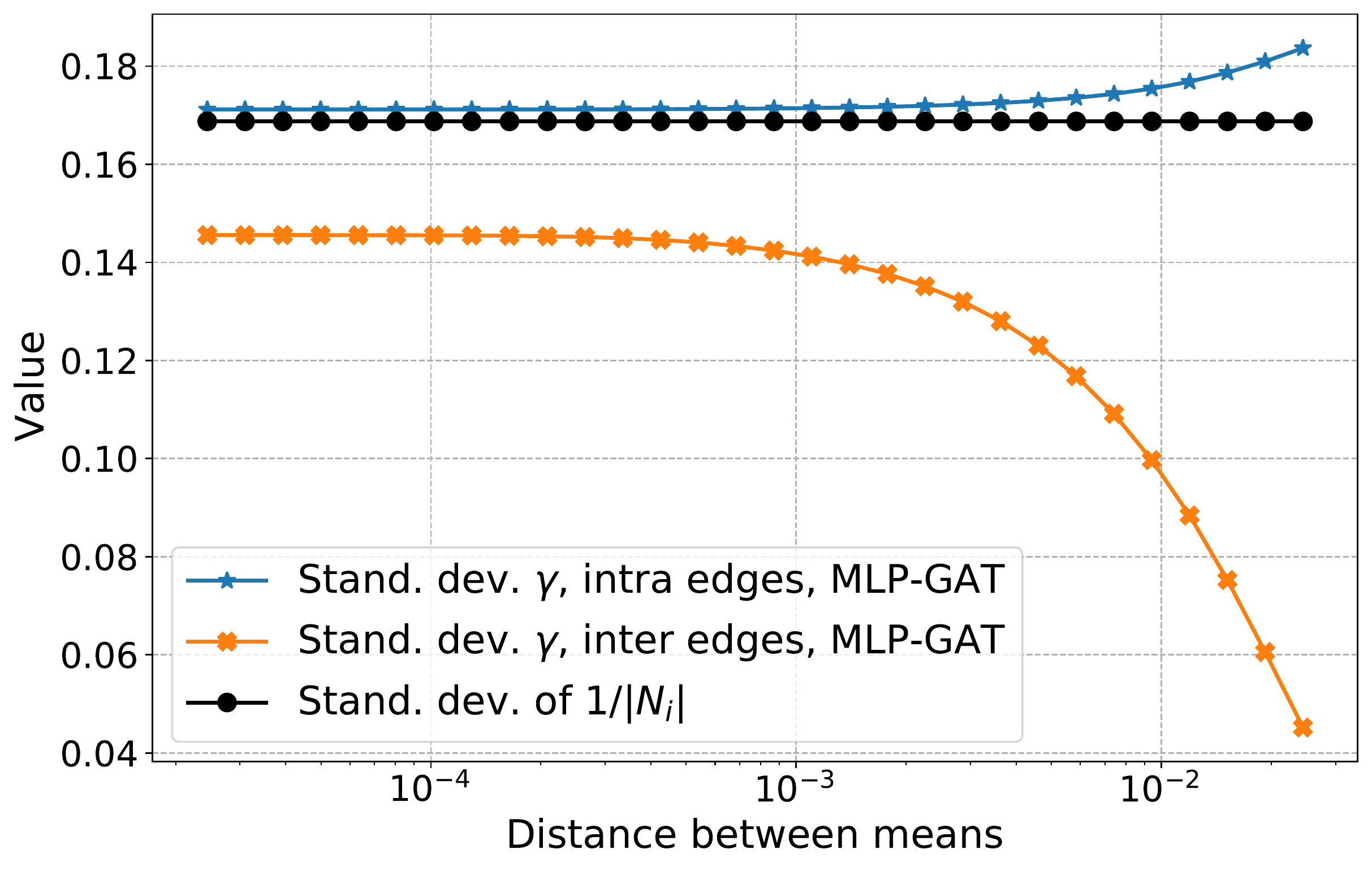}
         \caption{PubMed}
         \label{fig:gamma_std_pubmed}
     \end{subfigure}
     \begin{subfigure}[b]{0.45\textwidth}
         \centering
         \includegraphics[width=.9\textwidth]{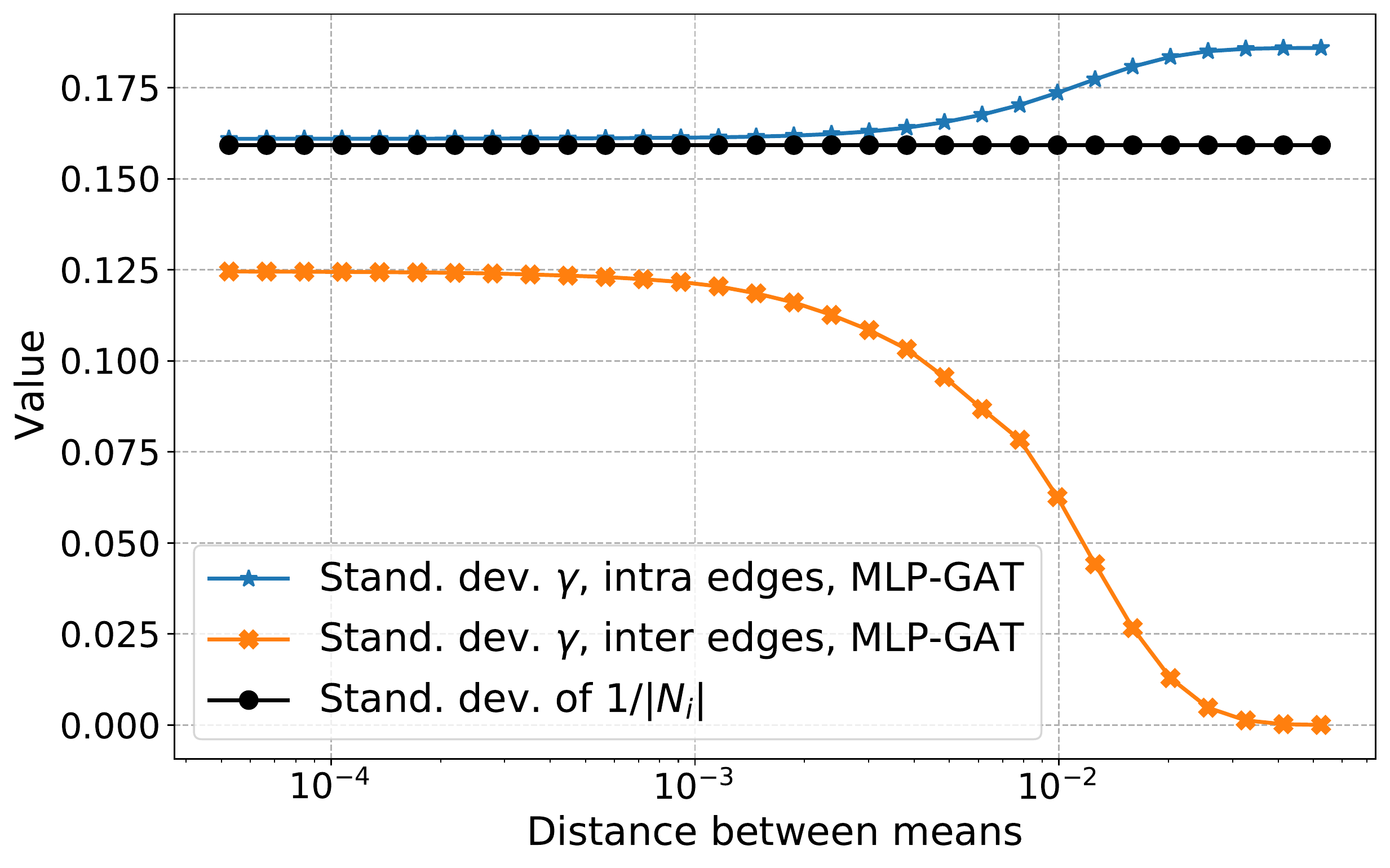}
         \caption{CiteSeer}
         \label{fig:gamma_std_citeseer}
     \end{subfigure}%
          \begin{subfigure}[b]{0.45\textwidth}
         \centering
         \includegraphics[width=.9\textwidth]{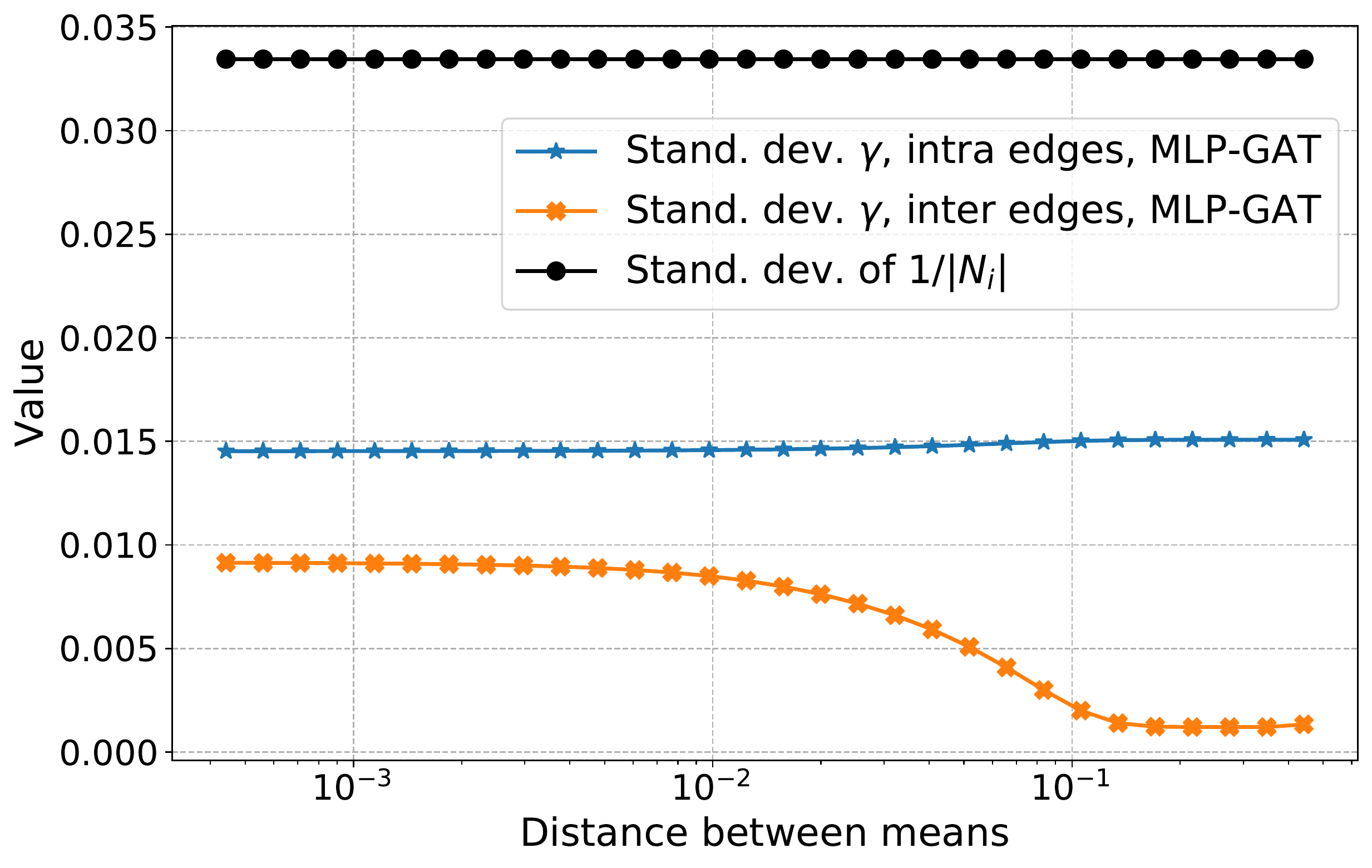}
         \caption{ogbn-arxiv}
         \label{fig:gamma_std_ogbn_arxiv}
     \end{subfigure}
     \caption{Standard deviation for attention coefficients of MLP-GAT.}\label{fig:gamma_std}
\end{figure}

\section{Summary of implications for practitioners}\label{sec:implications}

While this work focuses on theoretical understanding of graph attention's capability for edge and node classification using the CSBM generative model, our findings yield a series of potential suggestions for GNN practitioners. In this section we provide some interesting practical implications of our results.

\subsection{Why graph attention? Benefits of graph attention's robustness to structural noise} 

When the graph is very noisy, e.g. when a node has a similar number of neighbors from the same class and from different classes, simple graph convolution will mix up node features and thus make nodes from different classes indistinguishable. In this case, simple graph convolution can do more harm than good. However, in practice, it is difficult to determine how noisy the graph is, or if the graph is even useful at all. This could pose a challenge in choosing an architecture. Our results in Theorem~\ref{thm:gat_linear} and Corollary~\ref{cor:model_rank} imply that graph attention has the ability to dramatically reduce the impact of a noisy graph, in a way such that the output is at least as good as the output from the best linear classifier (on the input) which does not rely on the graph. This shows that graph attention convolution is more robust against structural noise in the graph, and hence on noisy graphs, it is strictly better than simple graph convolution.

\subsection{Which attention architecture? Benefits of multi-layer attention architecture} 

In this work, we are able to obtain positive results for graph attention by using the two-layer MLP attention architecture in \eqref{eq:psi_ansatz}. This is different from the original GAT which uses a single-layer attention architecture~\cite{Velickovic2018GraphAN}. In our analyses and empirical experiments, we found that the original single-layer attention does not have the important properties required for obtaining positive results (e.g. Theorem~\ref{thm:edge_separation_easy}, Corollary~\ref{cor:node_separation_easy}, Theorem~\ref{thm:gat_linear}, Corollary~\ref{cor:gat_linear_recovery}) for graph attention convolution. Coincidentally, this aligns with the findings of \cite{BAY21}, where the authors study limitations of the original GAT architecture from a different perspective than ours, and they propose a new architecture termed GATv2. The two-layer MLP attention architecture that we consider in \eqref{eq:psi_ansatz} encompasses GATv2 as a special case. Therefore, the two-layer MLP attention architecture can be a good candidate to consider when practitioners search for a suitable graph attention architecture for their specific downstream tasks. On the other hand, our results in Section~\ref{subsec:hard} imply that even the two-layer MLP attention architecture (and hence GATv2) has limitations when the node features are noisy. To fix that, a potential solution is to incorporate additional information such as edge features, which we discuss next.
    
\subsection{Will additional information help? Benefits of incorporating good edge features} 

Even though we do not consider edge features in our analyses, our results in Theorem~\ref{thm:good_psi_negative} imply that good attention functions that are able to classify the edges independently from the node features can be very helpful, as they help reduce the threshold under which graph attention convolution would fail to separate the nodes. One potential way to obtain good attention functions that behave like the one given in \eqref{eq:good_psi} is by incorporating good edge features. Furthermore, given our result in Theorem~\ref{thm:edge_separation_hard}, which says that graph attention based on noisy node features cannot perfectly classify the edges, the importance of incorporating informative edge features that are more indicative of edge class memberships (if they are accessible in practice) into the attention mechanism is more pronounced.

\section{Conclusion and future work} 

In this work, we study the impact of graph attention on edges and its implications for node classification. We show that graph attention improves robustness to noise in the graph structure. We also show that graph attention may not be very useful in a ``hard'' regime where the node features are noisy. Our work shows that single-layer graph attention convolution has limited power at distinguishing intra-class from inter-class edges. Given the empirical successes of graph attention and its many variants, a promising future work is to study the power of multi-layer graph attention convolutions for distinguishing intra-class and inter-class edges. Moreover, our negative results in Section~\ref{subsec:hard} for edge/node classification pertains to perfect classification and almost perfect classification. In practice, misclassification of a small constant fraction of nodes/edges is often inevitable, but nonetheless useful. Therefore, an interesting future line of work is to characterize the threshold under which graph attention is going to misclassify a certain proportion of nodes. Finally, variants of graph attention networks have been successfully used in tasks other than node classification, such as link prediction and graph classification. These tasks are typically solved by architectures that add a final aggregation layer which combines node representations generated from graph attention convolution. It is an interesting future direction to develop a good understanding of the benefits and limitations of the graph attention mechanism for those tasks.

\section*{Acknowledgement}

K. Fountoulakis would like to acknowledge the support of the Natural Sciences and Engineering Research Council of Canada (NSERC). Cette recherche a \'et\'e financ\'ee par le Conseil de recherches en sciences naturelles et en g\'enie du Canada (CRSNG), [RGPIN-2019-04067, DGECR-2019-00147]. 

A. Jagannath acknowledges the support of the Natural Sciences and Engineering Research Council of Canada (NSERC). Cette recherche a \'et\'e financ\'ee par le Conseil de recherches en sciences naturelles et en g\'enie du Canada (CRSNG),  [RGPIN-2020-04597, DGECR-2020-00199].

\newpage

\appendix

\section{Proofs}

We define the following high-probability events which will be used in some proofs. Each of these events holds with probability at least $1-o(1)$, which follows from straightforward applications of Chernoff bound and union bound, e.g., see \cite{BFJ2021}.

\begin{definition}\label{def:high_prob_events}
Define the following events over the randomness of $\bA$ and $\{\eps_i\}_{i\in[n]}$ and $\{\bX_i\}_{i\in[n]}$,
\begin{itemize}
    \item $\calbE_1$ is the event that $|C_0|=\frac{n}{2}\pm O(\sqrt{n\log n})$ and $|C_1|=\frac{n}{2}\pm O(\sqrt{n\log n})$.
    \item $\calbE_2$ is the event that for each $i\in [n]$, $\bD_{ii}=\frac{n(p+q)}{2}\left(1\pm \frac{10}{\sqrt{\log n}}\right)$.
    \item $\calbE_3$ is the event that for each $i \in [n]$, $|C_0\cap N_i|=\bD_{ii}\cdot \frac{(1-\eps_i)p+\eps_i q}{p+q}\left(1\pm  \frac{10}{\sqrt{\log n}}\right)$ and $|C_1\cap N_i|=\bD_{ii}\cdot \frac{(1-\eps_i)q+\eps_i p}{p+q}\left(1\pm  \frac{10}{\sqrt{\log n}}\right)$.
    \item $\calbE_4$ is the event that for each $i \in [n]$, $\left|\tilde{\bw}^T\bX_i - \Ex\left[ \tilde{\bw}^T\bX_i\right]\right| \le 10\sigma\sqrt{ \log n}$.
    \item $\calbE^*$ is the intersection of the above 4 events.
\end{itemize}
\end{definition}

\begin{lemma}[\cite{BFJ2021}]\label{lem:calE*} 
With probability at least $1-o(1)$ event $\calbE^*$ holds.
\end{lemma}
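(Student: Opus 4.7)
The plan is to bound the probability of each of the four events $\calbE_1,\calbE_2,\calbE_3,\calbE_4$ separately, each at the level $1-o(1)$, and then combine them by a union bound. All four bounds reduce to standard concentration inequalities (Chernoff for binomial sums and Gaussian tail bounds), together with a union bound over the $n$ nodes; Assumption~\ref{ass:p_q} is what allows the multiplicative Chernoff bound to produce the $\left(1\pm 10/\sqrt{\log n}\right)$ factor appearing in $\calbE_2$ and $\calbE_3$.

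First, for $\calbE_1$, write $|C_1|=\sum_{i\in[n]}\eps_i$ with $\eps_i\sim \Ber(1/2)$ iid, so by Hoeffding's inequality $\Pr\left[\,\bigl||C_1|-n/2\bigr|>c\sqrt{n\log n}\right]\le 2n^{-c^2/2}$, which is $o(1)$ for any $c\ge 3$; $|C_0|=n-|C_1|$ inherits the same bound. Next, for $\calbE_2$, condition on $\{\eps_i\}$ satisfying $\calbE_1$. Then for each fixed $i$, $\bD_{ii}=\sum_{j\ne i}\bA_{ij}$ is a sum of independent Bernoulli variables with mean $\mu_i=(|C_{\eps_i}|-1)p+|C_{1-\eps_i}|q=\tfrac{n(p+q)}{2}(1\pm O(\sqrt{\log n/n}))$. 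Since $\mu_i=\Omega(\log^2 n)$ by Assumption~\ref{ass:p_q}, the multiplicative Chernoff bound gives
\[
    \Pr\!\left[\,\bigl|\bD_{ii}-\mu_i\bigr|>\tfrac{10}{\sqrt{\log n}}\mu_i\right]\le 2\exp\!\left(-\tfrac{100}{3\log n}\mu_i\right)\le 2\exp(-c\log n)
\]
for a suitable constant $c>3$. A union bound over $i\in[n]$ gives $\calbE_2$ with probability $1-o(1)$.

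Similarly, for $\calbE_3$, condition on $\calbE_1$ and fix $i\in C_0$ (the case $i\in C_1$ is symmetric). Then $|C_0\cap N_i|$, excluding the self-loop convention as appropriate, is a sum of $|C_0|-1$ iid $\Ber(p)$ random variables, with mean $(|C_0|-1)p=\tfrac{np}{2}(1\pm o(1))$, and similarly $|C_1\cap N_i|$ has mean $|C_1|q=\tfrac{nq}{2}(1\pm o(1))$. Under Assumption~\ref{ass:p_q} both means are $\Omega(\log^2 n)$, so another application of multiplicative Chernoff together with the already-established concentration of $\bD_{ii}$ from $\calbE_2$ gives the desired ratios $\tfrac{(1-\eps_i)p+\eps_i q}{p+q}\cdot\bD_{ii}$ and $\tfrac{(1-\eps_i)q+\eps_i p}{p+q}\cdot\bD_{ii}$ up to $(1\pm 10/\sqrt{\log n})$; a union bound over $i\in[n]$ handles all nodes at once. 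For $\calbE_4$, observe that $\tilde{\bw}^T\bX_i$ is Gaussian with variance $\sigma^2\|\tilde{\bw}\|^2=\sigma^2$, so by the standard Gaussian tail bound $\Pr\!\left[\,\bigl|\tilde{\bw}^T\bX_i-\Ex[\tilde{\bw}^T\bX_i]\bigr|>10\sigma\sqrt{\log n}\right]\le 2e^{-50\log n}=2n^{-50}$; a union bound gives $\calbE_4$ with probability $1-O(n^{-49})$.

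Finally, combining the four pieces by a union bound yields $\Pr[\calbE^*]\ge 1-o(1)$. The main obstacle, if any, is just bookkeeping: several of the events are conditional (e.g.\ $\calbE_2$ and $\calbE_3$ are most naturally analyzed conditionally on $\calbE_1$, and $\calbE_3$ on $\calbE_2$ for the $\bD_{ii}$ factor in the claim). One must therefore carry out the concentration estimates on the conditional distributions, check that the failure probabilities remain $o(1)$ after dividing by $\Pr[\calbE_1]$ and $\Pr[\calbE_1\cap\calbE_2]$ respectively, and verify that the $10/\sqrt{\log n}$ slack is large enough to absorb the $O(\sqrt{\log n/n})$ error in the mean of $\bD_{ii}$ arising from $\calbE_1$. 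Everything else is routine, and indeed the bulk of this lemma is already recorded in~\cite{BFJ2021}.
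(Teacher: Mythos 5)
Your proposal is correct and follows essentially the same route as the paper, which simply notes that each event follows from "straightforward applications of Chernoff bound and union bound" and defers the details to~\cite{BFJ2021}: Hoeffding for the class sizes, multiplicative Chernoff (enabled by Assumption~\ref{ass:p_q}) for the degree and neighborhood counts, Gaussian tails for $\calbE_4$, and a union bound over nodes and events. Your added remarks about conditioning on $\calbE_1$ and absorbing the $O(\sqrt{\log n/n})$ error in the mean into the $10/\sqrt{\log n}$ slack are exactly the right bookkeeping.
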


Some of our proofs also utilize the following simple observation on the mutual independence among $\{\bw^T\bg_i\}_{i\in[n]}$when $\{\bg_i\}_{i\in[n]}$ are i.i.d. Gaussian random vectors.

\begin{observation}\label{obs:w^Tg_j} 
Fix $\bw\neq 0$ in $\R^d$ and let $\bg_1,\ldots,\bg_n$ be i.i.d. drawn from $N(0,\bI)$. Then $\bw^T\bg_1, \bw^T\bg_2, \ldots, \bw^T\bg_n$ are independent.
\end{observation}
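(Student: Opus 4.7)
The statement is essentially immediate from basic probability, so the proof plan is short. The key observation is that each random variable $\bw^T\bg_i$ is a (linear, hence Borel-measurable) function of $\bg_i$ alone, and since $\bg_1,\ldots,\bg_n$ are mutually independent by hypothesis, measurable functions of disjoint independent blocks remain mutually independent. So the plan is simply to invoke this standard fact from measure-theoretic probability.

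If a more self-contained verification is preferred, I would compute the joint characteristic function. For any $(t_1,\ldots,t_n)\in\R^n$,
\[
    \Ex\left[\exp\!\left(\mathrm{i}\sum_{j=1}^n t_j\, \bw^T\bg_j\right)\right]
    = \prod_{j=1}^n \Ex\!\left[\exp(\mathrm{i}\, t_j\, \bw^T\bg_j)\right],
\]
where the factorization uses the independence of $\bg_1,\ldots,\bg_n$. Since the joint characteristic function of $(\bw^T\bg_1,\ldots,\bw^T\bg_n)$ factors as a product of marginals, independence follows.

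Alternatively, and perhaps most cleanly in this Gaussian setting, one can note that $(\bw^T\bg_1,\ldots,\bw^T\bg_n)$ is a jointly Gaussian vector, since it is a linear transformation of the jointly Gaussian vector $(\bg_1,\ldots,\bg_n)\in\R^{nd}$. Its covariance matrix has $(i,j)$ entry $\Ex[\bw^T\bg_i\,\bg_j^T\bw] = \bw^T\Ex[\bg_i\bg_j^T]\bw = \|\bw\|^2\,\delta_{ij}$, i.e.\ it is $\|\bw\|^2\, I_n$, which is diagonal. For jointly Gaussian coordinates, diagonal covariance implies independence, concluding the argument.

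There is no real obstacle: the claim is a standard probabilistic fact, and my preferred write-up would just be one or two sentences citing ``measurable functions of independent random variables are independent,'' possibly followed by the one-line characteristic-function or covariance computation as confirmation.
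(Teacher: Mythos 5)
Your proposal is correct. Your primary route --- that $\bw^T\bg_i$ is a measurable function of $\bg_i$ alone, and measurable functions of disjoint independent blocks are independent --- is genuinely different from the paper's argument and is both more elementary and more general: it uses nothing about Gaussianity, so it would apply verbatim to any i.i.d.\ $\bg_1,\ldots,\bg_n$. The paper instead takes your third alternative: it notes $\bw^T\bg_i \sim N(0,\|\bw\|^2)$ and reduces the claim to showing $\Ex[\bw^T\bg_i\cdot\bw^T\bg_j]=0$ for $i\neq j$, which it verifies by expanding the double sum $\sum_{k,\ell}\bw_k\bw_\ell\Ex[\bg_{ik}\bg_{j\ell}]=0$. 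One small point in your favor: the covariance route requires \emph{joint} Gaussianity of the whole vector $(\bw^T\bg_1,\ldots,\bw^T\bg_n)$ for a diagonal covariance to imply mutual (not just pairwise) independence, and you state this explicitly, whereas the paper leaves it implicit. Either write-up is acceptable; your first one-line argument is the cleanest.
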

\begin{proof} 
Note that since $\bw^T\bg_i \sim N(0,\|\bw\|^2)$, it suffices to prove that the covariance $\Ex[\bw^T\bg_i\cdot \bw^T\bg_j]=0$ for all $i\neq j$. By definition, for $i \neq j$,
\begin{align*}
\Ex[\bw^T\bg_i\cdot \bw^T\bg_j]&=\Ex\left[\sum_{k\in[d]} \sum_{\ell\in [d]}\bw_k\bw_\ell\bg_{ik}\bg_{j\ell}\right] =\sum_{k\in[d]} \sum_{\ell\in [d]}\bw_k\bw_\ell\Ex[\bg_{ik}\bg_{j\ell}]=0,
\end{align*}
where the last equality follows from independence between $\bg_i$ and $\bg_j$.
\end{proof}

\subsection{Proof of Theorem~\ref{thm:edge_separation_easy}}\label{subsec:thm3proof}
We restate Theorem~\ref{thm:edge_separation_easy} for convenience.
\begin{theorem*}\label{thm:positive_pairs}
Suppose that $\|\bmu\|=\omega(\sigma \sqrt{\log n})$. Then with probability at least $1-o(1)$ over the data $(\bX,\bA) \sim \CSBM(n,p,q,\bmu,\sigma^2)$, the two-layer MLP attention architecture $\Psi$ given in \eqref{eq:psi_ansatz} and \eqref{eq:psi_ansatz_parameters} separates intra-class edges from inter-class edges.
\end{theorem*}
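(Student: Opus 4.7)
The plan is to reduce the edge-separation claim to a single one-dimensional concentration statement about the projected features $\tilde{\bw}^T \bX_i$, and then read off the sign of $\Psi(\bX_i,\bX_j)$ from the closed-form expression~\eqref{eq:psi_simplified}. Since $\tilde{\bw} = \bmu/\|\bmu\|$ is a unit vector, each $\tilde{\bw}^T \bX_i$ is a scalar Gaussian with mean $(2\eps_i-1)\|\bmu\|$ and variance $\sigma^2$. A standard Gaussian tail bound combined with a union bound over $i \in [n]$ (this is exactly the event $\calbE_4$ in Definition~\ref{def:high_prob_events}, which the paper already imports) gives that, with probability $1-o(1)$,
\begin{equation*}
\bigl| \tilde{\bw}^T \bX_i - (2\eps_i-1)\|\bmu\| \bigr| \le 10 \sigma \sqrt{\log n} \quad \text{for every } i \in [n].
\end{equation*}
Under the hypothesis $\|\bmu\| = \omega(\sigma\sqrt{\log n})$, this deviation is $o(\|\bmu\|)$, so on this event we simultaneously have $\sign(\tilde{\bw}^T \bX_i) = 2\eps_i - 1$ and $|\tilde{\bw}^T \bX_i| = \|\bmu\|(1 \pm o(1))$ for every node $i$.

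Next I would condition on this event and run a short case analysis through the piecewise formula~\eqref{eq:psi_simplified}. For an intra-class edge $(i,j)$ the two projections share the same sign and both have magnitude $\|\bmu\|(1\pm o(1))$. Checking each of the three branches of~\eqref{eq:psi_simplified} shows that, regardless of which of $|\tilde{\bw}^T\bX_i|, |\tilde{\bw}^T\bX_j|$ is larger, one obtains
\begin{equation*}
\Psi(\bX_i,\bX_j) = 2R(1-\beta)\|\bmu\|(1 \pm o(1)) > 0.
\end{equation*}
Symmetrically, for an inter-class edge the two projections have opposite signs, so $\tilde{\bw}^T\bX_j$ falls into either the first or the third branch of~\eqref{eq:psi_simplified} with the opposite sign of $\tilde{\bw}^T\bX_i$, and one obtains $\Psi(\bX_i,\bX_j) = -2R(1-\beta)\|\bmu\|(1 \pm o(1)) < 0$. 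This is precisely the concentration statement~\eqref{eq:psi_concentration} already announced in the excerpt, and it yields $\sign(\Psi(\bX_i,\bX_j)) = \sign(\Psi(\Ex[\bX_i],\Ex[\bX_j]))$ uniformly over all edges, which is the definition of edge separability.

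The main obstacle is essentially bookkeeping rather than conceptual: one has to verify that the $o(1)$ errors in the projected magnitudes propagate cleanly through all three branches of the piecewise function $\Psi$ and, in particular, that the ``middle'' branch (where the smaller-magnitude projection governs the output) still yields a value of order $\|\bmu\|$ rather than a smaller quantity. This is where the lower bound $|\tilde{\bw}^T\bX_j| \ge \|\bmu\|(1-o(1))$ from the concentration step is used crucially. Once this is handled, the proof is complete: the deterministic case analysis on the event $\calbE_4$ gives edge separability, and $\Pr[\calbE_4] \ge 1-o(1)$ by Lemma~\ref{lem:calE*}. No assumption on $p,q$ (i.e.\ Assumption~\ref{ass:p_q}) is needed for this particular theorem since we do not invoke degree concentration; the assumption only enters in the corollaries about attention coefficients and node separability.
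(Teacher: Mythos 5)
Your proposal is correct and follows essentially the same route as the paper's proof: condition on the event that every projection $\tilde{\bw}^T\bX_i$ deviates from its mean by at most $10\sigma\sqrt{\log n}$ (a union bound over Gaussian tails), then use $\|\bmu\|=\omega(\sigma\sqrt{\log n})$ to conclude that all projections have the correct sign and magnitude $\|\bmu\|(1\pm o(1))$, and read off the sign of $\Psi$ from the piecewise form \eqref{eq:psi_simplified} via the min/max sandwich. Your observation that Assumption~\ref{ass:p_q} is not needed here also matches the paper, which only invokes it in the subsequent corollaries.
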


Recall from \eqref{eq:psi_simplified} that
\[
    \Psi(\bX_i,\bX_j) =
    \left\{
    \begin{array}{ll}
    -2R(1-\beta)\tilde\bw^T\bX_i, & \mbox{if}~ \tilde\bw^T\bX_j \le -\left|\tilde\bw^T\bX_i\right|,\\
    \hspace{3mm}2R(1-\beta)\sign(\tilde\bw^T\bX_i)\tilde\bw^T\bX_j, & \mbox{if}~ -\left|\tilde\bw^T\bX_i\right| < \tilde\bw^T\bX_j < \left|\tilde\bw^T\bX_i\right|,\\
    \hspace{3mm}2R(1-\beta)\tilde\bw^T\bX_i, & \mbox{if}~ \tilde\bw^T\bX_j \ge \left|\tilde\bw^T\bX_i\right|.
    \end{array}
    \right.
\]
We will condition on the event that $\left|\tilde{\bw}^T\bX_i - \Ex\left[ \tilde{\bw}^T\bX_i\right]\right| \le 10\sigma\sqrt{ \log n}$ for all $i \in [n]$, which holds with probability at least $1-O(1/n^{99})$ following a union bound and the Gaussian tail probability. Under this event, because $\|\bmu\| = \omega(\sigma\sqrt{\log n})$, for all $i,j  \in C_1$ we have
\begin{align*}
    \sign(\tilde\bw^T\bX_i)  = \sign(\tilde\bw^T\bX_j) = 1,\\
    \min\{\tilde\bw^T\bX_i,\tilde\bw^T\bX_j\} \ge \|\bmu\|-10\sigma\sqrt{\log n},\\
    \max\{\tilde\bw^T\bX_i,\tilde\bw^T\bX_j\} \le \|\bmu\|+10\sigma\sqrt{\log n},
\end{align*}
and hence
\begin{align*}
     \Psi(\bX_i,\bX_j) &\ge 2R(1-\beta) \cdot \min\{\tilde\bw^T\bX_i,\tilde\bw^T\bX_j\} \ge 2R(1-\beta) \cdot(\|\bmu\| - O(\sigma\sqrt{\log n})),\\
     \Psi(\bX_i,\bX_j) &\le 2R(1-\beta) \cdot \max\{\tilde\bw^T\bX_i,\tilde\bw^T\bX_j\} \le  2R(1-\beta) \cdot(\|\bmu\| + O(\sigma\sqrt{\log n})),
\end{align*}
which implies that $\Psi(\bX_i,\bX_j) = \|\bmu\| \pm O(\sigma\sqrt{\log n})$ for $i,j \in C_1$. Similarly, we get that
\begin{equation}\label{eq:psi_error_bound}
    \Psi(\bX_i,\bX_j) =
    \left\{
    \begin{array}{ll}
    \hspace{3mm}\|\bmu\| \pm O(\sigma\sqrt{\log n}) = \hspace{3mm}\|\bmu\|(1 \pm o(1)), & \mbox{if $(i,j)$ is an intra-class edge}, \\
    -\|\bmu\| \pm O(\sigma\sqrt{\log n}) = -\|\bmu\|(1 \pm o(1)),& \mbox{if $(i,j)$ is an inter-class edge}.
    \end{array}
    \right.
\end{equation}
Therefore, we probability at least $1-o(1)$, we have that
\begin{align*}
\sign(\Psi(\bX_i, \bX_j)) = \left\{\begin{array}{ll} \hspace{3mm}1, & \mbox{if $(i,j)$ is an intra-class edge}, \\ -1, & \mbox{if $(i,j)$ is an inter-class edge}, \end{array}\right.
\end{align*}
which means perfect separability of edges, and the proof is complete.

\subsection{Proof of Corollary~\ref{cor:gamma_easy}}\label{subsec:cor4proof}
We restate Corollary~\ref{cor:gamma_easy} for convenience.
\begin{corollary*}
Suppose that $\|\bmu\|=\omega(\sigma \sqrt{\log n})$ and that Assumption~\ref{ass:p_q} holds. Then with probability at least $1-o(1)$ over the data $(\bX,\bA) \sim \CSBM(n,p,q,\bmu,\sigma^2)$, the attention architecture $\Psi'$ yields attention coefficients $\gamma_{ij}$ such that
\begin{enumerate}
\item If $p \ge q$, then $\gamma_{ij}=\frac{2}{np}(1\pm o(1))$ if $(i,j)$ is an intra-class edge and $\gamma_{ij}=o(\frac{1}{n(p+q)})$ otherwise;
\item If $p < q$, then $\gamma_{ij}=\frac{2}{nq}(1\pm o(1))$ if $(i,j)$ is an inter-class edge and $\gamma_{ij}=o(\frac{1}{n(p+q)})$ otherwise.
\end{enumerate}
\end{corollary*}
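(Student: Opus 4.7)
The plan is to lift the edge-level concentration bound \eqref{eq:psi_concentration} from Theorem~\ref{thm:edge_separation_easy} to the softmax defining $\gamma_{ij}$, using the degree concentration contained in $\calbE^*$. First, I condition jointly on $\calbE^*$ (Lemma~\ref{lem:calE*}) and on the Gaussian event $|\tilde{\bw}^T\bX_i - \Ex[\tilde{\bw}^T\bX_i]| \le 10\sigma\sqrt{\log n}$ for all $i$, used in the proof of Theorem~\ref{thm:edge_separation_easy}. Both hold with probability $1-o(1)$, and together they give the uniform representation
\[
\Psi(\bX_i,\bX_j) = 2R(1-\beta)\bigl(\chi_{ij}\|\bmu\| + \xi_{ij}\bigr),
\]
where $\chi_{ij}=+1$ on intra-class edges, $\chi_{ij}=-1$ on inter-class edges, and $|\xi_{ij}| = O(\sigma\sqrt{\log n})$ simultaneously for every pair $(i,j)$ with $j\in N_i$.

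Second, I choose the free scaling $R>0$ so that $R\sigma\sqrt{\log n} = o(1)$ while $R\|\bmu\|\to\infty$. Because $\|\bmu\|/(\sigma\sqrt{\log n})\to\infty$ by hypothesis, such a choice exists; for example $R = 1/(\sigma\sqrt{\log n}\,h(n))$ for any sufficiently slowly growing $h(n)$. Under this choice, exponentiating the noise term gives $e^{2R(1-\beta)\xi_{ij}} = 1 \pm o(1)$ uniformly, whereas $e^{-4R(1-\beta)\|\bmu\|} = o(n^{-c})$ for every fixed constant $c>0$.

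Third, I handle the regime $p\ge q$, where $\Psi' = \Psi$. For an intra-class edge $(i,j)$, I factor $e^{2R(1-\beta)\|\bmu\|}$ out of numerator and denominator of \eqref{def:attention_coeff}. The numerator becomes $1\pm o(1)$, and the denominator splits into
\[
\sum_{\ell \in N_i \cap C_{\eps_i}} e^{2R(1-\beta)\xi_{i\ell}} \;+\; \sum_{\ell \in N_i \cap C_{1-\eps_i}} e^{-4R(1-\beta)\|\bmu\| + 2R(1-\beta)\xi_{i\ell}}.
\]
Every summand in the first sum is $1\pm o(1)$, and by $\calbE_3$ there are $\tfrac{np}{2}(1\pm o(1))$ of them, yielding a contribution of $\tfrac{np}{2}(1\pm o(1))$. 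The second sum is at most $\tfrac{nq}{2}\cdot e^{-4R(1-\beta)\|\bmu\|}(1+o(1)) = o(1)$ and is therefore negligible. Hence $\gamma_{ij} = \tfrac{2}{np}(1\pm o(1))$. For an inter-class edge $(i,j)$, the same factoring leaves a numerator $e^{-4R(1-\beta)\|\bmu\|}(1\pm o(1))$ against the same denominator $\tfrac{np}{2}(1\pm o(1))$, so $\gamma_{ij} \le \tfrac{2}{np}\,e^{-4R(1-\beta)\|\bmu\|}(1+o(1)) = o\!\left(\tfrac{1}{n(p+q)}\right)$, using $R\|\bmu\|\to\infty$ and $n(p+q) = \Theta(np)$ when $p\ge q$. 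The case $p<q$ is symmetric: $\Psi'=-\Psi$ flips every $\chi_{ij}$, swapping the roles of intra- and inter-class edges and replacing $p$ by $q$ throughout.

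The main delicacy is the joint calibration of $R$. We need $R\|\bmu\|\to\infty$ for the inter-class exponentials to be suppressed below $1/(n(p+q))$, and simultaneously $R\sigma\sqrt{\log n} = o(1)$ so that the $\xi_{ij}$ fluctuations do not distort the intra-class exponentials by more than a $1\pm o(1)$ factor, which is exactly what forces $\gamma_{ij}$ to concentrate at $2/(np)$ rather than spreading over a wider range. The assumption $\|\bmu\|=\omega(\sigma\sqrt{\log n})$ is used precisely here, as it is the minimal condition under which both constraints on $R$ can be met at once.
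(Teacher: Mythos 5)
Your proposal is correct and follows essentially the same route as the paper's (much terser) proof: condition on the events of Lemma~\ref{lem:calE*} together with the Gaussian concentration used for Theorem~\ref{thm:edge_separation_easy}, use the resulting uniform bound on $\Psi(\bX_i,\bX_j)$ (cf.\ \eqref{eq:psi_error_bound}), and calibrate $R$ by the two constraints $1/R=\omega(\sigma\sqrt{\log n})$ and $1/R=o(\|\bmu\|)$ — exactly the paper's choice. One intermediate claim is overstated — $R\|\bmu\|\to\infty$ alone does not give $e^{-4R(1-\beta)\|\bmu\|}=o(n^{-c})$ for all $c$, nor does it make the inter-class sum $o(1)$ in absolute terms, since $\|\bmu\|/(\sigma\sqrt{\log n})$ may diverge arbitrarily slowly — but this is harmless, because all your argument actually requires is $e^{-4R(1-\beta)\|\bmu\|}=o(1)$, which already makes the inter-class contribution negligible relative to the $\Theta(np)$ intra-class sum and yields $\gamma_{ij}=o\bigl(\tfrac{1}{n(p+q)}\bigr)$ on inter-class edges.
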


The proof is straightforward by considering the cases $p \ge q$ and $p < q$ separately. When $p \ge q$, we have $\Psi' = \Psi$. Using the specification of $\Psi$ in \eqref{eq:psi_ansatz} and \eqref{eq:psi_ansatz_parameters}, the definition of attention coefficients in \eqref{def:attention_coeff}, the high probability event in Lemma~\ref{lem:calE*}, the expression of $\Psi(\bX_i,\bX_j)$ in \eqref{eq:psi_error_bound}, and picking $R$ such that $1/R = \omega(\sigma \sqrt{\log n})$ and $1/R = o(\|\bmu\|)$,  we obtain the claimed results. The result when $p < q$ is obtained in the same way.

\subsection{Proof of Corollary~\ref{cor:node_separation_easy}}\label{subsec:cor5proof}
We restate Corollary~\ref{cor:node_separation_easy} for convenience.
\begin{corollary*}
Suppose that $\|\bmu\|=\omega(\sigma \sqrt{\log n})$ and that Assumption~\ref{ass:p_q} holds. Then with probability at least $1-o(1)$ over the data $(\bX,\bA) \sim \CSBM(n,p,q,\bmu,\sigma^2)$, using the attention architecture $\Psi'$ with the graph attention convolution given in \eqref{eq:gat_output}, where $f$ is set to be the identify function, the model separates the nodes.
\end{corollary*}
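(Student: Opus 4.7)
\textbf{Proof proposal for Corollary~\ref{cor:node_separation_easy}.}

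The plan is to plug the concentration of the attention coefficients from Corollary~\ref{cor:gamma_easy} into the graph attention convolution formula and show that, with high probability, the sign of $h_i'$ is determined by $\eps_i$. Throughout, I will work on the intersection of the high-probability event $\calbE^\ast$ from Lemma~\ref{lem:calE*} (so node degrees and class-wise neighborhood sizes concentrate, and $\tilde\bw^T\bX_j = (2\eps_j-1)\|\bmu\| \pm O(\sigma\sqrt{\log n})$ for every $j$) and the event of Corollary~\ref{cor:gamma_easy} on the attention coefficients; by a union bound this holds with probability $1-o(1)$.

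First I would handle the case $p \ge q$, so that $\Psi' = \Psi$. By Corollary~\ref{cor:gamma_easy}, for each $i \in [n]$, the attention coefficients satisfy $\gamma_{ij} = \frac{2}{np}(1\pm o(1))$ on intra-class edges $(i,j)$ and $\gamma_{ij} = o\!\left(\frac{1}{n(p+q)}\right)$ on inter-class edges. Writing
\[
h_i' \;=\; \sum_{j \in N_i \cap C_{\eps_i}} \gamma_{ij}\,\tilde\bw^T\bX_j \;+\; \sum_{j \in N_i \cap C_{1-\eps_i}} \gamma_{ij}\,\tilde\bw^T\bX_j,
\]
I would substitute $\tilde\bw^T\bX_j = (2\eps_j-1)\|\bmu\| + \xi_j$ with $|\xi_j| \le O(\sigma\sqrt{\log n})$ under $\calbE_4$. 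Using $|N_i \cap C_{\eps_i}| = \frac{np}{2}(1\pm o(1))$ and $|N_i \cap C_{1-\eps_i}| = \frac{nq}{2}(1\pm o(1))$ from $\calbE_3$, the intra-class term contributes $(2\eps_i-1)\|\bmu\|(1\pm o(1))$ to the signal, while the inter-class term contributes at most $|N_i \cap C_{1-\eps_i}| \cdot o\!\left(\tfrac{1}{n(p+q)}\right) \cdot O(\|\bmu\|) = o(\|\bmu\|)$. Since the attention coefficients sum to $1$, the noise contribution is bounded by $\max_j |\xi_j| \le O(\sigma\sqrt{\log n}) = o(\|\bmu\|)$. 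Combining these bounds,
\[
h_i' \;=\; (2\eps_i - 1)\|\bmu\|(1 \pm o(1)),
\]
which has the same sign as $2\eps_i - 1$, giving node separability.

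For the case $p < q$, I have $\Psi' = -\Psi$, and Corollary~\ref{cor:gamma_easy} now gives $\gamma_{ij} = \frac{2}{nq}(1\pm o(1))$ on inter-class edges and $o\!\left(\frac{1}{n(p+q)}\right)$ on intra-class edges. Repeating the same decomposition, the inter-class sum now dominates and contributes $(2(1-\eps_i)-1)\|\bmu\|(1\pm o(1)) = -(2\eps_i-1)\|\bmu\|(1\pm o(1))$ (because the dominating neighbors lie in the opposite class), while the intra-class term and the noise are again $o(\|\bmu\|)$. Thus $h_i' = -(2\eps_i - 1)\|\bmu\|(1 \pm o(1))$, which still separates the nodes once one notices that the factor $(\mathbf{1}_{p\ge q} - \mathbf{1}_{p<q}) = -1$ in $\Psi'$ has already been folded into the definition; equivalently, the sign of $h_i'$ aligns with $2\eps_i - 1$ across both cases because the attention architecture $\Psi'$ is chosen precisely so that the dominating neighborhood is the one matching the class of $i$.

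The main obstacle is that the attention coefficients $\gamma_{ij}$ are not independent of the features $\tilde\bw^T\bX_j$ they multiply, so one cannot naively average noise terms and hope for $\sqrt{np}$-type cancellation. My plan circumvents this by using only the crude bound $|\xi_j| \le O(\sigma\sqrt{\log n})$ uniformly over $j$ (from event $\calbE_4$) together with $\sum_j \gamma_{ij} = 1$; this gives a noise bound of $O(\sigma\sqrt{\log n}) = o(\|\bmu\|)$, which is sufficient because the separation between the means already beats $\sigma\sqrt{\log n}$ by an $\omega(1)$ factor. The only other care needed is to verify that $R$ can indeed be chosen to make Corollary~\ref{cor:gamma_easy} applicable while simultaneously ensuring the $o(1)$ error in the intra-class coefficients is small enough; this is handled by picking $R$ with $1/R = \omega(\sigma\sqrt{\log n})$ and $1/R = o(\|\bmu\|)$, which is feasible precisely under the hypothesis $\|\bmu\| = \omega(\sigma\sqrt{\log n})$.
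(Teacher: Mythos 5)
Your proposal is correct and follows essentially the same route as the paper's proof: condition on the event $\calbE^*$ and the conclusion of Corollary~\ref{cor:gamma_easy}, split $h_i'$ into the intra-class and inter-class neighbor sums, and use the uniform bound $|\tilde\bw^T\bX_j - \Ex[\tilde\bw^T\bX_j]| \le 10\sigma\sqrt{\log n}$ together with $\sum_{j\in N_i}\gamma_{ij}=1$ and the degree concentration to conclude $h_i' = \pm\|\bmu\|(1\pm o(1))$, with the same choice of $R$ satisfying $1/R=\omega(\sigma\sqrt{\log n})$ and $1/R=o(\|\bmu\|)$. The only looseness, in the $p<q$ case, where the sign of $h_i'$ is globally flipped relative to the literal statement of Definition~\ref{def:sep_node}, is shared with the paper's own treatment, so nothing is missing relative to the reference proof.
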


We prove the case $p \ge q$ and the case $p < q$ follows analogously. Consider the attention architecture $\Psi' = (\mathbf{1}_{p \ge q} - \mathbf{1}_{p < q}) \cdot \Psi = \Psi$, where $\Psi$ is given in \eqref{eq:psi_ansatz} and \eqref{eq:psi_ansatz_parameters}. Pick $R$ such that $1/R = \omega(\sigma \sqrt{\log n})$ and $1/R = o(\|\bmu\|)$). Assume that $i\in C_1$, and denote the graph attention convolution output as
\[
	h_i' \eqdef \sum_{j \in N_i} \gamma_{ij} \tilde\bw^T\bX_j .
\]
We will condition on the event $\calbE^*$, which holds with probability at least $1-o(1)$. By using Corollary~\ref{cor:gamma_easy} we have
\begin{align*}
\sum_{j \in N_i} \gamma_{ij} \tilde{\bw}^T\bX_j
&= \sum_{j \in C_0 \cap N_i} \gamma_{ij} \tilde{\bw}^T\bX_j + \sum_{j \in C_1 \cap N_i} \gamma_{ij} \tilde{\bw}^T\bX_j \\
&\le |C_1 \cap N_i| \left(\frac{2}{np}(1\pm o(1)) \left(\|\bmu\| + 10\sigma\sqrt{ \log n}\right)\right) \\
&\qquad + |C_0 \cap N_i| \left(o\left(\frac{1}{n(p+q)}\right) \left(-\|\bmu\| + 10\sigma\sqrt{ \log n}\right)\right) \\
&= (1\pm o(1)) \cdot \left(\|\bmu\| + 10 \sigma\sqrt{ \log n} \right) -  \frac{nq(1\pm o(1))}{\omega(n(p+q))} \cdot \left(\|\bmu\| - 10\sigma\sqrt{ \log n}\right)\\
&= \|\bmu\| (1\pm o(1)).
\end{align*}
Similarly, we have that
\begin{align*}
\sum_{j \in N_i} \gamma_{ij} \tilde{\bw}^T\bX_j 
&\ge  (1\pm o(1)) \cdot \left(\|\bmu\| - 10 \sigma\sqrt{ \log n}\right) -  \frac{nq(1\pm o(1))}{\omega(n(p+q))} \cdot \left(\|\bmu\| + 10\sigma\sqrt{ \log n}\right) \\
&=  \|\bmu\| (1\pm o(1)).
\end{align*}
This means that $h_i' = \|\bmu\| (1\pm o(1))$ for $i \in C_1$. Applying the same reasoning we get that $h_i' = -\|\bmu\|(1\pm o(1))$ for $i\in C_0$. Therefore, with probability at least $1-o(1)$, the graph attention convolution separates the nodes.

\subsection{Proof of Lemma~\ref{lem:bayes-pairs}}
We restate Lemma~\ref{lem:bayes-pairs} for convenience.
\begin{lemma*}
Let $(\bX, \bA) \sim \CSBM(n, p, q, \bmu, \sigma^2)$ and let $\bX'_{ij}$ be defined as in \eqref{eq:edge_feats}. The Bayes optimal classifier for $\bX'_{ij}$ is realized by the following function,
\[
h^*(\bx)= \left\{
\begin{array}{ll}
0, & \text{if} \ p\cosh\left({\frac{\bx^T\bmu'}{\sigma^2}}\right) \le q\cosh\left({\frac{\bx^T\bnu'}{\sigma^2}}\right), \\
1, & \text{otherwise},
\end{array}
\right.
\]
where $\bmu' \eqdef \begin{pmatrix} \bmu \\ \bmu \end{pmatrix}$ and $\bnu' \eqdef \begin{pmatrix} \bmu \\ -\bmu \end{pmatrix}$.\end{lemma*}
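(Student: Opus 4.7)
The plan is to derive the Bayes optimal classifier directly from Bayes' rule, using the fact that both conditional densities simplify nicely to a hyperbolic cosine form because the two Gaussian means in each mixture are negatives of each other.

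First I would set up the likelihoods. Conditional on $(i,j)$ being an intra-class edge, either $i,j \in C_1$ (so $\bX_i, \bX_j \sim N(\bmu, \sigma^2 \bI)$ independently) or $i,j \in C_0$ (so $\bX_i, \bX_j \sim N(-\bmu, \sigma^2 \bI)$); each case occurs with probability $1/2$ by symmetry of the class priors. Writing $\bmu' = (\bmu^T, \bmu^T)^T \in \R^{2d}$, the conditional density of $\bX'_{ij}$ given intra-class is therefore the equal mixture
\[
f_1(\bx) = \tfrac{1}{2}\,\phi(\bx;\bmu',\sigma^2 \bI) + \tfrac{1}{2}\,\phi(\bx;-\bmu',\sigma^2 \bI),
\]
where $\phi(\cdot;\bm,\sigma^2\bI)$ denotes the $N(\bm,\sigma^2\bI)$ density on $\R^{2d}$. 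Analogously, with $\bnu' = (\bmu^T, -\bmu^T)^T$, the inter-class conditional density is $f_0(\bx) = \tfrac{1}{2}\phi(\bx;\bnu',\sigma^2\bI)+ \tfrac{1}{2}\phi(\bx;-\bnu',\sigma^2\bI)$.

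Next I would simplify each mixture. Expanding $\phi(\bx;\bm,\sigma^2\bI)\propto \exp(-\|\bx-\bm\|^2/(2\sigma^2))$ and crucially observing that $\|\bmu'\|^2 = \|\bnu'\|^2 = 2\|\bmu\|^2$, the $\|\bx\|^2$ and $\|\bmu\|^2$ terms factor out of both $f_0$ and $f_1$ identically. Using $e^{a}+e^{-a}=2\cosh(a)$, this yields
\[
f_1(\bx) = C(\bx)\cosh\!\left(\tfrac{\bx^T\bmu'}{\sigma^2}\right),\qquad f_0(\bx) = C(\bx)\cosh\!\left(\tfrac{\bx^T\bnu'}{\sigma^2}\right),
\]
with the same positive factor $C(\bx) = (2\pi\sigma^2)^{-d}\exp\!\big(-(\|\bx\|^2+2\|\bmu\|^2)/(2\sigma^2)\big)$.

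Finally I would apply Bayes' rule conditional on the existence of the edge. The prior over class membership of the ordered pair gives $\Pr[\text{intra}] = \Pr[\text{inter}] = 1/2$, while $\Pr[\text{edge}\mid\text{intra}] = p$ and $\Pr[\text{edge}\mid\text{inter}] = q$, so the posterior satisfies
\[
\Pr[\text{intra}\mid \bX'_{ij}=\bx,\text{edge}]\;\propto\; p\,f_1(\bx),\qquad \Pr[\text{inter}\mid \bX'_{ij}=\bx,\text{edge}]\;\propto\; q\,f_0(\bx).
\]
The Bayes optimal rule outputs $1$ iff $p\,f_1(\bx) > q\,f_0(\bx)$, and cancelling the common factor $C(\bx)$ gives exactly the rule in \eqref{eq:bayes-pairs}. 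There is no real obstacle here; the only subtlety is verifying $\|\bmu'\|=\|\bnu'\|$ so that all non-$\cosh$ factors in $f_0$ and $f_1$ coincide and may be cancelled, which is what makes the final rule depend on the features only through $\cosh(\bx^T\bmu'/\sigma^2)$ and $\cosh(\bx^T\bnu'/\sigma^2)$.
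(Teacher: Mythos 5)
Your proposal is correct and follows essentially the same route as the paper's proof: write the intra-class and inter-class conditional densities as symmetric two-component Gaussian mixtures with means $\pm\bmu'$ and $\pm\bnu'$, use $\|\bmu'\|=\|\bnu'\|$ and $e^{a}+e^{-a}=2\cosh(a)$ to reduce the likelihood ratio to a ratio of hyperbolic cosines, and apply Bayes' rule with edge-conditional priors proportional to $p$ and $q$. The paper is slightly terser (it states the $\cosh$ ratio without writing out the common factor $C(\bx)$), but the argument is the same.
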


\begin{proof}
Note that $\bX'_{ij}$ is a mixture of $2d$-dimensional Gaussian distributions,
\[
\bX'_{ij}\sim \begin{cases}
N(-\bmu', \sigma^2 \bI) & i\in C_0, j\in C_0\\
N(\bmu', \sigma^2 \bI) & i\in C_1, j\in C_1\\
N(-\bnu', \sigma^2 \bI) & i\in C_0, j\in C_1\\
N(\bnu', \sigma^2 \bI) & i\in C_1, j\in C_0
\end{cases}.
\]
The optimal classifier is then given by
\[
	h^*(\bx)=\argmax_{c\in\zo}\Prx[y=c \mid \bx].
\]
Note that $\Prx[y=0] = \frac{q}{p+q}$ and $\Pr[y=1] = \frac{p}{p+q}$. Thus, by Bayes rule, we obtain that
\begin{align*}
    \Prx[y=c\mid \bx]
    &=\frac{\Prx[y=c]\cdot f_{\bx|y}(\bx\mid y=c)}{\Prx[y=0]f_{\bx|y=0}(\bx\mid y=0)+\Prx[y=1]f_{\bx|y=1}(\bx\mid y=1)}\\
    &=\frac{1}{1+\frac{\Prx[y=1-c]\cdot f_{\bx|y}(\bx\mid y=1-c)}{\Prx[y=c]\cdot f_{\bx|y}(\bx\mid y=c)}}.
\end{align*}
Suppose that $\bx = \bX'_{ij}$ such that $i\nsim j$. Then $h^*(\bx)=0$ if and only if $\Prx[y=0\mid \bx]\ge \frac12$. Hence, for $c=0$ we require that
\[
	\frac{\Prx[y=1-c]\cdot f_{\bx|y}(\bx\mid y=1-c)}{\Prx[y=c]\cdot f_{\bx|y}(\bx\mid y=c)} = \frac{p}{q}\frac{f_{\bx|y}(\bx\mid y=1)}{f_{\bx|y}(\bx\mid y=0)} = \frac{p}{q}\frac{\cosh\left({\frac1{\sigma^2}\bx^T\bmu'}\right)}{\cosh\left({\frac1{\sigma^2}\bx^T\bnu'}\right)}\le 1,
\] 
Similarly, we obtain the reverse condition for $h^*(\bx)=1$.
\end{proof}

\subsection{Proof of Theorem~\ref{thm:edge_separation_hard}}\label{subsec:hard_pairs_proof}
We restate Theorem~\ref{thm:edge_separation_hard} for convenience.
\begin{theorem*}
Suppose $\|\bmu\|= \kappa\sigma$ for some $\kappa>0$ and let $\Psi$ be any attention mechanism. Then, 
\begin{enumerate}
    \item With probability at least $1-o(1)$, $\Psi$ fails to correctly classify at least $2\cdot\Phi_{\mathrm{c}}(\kappa)^2$ fraction of inter-class edges;
    \item For any $K>1$ if $q>\frac{K\log^2n}{n\Phi_{\rm c}(\kappa)^2}$, then with probability at least $1-O(n^{-\frac{K}{4} \Phi_{\rm c}(\kappa)^2\log n})$,  $\Psi$ misclassify at least one inter-class edge.
\end{enumerate}\end{theorem*}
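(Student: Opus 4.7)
The plan is to reduce to the Bayes-optimal edge classifier $h^*$ from Lemma~\ref{lem:bayes-pairs}: every attention mechanism $\Psi$ taking $(\bX_i,\bX_j)$ as input has per-edge misclassification probability at least that of $h^*$. I would identify a ``bad'' event $B$ on $(\bX_i,\bX_j)$ that forces $h^*$ to misclassify an inter-class edge and satisfies $\Pr[B \mid (i,j)\ \text{inter-class}] \ge 2\Phi_{\mathrm{c}}(\kappa)^2$. Concentration across edges then converts this per-edge lower bound into the two high-probability statements.

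\textbf{Per-edge analysis.} I would work with the one-dimensional projections $u := \bmu^T\bX_i/\sigma^2$ and $v := \bmu^T\bX_j/\sigma^2$. Conditional on $i \in C_0$, $j \in C_1$, they are independent with $u \sim N(-\kappa^2, \kappa^2)$ and $v \sim N(\kappa^2, \kappa^2)$, and Lemma~\ref{lem:bayes-pairs} says $h^*$ outputs $1$ iff $p\cosh(u+v) > q\cosh(u-v)$. Take $B := \{\sign(u) = \sign(v)\}$; on $B$ one has $|u+v| \ge |u-v|$ and hence $\cosh(u+v) \ge \cosh(u-v)$, and after refining $B$ slightly (requiring $|u|,|v|$ large enough to absorb the prior offset $\log(q/p)$) the Bayes rule fires as ``intra-class.'' Independence of $u,v$ and the computation $\Pr[u>0\mid i\in C_0]=\Pr[v<0\mid j\in C_1]=\Phi_{\mathrm{c}}(\kappa)$ give
\[
\Pr[B \mid (i,j)\ \text{inter-class}] \ge 2\Phi_{\mathrm{c}}(\kappa)\bigl(1 - \Phi_{\mathrm{c}}(\kappa)\bigr) \ge 2\Phi_{\mathrm{c}}(\kappa)^2,
\]
using $\Phi_{\mathrm{c}}(\kappa) \le 1/2$; the symmetric case $i \in C_1, j \in C_0$ is identical.

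\textbf{Concentration and Part 2.} Introduce independent node-level indicators $Y_k := \mathbf{1}\{\bmu^T\bX_k > 0\}$, so the bad event for an inter-class edge $(i,j)$ reduces to $\{Y_i = Y_j\}$. For Part 1, conditional on $\bA$ and the class labels let $N := \sum_{(i,j)\in E\setminus(C_0^2\cup C_1^2)} \mathbf{1}\{Y_i = Y_j\}$; then $\E[N] \ge 2\Phi_{\mathrm{c}}(\kappa)^2 \cdot |E\setminus(C_0^2\cup C_1^2)|$, the summands are independent except for pairs of edges sharing an endpoint, and $\Var(N)$ is thus controlled by the inter-class degrees, which concentrate under Assumption~\ref{ass:p_q} via event $\calbE^*$. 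A Chebyshev bound then produces $N \ge 2\Phi_{\mathrm{c}}(\kappa)^2 \cdot |E\setminus(C_0^2\cup C_1^2)|(1-o(1))$ with probability $1-o(1)$. For Part 2, I would greedily extract a matching $M\subseteq E\setminus(C_0^2\cup C_1^2)$ of node-disjoint inter-class edges with $|M| = \Omega(nq)$ w.h.p.; on $M$ the bad events $\{Y_i = Y_j\}$ are fully independent, so
\[
\Pr[\text{no bad edge in } M] \le \bigl(1 - 2\Phi_{\mathrm{c}}(\kappa)^2\bigr)^{|M|} \le \exp\bigl(-2\Phi_{\mathrm{c}}(\kappa)^2 |M|\bigr),
\]
and substituting $q > K\log^2 n/(n\Phi_{\mathrm{c}}(\kappa)^2)$ produces the claimed tail.

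\textbf{Main obstacle.} The subtlety is defining $B$ delicately enough that it forces a Bayes misclassification \emph{uniformly} in $p,q$: the crude event $\{\sign(u)=\sign(v)\}$ directly implies a misclassification only when $p\ge q$, so when $p<q$ one must additionally restrict $|u|,|v|$ to exceed a small threshold depending on $\log(q/p)$ and verify that the refined event still has probability at least $2\Phi_{\mathrm{c}}(\kappa)^2$. A secondary, milder obstacle is the dependence between inter-class edges sharing an endpoint in Part 1, which is handled cleanly by the node-level indicators $Y_k$ and a straightforward variance bound.
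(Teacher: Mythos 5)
Your reduction to the Bayes classifier, your identification of the bad event $B=\{\sign(\bmu^T\bX_i)=\sign(\bmu^T\bX_j)\}$, and the computation $\Pr[B]=2\Phi_{\mathrm{c}}(\kappa)\left(1-\Phi_{\mathrm{c}}(\kappa)\right)\ge 2\Phi_{\mathrm{c}}(\kappa)^2$ reproduce the paper's per-edge argument in the case $p\ge q$ (the paper arrives at the same event via the necessary condition $|\bx^T\bmu'|\le|\bx^T\bnu'|$ for correct classification and a slightly lossier triangle-inequality computation, landing on exactly $2\Phi_{\mathrm{c}}(\kappa)^2$). Your concentration steps are also sound and close in spirit to the paper's: the paper decomposes all inter-class pairs into $n/2$ node-disjoint perfect matchings $S_l$ between $C_0$ and $C_1$ and applies a Chernoff bound inside each, which is your Part 2 matching idea done exhaustively; your node-level indicators $Y_k$ with a second-moment bound are an acceptable substitute for Part 1.

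The genuine gap is precisely the obstacle you flagged: the case $p<q$. Your proposed fix --- refine $B$ by requiring $|u|,|v|$ large enough to absorb $\log(q/p)$ --- cannot preserve the probability bound. On $B$ with $u,v>0$ one has $\cosh(u+v)/\cosh(u-v)=(1+\tanh u\tanh v)/(1-\tanh u\tanh v)$, so forcing $p\cosh(u+v)>q\cosh(u-v)$ requires $\tanh|u|\tanh|v|>(q-p)/(q+p)$, i.e.\ roughly $\min(|u|,|v|)\gtrsim\tfrac12\log(q/p)$. Since $u,v$ have standard deviation $\kappa$, taking e.g.\ $\kappa=O(1)$ and $q/p=n^{\Omega(1)}$ (allowed under Assumption~\ref{ass:p_q}) makes the refined event have probability $n^{-\omega(1)}$, far below $2\Phi_{\mathrm{c}}(\kappa)^2$. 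In fact no refinement can work: outputting ``intra-class'' requires $e^{|u|+|v|}>q/p$, so when $q\gg p$ the Bayes rule's prior pushes it to label almost everything inter-class and it correctly classifies almost all inter-class edges. The paper's resolution is to switch targets when $p<q$: it bounds the misclassified fraction of \emph{intra}-class edges instead, for which correct classification requires $p\cosh(\bx^T\bmu'/\sigma^2)>q\cosh(\bx^T\bnu'/\sigma^2)$ and hence, using $p<q$, the condition $|\bx^T\bmu'|>|\bx^T\bnu'|$ --- whose failure has probability at least $2\Phi_{\mathrm{c}}(\kappa)^2$ by the symmetric computation. You should adopt that case switch rather than trying to force the inter-class claim uniformly in $p,q$.
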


We will write $i\sim j$ if node $i$ and node $j$ are in the same class and $i\nsim j$ otherwise. From Lemma~\ref{lem:bayes-pairs}, we observe that for successful classification by the optimal classifier, we need
\begin{align*}
    p\cosh\left({\tfrac{\bx^T\bmu'}{\sigma^2}}\right) \le q\cosh\left({\tfrac{\bx^T\bnu'}{\sigma^2}}\right) &\quad \text{for}\; i\nsim j,\\
    p\cosh\left({\tfrac{\bx^T\bmu'}{\sigma^2}}\right) > q\cosh\left({\tfrac{\bx^T\bnu'}{\sigma^2}}\right) &\quad \text{for}\; i\sim j.
\end{align*}
We will split the analysis into two cases. First, note that when $p\ge q$ we have for $i\nsim j$ that
\begin{align*}
    p\cosh\left({\tfrac{\bx^T\bmu'}{\sigma^2}}\right) \le q\cosh\left({\tfrac{\bx^T\bnu'}{\sigma^2}}\right)
    \implies \cosh\left({\tfrac{\bx^T\bmu'}{\sigma^2}}\right) \le \cosh\left({\tfrac{\bx^T\bnu'}{\sigma^2}}\right)
    \implies |\bx^T\bmu'| \le |\bx^T\bnu'|.
\end{align*}
In the first implication, we used that $p\ge q$, while the second implication follows from the fact that $\cosh(a)\le \cosh(b)\implies |a|\le |b|$ for all $a,b\in \R$. Similarly, for $p<q$ we have for $i\sim j$ that
\begin{align*}
    p\cosh\left({\tfrac{\bx^T\bmu'}{\sigma^2}}\right) > q\cosh\left({\tfrac{\bx^T\bnu'}{\sigma^2}}\right)
    \implies \cosh\left({\tfrac{\bx^T\bmu'}{\sigma^2}}\right) > \cosh\left({\tfrac{\bx^T\bnu'}{\sigma^2}}\right)
    \implies |\bx^T\bmu'| > |\bx^T\bnu'|.
\end{align*}
Therefore, for each of the above cases, we can upper bound the probability for either $i\sim j$ or $i\nsim j$ that $\bX'_{ij}$ is correctly classified, by the probability of the event $|\bX^{'T}_{ij}\bmu'| \le |\bX^{'T}_{ij}\bnu'|$ or equivalently $|\bX^{'T}_{ij}\bmu'| > |\bX^{'T}_{ij}\bnu'|$. We focus on the former as the latter is equivalent and symmetric. Writing $\bX_i = \bmu + \sigma\bg_i$ and $\bX_j = -\bmu + \sigma\bg_j$, we have that for $i\in C_1$ and $j\in C_0$,
\begin{align*}
    \Prx[h^*(\bX'_{ij}) = 0] &\le \Prx\left[|\bX^{'T}_{ij}\bmu'| \le |\bX^{'T}_{ij}\bnu'|\right]\\
    &= \Prx\left[|\bX_i^T\bmu + \bX_j^T\bmu| \le |\bX_i^T\bmu - \bX_j^T\bmu|\right]\\
    &= \Prx\left[\sigma|\bg_i^T\bmu + \bg_j^T\bmu| \le |\pm2\|\bmu\|^2 + \sigma\bg_i^T\bmu - \sigma\bg_j^T\bmu|\right]\\
    &\le \Prx\left[|\bg_i^T\hat{\bmu} + \bg_j^T\hat{\bmu}| - |\bg_i^T\hat{\bmu} - \bg_j^T\hat{\bmu}|\le \frac{2\|\bmu\|}{\sigma}\right]\\
    &= \Prx\left[|\bg_i^T\hat{\bmu} + \bg_j^T\hat{\bmu}| - |\bg_i^T\hat{\bmu} - \bg_j^T\hat{\bmu}|\le 2\kappa\right],
\end{align*}
where we denote $\hat{\bmu} = \bmu/\|\bmu\|$. In the second to last step above, we used triangle inequality to pull $2\|\bmu\|^2$ outside the absolute value, while in the last equation we use $\|\bmu\|=\kappa\sigma$.

We now denote $z_i = \bg_i^T\hat{\bmu}$ for all $i\in[n]$. Then the above probability is $\Prx[|z_i + z_j| - |z_i - z_j| \le 2\kappa]$, where $z_i, z_j\sim N(0, 1)$ are independent random variables. Note that we have
\begin{align}
\Prx[h^*(\bX'_{ij}) = 0]
&\le \Prx[|z_i + z_j| - |z_i - z_j| \le 2\kappa] \nonumber\\
&= \Prx[|z_i + z_j| - |z_i - z_j| \le 2\kappa, |z_i|\le \kappa] \nonumber\\ 
& \qquad + \Prx[|z_i + z_j| - |z_i - z_j| \le 2\kappa, |z_i|>\kappa] \nonumber\\
&= \Prx[|z_i|\le \kappa] + \Phi(\kappa)\Prx[|z_i| > \kappa].\label{eq:prob-distinguish-pair}
\end{align}
To see how we obtain the last equation, observe that if $|z_i|\le \kappa$ then we have
\begin{align*}
    |z_i + z_j| - |z_i - z_j| &= |z_i + z_j| - |z_j - z_i|\\
    &\le |z_i| + |z_j| - |z_j - z_i|& \text{by triangle inequality}\\
    &\le |z_i| + |z_j| - \big| |z_j| - |z_i| \big| & \text{by reverse triangle inequality}\\
    &\le |z_i| + |z_j| - (|z_j| - |z_i|) = 2|z_i|\\
    &\le 2\kappa,
\end{align*}
hence,  $\Prx[|z_i + z_j| - |z_i - z_j| \le 2\kappa, |z_i|\le \kappa] = \Prx[|z_i|\le \kappa]$. On the other hand, for $|z_i|>\kappa$, we look at each case, conditioned on the events $z_i>\kappa$ and $z_i<-\kappa$ for each of the four cases based on the signs of $z_i + z_j$ and $z_i - z_j$. We denote by $E$ the event that $|z_i + z_j| - |z_i - z_j| \le 2\kappa$, and analyze the cases in detail. First consider the case $z_i < -\kappa$:
\begin{align*}
\Prx[E, z_i + z_j\ge 0, z_i - z_j\ge 0\mid z_i<-\kappa] &= \Prx[z_j \le z_i, z_j\ge -z_i\mid z_i< - \kappa] = 0,\\
\Prx[E, z_i + z_j\ge 0, z_i - z_j< 0\mid z_i<-\kappa] &= \Prx[z_j > |z_i|, z_i\le \kappa \mid z_i< - \kappa] = \Phi(z_i),\\
\Prx[E, z_i + z_j< 0, z_i - z_j \ge 0\mid z_i<-\kappa] &= \Prx[z_j < -|z_i|, z_i\ge -\kappa \mid z_i< - \kappa] = 0,\\
\Prx[E, z_i + z_j< 0, z_i - z_j< 0\mid z_i<-\kappa] &= \Prx[z_i < z_j < -z_i, z_j > -\kappa \mid z_i< - \kappa] \\
&= \Phi(\kappa) - \Phi(z_i).
\end{align*}
The sum of the four probabilities in the above is $\Prx[E\mid z_i < -\kappa] = \Phi(\kappa)$. Similarly, we analyze the other case, $z_i > \kappa$:
\begin{align*}
\Prx[E, z_i + z_j\ge 0, z_i - z_j\ge 0\mid z_i >\kappa] &= \Prx[-z_i\le z_j \le z_i, z_j\le \kappa \mid z_i > \kappa] \\
&= \Phi(\kappa) - \Phi_{\rm c}(z_i),\\
\Prx[E, z_i + z_j\ge 0, z_i - z_j< 0\mid z_i >\kappa] &= \Prx[z_j > |z_i|, z_i\le \kappa \mid z_i > \kappa] = 0,\\
\Prx[E, z_i + z_j< 0, z_i - z_j \ge 0\mid z_i>\kappa] &= \Prx[z_j < -|z_i|, z_i\ge -\kappa \mid z_i > \kappa] = \Phi_{\rm c}(z_i),\\
\Prx[E, z_i + z_j< 0, z_i - z_j< 0\mid z_i>\kappa] &= \Prx[z_j < -z_i, z_j>z_i \mid z_i > \kappa] = 0.
\end{align*}
The sum of the four probabilities above is $\Prx[E\mid z_i > \kappa] = \Phi(\kappa)$. Therefore, we obtain that
\[\Prx[|z_i + z_j| - |z_i - z_j| \le 2\kappa \mid |z_i|>\kappa] = \Phi(\kappa),\]
which justifies \eqref{eq:prob-distinguish-pair}.

Next, note that $\Prx[|z_i|\le \kappa] = \Phi(\kappa) - \Phi_{\rm c}(\kappa)$ and $\Prx[|z_i| > \kappa] = 2\Phi_{\rm c}(\kappa)$, so we have from \eqref{eq:prob-distinguish-pair} that
\begin{align*}
\Prx[h^*(\bX'_{ij}) = 0] 
&\le \Phi(\kappa) - \Phi_{\rm c}(\kappa) + 2\Phi_{\rm c}(\kappa)\Phi(\kappa)\\
&= 1 - 2\Phi_{\rm c}(\kappa) + 2\Phi_{\rm c}(\kappa)\Phi(\kappa)
= 1 - 2\Phi_{\rm c}(\kappa)^2.
\end{align*}
Thus, $\bX'_{ij}$ is misclassified with probability at least $2\Phi_{\rm c}(\kappa)^2$.

We will now construct sets of pairs with mutually independent elements, such that the union of those sets covers all inter-class edges. This will enable us to use a concentration argument that computes the fraction of the inter-class edges that are misclassified. Since the graph operations are permutation invariant, let us assume for simplicity that $C_0 = \{1,\ldots,\frac{n}{2}\}$ and $C_1=\{\frac{n}{2}+1,\ldots,n\}$ for an even number of nodes $n$. Also, define the function
\[m(i,l) = \begin{cases}
i + l, & i+l\le \frac{n}{2},\\
i + l - \frac{n}{2}, & i+l > \frac{n}{2}.
\end{cases}.\]
We now construct the following sequence of sets for all $l\in \{0,\ldots,\frac{n}{2}-1\}$:
\[
	S_l = \{(X_{m(i,l)}, X_{i+\frac{n}{2}})\; \text{for all } i\in C_0 \text{ such that } (m(i,l),i+n/2)\in E\}.
\]
Fix $l\in \{0,\ldots,\frac{n}{2}-1\}$ and observe that the pairs in the set $S_l$ are mutually independent. Define a Bernoulli random variable, $\beta_i$, to be the indicator that $(X_{m(i,l)}, X_{i+\frac{n}{2}})$ is misclassified. We have that $\Ex[\beta_i] \ge 2\Phi_{\rm c}(\kappa)^2$. Note that the fraction of pairs in the set $S_l$ that are misclassified is $\frac{1}{|S_l|}\sum_{i:(X_{m(i,l)},X_{i+n/2})\in S_l}\beta_{i}$, which is a sum of independent Bernoulli random variables. Hence, by the additive Chernoff bound, we obtain
\[
\Prx\left[\sum_{i\in C_0\cap N_{m(i,l)}}\beta_{i} \ge 2|S_l|\Phi_{\rm c}(\kappa)^2 - |S_l|t\right] \ge 1 - \exp(-2|S_l|t^2).
\]
Since $p,q = \Omega(\frac{\log^2 n}{n})$, we have by the Chernoff bound and a union bound that with probability at least $1-1/{\rm poly}(n)$, $|S_l| = nq(1 \pm o(1))$ for all $l$. We now choose $t=\sqrt{\frac{C\log n}{|S_l|}} = o(1)$ to obtain that on the event where $|S_l| = nq(1 \pm o(1))$, we have the following for any large $C>1$:
\[
\Prx\left[\frac{1}{|S_l|}\sum_{i\in C_0\cap N_{m(i,l)}}\beta_{i} \ge 2\Phi_{\rm c}(\kappa)^2 - o(1) \right] \ge 1 - n^{-C}.
\]
Following a union bound over all $l\in \{0,\ldots,\frac{n}{2}-1\}$, we conclude that for any $c>0$,
\[
\Prx\left[\frac{1}{|S_l|}\sum_{i\in C_0\cap N_{m(i,l)}}\beta_{i} \ge 2\Phi_{\rm c}(\kappa)^2 - o(1),\;\; \forall l\in \left\{0,\ldots,\frac{n}{2}-1\right\} \right] \ge 1 - O(n^{-c}).
\]
Thus, out of all the pairs $\bX'_{ij}$ with $j\nsim i$, with probability at least $1 - o(1)$, we have that at least a fraction $2\Phi_{\rm c}(\kappa)^2$ of the pairs are misclassified by the attention mechanism. This concludes part 1 of the theorem.

For part 2, note that by the additive Chernoff bound we have for any $t\in (0, 1)$,
\[
\Prx\left[\sum_{i\in C_0\cap N_{m(i,l)}}\beta_{i} \ge 2|S_l|\Phi_{\rm c}(\kappa)^2 - |S_l|t \right] \ge 1 - \exp(-2|S_l|t^2).
\]
Since $|S_l| = \frac{nq}{2}(1\pm o(1))$ with probability at least $1/{\rm poly}(n)$, we choose $t=2\sqrt{\frac{K\Phi_{\rm c}(\kappa)^2\log^2 n}{nq}}$ to obtain
\[\Prx\left[\sum_{i\in C_0\cap N_{m(i,l)}}\beta_{i} \ge nq\Phi_{\rm c}(\kappa)^2(1\pm o(1)) - \sqrt{Knq\Phi_{\rm c}(\kappa)^2\log^2 n} \right] \ge 1 - O(n^{-8K \Phi_{\rm c}(\kappa)^2\log n}).\]
Now note that if $q>\frac{K \log^2 n}{n\Phi_{\rm c}(\kappa)^2}$ then we have $nq\Phi_{\rm c}(\kappa)^2 > K\log^2 n$, which implies that
\[nq\Phi_{\rm c}(\kappa)^2 - \sqrt{Knq\Phi_{\rm c}(\kappa)^2\log^2 n} > 0.\]
Hence, in this regime of $q$,
\[\Prx\left[\sum_{i\in C_0\cap N_{m(i,l)}}\beta_{i} > 0 \right] \ge 1 - O(n^{-8K \Phi_{\rm c}(\kappa)^2\log n}),\] and the proof is complete.

\subsection{Proof of Theorem~\ref{thm:gamma_hard}}\label{subsec:constgamma_proof}
We restate Theorem~\ref{thm:gamma_hard} for convenience
\begin{theorem*}
Assume that $\|\bmu\| \le K\sigma$ and $\sigma \le K'$ for some absolute constants $K$ and $K'$. Moreover, assume that the parameters $(\bw, \ba, b) \in \R^d \times \R^2 \times \R$ are bounded. Then, with probability at least $1-o(1)$ over the data $(\bX,\bA) \sim \CSBM(n,p,q,\bmu,\sigma^2)$, there exists a subset $\calA \subseteq [n]$ with cardinality at least $n(1-o(1))$ such that for all $i \in \calA$ the following hold:
\begin{enumerate}
\item There is a subset $J_{i,0} \subseteq N_i \cap C_0$ with cardinality at least $\frac{9}{10}|N_i \cap C_0|$, such that $\gamma_{ij} = \Theta(1/|N_i|)$ for all $j \in J_{i,0}$.
\item There is a subset $J_{i,1} \subseteq N_i \cap C_1$ with cardinality at least $\frac{9}{10}|N_i \cap C_1|$, such that $\gamma_{ij} = \Theta(1/|N_i|)$ for all $j \in J_{i,1}$.
\end{enumerate}
\end{theorem*}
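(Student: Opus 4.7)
The plan is to exploit the fact that, under the hypotheses $\|\bmu\|\le K\sigma$, $\sigma\le K'$, and bounded parameters $(\bw,\ba,b)$, each $z_\ell\eqdef\bw^T\bX_\ell$ is a Gaussian with bounded mean $(2\eps_\ell-1)\bw^T\bmu$ and bounded variance $\sigma^2\|\bw\|^2$. Consequently $|z_\ell|=O(1)$ with high probability, and this forces both the numerator and the denominator of $\gamma_{ij}$ in \eqref{eq:softmaxattention} to be $\Theta(1)$ and $\Theta(|N_i|)$ respectively for most $i$ and most neighbors $j$ of $i$.

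First, I would pick a sufficiently large threshold $T$ (a large constant, or a slowly-growing function of $n$ to recover the stated cardinality $n(1-o(1))$) and define
\[
\calA\eqdef\{i\in[n]:|z_i|\le T\},\qquad J_{i,k}\eqdef\{j\in N_i\cap C_k:|z_j|\le T\},\ k\in\{0,1\}.
\]
Since $z_\ell$ is Gaussian with bounded mean and variance, Gaussian tail estimates give $\Pr[|z_\ell|>T]=o(1)$, and by the independence of $\{z_\ell\}_{\ell\in[n]}$ (Observation~\ref{obs:w^Tg_j}) together with a Chernoff bound we have $|\calA|\ge n(1-o(1))$ with probability $1-o(1)$. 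Conditional on class labels and the graph, the features within $N_i\cap C_k$ are i.i.d.\ Gaussian, so another Chernoff bound plus a union bound over $i\in[n]$ and $k\in\{0,1\}$ yields $|J_{i,k}|\ge\tfrac{9}{10}|N_i\cap C_k|$ for all $i$ simultaneously, where Assumption~\ref{ass:p_q} and the event $\calbE^*$ from Lemma~\ref{lem:calE*} guarantee that $|N_i\cap C_k|$ is large enough for Chernoff to give polynomially small failure probability.

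Next, I would establish two estimates. \textbf{Numerator}: For $i\in\calA$ and $j\in J_{i,k}$, $|a_1 z_i+a_2 z_j+b|\le(|a_1|+|a_2|)T+|b|=:C_*$, so $\exp(\LeakyRelu(a_1 z_i+a_2 z_j+b))\in[\exp(-\beta C_*),\exp(C_*)]=\Theta(1)$. \textbf{Denominator} $\delta_i\eqdef\sum_{\ell\in N_i}\exp(\LeakyRelu(a_1 z_i+a_2 z_\ell+b))$: the lower bound $\delta_i=\Omega(|N_i|)$ is immediate by restricting the sum to $J_{i,0}\cup J_{i,1}$, which accounts for at least $\tfrac{9}{10}|N_i|$ of the neighbors, each contributing a $\Theta(1)$ term. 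For the upper bound, for $i\in\calA$ I use $\exp(\LeakyRelu(a_1 z_i+a_2 z_\ell+b))\le\exp(|a_1|T+|b|)\exp(|a_2||z_\ell|)$ and show $\sum_{\ell\in N_i}\exp(|a_2||z_\ell|)=O(|N_i|)$ with high probability. The key point is that $\exp(|a_2||z_\ell|)$ is sub-exponential with bounded mean (since $a_2 z_\ell$ is sub-Gaussian with bounded parameter), so Bernstein's inequality for sums of sub-exponential random variables gives concentration with failure probability $\exp(-c|N_i|)=o(1/n)$ by Assumption~\ref{ass:p_q}, and a union bound over $i\in\calA$ completes the argument. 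Dividing yields $\gamma_{ij}=\Theta(1/|N_i|)$ for all $i\in\calA$ and $j\in J_{i,k}$, as desired.

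The main obstacle is the upper bound on $\delta_i$: the summands $\exp(|a_2||z_\ell|)$ are only sub-exponential (not bounded), so Hoeffding's inequality does not apply directly and Bernstein's inequality for sums of sub-exponential random variables is needed, with uniform control on the sub-exponential parameters across $\ell$. A secondary subtlety is that $z_i$ appears in every term of the sum via the factor $a_1 z_i+b$; I would handle this by first conditioning on $z_i\in[-T,T]$ before applying Bernstein. Conditional on this event, $\{z_\ell\}_{\ell\in N_i}$ remains a family of independent Gaussians with uniformly bounded sub-Gaussian parameters (using independence of node features given the labels), making the Bernstein step clean.
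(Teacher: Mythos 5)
Your overall architecture matches the paper's: define a good set $\calA$ of centers and good neighbor sets $J_{i,k}$, show the numerator of $\gamma_{ij}$ is $\Theta(1)$ there, lower-bound the softmax denominator by $\Omega(|N_i|)$ using the good neighbors, and upper-bound it by $O(|N_i|)$. The first three steps are sound. The gap is in the fourth, which you correctly flag as the main obstacle but then resolve with an invalid tool: $\exp(|a_2||z_\ell|)$ is \emph{not} sub-exponential. If $z_\ell$ is Gaussian with bounded mean and variance, then $\Pr[\exp(|a_2||z_\ell|)>t]=\Pr[|z_\ell|>\log(t)/|a_2|]\asymp\exp(-c(\log t)^2)$, which decays far more slowly than $e^{-t/K}$; equivalently, this log-normal-type variable has no finite moment generating function at any positive argument, so Bernstein's inequality for sums of sub-exponential variables does not apply. (The correct fact is that the \emph{square} of a sub-Gaussian variable is sub-exponential; its exponential is heavy-tailed.) The failure is not merely technical: in the sparse regime $p,q=\Theta(\log^2 n/n)$ permitted by Assumption~\ref{ass:p_q}, a node $i$ can have a neighbor $\ell$ with $|z_\ell|$ of order $\sqrt{\log n}$, contributing $e^{c\sqrt{\log n}}=\omega(\log^2 n)=\omega(|N_i|)$ to the denominator all by itself, which would destroy the lower bound $\gamma_{ij}=\Omega(1/|N_i|)$ for that $i$. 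Your set $\calA$ constrains only $z_i$, so it does not exclude such nodes.

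The paper closes exactly this hole in two ways you would need to adopt. First, its good set $\calA$ requires not only $|\hat\ba_1\hat\bw^T\bg_i|\le 10\sqrt{\log(n(p+q))}$ but also $|\hat\ba_2\hat\bw^T\bg_j|\le 10\sqrt{\log(n(p+q))}$ for \emph{every} neighbor $j\in N_i$; a counting argument (Claim~\ref{claim:setA}) shows this still leaves $n(1-o(1))$ nodes. Second, with the neighbors thus truncated, the heavy-tailed sum is controlled by a dyadic peeling: neighbors are grouped into buckets $B_{i,k}^t$ with $2^{t-1}\le\hat\ba_2\hat\bw^T\bg_j\le 2^t$ for $t\le T=O(\log\log(n(p+q)))$, each bucket size is bounded by its mean plus $\sqrt{T}|N_i\cap C_k|^{4/5}$ via an additive Chernoff bound and a union bound over $i$ and $t$, and the total is $O(|N_i|)$ because $\sum_t e^{-2^{2t-3}}e^{c2^t}$ converges and the top bucket contributes at most $e^{O(\sqrt{\log(n(p+q))})}|N_i|^{4/5}=o(|N_i|)$. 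Some such truncation-plus-peeling (or an equivalently careful heavy-tail argument) is necessary; a one-line appeal to Bernstein is not available here.
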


For $i \in [n]$ let us write $\bX_i = (2\eps_i - 1)\bmu + \sigma \bg_i$ where $\bg_i \sim N(0, \bI)$, $\eps_i = 0$ if $i \in C_0$ and $\eps_i = 1$ if $i \in C_1$. Moreover, since the parameters $(\bw, \ba, b) \in \R^d \times \R^2 \times \R$ are bounded, we can write $\bw = R\hat\bw$ and $\ba = R'\hat\ba$ such that $\|\hat\bw\|=1$ and $\|\hat\ba\|=1$ and $R, R'$ are some constants. We define the following sets which will become useful later in our computation of $\gamma_{ij}$'s. Define
\[
\calA \eqdef \left\{i \in [n] ~\bigg|~
\begin{array}{l}
|\hat{\ba}_1\hat{\bw}^T\bg_i| \le 10\sqrt{\log (n(p+q))}, \ \mbox{and}\\ |\hat{\ba}_2\hat{\bw}^T\bg_j| \le 10\sqrt{\log (n(p+q))}, \ \forall j \in N_i
\end{array}
\right\}.
\]
For $i \in [n]$ define
\begin{align*}
	J_{i,0} &\eqdef \left\{j \in N_i \cap C_0 ~|~ |\hat{\ba}_2\hat{\bw}^T\bg_j| \le \sqrt{10} \right\},\\
	J_{i,1} &\eqdef \left\{j \in N_i \cap C_1 ~|~ |\hat{\ba}_2\hat{\bw}^T\bg_j| \le \sqrt{10} \right\},\\
	B_{i,0}^t &\eqdef \left\{j \in N_i \cap C_0 ~|~ 2^{t-1} \le \hat{\ba}_2\hat{\bw}^T\bg_j  \le 2^t\right\}, \ t = 1,2,\ldots,T, \\
	B_{i,1}^t &\eqdef \left\{j \in N_i \cap C_1 ~|~ 2^{t-1} \le \hat{\ba}_2\hat{\bw}^T\bg_j  \le 2^t\right\}, \ t = 1,2,\ldots,T,
\end{align*}
where $T \eqdef \left\lceil\log_2\left(10\sqrt{\log (n(p+q))}\right)\right\rceil$.
\medskip

\noindent We start with a few claims about the sizes of these sets.

\begin{claim}
\label{claim:setA}
With probability at least $1-o(1)$, we have that $|\calA| \ge n(1-o(1))$.
\end{claim}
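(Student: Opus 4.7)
The plan is to bound $\Prx[i \notin \calA]$ for a fixed $i$ and then apply Markov's inequality to $|[n] \setminus \calA|$. First I would observe that since $\|\hat{\bw}\|=1$, the random variable $\hat{\bw}^T\bg_k$ is a standard Gaussian for every $k\in[n]$ (by Observation~\ref{obs:w^Tg_j}), and since $\|\hat{\ba}\|=1$ we have $|\hat{\ba}_1|,|\hat{\ba}_2|\le 1$. Consequently $\hat{\ba}_\ell\hat{\bw}^T\bg_k$ is a centered Gaussian with variance at most $1$, so the standard sub-Gaussian tail bound yields
\[
\Prx\left[\left|\hat{\ba}_\ell\hat{\bw}^T\bg_k\right| > 10\sqrt{\log(n(p+q))}\right] \le 2(n(p+q))^{-50}, \qquad \ell \in \{1,2\}.
\]

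Next I would decompose $\{i \notin \calA\}$ into two events: (a) the self-bound fails, i.e.\ $|\hat{\ba}_1\hat{\bw}^T\bg_i|$ exceeds the threshold, and (b) a neighbor-bound fails, i.e.\ there exists $j \in N_i$ with $|\hat{\ba}_2\hat{\bw}^T\bg_j|$ exceeding the threshold. Event (a) has probability at most $2(n(p+q))^{-50}$ directly from the tail bound. For event (b), I would exploit the independence between the adjacency matrix $\bA$ and the Gaussians $\{\bg_k\}_{k\in[n]}$ to compute
\[
\Ex\!\left[\left|\left\{j\in N_i : |\hat{\ba}_2\hat{\bw}^T\bg_j|>10\sqrt{\log(n(p+q))}\right\}\right|\right] \;\le\; (1 + n(p+q)) \cdot 2(n(p+q))^{-50},
\]
where the factor $1+n(p+q)$ bounds the expected size of $N_i$ (the $1$ coming from the self-loop, and the $n(p+q)$ bound using $\Prx[j\in N_i]\le \max(p,q)\le p+q$ for $j\neq i$). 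A first application of Markov's inequality then gives $\Prx[(b)] \le 4(n(p+q))^{-49}$, and hence $\Prx[i\notin\calA] = O((n(p+q))^{-49})$. Summing over $i$ by linearity of expectation, $\Ex[|[n]\setminus \calA|] = O(n(n(p+q))^{-49})$, which is $O(n/\log^{98}n) = o(n)$ under Assumption~\ref{ass:p_q}. A second application of Markov's inequality then yields $|[n]\setminus\calA| = o(n)$ with probability $1-o(1)$, i.e.\ $|\calA|\ge n(1-o(1))$ whp.

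The main subtlety is that we must use Markov's inequality on the expected number of bad neighbors rather than a straight union bound over $j\in N_i$: under the minimal density Assumption~\ref{ass:p_q}, $n(p+q)$ can be as small as $\log^2 n$, so the Gaussian tail probability is only polylogarithmically small, whereas $|N_i|$ itself can be as large as $n(p+q)$; a direct union bound would therefore collapse. The two-stage Markov argument (first on the count of bad neighbors of a fixed $i$, then on the count of bad vertices $i$) exploits the independence between the graph and the features to keep the error negligible.
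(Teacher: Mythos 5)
Your argument is correct, but it is organized differently from the paper's. The paper first controls the global count $S$ of indices $j$ with $|\hat\bw^T\bg_j| \ge 10\sqrt{\log(n(p+q))}$ via a multiplicative Chernoff bound (after splitting into the cases $p+q \ge \Omega(1/\log^2 n)$ and $p+q \le O(1/\log^2 n)$), and then argues that each such bad index can ``contaminate'' at most $\deg(j)$ vertices $i$, so the number of vertices excluded from $\calA$ is at most $S\cdot \frac{n}{2}(p+q)(1\pm o(1)) = O(n/\log n)$; this step leans on the degree-concentration event. You instead fix $i$, bound $\Prx[i\notin\calA]$ by a first-moment computation on the number of bad neighbors (using the independence of $\bA$ and the Gaussians to get $\Ex[\#\text{bad neighbors}] \le (1+n(p+q))\cdot 2(n(p+q))^{-50}$), and then apply Markov a second time to $\Ex[|[n]\setminus\calA|] = O(n(n(p+q))^{-49}) = O(n/\log^{98}n)$. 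Your route is more elementary (only Markov, no Chernoff), needs no degree concentration and no case split on the magnitude of $p+q$, and handles the self-loop term cleanly; the paper's route gives a polynomially small failure probability for the event it controls rather than the $O(1/\log^{49} n)$-type bound your final Markov step yields, but since the claim only asserts probability $1-o(1)$ and $|\calA|\ge n(1-o(1))$, both arguments suffice. Your closing remark about why a naive union bound over $N_i$ would collapse in the sparse regime is exactly the right diagnosis of where the care is needed.
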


\begin{proof}
Because $|\hat\ba_2| \le 1$ we know that $\calA$ is a superset of $\calA'$ where
\[
\calA' \eqdef \left\{i \in [n] ~\bigg|~ 
\begin{array}{l}
|\hat{\bw}^T\bg_i| \le 10\sqrt{\log (n(p+q))}, \ \mbox{and}\\ |\hat{\bw}^T\bg_j| \le 10\sqrt{\log (n(p+q))}, \ \forall j \in N_i
\end{array}
\right\}.
\]
We give a lower bound for $|\calA'|$ and hence prove the result. First of all, note that if $p+q \ge \Omega(1/\log^2 n)$, then $\log(n(p+q)) = \log n(1-o(1))$ and we easily get that with probability at least $1-o(1)$, $|\hat{\bw}^T\bg_i| \le 10\sqrt{\log (n(p+q))}$ for all $i \in [n]$, and thus $|\calA| = |\calA'| = n$. Therefore let us assume without loss of generality that $p+q \le O(1/\log^2 n)$. Consider the following sum of indicator random variables
\[
	S \eqdef  \sum_{i \in [n]} \ind_{\left\{|\hat\bw^T\bg_i| \ge 10 \sqrt{\log(n(p+q))}\right\}}.
\] 
By the multiplicative Chernoff bound, for any $\delta > 0$ we have
\[
	\Pr\left[S \ge nb(1+\delta)\right] \le \exp\left(-\frac{\delta^2}{2+\delta}nb\right)
\]
where $b \eqdef \Pr(|\hat\bw^T\bg_i| \ge 10 \sqrt{\log(n(p+q))})$. Moreover, by the standard upper bound on the Gaussian tail probability (Proposition 2.1.2, \cite{vershynin2018high}) we know that $b < e^{-50 \log(n(p+q))}$. Let us set
\[
	\delta \eqdef \frac{1}{bn(p+q)\log n}.
\]
Then by the upper bound on $b$ and the assumption that $p,q = \Omega(\log^2 n /n)$ we know that
\[
	\delta \ge \frac{(n(p+q))^{49}}{\log n} \ge \Omega(\log^{97} n) = \omega(1).
\]
It follows that
\[
	\frac{\delta^2}{2+\delta}nb \ge \Omega(\delta n b) = \Omega\left(\frac{1}{(p+q)\log n}\right) \ge \Omega(\log n).
\]
Therefore, with probability at least $1-o(1)$ we have that
\[
	S \le nb(1+\delta) \le \frac{n}{(n(p+q))^{50}}+ \frac{n}{n(p+q)\log n} = O\left(\frac{n}{n(p+q)\log n}\right).
\]
Apply the concentration result of node degrees, this means that with probability at least $1-o(1)$,
\begin{align*}
&\left|\left\{i \in [n] ~\big|~ |\hat\bw^T\bg_i| \ge 10 \sqrt{\log(n(p+q))} \ \mbox{or} \ \exists j \in N_i \ \mbox{such that} \ |\hat\bw^T\bg_j| \ge 10 \sqrt{\log(n(p+q))} \right\}\right| \\
&\le \ S \cdot \frac{n}{2}(p+q)(1\pm o(1)) = O\left(\frac{n}{n(p+q)\log n}\right) \cdot \frac{n}{2}(p+q)(1\pm o(1)) = O\left(\frac{n}{\log n}\right).
\end{align*}
Therefore we have
\[
	|\calA'| \ge n - O(n/\log n) = n(1-o(1)).
\]
\end{proof}

\begin{claim}
\label{claim:setJ}
With probability at least $1-o(1)$, we have that for all $i \in [n]$, 
\[
	|J_{i,0}| \ge \frac{9}{10}|N_i \cap C_0| ~~ \mbox{and} ~~ |J_{i,1}| \ge \frac{9}{10} |N_i \cap C_1|.
\]
\end{claim}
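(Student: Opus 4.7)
The plan is to reduce the claim to a routine Chernoff-plus-union-bound computation, exploiting the fact that the Gaussian noise $\{\bg_j\}_{j \in [n]}$ is independent of the graph $\bA$ and of the class assignments $\{\eps_j\}_{j \in [n]}$. First I would observe that since the problem statement allows us to write $\bw = R\hat{\bw}$ and $\ba = R'\hat{\ba}$ with $\hat{\bw}$ and $\hat{\ba}$ unit vectors, we have $|\hat{\ba}_2| \le 1$. Because $\hat{\bw}$ is unit, $\hat{\bw}^T \bg_j \sim N(0,1)$, and by Observation~\ref{obs:w^Tg_j} these are mutually independent across $j \in [n]$. Therefore $\hat{\ba}_2 \hat{\bw}^T \bg_j \sim N(0,\hat{\ba}_2^2)$ with $|\hat{\ba}_2|\le 1$, so the probability that a single $j$ is ``bad'' (i.e., $|\hat{\ba}_2\hat{\bw}^T\bg_j| > \sqrt{10}$) is at most the absolute constant $p_0 := 2\Phi_{\mathrm{c}}(\sqrt{10})$, which is far smaller than $1/10$. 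If $\hat{\ba}_2 = 0$ the bound holds trivially with $J_{i,0} = N_i \cap C_0$ and $J_{i,1} = N_i \cap C_1$, so from here on assume $\hat{\ba}_2 \neq 0$.

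Next I would fix $i \in [n]$ and condition on the pair $(\bA, \{\eps_j\}_{j\in[n]})$, which determines the sets $N_i \cap C_0$ and $N_i \cap C_1$ but is independent of $\{\bg_j\}$. After this conditioning, the indicators $Y_j := \mathbf{1}\{|\hat{\ba}_2\hat{\bw}^T\bg_j| > \sqrt{10}\}$ for $j \in N_i \cap C_0$ remain independent Bernoulli random variables with success probability at most $p_0 < 1/10$. Applying the multiplicative Chernoff bound gives
\[
\Prx\!\left[\sum_{j \in N_i \cap C_0} Y_j > \tfrac{1}{10} |N_i \cap C_0| \,\Big|\, \bA, \{\eps_j\}\right] \le \exp(-\Omega(|N_i \cap C_0|)),
\]
and the same estimate for $C_1$. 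On the high-probability event $\calbE_3 \subseteq \calbE^*$ (Lemma~\ref{lem:calE*}) combined with Assumption~\ref{ass:p_q}, both $|N_i \cap C_0|$ and $|N_i \cap C_1|$ are at least $\Omega(n(p+q)) = \Omega(\log^2 n)$, so each of the above conditional failure probabilities is at most $\exp(-\Omega(\log^2 n))$.

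Finally I would union bound over all $i \in [n]$ and over the two classes, which contributes an additional factor of $2n$ and still leaves a failure probability of $2n \exp(-\Omega(\log^2 n)) = o(1)$. Combining with the probability $1 - o(1)$ that $\calbE^*$ holds yields the claim.

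The only mildly delicate step is the independence bookkeeping: one must be careful that the sets $N_i \cap C_k$ depend only on $(\bA, \{\eps_j\})$ and not on $\{\bg_j\}$, so that after conditioning the Bernoulli structure is preserved. Everything else is a direct tail bound on a single standard Gaussian followed by routine concentration, so I do not anticipate any real obstacle beyond this.
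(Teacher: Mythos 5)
Your proposal is correct and follows essentially the same route as the paper's proof: bound the per-node ``bad'' probability by a small absolute constant using the Gaussian tail (after reducing to $|\hat{\ba}_2|\le 1$ and $\|\hat\bw\|=1$), apply a Chernoff bound to the independent indicators over $j \in N_i\cap C_k$, and then union bound over $i$ using the degree concentration event $\calbE_3$ together with Assumption~\ref{ass:p_q} so that $|N_i\cap C_k| = \Omega(\log^2 n)$ makes the exponential tail summable. The only differences are cosmetic: you make the conditioning on $(\bA,\{\eps_j\})$ and the degenerate case $\hat{\ba}_2=0$ explicit, which the paper leaves implicit.
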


\begin{proof}
We prove the result for $J_{i,0}$, the result for $J_{i,1}$ follows analogously. First, fix $i \in [n]$. For each $j \in |N_i \cap C_0|$ we have that
\[
	\Pr[|\hat{\ba}_2\bw^T\bg_j| \ge \sqrt{10}] \le \Pr[|\bw^T\bg_j| \ge \sqrt{10}] \le e^{-50}.
\]
Denote $J_{i,0}^c \eqdef (N_i \cap C_0) \setminus J_{i,0}$. We have that
\[
	\Ex[ |J_{i,0}^c|] = \Ex \left[\sum_{j \in N_i \cap C_0} \ind_{\left\{|\hat{\ba}_2\bw^T\bg_j| \ge \sqrt{10} \right\}} \right] \le  e^{-50} |N_i \cap C_0|,
\]
Apply Chernoff's inequality (Theorem 2.3.4 in \cite{vershynin2018high}) we have
\begin{align*}
	\Pr\left[|J_{i,0}^c| \ge \frac{1}{10}|N_i \cap C_0| \right]
	&\le e^{-\Ex [|J_{i,0}^c|]} \left(\frac{e \Ex [|J_{i,0}^c|]}{ |N_i \cap C_0|/10}\right)^{|N_i \cap C_0|/10} \\
	&\le \left(\frac{e e^{-50}|N_i \cap C_0|}{|N_i \cap C_0|/10} \right)^{ |N_i \cap C_0| / 10}  \\ 
	&= \exp\left(-\left(\frac{1}{2} - \frac{\log 10}{10} - \frac{1}{10}\right)|N_i \cap C_0| \right) \\
	&\le \exp\left(-\frac{4}{25} |N_i \cap C_0| \right).
\end{align*}
Apply the union bound we get
\begin{align*}
	\Pr\left[|J_{i,0}| \ge \frac{9}{10} |C_0 \cap N_i|, \forall i \in [n] \right] 
	&\ge 1 - \sum_{i \in [n]} \exp\left(-\frac{4}{25} |N_i \cap C_0|\right) \\
	&\ge \Pr(\calbE_3) \cdot \left(1 - \sum_{i \in [n]} \exp\left(-\frac{4}{25} \frac{n \min(p,q)(1 - o(1))}{2} \right)\right) \\
	&= (1-o(1)) \cdot \left(1 - n \exp\left(-\frac{2n\min(p,q)(1- o(1))}{25}\right)\right) \\
	&= 1 - o(1).
\end{align*}
The second inequality follows because $|N_i \cap C_0| \ge \frac{n}{2}\min(p,q)(1 - o(1))$ under the event $\calbE_3$ (cf. Definition~\ref{def:high_prob_events}) for all $i \in [n]$. The last equality is due to our assumption that $p,q = \Omega(\frac{\log^2 n}{n})$. 
\end{proof}

\begin{claim}
\label{claim:setB}
With probability at least $1-o(1)$, we have that for all $i \in [n]$ and for all $t \in [T]$, 
\[
	|B_{i,0}^t| \le \Ex [|B_{i,0}^t|] + \sqrt{T}|N_i \cap C_0|^{\frac{4}{5}} ~~ \mbox{and} ~~|B_{i,1}^t| \le \Ex [|B_{i,1}^t|] + \sqrt{T}|N_i \cap C_1|^{\frac{4}{5}}.
\]
\end{claim}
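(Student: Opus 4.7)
The plan is to treat each $|B_{i,0}^t|$ (and $|B_{i,1}^t|$) as a sum of independent $\{0,1\}$-valued random variables once we condition on the edge structure, apply a sharp tail bound, and then union bound over the $O(nT)$ relevant events. The key observation is that the indicators $\ind_{\{2^{t-1}\le \hat\ba_2\hat\bw^T\bg_j\le 2^t\}}$ depend only on the Gaussian noise vectors $\bg_j$, and by Observation~\ref{obs:w^Tg_j} the projected noises $\hat{\bw}^T\bg_j$ for distinct $j$ are independent. Since $\bA$ is independent of the feature noise, conditioning on $\bA$ and on the class labels leaves $|B_{i,0}^t|$ as a sum of $|N_i \cap C_0|$ independent Bernoullis with the same (unknown) success probability, and similarly for $|B_{i,1}^t|$.

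First I would fix $i\in[n]$ and $t\in[T]$ and condition on $(\bA,\{\eps_j\}_{j\in[n]})$. Then, by the additive Chernoff (Hoeffding) inequality applied to $|B_{i,0}^t|-\Ex[|B_{i,0}^t|\mid \bA, \{\eps_j\}]$, for any $\tau>0$,
\[
\Pr\!\left[|B_{i,0}^t|\ge \Ex[|B_{i,0}^t|\mid \bA,\{\eps_j\}]+\tau \,\Big|\, \bA,\{\eps_j\}\right]\le \exp\!\left(-\tfrac{2\tau^2}{|N_i\cap C_0|}\right).
\]
Choosing $\tau=\sqrt{T}\,|N_i\cap C_0|^{4/5}$ gives a bound of $\exp\!\left(-2T|N_i\cap C_0|^{3/5}\right)$. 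The same argument applies to $|B_{i,1}^t|$ with $|N_i\cap C_1|$ in place of $|N_i\cap C_0|$.

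Next I would union bound over all $i\in[n]$ and $t\in[T]$, giving a total failure probability at most
\[
2nT\cdot\exp\!\left(-2T\cdot\min_i\bigl(|N_i\cap C_0|\wedge |N_i\cap C_1|\bigr)^{3/5}\right).
\]
Under the high-probability event $\calbE_3$ from Definition~\ref{def:high_prob_events}, every $|N_i\cap C_k|$ is $\Omega(n\min(p,q))$, and Assumption~\ref{ass:p_q} then forces $|N_i\cap C_k|\ge \Omega(\log^2 n)$ so that $|N_i\cap C_k|^{3/5}\ge \Omega(\log^{6/5}n)$. Since $T=O(\log\log n)$, the exponent dominates $\log(nT)$ and the union bound is $o(1)$. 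Finally I would remove the conditioning by intersecting with $\calbE_3$, which itself holds with probability $1-o(1)$, yielding the claim.

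The only mild subtlety is bookkeeping: the conditional expectation $\Ex[|B_{i,0}^t|\mid \bA,\{\eps_j\}]$ depends on the realized $|N_i\cap C_0|$, but this equals the unconditional $\Ex[|B_{i,0}^t|]$ up to the fluctuation of $|N_i\cap C_0|$, which is absorbed into the slack $\sqrt{T}|N_i\cap C_0|^{4/5}$ (since the Gaussian tail probability multiplied by $|N_i\cap C_0|$ fluctuates by at most $O(|N_i\cap C_0|^{1/2})\ll |N_i\cap C_0|^{4/5}$ under $\calbE_3$). I do not expect any serious obstacle beyond this accounting; the argument is a textbook Hoeffding-plus-union-bound combined with the degree concentration already established for the CSBM.
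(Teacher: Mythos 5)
Your proposal is correct and follows essentially the same route as the paper: additive Chernoff/Hoeffding applied to each $|B_{i,c}^t|$ with deviation $\sqrt{T}\,|N_i\cap C_c|^{4/5}$, yielding a tail of $\exp(-2T|N_i\cap C_c|^{3/5})$, followed by a union bound over $i\in[n]$ and $t\in[T]$ that is controlled via degree concentration and Assumption~\ref{ass:p_q}. Your extra bookkeeping about conditioning on $(\bA,\{\eps_j\})$ only makes explicit what the paper leaves implicit (its $\Ex[|B_{i,0}^t|]$ is already the expectation given the realized neighborhood), so nothing substantive differs.
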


\begin{proof}
We prove the result for $B_{i,0}^t$, and the result for $B_{i,1}^t$ follows analogously. First fix $i \in [n]$ and $t \in [T]$. By the additive Chernoff inequality, we have
\[
	\Pr\left(|B_{i,0}^t| \ge \Ex [|B_{i,0}^t|] +  |N_i \cap C_0| \cdot \sqrt{T}|N_i \cap C_0|^{-\frac{1}{5}}\right) \le  e^{-2T|N_i \cap C_0|^{3/5}}.
\]
Taking a union bound over all $i \in [n]$ and $t \in [T]$ we get
\begin{align*}
	&\Pr\left[\bigcup_{i\in [n]}\bigcup_{t\in[T]} \left\{|B_{i,0}^t| \ge \Ex [|B_{i,0}^t|] + \sqrt{T}|N_i \cap C_0|^{\frac{4}{5}}\right\}\right]\\
	\le~& nT\exp\left(-2T\left(\frac{n}{2}\min(p,q)(1 - o(1))\right)^{3/5}\right) + o(1) ~=~ o(1),
\end{align*}
where the last equality follows from Assumption~\ref{ass:p_q} that $p,q = \Omega(\frac{\log^2 n}{n})$, and hence 
\begin{align*}
    nT\exp\left(-2T\left(\frac{n}{2}\min(p,q)(1 - o(1))\right)^{3/5}\right) 
    &= nT \exp\left(-\omega\left(\sqrt{2}T\log n \right)\right) = O\left(n^{-c}\right)
\end{align*}
for some absolute constant $c > 0$. Moreover, we have used degree concentration, which introduced the additional additive $o(1)$ term in the probability upper bound. Therefore we have
\[
	\Pr\left[|B_{i,0}^t| \le \Ex [|B_{i,0}^t|] +  \sqrt{T}|N_i \cap C_0|^{\frac{4}{5}}, \forall i \in [n] ~\forall t\in[T]\right] \ge 1-o(1).
\]
\end{proof}

We start by defining an event $\calbE^{\#}$ which is the intersection of the following events over the randomness of $\bA$ and $\{\eps_i\}_i$ and $\bX_i = (2\eps_i -1)\bmu + \sigma \bg_i$,
\begin{itemize}
    \item $\calbE_1'$ is the event that for each $i \in [n]$, $|C_0\cap N_i|=\frac{n}{2}((1-\eps_i)p+\eps_i q)(1\pm o(1))$ and $|C_1\cap N_i|=\frac{n}{2}((1-\eps_i)q+\eps_i p)(1\pm o(1))$.
    \item $\calbE_2'$ is the event that $|\calA| \ge n - o(\sqrt{n})$.
    \item $\calbE_3'$ is the event that $|J_{i,0}| \ge \frac{9}{10}|N_i \cap C_0|$ and $|J_{i,1}| \ge \frac{9}{10} |N_i \cap C_1|$ for all $i \in [n]$.
    \item $\calbE_4'$ is the event that $|B_{i,0}^t| \le \Ex[ |B_{i,0}^t|] + \sqrt{T}|N_i \cap C_0|^{\frac{4}{5}}$ and $|B_{i,1}^t| \le \Ex [|B_{i,1}^t|] +  \sqrt{T}|N_i \cap C_1|^{\frac{4}{5}}$ for all $i \in [n]$ and for all $t \in [T]$.
\end{itemize}
By Claims~\ref{claim:setA},~\ref{claim:setJ},~\ref{claim:setB}, we get that with probability at least $1-o(1)$, the event $\calbE^{\#} \eqdef \bigcap_{i=1}^4 \calbE_i'$ holds. We will show that under event $\calbE^{\#}$, for all $i \in \calA$, for all $j \in J_{i,c}$ where $c \in \{0,1\}$, we have $\gamma_{ij} = \Theta(1/|N_i|)$. This will prove Theorem~\ref{thm:gamma_hard}.

Fix $i \in \calA$ and some $j \in J_{i,0}$. Let us consider
\begin{align*}
	\gamma_{ij} &= \frac{\exp\left(\LeakyRelu(\ba_1\bw^T\bX_i + \ba_2\bw^T\bX_j + b)\right)}{\sum_{k \in N_i} \exp\left(\LeakyRelu(\ba_1\bw^T\bX_i + \ba_2\bw^T\bX_k + b)\right)}\\
	& = \frac{\exp\left(\sigma RR' \ \LeakyRelu(\kappa_{ij} + \hat\ba_1\hat\bw^T\bg_i + \hat\ba_2\hat\bw^T\bg_j + b')\right)}{\sum_{k \in N_i} \exp\left(\sigma RR' \ \LeakyRelu(\kappa_{ik} + \hat\ba_1\hat\bw^T\bg_i + \hat\ba_2\hat\bw^T\bg_k + b')\right)}\\
	& = \frac{1}{\sum_{k \in N_i} \exp(\Delta_{ik} - \Delta_{ij})}
\end{align*}
where for $l \in N_i$, we denote
\begin{align*}
	\kappa_{il} &\eqdef(2\eps_i-1)\hat{\bw}^T\bmu/\sigma + (2\eps_l-1)\hat{\bw}^T\bmu/\sigma,\\
	\Delta_{il} &\eqdef \sigma R R' \ \LeakyRelu(\kappa_{il} +  \hat{\ba}_1\bw^T\bg_i + \hat{\ba}_2\bw^T\bg_l + b'),
\end{align*}
and $b = \sigma RR' b'$. We will show that 
\[
	\sum_{k \in N_i} \exp(\Delta_{ik} - \Delta_{ij}) = \Theta(|N_i|)
\]
and hence conclude that $\gamma_{ij} = \Theta(1/|N_i|)$. First of all, note that since $\|\bmu\| \le K \sigma$ for some absolute constant $K$, we know that 
\[
	|\kappa_{il}| \le \sqrt{2}K = O(1).
\]
Let us assume that $\hat\ba_1\hat\bw^T\bg_i \ge 0$ and consider the following two cases regarding the magnitude of $\hat\ba_1\hat\bw^T\bg_i$.

\underline{Case 1.} If $\kappa_{ij} + \hat\ba_1\hat\bw^T\bg_i + \hat\ba_2\hat\bw^T\bg_j + b' < 0$, then
\begin{align*}
	\Delta_{ik} - \Delta_{ij}
	&= \sigma RR' \Big(\LeakyRelu(\kappa_{ik} + \hat\ba_1\hat\bw^T\bg_i + \hat\ba_2\hat\bw^T\bg_k + b')\\
	&\qquad - \LeakyRelu(\kappa_{ij} + \hat\ba_1\hat\bw^T\bg_i + \hat\ba_2\hat\bw^T\bg_j + b')\Big) \\
	&= \sigma RR' \Big(\LeakyRelu(\hat\ba_1\hat\bw^T\bg_i + \hat\ba_2\hat\bw^T\bg_k \pm O(1)) \\
	&\qquad - \beta(\kappa_{ij} + \hat\ba_1\hat\bw^T\bg_i + \hat\ba_2\hat\bw^T\bg_j + b')\Big)\\
	&= \sigma RR' \left(\LeakyRelu(\hat\ba_2\hat\bw^T\bg_k \pm O(1)) \pm O(1) \right)\\
	&=\sigma RR' \left(\Theta( \hat\ba_2\hat\bw^T\bg_k) \pm O(1)\right),
\end{align*}
where $\beta$ is the slope of $\LeakyRelu(x)$ for $x < 0$. Here, the second equality follows from $|\kappa_{ik} + b'| \le \sqrt{2}K + |b'| =  O(1)$ and $\kappa_{ij} + \hat\ba_1\hat\bw^T\bg_i + \hat\ba_2\hat\bw^T\bg_j + b' < 0$. The third equality follows from
\begin{itemize}
	\item We have $j \in J_{i,0}$ and hence $|\hat\ba_2\hat\bw^T\bg_j| = O(1)$;
	\item We have $\kappa_{ij} + \hat\ba_1\hat\bw^T\bg_i + \hat\ba_2\hat\bw^T\bg_j + b' < 0$, so $\hat\ba_1\hat\bw^T\bg_i  < |\kappa_{ij}| + |\hat\ba_2\hat\bw^T\bg_j| + |b'| = O(1)$, moreover, because $\hat\ba_1\hat\bw^T\bg_i \ge 0$, we get that $|\hat\ba_1\hat\bw^T\bg_i | = O(1)$;
	\item We have $|\kappa_{ij} + \hat\ba_1\hat\bw^T\bg_i + \hat\ba_2\hat\bw^T\bg_j + b' | \le  | \hat\ba_1\hat\bw^T\bg_i| + |\hat\ba_2\hat\bw^T\bg_j| + |\kappa_{ij} + b'| = O(1) + O(1) + O(1) = O(1)$.
\end{itemize}

\underline{Case 2.} If $\kappa_{ij} + \hat\ba_1\hat\bw^T\bg_i + \hat\ba_2\hat\bw^T\bg_j + b' \ge 0$, then
\begin{align*}
	\Delta_{ik} - \Delta_{ij}
	&= \sigma RR' \Big(\LeakyRelu(\kappa_{ik} + \hat\ba_1\hat\bw^T\bg_i + \hat\ba_2\hat\bw^T\bg_k + b')\\
	&\qquad - \LeakyRelu(\kappa_{ij} + \hat\ba_1\hat\bw^T\bg_i + \hat\ba_2\hat\bw^T\bg_j + b')\Big) \\
	&= \sigma RR' \Big(\LeakyRelu(\kappa_{ik} + \hat\ba_1\hat\bw^T\bg_i + \hat\ba_2\hat\bw^T\bg_k + b')\\
	&\qquad - \kappa_{ij} - \hat\ba_1\hat\bw^T\bg_i -  \hat\ba_2\hat\bw^T\bg_j - b' \Big)\\
	&= \sigma RR' \left(\LeakyRelu(\kappa_{ik} + \hat\ba_1\hat\bw^T\bg_i + \hat\ba_2\hat\bw^T\bg_k + b') - \hat\ba_1\hat\bw^T\bg_i \pm O(1)\right)\\
	&\left\{ \begin{array}{ll}  = \sigma RR'\left(\Theta( \hat\ba_2\hat\bw^T\bg_k) \pm O(1)\right), & \mbox{if} \ k \in J_{i,0} \cup J_{i,1} \\ \le \sigma RR' \left(O( \hat\ba_2\hat\bw^T\bg_k) \pm O(1)\right), & \mbox{otherwise}. \end{array} \right.
\end{align*}

To see the last (in)equality in the above, consider the following cases:

\begin{enumerate}

\item If $k \in J_{i,0} \cup J_{i,1}$, then there are two cases depending on the sign of $\kappa_{ik} + \hat\ba_1\hat\bw^T\bg_i + \hat\ba_2\hat\bw^T\bg_k + b'$.

\begin{itemize}

\item If $\kappa_{ik} + \hat\ba_1\hat\bw^T\bg_i + \hat\ba_2\hat\bw^T\bg_k + b' \ge 0$, then we have that
\begin{align*}
	&\LeakyRelu(\kappa_{ik} + \hat\ba_1\hat\bw^T\bg_i + \hat\ba_2\hat\bw^T\bg_k + b') - \hat\ba_1\hat\bw^T\bg_i \pm O(1) \\
	=~& \kappa_{ik} + \hat\ba_1\hat\bw^T\bg_i + \hat\ba_2\hat\bw^T\bg_k + b' - \hat\ba_1\hat\bw^T\bg_i \pm O(1) \\
	=~& \hat\ba_2\hat\bw^T\bg_k +  \kappa_{ik} + b' \pm O(1)\\
	=~&  \hat\ba_2\hat\bw^T\bg_k \pm O(1).
\end{align*}

\item If $\kappa_{ik} + \hat\ba_1\hat\bw^T\bg_i + \hat\ba_2\hat\bw^T\bg_k + b' < 0$, then because $\hat\ba_1\hat\bw^T\bg_i \ge 0$ and $|\kappa_{ik} + \hat\ba_2\hat\bw^T\bg_k + b'| \le |\kappa_{ik}| +  |\hat\ba_2\hat\bw^T\bg_k| + |b'| = O(1)$, we know that $\hat\ba_1\hat\bw^T\bg_i < |\kappa_{ik}| +  |\hat\ba_2\hat\bw^T\bg_k| + |b'| = O(1)$ and $|\kappa_{ik} + \hat\ba_1\hat\bw^T\bg_i + \hat\ba_2\hat\bw^T\bg_k + b'| = O(1)$. Therefore it follows that
\begin{align*}
	&\LeakyRelu(\kappa_{ik} + \hat\ba_1\hat\bw^T\bg_i + \hat\ba_2\hat\bw^T\bg_k + b') - \hat\ba_1\hat\bw^T\bg_i \pm O(1) \\
	=~& \LeakyRelu(\pm O(1)) - O(1) \pm O(1) \\
	=~& \pm O(1) \\
	=~& \hat\ba_2\hat\bw^T\bg_k \pm O(1)
\end{align*}
where the last equality is due to the fact that $k \in J_{i,0} \cup J_{i,1}$ so $|\hat\ba_2\hat\bw^T\bg_k| = O(1)$.
\end{itemize}

\item If $k \not\in J_{i,0} \cup J_{i,1}$, then there are two cases depending on the sign of $\kappa_{ik} + \hat\ba_1\hat\bw^T\bg_i + \hat\ba_2\hat\bw^T\bg_k + b'$.

\begin{itemize}

\item If $\kappa_{ik} + \hat\ba_1\hat\bw^T\bg_i + \hat\ba_2\hat\bw^T\bg_k + b' \ge 0$, then we have that
\begin{align*}
	&\LeakyRelu(\kappa_{ik} + \hat\ba_1\hat\bw^T\bg_i + \hat\ba_2\hat\bw^T\bg_k + b') - \hat\ba_1\hat\bw^T\bg_i \pm O(1) \\
	=~& \kappa_{ik} + \hat\ba_1\hat\bw^T\bg_i + \hat\ba_2\hat\bw^T\bg_k + b' - \hat\ba_1\hat\bw^T\bg_i \pm O(1) \\
	=~& \hat\ba_2\hat\bw^T\bg_k +  \kappa_{ik} + b' \pm O(1)\\
	=~&  \hat\ba_2\hat\bw^T\bg_k \pm O(1).
\end{align*}

\item If $\kappa_{ik} + \hat\ba_1\hat\bw^T\bg_i + \hat\ba_2\hat\bw^T\bg_k + b' < 0$, then we have that,
\begin{align*}
	&\LeakyRelu(\kappa_{ik} + \hat\ba_1\hat\bw^T\bg_i + \hat\ba_2\hat\bw^T\bg_k + b') - \hat\ba_1\hat\bw^T\bg_i \pm O(1) \\
	=~& \beta\kappa_{ik} + \beta\hat\ba_1\hat\bw^T\bg_i + \beta\hat\ba_2\hat\bw^T\bg_k + \beta b' - \hat\ba_1\hat\bw^T\bg_i \pm O(1) \\
	=~&  \beta\hat\ba_2\hat\bw^T\bg_k - (1-\beta)\hat\ba_1\hat\bw^T\bg_i \pm O(1) \\
	\le~& \beta\hat\ba_2\hat\bw^T\bg_k \pm O(1),
\end{align*}
where $\beta$ is the slope of $\LeakyRelu(\cdot)$.
\end{itemize}
\end{enumerate}

Combining the two cases regarding the magnitude of $\hat\ba_1\hat\bw^T\bg_i$ and our assumption that $\sigma, R, R = O(1)$, so far we have showed that, for any $i$ such that $\hat\ba_1\hat\bw^T\bg_i \ge 0$, for all $j \in J_{i,0}$, we have
\begin{equation}
\label{eq:order_of_exponent}
	\Delta_{ik} - \Delta_{ij} =\left\{ \begin{array}{ll}   \Theta( \hat\ba_2\hat\bw^T\bg_k) \pm O(1), & \mbox{if} \ k \in J_{i,0} \cup J_{i,1} \\  O( \hat\ba_2\hat\bw^T\bg_k) \pm O(1), & \mbox{otherwise}. \end{array} \right.
\end{equation}
By following a similar argument, one can show that Equation~\ref{eq:order_of_exponent} holds for any $i$ such that $\hat\ba_1\hat\bw^T\bg_i < 0$.

Let us now compute 
\[
	\sum_{k \in N_i} \exp(\Delta_{ik} - \Delta_{ij}) = \sum_{k \in N_i \cap C_0}\exp(\Delta_{ik} - \Delta_{ij})  + \sum_{k \in N_i \cap C_1}\exp(\Delta_{ik} - \Delta_{ij})
\]
for some $j \in J_{i,0}$. Let us focus on $\sum_{k \in N_i \cap C_0}\exp(\Delta_{ik} - \Delta_{ij})$ first. We will show that $\Omega(|N_i \cap C_0|) \le \sum_{k \in N_i \cap C_0}\exp(\Delta_{ik} - \Delta_{ij}) \le O(|N_i|)$.

First of all, we have that
\begin{equation}
\label{eq:gamma_lb}
\begin{split}
\sum_{k \in N_i \cap C_0} \exp(\Delta_{ik} - \Delta_{ij})  
&\ge  \sum_{k \in J_{i,0}} \exp(\Delta_{ik} - \Delta_{ij}) = \sum_{k \in J_{i,0}} \exp\left(\Theta(\hat\ba_2\hat\bw^T\bg_k) \pm O(1) \right) \\
&\ge \sum_{k \in J_{i,0}} e^{c_1} = |J_{i,0}| e^{c_1} = \Omega(|N_i \cap C_0|),
\end{split}
\end{equation}
where $c_1$ is an absolute constant (possibly negative). On the other hand, consider the following partition of $N_i \cap C_0$:
\begin{align*}
	P_1 &\eqdef \{k \in N_i \cap C_0 ~|~ \hat{\ba}_2\hat{\bw}^T\bg_k \le 1\}, \\
	P_2 &\eqdef \{k \in N_i \cap C_0 ~|~ \hat{\ba}_2\hat{\bw}^T\bg_k \ge 1\}.
\end{align*}
It is easy to see that
\begin{equation}
\label{eq:gamma_ub1}
	\sum_{k \in P_1} \exp(\Delta_{ik} - \Delta_{ij}) 
	\le \sum_{k \in P_1} \exp\left(O(\hat\ba_2\hat\bw^T\bg_k) \pm O(1)\right) 
	\le \sum_{k \in P_1} e^{c_2} = |P_1|e^{c_2} = O(|N_i \cap C_0|),
\end{equation}
where $c_2$ is an absolute constant. Moreover, because $i \in \calA$ we have that $P_2 \subseteq \bigcup_{t \in [T]} B_{i,0}^t$. It follows that
\begin{equation}
\label{eq:gamma_ub21}
\begin{split}
	\sum_{k \in P_2} \exp(\Delta_{ik} - \Delta_{ij}) 
	&= \sum_{t\in[T]} \sum_{k \in B_{i,0}^t} \exp(\Delta_{ik} - \Delta_{ij}) \\
	&\le \sum_{t\in[T]} \sum_{k \in B_{i,0}^t} \exp \left(O(\hat\ba_2\hat\bw^T\bg_k) \pm O(1)\right)\\
	&\le \sum_{t\in[T]} |B_{i,0}^t| e^{c_3 2^t},
\end{split}
\end{equation}
where $c_3$ is an absolute constant. We can upper bound the above quantity as follows. Under the Event $\calbE^*$, we have that
\[
	|B_{i,0}^t| \le m_t +  \sqrt{T}|N_i \cap C_0|^{\frac{4}{5}}, \ \mbox{for all} \ t \in [T],
\]
where
\begin{align*}
	m_t\eqdef \Ex[ |B_{i,0}^t|] &= \sum_{k\in N_i \cap C_0} \Pr(2^{t-1} \le \hat{\ba}_2\hat{\bw}^T\bg_k \le 2^t) \le \sum_{k\in N_i \cap C_0} \Pr[\hat{\ba}_2\hat{\bw}^T\bg_k  \ge 2^{t-1}] \\ &\le  \sum_{k\in N_i \cap C_0}\Pr[\hat{\bw}^T\bg_k  \ge 2^{t-1}] \le  |N_i \cap C_0|e^{-2^{2t-3}}. 
\end{align*}
It follows that
\begin{equation}
\label{eq:gamma_ub22}
\begin{split}
	\sum_{t\in[T]} |B_{i,0}^t| e^{c_3 2^t} 
	&\le \sum_{t\in[T]}  \left( |N_i \cap C_0|e^{-2^{2t-3}} + \sqrt{T}|N_i \cap C_0|^{\frac{4}{5}} \right)e^{c_3 2^t} \\
	&\le |N_i \cap C_0| \sum_{t=1}^{\infty} e^{-2^{2t-3}}e^{c_3 2^t} + \sum_{t\in[T]}  \sqrt{T}|N_i \cap C_0|^{\frac{4}{5}} e^{c_3 2^T}\\
	&\le c_4 |N_i \cap C_0| + o(|N_i|) \\
	&\le O(|N_i|),
\end{split}
\end{equation}
where $c_4$ is an absolute constant. The third inequality in the above follows from
\begin{itemize}
\item The series $\sum_{t=1}^{\infty} e^{-2^{2t-3}}e^{c_3 2^t}$ converges absolutely for any constant $c_3$;
\item The sum $\sum_{t\in[T]}  \sqrt{T}|N_i \cap C_0|^{\frac{4}{5}} e^{c_3 2^T} = T^{\frac{3}{2}} |N_i \cap C_0|^{\frac{4}{5}} e^{c_3 2^T} = o(|N_i|)$ because
\begin{align*}
	\log\left(T^{\frac{3}{2}}e^{c_3 2^T}\right) 
	&= \frac{3}{2} \log \left\lceil\log_2\left(10\sqrt{\log (n(p+q))}\right)\right\rceil + c_3 2^{\left\lceil\log_2\left(10\sqrt{\log (n(p+q))}\right)\right\rceil}\\
	&\le  \frac{3}{2} \log \left\lceil\log_2\left(10\sqrt{\log (n(p+q))}\right)\right\rceil  + 20c_3\sqrt{\log (n(p+q))}\\
	&\le O\left(\frac{1}{c}\log(n(p+q))\right),
\end{align*}
for any $c > 0$. In particular, by picking $c > 5$ we see that $T^{\frac{3}{2}}e^{c_3 2^T} \le O((n(p+q))^{\frac{1}{c}}) \le o(|N_i|^{\frac{1}{5}})$, and hence we get $T^{\frac{3}{2}}e^{c_3 2^T} |N_i \cap C_0|^{\frac{4}{5}} \le |N_i|^{\frac{4}{5}} \cdot o(|N_i|^{\frac{1}{5}}) = o(|N_i|)$.
\end{itemize}
Combining Equations \ref{eq:gamma_ub21} and \ref{eq:gamma_ub22} we get
\begin{equation}
\label{eq:gamma_ub2}
	\sum_{k \in P_2} \exp(\Delta_{ik} - \Delta_{ij}) \le O(|N_i|),
\end{equation}
and combining Equations \ref{eq:gamma_ub1} and \ref{eq:gamma_ub2} we get
\begin{equation}
\label{eq:gamma_ub}
	\sum_{k \in N_i \cap C_0} \exp(\Delta_{ik} - \Delta_{ij}) =  \sum_{k \in P_1} \exp(\Delta_{ik} - \Delta_{ij}) + \sum_{k \in P_1} \exp(\Delta_{ik} - \Delta_{ij})\le O(|N_i|).
\end{equation}
Now, by Equations \ref{eq:gamma_lb} and \ref{eq:gamma_ub} we get
\begin{equation}
\label{eq:gamma_lbub_c0}
	\Omega(|N_i \cap C_0|) \le \sum_{k \in N_i \cap C_0} \exp(\Delta_{ik} - \Delta_{ij}) \le O(|N_i|).
\end{equation}
It turns out that repeating the same argument for $ \sum_{k \in N_i \cap C_1} \exp(\Delta_{ik} - \Delta_{ij})$ yields
\begin{equation}
\label{eq:gamma_lbub_c1}
	\Omega(|N_i \cap C_1|) \le \sum_{k \in N_i \cap C_1} \exp(\Delta_{ik} - \Delta_{ij}) \le O(|N_i|).
\end{equation}
Finally, Equations \ref{eq:gamma_lbub_c0} and \ref{eq:gamma_lbub_c1} give us
\[
	 \sum_{k \in N_i} \exp(\Delta_{ik} - \Delta_{ij}) = \Theta(|N_i|),
\]
which readily implies
\[
	\gamma_{ij} = \frac{1}{\sum_{k \in N_i} \exp(\Delta_{ik} - \Delta_{ij})} = \Theta(1/|N_i|)
\]
as required. We have showed that for all $i \in \calA$ and for all $j \in J_{i,0}$, $\gamma_{ij} = \Theta(1/|N_i|)$. Repeating the same argument we get that the same result holds for all $i \in \calA$ and for all $j \in J_{i,1}$, too. Hence, by Claims \ref{claim:setA} and \ref{claim:setJ} about the cardinalities of $\calA$, $J_{i,0}$ and $J_{i,1}$ we have thus proved Theorem~\ref{thm:gamma_hard}.

\subsection{Proof of Proposition~\ref{thm:good_psi_negative}}\label{subsec:good_psi_negative_proof}
We restate Proposition~\ref{thm:good_psi_negative} for convenience.
\begin{proposition}
Suppose that $p,q$ satisfy Assumption~\ref{ass:p_q} and that $p,q$ are bounded away from 1. For every $\epsilon > 0$, there are absolute constants $M, M' = O(\sqrt{\epsilon})$ such that, with probability at least $1-o(1)$ over the data $(\bX,\bA) \sim \CSBM(n,p,q,\bmu,\sigma^2)$, using the graph attention convolution in \eqref{eq:gat_output} and the attention architecture $\tilde{\Psi}$ in \eqref{eq:good_psi}, the model misclassifies at least $\Omega(n^{1-\epsilon})$ nodes for any $\bw$ such that $\|\bw\| = 1$, if
\begin{enumerate}
    \item $t = O(1)$ and $\|\bmu\| \le M \sigma \sqrt{\frac{\log n}{n(p+q)}(1-\max(p,q))}  \frac{p+q}{|p-q|}$;
    \item $t = \omega(1)$ and $\|\bmu\| \le M' \sigma \sqrt{\frac{\log n}{n(p+q)}(1-\max(p,q))}$.
\end{enumerate}
\end{proposition}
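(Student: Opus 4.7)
The plan is to upper-bound the signal-to-noise ratio of the graph attention convolution output $h_i' = \sum_{j\in N_i}\gamma_{ij}\bw^T\bX_j$ and show that when $\|\bmu\|$ falls below the stated threshold, the noise dominates the signal for $\Omega(n^{1-\epsilon})$ nodes, thereby producing that many misclassifications. The key simplification throughout is that $\tilde{\Psi}$ in \eqref{eq:good_psi} depends only on class memberships, so the attention coefficients $\gamma_{ij}$ are deterministic functions of $\bA$ alone. Writing out the softmax, assuming WLOG $p>q$, and using the degree concentration event $\calbE^*$, I would first establish that in Case~1 ($t=O(1)$) $\gamma_{ij}=\Theta(1/(n(p+q)))$ uniformly for $j\in N_i$, while in Case~2 ($t=\omega(1)$) the attention collapses onto intra-class edges so that $\gamma_{ij}=(1\pm o(1))/|N_i\cap C_{c_i}|$ on intra-class edges and is exponentially smaller on inter-class ones, where $c_i\in\{0,1\}$ is the class of $i$.

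Next, I would analyze $h_i'$ conditional on $\bA$. Its mean is $\bw^T\bmu\,\big(\sum_{j\in C_{c_i}\cap N_i}\gamma_{ij}-\sum_{j\in C_{1-c_i}\cap N_i}\gamma_{ij}\big)$ (for $i\in C_1$; the sign flips for $C_0$), which evaluates under the degree concentration to $\Theta(\|\bmu\|\cdot|p-q|/(p+q))$ in Case~1 and $\Theta(\|\bmu\|)$ in Case~2, exactly matching the factor $(p+q)/|p-q|$ that disappears from the threshold when going from Case~1 to Case~2. The conditional variance is $\sigma^2\sum_j\gamma_{ij}^2$, of order $\sigma^2/(n(p+q))$ in Case~1 and $\sigma^2/(n\max(p,q))$ in Case~2. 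However, the raw per-node variance is misleading because different $h_i'$ are heavily correlated through shared neighbors: for $i,i'$ in the same class, $h_i'-h_{i'}'$ has variance controlled by the fraction of non-overlapping neighbors, which concentrates near $(1-\max(p,q))$. This is the origin of the $(1-\max(p,q))$ factor appearing in both thresholds: the effective independent fluctuation scale on top of the class-level common mean is $\sigma\sqrt{(1-\max(p,q))/(n(p+q))}$, rather than the naive $\sigma/\sqrt{n(p+q)}$.

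Given this signal/noise comparison, Gaussian anti-concentration gives an individual misclassification probability $\Phi_{\rm c}(O(\sqrt{\epsilon\log n}))\ge n^{-\epsilon/2}$ whenever $\|\bmu\|$ satisfies the stated bound with $M,M'=O(\sqrt{\epsilon})$, producing expected misclassification count $\Omega(n^{1-\epsilon/2})$. The remaining, and most delicate, step is to upgrade this expectation to a high-probability lower bound of $\Omega(n^{1-\epsilon})$. I would do this via a second-moment argument: decompose each $h_i'$ into a class-level common part (shared across all nodes in $C_{c_i}$) plus a node-specific fresh part carrying the $(1-\max(p,q))$ factor in its variance, and apply Bernstein/Chernoff-type concentration to the fresh parts restricted to a sufficiently large subset of nodes whose neighborhoods overlap only mildly.

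The main obstacle is precisely this correlation analysis. The vector $(h_1',\ldots,h_n')$ becomes nearly singular as $\max(p,q)\to 1$, so a naive union bound loses the $(1-\max(p,q))$ improvement and cannot reach the stated threshold; conversely, extracting this factor tightly requires controlling the joint law of the fresh components despite the randomness of $\bA$ affecting both $\gamma_{ij}$ and the incidence pattern. This is exactly why the proposition assumes $p,q$ bounded away from $1$: without this, the fresh-variance factor $(1-\max(p,q))$ degenerates and the decomposition above no longer yields useful independent fluctuations.
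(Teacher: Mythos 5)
Your setup is on target: the estimates for $\gamma_{ij}$ in both regimes, the signal size $\Theta\left(\|\bmu\|\,|p-q|/(p+q)\right)$ versus $\Theta(\|\bmu\|)$, the per-node noise scale $\sigma/\sqrt{n(p+q)}$, and the identification of the $(1-\max(p,q))$ factor as coming from the symmetric difference of neighborhoods all match the paper's proof. The gap is in the last step, which you yourself flag as the delicate one but do not actually carry out. A second-moment argument does not close it: the noise variables $\sum_{j\in N_i}\gamma_{ij}\bw^T\bg_j$ are positively correlated with correlation of constant order (roughly $(p^2+q^2)/(2(p+q)^2)$ times $(p+q)$-normalization), so Paley--Zygmund only yields $\Pr[N_{\mathrm{mis}}>0]\ge \Ex[N_{\mathrm{mis}}]^2/\Ex[N_{\mathrm{mis}}^2]$, which can be as small as $n^{-\Theta(\epsilon)}$ rather than $1-o(1)$. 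The proposed fix --- subtracting a ``class-level common part'' and treating the remainders as independent fresh components --- is not available as stated: the neighborhoods of distinct nodes overlap pairwise in a heterogeneous pattern, there is no single Gaussian component shared by a whole class, and the residuals after any such subtraction are still dependent. Making this rigorous requires a genuine tool for suprema of correlated Gaussians, which is precisely the ingredient your sketch omits.

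The paper resolves exactly this difficulty as follows. It partitions $C_0$ into $\ell=|C_0|^{1-\epsilon}$ blocks of size $|C_0|^{\epsilon}$ and shows that, with probability $1-o(1)$, \emph{each} block contains at least one misclassified node; a reverse Markov inequality over blocks then gives $\Omega(n^{1-\epsilon})$ misclassifications. Within a block, the event that all nodes are correctly classified is contained in the event that $\max_i \sum_{j\in N_i}\gamma_{ij}\bw^T\bg_j$ is below the (rescaled) signal. The expected maximum is lower-bounded via Sudakov's minoration, whose canonical metric $d_\circ(i,i')^2=\Ex[(z_i-z_{i'})^2]$ is controlled by the symmetric difference $|J_{ii'}|\ge n(p(1-p)+q(1-q))(1-o(1))$ --- this is where $(1-\max(p,q))$ enters, and it only requires \emph{pairwise} lower bounds on the metric, not any independence decomposition. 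Borell's inequality then concentrates the maximum around its expectation, and choosing $M=O(\sqrt{\epsilon})$ small enough relative to the Sudakov constant makes the signal fall below the typical maximum. If you want to complete your argument along your own lines, you would need to replace the informal ``common plus fresh'' decomposition with a comparison or chaining inequality of this type; as written, the proof is incomplete at its crucial step.
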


We start with part 1 of the proposition. Let us assume that $p \ge q$. The result when $p < q$ follows analogously. We will condition on the events $\calbE_1,\calbE_2,\calbE_3$ defined in Definition~\ref{def:high_prob_events}. These events are concerned with the concentration of class sizes $|C_0|$ and $|C_1|$ and the concentration of the number of intra-class and inter-class edges, i.e. $|C_0 \cap N_i|$ and $C_1\cap N_i|$ for all $i$. By Lemma~\ref{lem:calE*}, the probability that these events hold simultaneously is at least $1- o(1)$. Fix any $\bw \in \R^d$ such that $\|\bw\|=1$. Without loss of generality, assume that $\bw^T\bmu > 0$. Because $t=O(1)$, by the definition of $\tilde{\Psi}$ in \eqref{eq:good_psi} and the attention coefficients in \eqref{def:attention_coeff} we have that
\begin{equation}\label{eq:weak_psi_gammas}
	\gamma_{ij} = \left\{\begin{array}{ll} \frac{c_1}{n(p+q)}(1 \pm o(1)), & \mbox{if $(i,j)$ is an intra-class edge}, \\ \frac{c_2}{n(p+q)}(1 \pm o(1)), & \mbox{if $(i,j)$ is an inter-class edge}, \end{array}\right.
\end{equation}
for some positive constants $c_1 \ge 1$ and $c_2 \le 1$. Let us write $\bX_i = (2\eps_i - 1)\bmu + \sigma \bg_i$ where $\bg_i \sim N(0, \bI)$, $\eps_i = 0$ if $i \in C_0$ and $\eps_i = 1$ if $i \in C_1$. Let us consider the classification of nodes in $C_0$ based on the model output $\sum_{j \in N_i}\gamma_{ij}\bw^TX_j$ for node $i$. Note that, depending on whether $i \in C_0$ or $i \in C_1$, the expectation of the model output is symmetric around 0, and therefore we consider the decision boundary at 0. Let $0 < \epsilon < 1$ and fix any partition of $C_0$ into a set of disjoint subsets, 
\[
	C_0 = \bigcup_{h=1}^{\ell}C_0^{(h)}
\]
such that $\ell = |C_0|^{1-\epsilon}$ and $|C_0^{(h)}| = |C_0|^\epsilon$ for every $h = 1,2,\ldots, \ell$. In what follows we will consider the classification of nodes in each $C_0^{(h)}$ separately. We will show that, with probability at least $1-o(1)$, for each one of more than half of the subsets $C_0^{(h)}$ where $h = 1,2,\ldots,\ell$, the model is going to misclassify at least one node, and consequently, the model is going to misclassify at least $\ell/2 = |C_0|^{1-\epsilon}/2=\Omega(n^{1-\epsilon})$ nodes, giving the required result on misclassification rate.

Fix any $h' \in \{1,2,\ldots, \ell\}$. Using \eqref{eq:weak_psi_gammas} we get that, for large enough $n$, the event that the model correctly classifies all nodes in $C_0^{(h)}$ satisfies
\[
\begin{split}
	&\Bigg\{\max_{i \in C_0^{(h')}} \sum_{j \in N_i} \gamma_{ij}\bw^T\bX_j < 0\Bigg\} \\
	=~&\Bigg\{\max_{i \in C_0^{(h')}}\bigg(\sum_{j \in N_i \cap C_1}\gamma_{ij} - \sum_{j \in N_i \cap C_0}\gamma_{ij}\bigg)\bw^T\bmu + \sigma\sum_{j \in N_i} \gamma_{ij}\bw^T\bg_j < 0\Bigg\}\\
	\subseteq~& \Bigg\{c_3\left(\frac{q-p}{p+q}\right)\bw^T\bmu +\sigma \max_{i \in C_0^{(h')}} \sum_{j \in N_i} \gamma_{ij}\bw^T\bg_j <0\Bigg\}
\end{split}
\]
for some absolute constant $c_3 > 0$, and hence the probability that the model correctly classifies all nodes in $C_0^{(h')}$ satisfies, for large enough $n$,
\begin{align*}
	\Pr\Bigg(\max_{i \in C_0^{(h')}} \sum_{j \in N_i} \gamma_{ij}\bw^T\bX_j < 0\Bigg) 
	&\le \Pr\Bigg(\max_{i \in C_0^{(h')}} \sum_{j \in N_i} \bw^T\bg_j < c_3\left(\frac{p-q}{p+q}\right)\frac{|\bw^T\bmu|}{\sigma}\Bigg)\\
	&\le \Pr\Bigg(\max_{i \in C_0^{(h')}} \sum_{j \in N_i} \bw^T\bg_j < \tilde{M} \sqrt{\frac{\log n }{n(p+q)}(1-\max(p,q))}\Bigg)
\end{align*}
where the last inequality follows from our assumption on $\|\bmu\|$ and we denote $\tilde{M} \eqdef c_3 M> 0$. Now we will use Sudakov's minoration inequality~\cite{vershynin2018high} to obtain a lower bound on the expected maximum, and then apply Borell's inequality to upper bound the above probability. In order to apply Sudakov's result we will need to define a metric over the node index set $[n]$. Let $\bz_i \eqdef  \sum_{j \in N_i} \bw^T\bg_j$. For $i,j \in C_0$, $i\neq j$, consider the canonical metric $d_\circ(i,j)$ given by
\begin{align*}
	d_\circ(i,j)^2 
	&\eqdef \Ex[(\bz_i - \bz_j)^2] \\
	&= \sum_{k \in N_i} \gamma_{ik}^2 + \sum_{k \in N_j} \gamma_{jk}^2 - 2\sum_{k \in N_i \cap N_j} \gamma_{ik}\gamma_{jk} \\
	&\ge c_4\sum_{k \in J_{ij}} \frac{1}{n^2(p+q)^2}\\
	&= \frac{c_4|J_{ij}|}{n^2(p+q)^2},
\end{align*}
where $J_{ij} \eqdef (N_i \cup N_j) \backslash (N_i \cap N_j)$ is the symmetric difference of the neighbors of $i$ and $j$, $c_4 > 0$ is an absolute constant, and the inequality is due to \eqref{eq:weak_psi_gammas}. We lower bound $|J_{ij}|$ as follows. For $i, j\in C_0$, $i \neq j$, and a node $k\in [n]$, the probability that $k$ is a neighbor of exactly one of $i$ and $j$ is $2p(1-p)$ if $k\in C_0$ and $2q(1-q)$ if $k\in C_1$. Therefore we have $\Ex [|J_{ij}|] = n(p(1-p)+q(1-q))$. It follows from the multiplicative Chernoff bound that for any $0<\delta<1$,
 \[
 	\Pr[|J_{ij}| < \Ex[|J_{ij}|](1-\delta)]\le \exp(-\delta^2\Ex[|J_{ij}|]/3).
\]
Choose
\[
	\delta=3\sqrt{\frac{\log n}{\Ex[|J_{ij}|]}} = 3\sqrt{\frac{\log n}{n(p(1-p)+q(1-q))}} = o(1),
\]
where the last equality follows from $n(p(1-p)+q(1-q)) = \Omega(\log^2n)$, which is in turn due to the assumptions that $p,q =\Omega(\log^2 n /n)$ and $p,q$ are bounded away from 1. Apply a union bound over all $i,j \in C_0$, we get that with probability at least $1-o(1)$, the size of $J_{ij}$ satisfies 
\begin{equation}\label{eq:uncommon_neighbors_lb}
	|J_{ij}| \ge n(p(1-p)+q(1-q))(1-o(1)), \forall i,j\in C_0.
\end{equation}
Therefore it follows that, for large enough $n$,
\[
	d_\circ(i,j) \ge \sqrt{\frac{c_4|J_{ij}|}{n^2(p+q)^2}} = \sqrt{\frac{c_4n(p(1-p)+q(1-q))(1-o(1))}{n^2(p+q)^2}} = \Omega\left(\sqrt{\frac{1-\max(p,q)}{n(p+q)}}\right).
\]
We condition on the event that the inequality~\eqref{eq:uncommon_neighbors_lb} holds for all $i,j \in C_0$, which happens with probability at least $1-o(1)$. Apply Sudakov's minoration with metric $d_\circ(i,j)$, we get that for large enough $n$, for all $h = 1,2,\ldots,\ell$,
 \begin{equation}\label{eq:sudakov_max_lb}
 	\Ex\left[\max_{i\in C_0^{(h)}}\sum_{j\in N_i}\gamma_{ij} \bw^T\bg_j\right] \ge c_5 d_\circ(i,j)\sqrt{\log |C_0^{(h)}|} \ge c_6\sqrt{\epsilon}\sqrt{\frac{\log n}{n(p+q)}(1-\max(p,q))}
\end{equation}
for some absolute constants $c_5,c_6>0$. The last inequality in the above follows from $|C_0^{(h)}| = |C_0|^\epsilon = \Omega(n^\epsilon)$. In addition, note that since by assumption $\Psi$ is independent from the node features, using \eqref{eq:weak_psi_gammas} we have that $\sum_{j \in N_i} \gamma_{ij} \bw^T\bg_j$ is Gaussian with variance $O(\frac{1}{n(p+q)})$. We use Borell's inequality (\cite{adler2007random} Chapter 2) to get that for any $t>0$ and large enough $n$,
 \[
 	\Pr\left[\max_{i\in C_0^{(h')}}\sum_{j\in N_i}\gamma_{ij} \bw^T\bg_j<\Ex\left[\max_{i\in C_0^{(h')}}\sum_{j\in N_i}\gamma_{ij}\bw^T\bg_j\right]-t\right]\le \exp(-c_7t^2n(p+q)).
\] 
for some absolute constant $c_7 > 0$. By the lower bound of the expectation \eqref{eq:sudakov_max_lb}, we get that 
 \[
 	\Pr\left[\max_{i\in C_0^{(h')}}\sum_{j\in N_i}\gamma_{ij} \bw^T\bg_j<c_6\sqrt{\epsilon}\sqrt{\frac{\log n}{n(p+q)}(1-\max(p,q))}-t\right]\le \exp(-c_7t^2n(p+q)).
\]
Now, let $M>0$ be any constant that also satisfies $M < c_6\sqrt{\epsilon}/c_3$. Recall that we defined $\tilde{M} = c_3M$, and hence $\tilde{M} < c_6\sqrt{\epsilon}$. Set
\begin{equation}\label{eq:t_cond}
	t=(c_6\sqrt{\epsilon} - \tilde{M})\sqrt{\frac{\log n}{n(p+q)}(1-\max(p,q))}=\Omega\left(\sqrt{\frac{\log n}{n(p+q)}(1-\max(p,q))}\right),
\end{equation}
then combine with the events we have conditioned so far we get
\[
	\Pr\left[\max_{i\in C_0^{(h')}}\sum_{j\in N _i}\gamma_{ij} \bw^T\bg_j\le \tilde{M}\sqrt{\frac{\log n}{n(p+q)}(1-\max(p,q))}\right]=o(1).
\]
Recall that the above probability is the probability of correctly classifying all nodes in $C_0^{(h')}$. Since our choice of $h'$ was arbitrary, this applies to every $h \in \{1,2,\ldots,h\}$. Let $I_0^{(h)}$ denote the indicator variable of the event that there is at least one node in $C_0^{(h)}$ that is misclassified, then
\[
	\Ex\left[\sum_{h=1}^{\ell}I_0^{(h)}\right] = \ell \cdot \Ex[I_0^{(1)}] = \ell \cdot (1-o(1)) = \ell - o(\ell).
\]
Apply the reverse Markov inequality, we get
\[
	\Pr\left[\sum_{h=1}^{\ell}I_0^{(h)} \le \frac{1}{2}\ell\right] \le \frac{\ell - \Ex\left[\sum_{h=1}^{\ell}I_0^{(h)}\right]}{\ell - \frac{1}{2}\ell} = \frac{o(\ell)}{\frac{1}{2}\ell} = o(1).
\]
Therefore, with probability at least $1-o(1)$, we have $\sum_{h=1}^{\ell}I_0^{(h)} \ge \frac{1}{2}\ell = \Omega(n^{1-\epsilon})$. This implies the required result that the model misclassifies at least $\Omega(n^{1-\epsilon})$ nodes.

The proof of part 2 is similar to the proof of part 1. Let us assume that $p \ge q$ since the result when $p < q$ can be proved analogously. We condition on the events $\calbE_1,\calbE_2,\calbE_3$ defined in Definition~\ref{def:high_prob_events} which simultaneous hold with probability at least $1- o(1)$ by Lemma~\ref{lem:calE*}. Fix any $\bw \in \R^d$ such that $\|\bw\|=1$. Because $t = \omega(1)$, by the definition of $\tilde{\Psi}$ in \eqref{eq:good_psi} and the attention coefficients in \eqref{def:attention_coeff} we have that
\begin{equation}\label{eq:strong_psi_gammas}
	\gamma_{ij} = \left\{\begin{array}{ll} \frac{2}{np}(1 \pm o(1)), & \mbox{if $(i,j)$ is an intra-class edge}, \\ o(\frac{1}{n(p+q)}), & \mbox{if $(i,j)$ is an inter-class edge}. \end{array}\right.
\end{equation}
Write $\bX_i = (2\eps_i - 1)\bmu + \sigma \bg_i$ where $\bg_i \sim N(0, \bI)$, $\eps_i = 0$ if $i \in C_0$ and $\eps_i = 1$ if $i \in C_1$. We consider the classification of nodes in $C_0$ based on the model output $\sum_{j \in N_i}\gamma_{ij}\bw^TX_j$ for node $i$ and the decision boundary at 0. As before, let $0 < \epsilon < 1$ and fix any partition of $C_0$ into a set of disjoint subsets $C_0 = \bigcup_{h=1}^{\ell}C_0^{(h)}$ such that $\ell = |C_0|^{1-\epsilon}$ and $|C_0^{(h)}| = |C_0|^\epsilon$ for every $h = 1,2,\ldots, \ell$. We proceed to show that, with high probability, for each one of more than half of the subsets $C_0^{(h)}$ where $h = 1,2,\ldots,\ell$, the model is going to misclassify at least one node, and consequently, the model is going to misclassify at least $\ell/2 =\Omega(n^{1-\epsilon})$ nodes as required. Fix any  $h' \in \{1,2,\ldots, \ell\}$. Using \eqref{eq:strong_psi_gammas} we get that, for large enough $n$, the event that the model correctly classifies all nodes in $C_0^{(h')}$ satisfies
\[
\begin{split}
	\Bigg\{\max_{i \in C_0^{(h')}} \sum_{j \in N_i} \gamma_{ij}\bw^T\bX_j < 0\Bigg\} 
	\subseteq \Bigg\{c_1\bw^T\bmu + \sigma\max_{i \in C_0^{(h')}}\sum_{j \in N_i} \gamma_{ij}\bw^T\bg_j < 0\Bigg\}
\end{split}
\]
for some absolute constant $c_1 > 0$, and hence the probability that the model classifies all nodes in $C_0^{h'}$ correctly satisfies, for large enough $n$,
\begin{align*}
	\Pr\Bigg(\max_{i \in C_0^{(h')}} \sum_{j \in N_i} \gamma_{ij}\bw^T\bX_j < 0\Bigg) 
	&\le \Pr\Bigg(\max_{i \in C_0^{(h')}} \sum_{j \in N_i} \bw^T\bg_j < c_1\frac{|\bw^T\bmu|}{\sigma}\Bigg)\\
	&\le \Pr\Bigg(\max_{i \in C_0^{(h')}} \sum_{j \in N_i} \bw^T\bg_j < \tilde{M} \sqrt{\frac{\log n }{n(p+q)}(1-\max(p,q))}\Bigg)
\end{align*}
where the last inequality follows from our assumption on $\|\bmu\|$ and we denote $\tilde{M} \eqdef c_1M' > 0$. The rest of the proof of part 2 proceeds as the proof of part 1.

\subsection{Proof of Theorem~\ref{thm:gat_linear}}

We restate Theorem~\ref{thm:gat_linear} for convenience. Recall that we write $\|\bmu\| = \kappa\sigma$ for some $\kappa > 0$.
\begin{theorem*}
With probability at least $1-o(1)$ over the data $(\bX,\bA) \sim \CSBM(n,p,q,\bmu,\sigma^2)$, using the two-layer MLP attention architecture $\Psi$ given in \eqref{eq:psi_ansatz} and \eqref{eq:psi_ansatz_parameters} with $R = \Omega(n \log^2n/\sigma)$, the graph attention convolution output satisfies
\begin{align*}
&h_i' = \sum_{j\in N_i}\gamma_{ij}\tilde\bw^T\bX_j > 0 \; \mbox{if and only if} \; \tilde\bw^T\bX_i > 0, \; \forall i \in[n],\\
&h_i' = \sum_{j\in N_i}\gamma_{ij}\tilde\bw^T\bX_j < 0 \; \mbox{if and only if} \; \tilde\bw^T\bX_i < 0, \; \forall i \in[n].
\end{align*}
\end{theorem*}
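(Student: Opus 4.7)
The plan is to show that with $R$ taken large enough, the self-loop attention weight $\gamma_{ii}$ is sizeable enough to propagate the sign of $\tilde\bw^T\bX_i$ into $h_i'$, while every neighbor $j$ with $\sign(\tilde\bw^T\bX_j) \neq \sign(\tilde\bw^T\bX_i)$ receives an exponentially small coefficient. By the symmetry of the theorem statement, it suffices to prove the forward implication ``$\tilde\bw^T\bX_i > 0 \Rightarrow h_i' > 0$'' for every $i$; the negative case and the two ``only if'' directions then follow symmetrically, using that no $\tilde\bw^T\bX_i$ vanishes on the high-probability event I set up.

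First I would condition on two events which jointly hold with probability $1-o(1)$. By Gaussian anti-concentration applied to each projection $\tilde\bw^T\bX_i$ and a union bound, $|\tilde\bw^T\bX_i| \ge \sigma/(n\log n)$ for every $i$; by standard Gaussian tails and a union bound, $|\tilde\bw^T\bX_i| \le \|\bmu\|+10\sigma\sqrt{\log n}$ for every $i$. Fix $i$ with $\alpha := \tilde\bw^T\bX_i > 0$ and partition $N_i$ into $S^+ = \{j \in N_i: \tilde\bw^T\bX_j > 0\}$ (which contains $i$) and $S^- = \{j \in N_i: \tilde\bw^T\bX_j < 0\}$. The simplified expression \eqref{eq:psi_simplified} then yields three facts: $\Psi(\bX_i,\bX_i) = 2R(1-\beta)\alpha$, $\Psi(\bX_i,\bX_j) \ge 0$ for $j \in S^+$, and $\Psi(\bX_i,\bX_j) \le -2R(1-\beta)\min(\alpha,|\tilde\bw^T\bX_j|)$ for $j \in S^-$.

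From these, I would upper bound the normalizer by $Z_i \le |S^+|\exp(2R(1-\beta)\alpha) + |S^-|$ and lower bound it by the single self-loop term $\exp(2R(1-\beta)\alpha)$ to obtain $\gamma_{ii} \ge 1/(2n)$ and $\gamma_{ij'} \le \exp(-2R(1-\beta)\alpha)$ for every $j' \in S^-$. Splitting $h_i' = \sum_{j \in S^+}\gamma_{ij}\tilde\bw^T\bX_j + \sum_{j' \in S^-}\gamma_{ij'}\tilde\bw^T\bX_{j'}$, the first sum is at least $\gamma_{ii}\alpha \ge \sigma/(2n^2\log n)$, while the second is bounded in absolute value by $n\exp(-2R(1-\beta)\alpha)(\|\bmu\|+10\sigma\sqrt{\log n})$. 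Under $R = Cn\log^2n/\sigma$ and $\alpha \ge \sigma/(n\log n)$, the exponential factor is at most $n^{-2C(1-\beta)}$; choosing $C$ sufficiently large makes the negative piece strictly smaller than the positive piece, proving the forward implication.

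The main obstacle is that this quantitative balance is essentially tight. Anti-concentration over $n$ nodes forces $|\tilde\bw^T\bX_i|$ to be potentially as small as $\Theta(\sigma/(n\log n))$ for some $i$, so the useful exponent $R\alpha$ is only logarithmic in $n$ even at the prescribed scaling $R = \Omega(n\log^2n/\sigma)$. Matching this polynomially small ``signal'' $\alpha$ against up to $n$ misaligned neighbor contributions, each of which can be as large as $\|\bmu\|+\sigma\sqrt{\log n}$, is exactly what the exponential suppression $n^{-2C(1-\beta)}$ must absorb; weakening $R$ to, say, $\Omega(n/\sigma)$ or $\Omega(n\log n/\sigma)$ would make the two bounds collide, and any attempt to avoid the anti-concentration step would leave a fraction of nodes with $\alpha$ so small that the self-loop weight no longer dominates.
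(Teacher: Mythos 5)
Your overall strategy is the same as the paper's: condition on a Gaussian anti-concentration event and a tail-bound event, use the explicit form \eqref{eq:psi_simplified} to show that the self-loop term dominates the softmax normalizer while every neighbor whose projection has the opposite sign gets an attention coefficient at most $e^{-2R(1-\beta)\tilde\bw^T\bX_i}$, and then balance the self-loop contribution against the worst-case opposite-sign mass. The individual bounds you state ($\Psi(\bX_i,\bX_i)=2R(1-\beta)\alpha$, $\Psi\ge 0$ on $S^+$, $\Psi\le -2R(1-\beta)\min(\alpha,|\tilde\bw^T\bX_j|)$ on $S^-$, $\gamma_{ii}\ge 1/(2n)$, $\gamma_{ij'}\le e^{-2R(1-\beta)\alpha}$) are all correct, and reducing the ``if and only if'' to the two forward implications on the event that no projection vanishes is fine.

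The gap is in the final balancing step, and it is exactly the tension you flag in your last paragraph. Your anti-concentration threshold $|\tilde\bw^T\bX_i|\ge\sigma/(n\log n)$ does not scale with $\kappa=\|\bmu\|/\sigma$, so your worst case pairs $\alpha=\sigma/(n\log n)$ in the exponent against neighbor magnitudes as large as $\|\bmu\|+10\sigma\sqrt{\log n}$. The resulting requirement is $n^{2C(1-\beta)}>2n^3\log n\,(\kappa+10\sqrt{\log n})$, which for a fixed constant $C$ fails once $\kappa\gtrsim n^{2C(1-\beta)-3}$. Since the theorem places no restriction on $\bmu$ and the constant hidden in $R=\Omega(n\log^2 n/\sigma)$ cannot depend on $\kappa$, ``choosing $C$ sufficiently large'' does not close this. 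The paper avoids the issue by taking the anti-concentration threshold to be $\epsilon\sigma\max\{1,\kappa\}$ (with $\epsilon=1/(n\sqrt{\log n})$ when $\kappa\le 2\sqrt{\log n}$ and $\epsilon=1/4$ otherwise): when $\kappa$ is large the Gaussians $N(\pm\kappa\sigma,\sigma^2)$ are far from the origin, so with high probability every $|\tilde\bw^T\bX_i|$ is at least a constant fraction of $\kappa\sigma$; the exponent $2R(1-\beta)\alpha$ then grows linearly in $\kappa$ and absorbs the factor $\kappa\sigma$ in the opposite-sign mass. Your argument is repaired by the same case split (equivalently, by replacing your threshold with one proportional to $\max\{1,\kappa\}$); as written it only covers $\kappa$ up to a fixed polynomial in $n$.
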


We will condition on the following two events concerning the values of $\tilde\bw^T\bX_i$ for $i \in [n]$. Both events hold with probability at least $1-o(1)$. First, for $\epsilon>0$ consider the event
\[
    \left\{\tilde\bw^T \bX_i \not\in [-\epsilon\sigma\max\{1,\kappa\},\epsilon\sigma\max\{1,\kappa\}] \ \mbox{for all} \ i \in [n] \right\}.
\]
Since $\tilde\bw = \bmu/\|\bmu\|$ and $\|\bmu\| = \kappa\sigma$, each $\tilde\bw^T \bX_i$ follows a normal distribution with mean $\epsilon_i\kappa\sigma$ and variance $\sigma^2$ (recall that $\epsilon_i \in \{0,1\}$ generates the node memberships in CSBM), we have that
\begin{align*}
    \Pr\left(-\epsilon\sigma\max\{1,\kappa\}\le\tilde\bw^T \bX_i \le\epsilon\sigma\max\{1,\kappa\}\right)
    &=\int_{\kappa-\epsilon\max\{1,\kappa\}}^{\kappa+\epsilon\max\{1,\kappa\}}\frac{1}{\sqrt{2\pi}}e^{-x^2/2}dx \\
    &\le \left\{\begin{array}{ll}\sqrt{\frac{2}{\pi}}\epsilon, & \mbox{if} \ \kappa \le 1, \\ \sqrt{\frac{2}{\pi}}e^{-\kappa^2(1-\epsilon)^2/2}\epsilon\kappa, & \mbox{if} \ \kappa > 1. \end{array}\right.
\end{align*}
Let us pick $\epsilon$ such that
\begin{align*}
    \epsilon = \frac{1}{n \sqrt{\log n}} \;\ \mbox{if} \;\ \kappa \le 2\sqrt{\log n}, \quad \mbox{and} \quad
    \epsilon = \frac{1}{4} \;\ \mbox{if} \;\ \kappa > 2\sqrt{\log n}.
\end{align*}
This gives that
\[
    \Pr\left(\tilde\bw^T \bX_i \not\in [-\epsilon\sigma\max\{1,\kappa\},\epsilon\sigma\max\{1,\kappa\}] \ \mbox{for all} \ i \in [n]\right) \ge 1 - n\cdot o(1/n) = 1 - o(1).
\]
Second, consider the event that $|\tilde\bw^T\bX_i - \Ex[\tilde\bw^T\bX_i]| \le 10\sigma\sqrt{\log n}$. By Lemma~\ref{lem:calE*} we know that it holds with probability at least $1-o(1)$. Under these two events, we have that 
\begin{align*}
    \epsilon\sigma\max\{1,\kappa\} \le \tilde\bw^T\bX_i \le \kappa\sigma+10\sigma\sqrt{\log n}\quad&\mbox{if} \quad \tilde\bw^T\bX_i  > 0,\\
     -\kappa\sigma-10\sigma\sqrt{\log n} \le \tilde\bw^T\bX_i \le -\epsilon\sigma\max\{1,\kappa\}\quad &\mbox{if} \quad \tilde\bw^T\bX_i  < 0.
\end{align*}
Recall from \eqref{eq:psi_simplified} that
\[
    \Psi(\bX_i,\bX_j) =
    \left\{
    \begin{array}{ll}
    -2R(1-\beta)\tilde\bw^T\bX_i, & \mbox{if}~ \tilde\bw^T\bX_j \le -\left|\tilde\bw^T\bX_i\right|,\\
    \hspace{3mm}2R(1-\beta)\sign(\tilde\bw^T\bX_i)\tilde\bw^T\bX_j, & \mbox{if}~ -\left|\tilde\bw^T\bX_i\right| < \tilde\bw^T\bX_j < \left|\tilde\bw^T\bX_i\right|,\\
    \hspace{3mm}2R(1-\beta)\tilde\bw^T\bX_i, & \mbox{if}~ \tilde\bw^T\bX_j \ge \left|\tilde\bw^T\bX_i\right|.
    \end{array}
    \right.
\]
Consider an arbitrary node $i$ where $\tilde\bw^T\bX_{i} > 0$. We will show that $h_i = \sum_{j \in N_i}\gamma_{ij}\tilde\bw^T\bX_j > 0$. By the definition of attention coefficients in \eqref{def:attention_coeff}, we know that
\[
    \sum_{j \in N_i}\gamma_{ij}\tilde\bw^T\bX_j > 0 \iff \sum_{j \in N_i}\exp(\Psi(\bX_i,\bX_j))\tilde\bw^T\bX_j > 0.
\]
 Using the above expression for $\Psi$, we have that
\begin{equation}\label{eq:gat_conv_lb}
\begin{split}
    &\sum_{j \in N_i}\exp(\Psi(\bX_i,\bX_j))\tilde\bw^T\bX_j\\
    =~&\exp(\Psi(\bX_i,\bX_i))\tilde\bw^T\bX_i + \sum_{\substack{j \in N_i\\j\neq i}}\exp(\Psi(\bX_i,\bX_j))\tilde\bw^T\bX_j\\
    \ge~&\exp(\Psi(\bX_i,\bX_i))\tilde\bw^T\bX_i - (n-1)\exp(\Psi(\bX_i,-\bX_i))(\kappa\sigma+10\sigma\sqrt{\log n})\\
    \ge~&e^{2R(1-\beta)\epsilon\sigma\max\{1,\kappa\}}  \epsilon\sigma\max\{1,\kappa\} - (n-1)e^{-2R(1-\beta)\epsilon\sigma\max\{1,\kappa\}}(\kappa\sigma+10\sigma\sqrt{\log n}),
\end{split}
\end{equation}
where the second last inequality follows from the event that $|\tilde\bw^T\bX_j| \le \kappa\sigma + 10\sigma\sqrt{\log n}$ for all $j$, and $|N_i| \le n$, and the last inequality follows from the fact that the function
\[
    f(x) = xe^{2R(1-\beta)x} - (n-1)(\kappa\sigma+10\sigma\sqrt{\log n})e^{-2R(1-\beta)x}
\]
is increasing with respect to $x$ for $x\ge0$. To see that $h_i' = \sum_{j \in N_i}\exp(\Psi(\bX_i,\bX_j))\tilde\bw^T\bX_j > 0$, consider the following cases separately.
\begin{itemize}
\item If $\kappa \le 2\sqrt{\log n}$, recall that we picked $\epsilon = 1/(n \log n)$, then because $R = \Omega(n \log^2n/\sigma)$, for large enough $n$ we have that 
\begin{align*}
    &4R(1-\beta)\sigma > \frac{n\sqrt{\log n}}{\max\{1,\kappa\}}\left(\log n +  \log\left(\frac{n\sqrt{\log n}}{\max\{1,\kappa\}}\right) + \log(\kappa + 10\sqrt{\log n})\right)\\
    \iff & 4R(1-\beta)\sigma > \frac{1}{\epsilon\max\{1,\kappa\}}\left(\log n +\log\left(\frac{1}{\epsilon\max\{1,\kappa\}}\right) + \log(\kappa + 10\sqrt{\log n})\right)\\
    \iff & e^{4R(1-\beta)\sigma\epsilon\max\{1,\kappa\}} > \frac{n(\kappa + 10\sqrt{\log n})}{\epsilon\max\{1,\kappa\}}\\
    \iff & e^{2R(1-\beta)\sigma\epsilon\max\{1,\kappa\}} \sigma\epsilon\max\{1,\kappa\}> ne^{-2R(1-\beta)\sigma\epsilon\max\{1,\kappa\}}(\kappa\sigma + 10\sigma\sqrt{\log n}).
\end{align*}
By \eqref{eq:gat_conv_lb} this means that $h_i' > 0$.

\item If $\kappa > 2\sqrt{\log n}$, then $\epsilon = 1/4$. Because $R = \Omega(n \log^2n/\sigma)$, for large enough $n$ we have
\begin{align*}
    &4R(1-\beta)\sigma > \frac{\log n + \log(\kappa+10\sqrt{\log n}) - \log(\kappa/4)}{\kappa/4}\\
    \iff &4R(1-\beta)\sigma  > \frac{\log n + \log(\kappa+10\sqrt{\log n}) - \log(\epsilon\kappa)}{\epsilon\kappa}\\
    \iff &e^{4R(1-\beta)\sigma\epsilon\kappa} > \frac{n(\kappa+10\sqrt{\log n})}{\epsilon\kappa}\\
    \iff &e^{2R(1-\beta)\sigma\epsilon\kappa} \sigma\epsilon\kappa > n(\kappa\sigma+10\sigma\sqrt{\log n})
\end{align*}
By \eqref{eq:gat_conv_lb} this means that $h_i' > 0$.
\end{itemize}
This shows that $h_i' > 0$ whenever $\tilde\bw^T\bX_i > 0$. Similarly, one easily gets that $h_i' < 0$ whenever $\tilde\bw^T\bX_i < 0$. Therefore the proof is complete.

\subsection{Proof of Corollary~\ref{cor:gat_linear_recovery}}\label{subsec:cor15proof}

We restate Corollary~\ref{cor:gat_linear_recovery} for convenience. Recall that we write $\|\bmu\| = \kappa\sigma$ for some $\kappa > 0$.
\begin{corollary*}
With probability at least $1-o(1)$ over the data $(\bX,\bA) \sim \CSBM(n,p,q,\bmu,\sigma^2)$, using the two-layer MLP attention architecture $\Psi$ given in \eqref{eq:psi_ansatz} and \eqref{eq:psi_ansatz_parameters} with $R = \Omega(n \log^2n\sigma)$, one has that
\begin{itemize}
    \item (Perfect classification) If $\kappa \ge \sqrt{2\log n}$ then all nodes are correctly classified;
    \item (Almost perfect classification) If $\kappa = \omega(1)$ then at least $1-o(1)$ fraction of all nodes are correctly classified;
    \item (Partial classification) If $\kappa = O(1)$ then at least $\Phi(\kappa)-o(1)$ fraction of all nodes are correctly classified.
\end{itemize}
\end{corollary*}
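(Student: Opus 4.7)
The plan is to reduce the corollary to a simple analysis of the Bayes-optimal linear classifier $\tilde\bw = \bmu/\|\bmu\|$ on node features, by invoking Theorem~\ref{thm:gat_linear}. Indeed, Theorem~\ref{thm:gat_linear} guarantees that with probability at least $1-o(1)$, the graph attention convolution output satisfies $\sign(h_i') = \sign(\tilde\bw^T\bX_i)$ for every $i \in [n]$. Therefore the graph attention convolution classifies node $i$ correctly if and only if the linear classifier $\tilde\bw$ does so, and the three recovery claims reduce to bounding the number of nodes $i$ for which $\sign(\tilde\bw^T\bX_i)$ disagrees with the class label $2\epsilon_i - 1$. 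Writing $\bX_i = (2\epsilon_i-1)\bmu + \sigma\bg_i$ with $\bg_i \sim N(0,\bI)$, we have $\tilde\bw^T\bX_i \sim N\bigl((2\epsilon_i-1)\kappa\sigma, \sigma^2\bigr)$, so the probability that an individual node is misclassified equals $\Phi_{\mathrm{c}}(\kappa) = 1 - \Phi(\kappa)$.

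For the perfect classification regime, I would follow the argument of Proposition~\ref{prop:linear_easy}: when $\kappa \ge \sqrt{2\log n}$, the standard Gaussian tail bound gives
\[
\Phi_{\mathrm{c}}(\kappa) \le \frac{1}{\sqrt{2\pi}\,\kappa}\exp\!\bigl(-\kappa^2/2\bigr) \le \frac{1}{\sqrt{4\pi\log n}}\cdot n^{-1},
\]
and a union bound over the $n$ nodes shows the probability that any node is misclassified is $o(1)$. Combined with the event from Theorem~\ref{thm:gat_linear}, this establishes the perfect classification claim.

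For the remaining two cases I would use a Chernoff concentration argument on the indicator variables $Z_i \eqdef \mathbf{1}\{\sign(\tilde\bw^T\bX_i) \ne 2\epsilon_i - 1\}$, which (conditional on the labels) are independent Bernoulli random variables with mean $\Phi_{\mathrm{c}}(\kappa)$. In the almost perfect regime ($\kappa = \omega(1)$), $\Phi_{\mathrm{c}}(\kappa) = o(1)$, so $\mathbb{E}[\sum_i Z_i] = n\,\Phi_{\mathrm{c}}(\kappa) = o(n)$; a multiplicative Chernoff bound (with a slowly shrinking deviation parameter) then gives $\sum_i Z_i \le o(n)$ with probability $1-o(1)$, yielding the $1-o(1)$ correct-classification fraction. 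In the partial regime ($\kappa = O(1)$), $\Phi_{\mathrm{c}}(\kappa) = \Theta(1)$ so $\mathbb{E}[\sum_i Z_i] = \Theta(n)$; an additive Chernoff bound with deviation $\sqrt{(\log n)/n}$ gives $\sum_i Z_i \le n\,\Phi_{\mathrm{c}}(\kappa) + o(n)$ with probability $1-o(1)$, i.e., at least $n(\Phi(\kappa) - o(1))$ nodes are correctly classified.

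There is no real obstacle here; the proof is essentially routine once Theorem~\ref{thm:gat_linear} has been established, since the alignment $\sign(h_i') = \sign(\tilde\bw^T\bX_i)$ collapses the analysis to that of a one-dimensional Gaussian linear classifier. The only mild care needed is in the final step to intersect the $1-o(1)$ event from Theorem~\ref{thm:gat_linear} with the $1-o(1)$ events coming from the union bound (perfect) and the Chernoff concentration (almost perfect and partial), which preserves the overall $1-o(1)$ probability guarantee.
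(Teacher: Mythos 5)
Your proposal is correct and follows essentially the same route as the paper: reduce to the linear classifier $\tilde\bw=\bmu/\|\bmu\|$ via Theorem~\ref{thm:gat_linear}, handle the perfect-classification case exactly as in Proposition~\ref{prop:linear_easy}, and apply a Chernoff bound to the independent per-node correctness indicators (each correct with probability $\Phi(\kappa)$) for the remaining two cases. The paper uses a single additive Chernoff bound with deviation $\sqrt{n\log n}$ to cover both the $\kappa=\omega(1)$ and $\kappa=O(1)$ regimes at once, whereas you split into a multiplicative and an additive bound, but this is an immaterial difference.
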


Fix $i\in [n]$ and write $\tilde\bw^T\bX_i=(2\eps_i-1)\|\bmu\|+\sigma/\|\bmu\| \cdot \bmu^T\bg_i$ where $\bg_i\sim N(0,\bI)$, where we recall that $\epsilon_i\in\{0,1\}$ defines the class membership of node $i$. We have that $\tilde\bw^T\bX_i$ follows a normal distribution with mean $(2\eps_i-1)\kappa\sigma$ and standard deviation $\sigma$. Part 1 of the corollary is exactly Proposition~\ref{prop:linear_easy} whose proof is given in the main text. We consider the other cases for $\kappa$. For both classes $\epsilon_i = 0$ and $\epsilon_i = 1$, the probability of correct classification is $\Phi(\kappa)$ (using 0 as the decision boundary). Therefore, by applying additive Chernoff bound, we have that
\begin{align*}
    &\Prx\bigg[\mbox{At most $n\Phi(\kappa)-\sqrt{n \log n}$ nodes are correctly classified}\bigg]\\
    =~&\Prx\left[\sum_{i\in [n]}\indi_{\{\text{node $i$ is correctly classified}\}}\le n\Phi(\kappa)-\sqrt{n \log n}\right]\le \frac{2}{n}.
\end{align*}
The proof is complete by noticing that $n\Phi(\kappa)-\sqrt{n \log n} = n(1-o(1))$ when $\kappa = \omega(1)$ and $n\Phi(\kappa)-\sqrt{n \log n} = n(\Phi(\kappa)-o(1))$ when $\kappa = O(1)$.

\bibliography{references}
\bibliographystyle{plain}

\end{document}